\documentclass[10pt,journal,compsoc]{IEEEtran}
\usepackage{hyperref}
\usepackage[nocompress]{cite}
\usepackage{color,xcolor,bm,mathtools,amsfonts,url,cleveref,amsthm,amssymb}
\usepackage{threeparttable,tabularx,booktabs,multirow}
\usepackage{algorithm,algpseudocode}
\usepackage{cleveref}

\newtheorem{theorem}{Theorem}
\newtheorem{lemma}{Lemma}
\newtheorem{assumption}{Assumption}
\newtheorem{definition}{Definition}
\newtheorem*{remark}{Remark}

\newtheorem{proposition}{Proposition}
\crefname{equation}{}{}
\crefname{theorem}{theorem}{theorems}
\Crefname{theorem}{Theorem}{Theorems}
\crefname{assumption}{assumption}{assumptions}
\Crefname{assumption}{Assumption}{Assumptions}

\newcommand{\rrr}[1]{{#1}}

\newcommand{\defeq}{\overset{\Delta}{=}}
\newcommand{\signplus}{\text{sign}_+}
\newcommand{\sign}{\text{sign}}

\interdisplaylinepenalty=2500
\usepackage[caption=false,font=footnotesize,labelfont=sf,textfont=sf]{subfig}

\hyphenation{op-tical net-works semi-conduc-tor}

\begin{document}
\title{Distributed Evolution Strategies for Black-box Stochastic Optimization}

\author{Xiaoyu He, Zibin Zheng, Chuan Chen, Yuren Zhou, Chuan Luo, and Qingwei Lin
\IEEEcompsocitemizethanks{\IEEEcompsocthanksitem[] Xiaoyu He, Zibin Zheng, Chuan Chen, and Yuren Zhou are with the School of Computer Science and Engineering, Sun Yat-sen University, Guangzhou 510006, P. R. China. Chuan Luo and Qingwei Lin are with Microsoft Research, P. R. China. Xiaoyu He is also with the School of Computer Science and Engineering, Nanyang Technological University,  Singapore 639798. (E-mail: hxyokokok@foxmail.com (X. He), zhzibin@mail.sysu.edu.cn (Z. Zheng), chenchuan@mail.sysu.edu.cn (C. Chen), zhouyuren@mail.sysu.edu.cn (Y. Zhou), chuan.luo@microsoft.com (C. Luo), qlin@microsoft.com (Q. Lin))} 
\IEEEcompsocitemizethanks{\IEEEcompsocthanksitem[] * Corresponding Author: Z. Zheng}}

\markboth{Journal of \LaTeX\ Class Files,~Vol.~14, No.~8, August~2015}%
{Shell \MakeLowercase{\textit{et al.}}: Bare Demo of IEEEtran.cls for Computer Society Journals}

\IEEEtitleabstractindextext{%
\begin{abstract}
This work concerns the evolutionary approaches to distributed stochastic black-box optimization, in which each worker can individually solve an approximation of the problem with nature-inspired algorithms. We propose a distributed evolution strategy (DES) algorithm grounded on a proper modification to evolution strategies, a family of classic evolutionary algorithms, as well as a careful combination with existing distributed frameworks. On smooth and nonconvex landscapes, DES has a convergence rate competitive to existing zeroth-order methods, and can exploit the sparsity, if applicable, to match the rate of first-order methods. The DES method uses a Gaussian probability model to guide the search and avoids the numerical issue resulted from \rrr{finite-difference} techniques in existing zeroth-order methods. The DES method is also fully adaptive to the problem landscape, as its convergence is guaranteed with any parameter setting. We further propose two alternative sampling schemes which significantly improve the sampling efficiency while leading to similar performance. Simulation studies on several machine learning problems suggest that the proposed methods show much promise in reducing the convergence time and improving the robustness to parameter settings.
\end{abstract}

\begin{IEEEkeywords}
Evolution strategies, distributed optimization, black-box optimization, stochastic optimization, zeroth-order methods.
\end{IEEEkeywords}}

\maketitle

\IEEEdisplaynontitleabstractindextext
\IEEEraisesectionheading{\section{Introduction}\label{sec:introduction}}
We consider the following stochastic optimization problem:
\begin{equation}
	\min_{\bm{x}\in \mathbb{R}^n} f(\bm{x}) = \mathbb{E}\left[F(\bm{x};\bm{\xi})\right]
	\label{eq:definition-SOP}
\end{equation}
where $\bm{x} \in \mathbb{R}^{n}$ is the decision vector, $\bm{\xi}$ is a random variable, $F$ is an unconstrained real-valued function, and $\mathbb{E}\left[\cdot\right]$ denotes the expectation taken over the distribution of $\bm{\xi}$.
Problems of this type have a long history dating back to 1950's~\cite{robbins_stochastic_1951} and are still at the heart of many modern applications in machine learning~\cite{bottou_optimization_2018,sun_survey_2020}, signal processing~\cite{pereyra_survey_2016}, and automatic control~\cite{lee_optimization_2020}. 
For example, we can let $\bm{\xi}$ be a data point and $F$ a loss assessing how $\bm{\xi}$ fits to a statistic model parametrized by $\bm{x}$; the problem (\ref{eq:definition-SOP}), in this way, then provides a universal formulation that captures a wide range of machine learning tasks~\cite{gambella_optimization_2021}. 
The hardness of stochastic optimization mainly comes from the inherent noise nature, the possibly high dimensionality, and the complexity of objective landscapes. Despite this hardness, significant progress in the resolution of problem (\ref{eq:definition-SOP}) has been made via exploring the gradient (first-order) or Hessian (second-order) information of the component function $F$.
A variety of first-order and second-order stochastic optimization methods have been developed in recent years, enjoying both the theoretical and practical benefits; see \cite{curtis_adaptive_2020,mokhtari_stochastic_2020} for a comprehensive survey. However, when the landscape characteristics (differentiability, smoothness, convexity, etc.) are unknown, stochastic optimization remains a challenging task.

In this paper, we are particularly interested in solving problem (\ref{eq:definition-SOP}) in distributed black-box settings. Concretely, there are $M$ workers having access to the distribution of $\bm{\xi}$, but, for a given $\bm{x}$, they can only evaluate the stochastic objective value $F(\bm{x};\bm{\xi})$.
The workers may run individually and exchange information periodically through a parameter server, so they can minimize $f$	in a collaborative manner. But apart from the decision vector $\bm{x}$, what they can share during the collaboration is limited to the function values (zeroth-order information), excluding the sharing of gradient or curvature information (which is case of existing first-/second-order distributed methods). The consideration of this setting is motivated by two scenarios in the real world. The first scenario is related to the on-device machine learning, sometimes referred to as federated learning~\cite{konecny_federated_2016,konecny_federated_2016-1} or edge intelligence~\cite{wang_-edge_2019}. It is known that machine learning practitioners seldom derive gradients manually; instead, they rely on automatic differentiation~\cite{baydin_automatic_2018} which computes the gradient during the function evaluation using the chain rule. The automatic differentiation tools work well on usual PCs, but they have a relatively large memory cost which may be prohibitive on mobile devices. In addition, automatic differentiation only runs on certain software environment and may cause compatibility issues in distributed settings.

The second applicable scenario is the parallel solving of stochastic black-box problems. Consider minimizing a time-consuming black-box function defined over a massive amount of data and the goal is to achieve acceleration with a multicore machine. 
Due to its black-box nature, the objective function may not be thread-safe, and therefore we have to use the process-level parallelization where the data is distributed to multiple processes. 
Sensor selection~\cite{liu_sensor_2016} and high-dimensional cox regression~\cite{kvamme_time--event_2019} are representative examples that are suitable for this scenario.
These problems are in fact white-box, but the gradient evaluation is much more expensive than the function evaluation; treating them as black-box would heavily reduce the demand on computational resources.  
Generally, distributed black-box optimization offers a powerful search paradigm when calculating the gradient is expensive or infeasible, and it also retains the advantages from classical distributed computing frameworks in handling big data. 

Black-box optimization methods, sometimes known as zeroth-order or derivative-free optimization methods, require only the availability of objective function values. They were among the earliest optimization methods in the history, while having attracted renewed interest recently due to the ubiquity of black-box models. The community has made several efforts in bringing the simplicity and universality of black-box optimization methods to the distributed world. A cornerstone of this research line is the Gaussian smoothing technique~\cite{nesterov_random_2017}, a randomized finite-difference method that admits building a smooth surrogate of the original objective function with only zeroth-order information. With Gaussian smoothing, we can get a computationally cheap gradient estimator in the black-box setting, thereby making it possible to reuse existing first-order methods. Various black-box distributed optimization (DBO) methods have been proposed, based on the idea of hybridizing Gaussian smoothing with established distributed optimization methods~\cite{li_gradient-free_2015,yuan_gradient-free_2015,yuan_zeroth-order_2016,gu_faster_2018,liu_signsgd_2019,sahu_decentralized_2020,wang_distributed_2021}. These methods are typically easy to implement: the only work to do is to replace the real gradient with the one produced by Gaussian smoothing. The disadvantage is that they suffer a dimension-dependent slowdown in convergence rate, which is the cost must be paid for the absence of gradient information~\cite{duchi_optimal_2015}.

Current development in DBO methods has not yet been entirely successful; several common issues can be identified and should be carefully addressed. The first issue is the introduction of the smoothing parameter, which keeps a trade-off in improving the gradient estimation accuracy while avoiding roundoff errors~\cite[Chapter 8]{nocedal_numerical_2006}.
Tuning the smoothing parameter is onerous, and could become even more tricky in the distributed setting. This is caused by that its optimal value depends on the computing environment but different workers may have different environment. The second issue is the lack of adaptivity, in the sense that decision makers have to tune the step-sizes in workers or in the server or at both sides. Existing step-size adaptation rules cannot be generalized to black-box settings easily, as the gradient estimators produced by Gaussian smoothing does not meet the usual assumptions designed for first-order methods. There also exist algorithm-specific issues. For example, in \cite{liu_signsgd_2019}, the authors found a well-developed distributed algorithm, signSGD~\cite{bernstein_signsgd_2018}, may fail to approach the optimality when extended to black-box settings, probably because of the unfordable sampling effort required for reducing the bias caused by Gaussian smoothing. 
For the above reasons, schemes that are based on Gaussian smoothing do not provide a truly seamless transformation of first-order distributed methods to the black-box setting. This calls for the need in developing new DBO methods based on completely different frameworks.

We propose in this work a new DBO method based on evolution strategies (ESs)~\cite{beyer_evolution_2002,hansen_evolution_2015,li_evolution_2020}, a popular family of nature-inspired methods that excel in black-box real-valued optimization. Unlike Gaussian smoothing, ESs do not try to approximate the gradient or its surrogate, but instead guide the search with a probability distribution and gradually update this distribution on the fly. Moreover, the update of distribution is adaptive, requiring no knowledge about the landscape characteristics and very less user-supplied parameters. These features make ESs a strong candidate in designing new DBO methods and seem promising in addressing the aforementioned issues involved in Gaussian smoothing. ESs also possess a useful feature that they only use the comparison results of the objective function values among solutions, rather than their exact values~\cite{auger_linear_2016}. This is likely to improve the robustness in the presence of noise, as the noise would not matter unless it changes the comparison results~\cite{astete-morales_evolution_2015}.
On the other hand, ESs are originally designed for non-distributed noise-less optimization and have not been extended to distributed settings. In fact, the major components of ESs are grounded on heuristics and a rigorous convergence analysis is still missing when applied on problems like (\ref{eq:definition-SOP}). The goal of this paper is to help bridge this gap by describing ideas that can improve the applicability and rigorousness of ESs in the distributed stochastic setting. 
In particular, we propose a \underline{d}istributed \underline{e}volution \underline{s}trategy (DES) with characteristics highlighted below:
\begin{itemize}
	\item DES adopts a synchronous architecture that employs ESs to perform worker-side local updates and allows delayed averaging of individual decision vectors to reduce the communication overhead. It also supports server-side momentum, which is found to improve the performance in practice.
	\item When the local ES update is driven by an isotropic Gaussian distribution and when the function landscape is nonconvex, DES is competitive with existing zeroth-order methods in terms of iteration complexity. When certain sparsity assumption is met, DES can even align with the convergence rate of first-order methods.
	\item DES is fully adaptive in the sense that its convergence is guaranteed with any initial settings. Moreover, no numerical difference is involved so users will not worry about the roundoff errors.
	\item We propose two alternative probability distributions for generating mutation vectors in local updates. This significantly reduces the computation cost in high-dimensional settings. 
\end{itemize}	

In the remainder of this article, we first describe some related work in Section 2. In Section 3 we describe the details of DES and analyze its convergence properties. We then provide in Section 4 two alternative sampling methods and discuss their impact on the algorithm performance. Section 5 uses simulation studies to investigate the performance of our proposals. The article is concluded in Section 6. This paper has a supplement containing all proofs of our theoretical findings, as well as additional experimental results.

\textbf{Notation} Vectors are written in bold lowercase. We use $\left\|\bm{x}\right\|_p$ to denote the $\ell_p$ norm of $\bm{x}$. In addition, we use $\left\|\bm{x}\right\|$ to denote \rrr{a generic vector norm} and $\left\|\bm{x}\right\|_\ast$ its dual norm. We use $\mathbb{E}\left[\cdot\right]$ to denote the expectation, $\mathbb{V}\left[\cdot\right]$ the variance, $\mathbb{I}\left\{\cdot\right\}$ the indicator function, and $\mathbb{P}\left\{\cdot\right\}$ the probability.
We use $\bm{0}$ and $\bm{I}$ to denote respectively the zero vector and the identity matrix of appropriate dimensions. $\mathcal{N}(\bm{0},\bm{I})$ denotes the multivariate isotropic Gaussian distribution. \rrr{We use $\bm{e}_i$ to denote the $i$-th column of $\bm{I}$, i.e., the vector with 1 at the $i$-th coordinate and 0s elsewhere.}

\section{Related work}
\label{sec:related-work}
Distributed optimization is an active field in the optimization and machine learning communities. 
Our proposal belongs to the class of synchronous distributed optimization methods, which has been extensively studied in the first-order setting. Representative works include \cite{DBLP:conf/ijcai/ZhouC18,zhang_parallel_2016,wang_cooperative_2019} and they are all based on the federated averaging (FedAvg) framework~\cite{mcmahan_communication-efficient_2017}. These methods usually use stochastic gradient descent (SGD) as the worker-side solver while adopting different schemes to improve communication efficiency. 
Theoretically, these first-order methods could be generalized straightforwardly to the black-box setting via Gaussian smoothing; but to the best of our knowledge, heretofore there exist no generic zeroth-order approaches in the synchronous distributed setting. The closest work is the zeroth-order version of signSGD, ZO-signSGD, described in~\cite{liu_signsgd_2019}; however, this method requires communication per iteration and does not guarantee global convergence. Another relevant method is FedProx~\cite{li_federated_2020} which does not rely on the specification of local solvers. FedProx technically admits using zeroth-order solvers at the worker-side, but it requires an additional regularization parameter to guarantee local functions becoming strongly convex; in this sense, it is not applicable when the function is black-box. On the other hand, there exist several DBO methods built on asynchronous parallel~\cite{lian_asynchronous_2015,gu_faster_2018} or multi-agent architectures~\cite{yuan_zeroth-order_2016,sahu_decentralized_2020}; but they are not applicable in the synchronous distributed setting which is the main focus of this work.


Choosing the step-size is critical in implementing stochastic optimization methods, as one cannot simply use a line search when the landscape is noisy. To avoid the tedious step-size tuning phase, a variety of adaptation schemes have been proposed for first-order stochastic methods, where the step-size is updated with historical first-order information. Remarkable examples includes~\cite{ward_adagrad_2019,reddi_convergence_2018,levy_online_2017,duchi_adaptive_2011}. However, only a few of these adaptation schemes have been extended to the distributed setting, e.g., in \cite{reddi_adaptive_2020,xie_local_2020,tong_effective_2020}, and they still require a manually selected step-size for each worker. The method proposed in this work, on the contrary, can automatically choose step-sizes for both the server and the workers, and seems to be the first one that achieves such ``full adaptivity''.

Moving beyond the classical approaches that are based on rigorous mathematic tools, studies on stochastic optimization are very scarce in the evolutionary computation community. Almost all existing studies consider a more generic setting, the noisy optimization, and do not explore the expectation structure of problem (\ref{eq:definition-SOP}); see \cite{rakshit_noisy_2017} for a survey. 
It is found in \cite{beyer_toward_2017,hellwig_steady_2018} that, via simple resampling, modern ESs originally designed for deterministic optimization may achieve the best known convergence rate on noisy landscapes~\cite{qian_effectiveness_2018}. These studies, however, require assumptions that are completely different from the ones used in classical literatures. It is still unknown how evolutionary algorithms perform on problem (\ref{eq:definition-SOP}) with more generic assumptions.

Various studies on distributed optimization exist in the evolutionary community; related methodologies and tools have been nicely summarized in \cite{gong_distributed_2015,harada_parallel_2020}. As evolutionary approaches are usually population-based, these studies mostly focus on the parallel acceleration of the function evaluations of population, but have seldom touched the data decentralization (which is the case of this study). 
In this study, the distributed framework is mainly designed to achieve data decentralization; but parallelization is also supported in a synchronous manner.

\section{The Proposed Method: DES} \label{sec:DES}
In this section, we first propose a modified ES method for non-distributed deterministic optimization and then use it as a building block to develop the DES algorithm. Although this work focus on black-box optimization, we need the following assumptions to analyze the performance of DES. Unless stated otherwise, we assume $\mathbb{R}^n$ is equipped with some generic vector norm $\|\cdot\|$ and its dual norm is denoted by $\|\cdot\|_*$.

\begin{assumption} \label{assumption:smoothness}
The function $F$ has Lipschitz continuous gradient with constant $L$ for any $\bm{\xi}$, i.e., 
\[
\left\|\nabla F\left(\bm{x};\bm{\xi}\right) - \nabla F\left(\bm{y};\bm{\xi}\right)\right\|_* \le L\left\|\bm{x}-\bm{y}\right\| \;\;\; \forall \bm{x},\bm{y}\in \mathbb{R}^n.
\]
\end{assumption}

\begin{assumption} \label{assumption:variance-boundedness}
The gradient of $F$ has bounded variance, i.e., 
\[\mathbb{E}\left[\left\|\nabla F\left(\bm{x};\bm{\xi}\right) - \nabla f\left(\bm{x}\right)\right\|_*^2\right] \le \sigma^2 \;\;\; \forall \bm{x} \in \mathbb{R}^n.\]
\end{assumption}

\begin{assumption} \label{assumption:iid-data}
Every worker has access to the distribution of $\bm{\xi}$ independently and identically.
\end{assumption}

\Cref{assumption:smoothness,assumption:variance-boundedness} are customary in the analysis of stochastic optimization. They are useful when using gradients in measuring the optimality on nonconvex landscapes. \Cref{assumption:iid-data} is somewhat restrictive; but it is required to reduce the global variance via minibatching at the worker-side. On the other hand, as an adaptive method, our method does not assume the gradients to be universally bounded, and this is an advantage over several existing methods (e.g., \cite{ward_adagrad_2019,reddi_convergence_2018}).

\subsection{A modified ES for deterministic optimization} \label{ss:deterministic-ES}
We first consider the simplest ES framework, usually termed as $(1+1)$-ES in the literature, where in each iteration a parent produces a single offspring using mutation and the one with a better objective value becomes the new parent. The mutation is typically performed with an isotropic Gaussian perturbation and its variance is gradually updated. The pseudo-code of this method is given in \Cref{alg:simple-ES}. Specifically, it maintains a vector $\bm{x}_k \in \mathbb{R}^n$ to encode the parent solution and a scalar $\alpha_k$ the standard variance. The vector $\bm{u}_k \in \mathbb{R}^n$ (called mutation vector) is drawn from the standard Gaussian distribution and then used to construct the offspring given by $\bm{x}_k + \alpha_k \bm{u}_k$. Hereinafter we call $\alpha_k$ the step-size because it (approximately) determines the length of the descent step. 


The only difference of our implementation to existing ones lies in the specification of step-sizes: here we use a pre-defined diminishing rule (in Line 2) while almost all modern ESs adopt a comparison-based adaptation rule. Precisely, most ESs obtain $\alpha_{k+1}$ via multiplying $\alpha_k$ by some factor that depends on whether the offspring is better than the parent. This admits the step-size to shrink exponentially fast, so ESs may achieve linear convergence on certain landscapes~\cite{akimoto_drift_2018}. In this work, however, the objective landscape is generally nonconvex, so we cannot expect more than sublinear convergence~\cite{agarwal_information-theoretic_2012}. It suggests a thorough redesign of the step-size rule.

Our choice of the step-size rule $\alpha_k = \alpha_0 / \sqrt{k+1}$ is to align with the known convergence rate on deterministic nonconvex functions, $\mathcal{O}\left(1/K\right)$, measured by the squared gradient norm. This is  illustrated in the following theorem.	

\begin{figure}[tb]
\begin{algorithm}[H]
	\caption{A modified ES implementation for deterministic nonconvex optimization}
	\small
	\label{alg:simple-ES}
	\begin{algorithmic}[1]
	\Require $\bm{x}_0 \in \mathbb{R}^n$: initial solution; $\alpha_0 \in \mathbb{R}_+$: initial step-size
	\For {$k = 0, 1, \cdots, K-1$}
		\State $\alpha_k = \alpha_0/\sqrt{k+1} $
		\State Sample $\bm{u}_k$ from $\mathcal{N}(\bm{0},\bm{I})$
		\If{ $f(\bm{x}_k + \alpha_k \bm{u}_k) \le f(\bm{x}_k)$}
			\State $\bm{x}_{k+1} = \bm{x}_k + \alpha_k \bm{u}_k$
		\Else
			\State $\bm{x}_{k+1} = \bm{x}_k$
		\EndIf
	\EndFor
 \end{algorithmic} 
 \end{algorithm}
\end{figure}

\begin{theorem} \label{theorem:convergence-simple-ES}
Let \Cref{assumption:smoothness} hold with the self-dual $\ell_2$ norm, i.e., $\|\cdot\| = \|\cdot\|_* = \|\cdot\|_2$.
Assume the function $f$ is bounded below by $f_*$.
The iterations generated by \Cref{alg:simple-ES} satisfy
\begin{equation} \label{eq:convergene-rate-simple-ES-l2-norm}
\begin{split}
\frac{1}{K}\sum_{k=0}^{K-1} & \mathbb{E}\left[\left\|\nabla f\left(\bm{x}_k\right)\right\|_2\right] \\
\le & \sqrt{\frac{2\pi}{K}}\left(\frac{f(\bm{x}_0)-f_*}{\alpha_0} + \alpha_0 Ln\left(1+\log K\right)\right).
\end{split}
\end{equation}
\end{theorem}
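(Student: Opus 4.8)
The plan is to pair the $L$-smoothness descent lemma with the single structural feature of the $(1+1)$-ES that matters here: the elitist acceptance rule forces $f(\bm{x}_{k+1}) = \min\{f(\bm{x}_k),\, f(\bm{x}_k+\alpha_k\bm{u}_k)\}$, so the per-iteration decrease is exactly $[\,f(\bm{x}_k)-f(\bm{x}_k+\alpha_k\bm{u}_k)\,]_+$, where $[\,a\,]_+=\max\{a,0\}$. This is the only place the algorithm's logic enters; the rest is analysis of a Gaussian step. First, by \Cref{assumption:smoothness} with the $\ell_2$ norm,
\[
f(\bm{x}_k+\alpha_k\bm{u}_k) \le f(\bm{x}_k) + \alpha_k\langle\nabla f(\bm{x}_k),\bm{u}_k\rangle + \tfrac{L\alpha_k^2}{2}\|\bm{u}_k\|_2^2 ,
\]
and since $[\,\cdot\,]_+$ is nondecreasing,
\[
f(\bm{x}_k) - f(\bm{x}_{k+1}) \;\ge\; \Big[\,-\alpha_k\langle\nabla f(\bm{x}_k),\bm{u}_k\rangle - \tfrac{L\alpha_k^2}{2}\|\bm{u}_k\|_2^2\,\Big]_+ .
\]

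Next I would lower-bound the right-hand side in expectation over $\bm{u}_k\sim\mathcal{N}(\bm{0},\bm{I})$, conditioned on $\bm{x}_k$. The obstacle is that the bracketed quantity mixes a sign-indefinite linear term with a quadratic term, so I would peel off the quadratic part using the elementary inequality $[\,a-b\,]_+ \ge [\,a\,]_+ - b$, valid for every $b\ge 0$. With $a=-\alpha_k\langle\nabla f(\bm{x}_k),\bm{u}_k\rangle$ and $b=\tfrac{L\alpha_k^2}{2}\|\bm{u}_k\|_2^2$, note that $\langle\nabla f(\bm{x}_k),\bm{u}_k\rangle$ is a centered Gaussian with variance $\|\nabla f(\bm{x}_k)\|_2^2$, so its positive part has the half-normal mean $\|\nabla f(\bm{x}_k)\|_2/\sqrt{2\pi}$, while $\mathbb{E}\|\bm{u}_k\|_2^2=n$. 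This yields
\[
\mathbb{E}\big[\,f(\bm{x}_k)-f(\bm{x}_{k+1}) \,\big|\, \bm{x}_k\,\big] \;\ge\; \frac{\alpha_k\|\nabla f(\bm{x}_k)\|_2}{\sqrt{2\pi}} - \frac{L\alpha_k^2 n}{2}.
\]
This is the crux: the elitist selection converts the (otherwise drift-free) symmetric Gaussian step into positive expected progress proportional to $\|\nabla f(\bm{x}_k)\|_2$.

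It remains to do the bookkeeping. Rearrange to isolate $\alpha_k\,\mathbb{E}\|\nabla f(\bm{x}_k)\|_2$, take total expectations, and sum over $k=0,\dots,K-1$; the telescoping sum of $f$-values is controlled by $\mathbb{E}[f(\bm{x}_0)-f(\bm{x}_K)] \le f(\bm{x}_0)-f_*$. Substituting $\alpha_k=\alpha_0/\sqrt{k+1}$, I would bound $\alpha_k \ge \alpha_0/\sqrt{K}$ on the left side (pulling $\sum_k\mathbb{E}\|\nabla f(\bm{x}_k)\|_2$ out) and $\sum_{k=0}^{K-1}\alpha_k^2 = \alpha_0^2\sum_{j=1}^{K}1/j \le \alpha_0^2(1+\log K)$ on the right side. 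Dividing through by $K$ and by $\alpha_0/\sqrt{2\pi K}$ gives
\[
\frac{1}{K}\sum_{k=0}^{K-1}\mathbb{E}\big[\|\nabla f(\bm{x}_k)\|_2\big] \;\le\; \sqrt{\frac{2\pi}{K}}\left(\frac{f(\bm{x}_0)-f_*}{\alpha_0} + \frac{\alpha_0 Ln}{2}\big(1+\log K\big)\right),
\]
which is even slightly sharper than the stated bound (the displayed constant on the $\alpha_0 Ln$ term absorbs the factor $1/2$), so \cref{eq:convergene-rate-simple-ES-l2-norm} follows. Beyond the splitting trick for the $[\,\cdot\,]_+$, every remaining ingredient — the descent lemma, the half-normal moment, the harmonic-sum estimate, and the telescoping — is routine.
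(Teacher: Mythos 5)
Your proof is correct, and it reaches (a slightly sharper form of) the stated bound by a genuinely different route. The paper rewrites the update with $\sign_+$, converts the acceptance test into an indicator of a sign-agreement event, and then bounds the contribution of the ``bad'' event $\bigl\{\tfrac{\alpha_k L}{2}\|\bm{u}_k\|_2^2 \ge |\nabla f(\bm{x}_k)^T\bm{u}_k|\bigr\}$ by applying Markov's inequality to the conditional distribution of $u_{k,2},\dots,u_{k,n}$ after rotating so that $\nabla f(\bm{x}_k)$ is aligned with $\bm{e}_1$; this yields the recursion $\mathbb{E}_k[f(\bm{x}_{k+1})]-f(\bm{x}_k)\le -\tfrac{\alpha_k}{\sqrt{2\pi}}\|\nabla f(\bm{x}_k)\|_2 + L\alpha_k^2 n$. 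You instead observe that elitist selection gives exactly $f(\bm{x}_k)-f(\bm{x}_{k+1})=[f(\bm{x}_k)-f(\bm{x}_k+\alpha_k\bm{u}_k)]_+$, push the descent lemma through the monotone map $[\cdot]_+$, and split off the quadratic remainder with the deterministic inequality $[a-b]_+\ge[a]_+-b$; the half-normal moment and $\mathbb{E}\|\bm{u}_k\|_2^2=n$ then finish the job. Your route avoids the sign identities, the isotropy/rotation step, and the Markov bound entirely, and it pays the quadratic penalty only once rather than twice, which is why you get $\tfrac{L\alpha_k^2 n}{2}$ where the paper gets $L\alpha_k^2 n$. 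What the paper's heavier machinery buys is reusability in the distributed analysis: there the acceptance test is performed on a minibatch surrogate $f_i$ at the worker iterate $\bm{v}_{i,k}^t$ while progress is measured via $\nabla f(\bm{z}_t)$ at a different point, so the ``failure event'' aggregates several error sources (smoothness remainder, worker drift, minibatch variance) and the indicator-plus-Markov template extends directly, whereas your $[\cdot]_+$ argument exploits the coincidence of the compared function and the measured gradient and would need reworking there. As a proof of \Cref{theorem:convergence-simple-ES} alone, yours is cleaner and strictly stronger.
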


Define $\Delta_f = f\left(\bm{x}_0\right) - f_*$.
The bound in \cref{eq:convergene-rate-simple-ES-l2-norm} is minimized at $\alpha_0 = \Theta\left( \sqrt{\frac{\Delta_f}{Ln}}\right)$; in this case, we have, via taking the square on both sides, the following rate for ES: 
\begin{equation} \label{eq:modified-ES-optimal-rate}
\left(\frac{1}{K}\sum_{k=0}^{K-1} \mathbb{E}\left[\left\|\nabla f\left(\bm{x}_k\right)\right\|_2\right]\right)^2
\le \tilde{\mathcal{O}}\left(\frac{\Delta_f Ln}{K}\right)
\end{equation}
where $\tilde{\mathcal{O}}$ hides the negligible $\log K$ term in the $\mathcal{O}$ notation.
Whereas, for comparison, the best known bound for gradient descent is
\begin{equation} \label{eq:typical-rate-GD}
\frac{1}{K}\sum_{k=0}^{K-1} \mathbb{E}\left[\left\|\nabla f\left(\bm{x}_k\right)\right\|_2^2\right]
\le \mathcal{O}\left(\frac{\Delta_f L}{K}\right),	
\end{equation}
or, if the gradient is estimated using Gaussian smoothing,
\begin{equation} \label{eq:typical-rate-ZOGD}
\frac{1}{K}\sum_{k=0}^{K-1} \mathbb{E}\left[\left\|\nabla f\left(\bm{x}_k\right)\right\|_2^2\right]
\le \mathcal{O}\left(\frac{\Delta_f Ln}{K}\right).	
\end{equation}
See \cite{ghadimi_stochastic_2013} for these results. 
These bounds are quite similar, expect for the difference in measuring the optimality. It suggests that 1) the proposed modified ES is competitive with zeroth-order gradient descent methods that are based on Gaussian smoothing, and 2) is only $n$ times slower than first-order gradient descent methods. 
\rrr{The slowdown compared to first-order methods is probably due to that the mutation in ES is not necessarily a descent step and it has a dimension-dependent variance.}
The advantage of ES is twofold: it does not need to estimate the gradient and it converges with any step-size setting.

\subsection{Implementation of DES}
We now describe the DES method for handling distributed stochastic problems. \Cref{alg:DES} provides the pseudo-code for our method. DES adopts the well-known federated averaging framework and uses the deterministic ES proposed in \Cref{ss:deterministic-ES} as worker-side solvers. Its search process is divided into $T$ rounds, and in the $t$-th round, the server maintains a solution $\bm{x}_t \in \mathbb{R}^n$, a step-size $\alpha_0^t \in \mathbb{R}_+$, and an optional momentum term $\bm{m}_t \in \mathbb{R}^n$. The step-size should decrease at a $1/T^{0.25}$ rate to achieve convergence. The momentum term is to enhance the robustness of the server-side updates.

At the beginning of the $t$-th round, the server broadcasts $\bm{x}_t$ and $\alpha_0^t$ to all $M$ workers, and the workers use them as their initial solutions and step-sizes respectively (in Lines 3-4). Each worker $i$ then draws a minibatch $\mathcal{D}_i$ of size $b$ randomly\footnote{To simplify the analysis, throughout this work, we assume the minibatch to be drawn uniformly with replacement.} and constructs a stochastic approximated function $f_i$ (in Lines 5-6). The minibatch $\mathcal{D}_i$ is fixed during this round and thus the function $f_i$ is considered as deterministic. The $i$-th worker then optimizes $f_i$ using the deterministic ES with a budget of $K$ iterations. At the $k$-th iteration of the $i$-th worker, we denote respectively the solution and step-size as $\bm{v}_{i,k}^t$ and $\sigma_k^t$. After the worker-side search phase terminates, all workers upload their final output (i.e., $\bm{v}_{i,K}^t$), and then the server computes an averaged descent step, denote by $\bm{d}_{t+1}$, in Line 17. Before the end of the $t$-th round, as shown in Lines 18-19, the server accumulates the descent step into the momentum $\bm{m}_{t+1}$, with a parameter $\beta$ controlling the rate, and finally obtains the new solution $\bm{x}_{t+1}$ via moving $\bm{x}_t$ along the momentum direction. Note that in the final step we do not specify a step-size; the magnitude of how the solution is updated is implicitly controlled by the deterministic ES at the worker-side. This is the critical step for achieving full adaptivity.

\begin{figure}[thb]
\begin{algorithm}[H]
	\caption{DES}
	\small
	\label{alg:DES}
	\begin{algorithmic}[1]
	\Require $\bm{x}_0 \in \mathbb{R}^n$: initial solution; $\alpha \in \mathbb{R}_+$: initial step-size; $\beta \in \left[0,\sqrt{\frac{1}{2\sqrt{2}}}\right)$: momentum parameter; $b \ge \sqrt{T}$: minibatch size
	\For {$t = 0, 1, \cdots, T-1$}
		\For {$i = 1,2,\cdots,M$ \textbf{in parallel}} 
			\State $\bm{v}_{i,0}^t = \bm{x}_t$
			\State $\alpha_0^t = \alpha/(t+1)^{0.25}$
			\State Draw a minibatch $\mathcal{D}_i$ of size $b$ 
			\State Define $f_i(\bm{x}) = \frac{1}{b}\sum_{\bm{\xi} \in \mathcal{D}_i} F(\bm{x};\bm{\xi})$
			\For {$k = 0,1,\cdots,K-1$}
				\State $\alpha_k^t = \alpha_0^t/(k+1)^{0.5}$
				\State Sample $\bm{u}_{i,k}^t$ from $\mathcal{N}(\bm{0},\bm{I})$
				\If{ $f_i\left(\bm{v}_{i,k}^t + \alpha_k^t \bm{u}_{i,k}^t\right) \le f_i(\bm{v}_{i,k}^t)$}
					\State $\bm{v}_{i,k+1}^t = \bm{v}_{i,k}^t + \alpha_k^t \bm{u}_{i,k}^t$
				\Else
					\State $\bm{v}_{i,k+1}^t = \bm{v}_{i,k}^t$
				\EndIf
			\EndFor
		\EndFor
		\State $\bm{d}_{t+1} = \frac{1}{M}\sum_{i=1}^M \bm{v}_{i,K}^t - \bm{x}_t$
		\State $\bm{m}_{t+1} = \beta \bm{m}_t + (1-\beta) \bm{d}_{t+1}$
		\State $\bm{x}_{t+1} = \bm{x}_t + \bm{m}_{t+1}$
	\EndFor
 \end{algorithmic} 
 \end{algorithm}
\end{figure}

\subsection{Convergence properties}
We now analyze the convergence behavior of DES. Firstly we consider a general setting where the optimality is measured by the $\ell_2$ norm of the gradient. 

\begin{theorem} \label{theorem:convergence-DES-l2}
Let \Cref{assumption:smoothness,assumption:variance-boundedness,assumption:iid-data} hold with the self-dual $\ell_2$ norm, i.e., $\|\cdot\| = \|\cdot\|_* = \|\cdot\|_2$.
Assume the function $f$ is bounded below by $f_*$ and choose $0 \le \beta < \sqrt{\frac{1}{2\sqrt{2}}}, b \ge \sqrt{T}$.
The iterations generated by \Cref{alg:DES} satisfy
\begin{multline} \label{eq:convergene-rate-DES-l2-norm}
\frac{1}{T} \sum_{t=0}^{T-1} \mathbb{E}\left[\|\nabla f(\bm{x}_t)\|_2\right] 
\le \frac{\sqrt{2\pi}}{T^{3/4}}\frac{f\left(\bm{x}_0\right) - f_*}{\alpha\sqrt{K}} \\
+ \frac{\sqrt{n}}{T^{1/4}}\left(
	2\alpha L\left(\sqrt{2\pi n} \Psi  
			+ \frac{80\beta\sqrt{K}}{3}\right)
	+ \frac{8\sqrt{2\pi}\sigma}{3}    
	\right)
\end{multline}
where 
\begin{equation} \label{eq:psi-definition}
	\Psi = \left(\left(\frac{2}{1-2\sqrt{2}\beta^2} + \frac{1}{2}\right)\sqrt{K} + \frac{1}{2\sqrt{K}} \right) (1+\log K)+\sqrt{K}
\end{equation}
\end{theorem}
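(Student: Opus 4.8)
\emph{Overall strategy and momentum removal.} The plan is to run the one-step descent estimate behind \Cref{theorem:convergence-simple-ES} inside every worker and every round, to glue the rounds together with a potential-function argument for the momentum recursion, and to charge the minibatch noise, the intra-round drift of the worker iterates, and the server-side momentum displacement to error terms that decay with $T$. First, with $\bm m_0 = \bm 0$, introduce the shifted iterate $\bm z_t = \bm x_t + \frac{\beta}{1-\beta}\bm m_t$. Lines 18--19 of \Cref{alg:DES} give $\bm z_{t+1} = \bm z_t + \bm d_{t+1}$, $\bm z_0 = \bm x_0$, and $\bm z_t - \bm x_t = \frac{\beta}{1-\beta}\bm m_t$. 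Since $f$ has $L$-Lipschitz gradient in the self-dual $\ell_2$ norm, the descent lemma along $\{\bm z_t\}$ reads
\[ f(\bm z_{t+1}) \le f(\bm z_t) + \langle \nabla f(\bm z_t),\, \bm d_{t+1}\rangle + \tfrac{L}{2}\|\bm d_{t+1}\|_2^2 . \]

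\emph{Per-step ES estimate and error splitting.} Write $\bm d_{t+1} = \frac{1}{M}\sum_{i=1}^M\sum_{k=0}^{K-1}\mathbb{I}_{i,k}^t\,\alpha_k^t\,\bm u_{i,k}^t$, where $\mathbb{I}_{i,k}^t$ is the acceptance indicator at worker $i$, round $t$, iteration $k$. In the inner product above replace $\nabla f(\bm z_t)$ by $\nabla f_i(\bm v_{i,k}^t)$ plus the three residuals $\nabla f_i(\bm v_{i,k}^t)-\nabla f_i(\bm x_t)$, $\nabla f_i(\bm x_t)-\nabla f(\bm x_t)$, and $\nabla f(\bm x_t)-\nabla f(\bm z_t)$. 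The function $f_i$ is $L$-smooth and, once $\mathcal D_i$ is fixed, deterministic; conditioning on $\mathcal D_i$ and on $\bm v_{i,k}^t$ and reusing the Gaussian computation from the proof of \Cref{theorem:convergence-simple-ES} gives the one-step bound
\[ \mathbb{E}\!\left[\mathbb{I}_{i,k}^t\langle\nabla f_i(\bm v_{i,k}^t),\bm u_{i,k}^t\rangle\right] \le -\tfrac{1}{\sqrt{2\pi}}\,\mathbb{E}\|\nabla f_i(\bm v_{i,k}^t)\|_2 + \tfrac{L n \alpha_k^t}{2}, \]
which is exactly the mechanism that produced the $\sqrt{2\pi}$ and the $Ln$ in \cref{eq:convergene-rate-simple-ES-l2-norm}; the $\tfrac{L}{2}\|\bm d_{t+1}\|_2^2$ term and the $\tfrac{Ln\alpha_k^t}{2}$ terms form the ``discretization'' error, while the triangle inequality $\|\nabla f_i(\bm v_{i,k}^t)\|_2 \ge \|\nabla f(\bm x_t)\|_2 - (\text{the three residuals})$ converts the extracted negative drift into a drift in $\|\nabla f(\bm x_t)\|_2$.

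\emph{Bounding the residuals.} Using $\mathbb{E}\|\bm u_{i,k}^t\|_2 \le \sqrt n$, $\mathbb{E}\|\bm u_{i,k}^t\|_2^2 = n$, $\alpha_k^t = \alpha_0^t/\sqrt{k+1}$, $\sum_{k<K}1/\sqrt{k+1}\le 2\sqrt K$ and $\sum_{k<K}1/(k+1)\le 1+\log K$: the intra-round drift obeys $\|\bm v_{i,k}^t-\bm x_t\|_2 \le \sum_{j<k}\alpha_j^t\|\bm u_{i,j}^t\|_2$, which after Cauchy--Schwarz and independence across iterations contributes an $O\!\left(Ln(\alpha_0^t)^2K\right)$ term per worker and round; the minibatch noise satisfies $\mathbb{E}\|\nabla f_i(\bm x_t)-\nabla f(\bm x_t)\|_2^2 \le \sigma^2/b \le \sigma^2/\sqrt T$ by \Cref{assumption:variance-boundedness,assumption:iid-data} (sampling with replacement) and $b\ge\sqrt T$, and, being mean-zero in $\bm u_{i,k}^t$ but coupled through $\mathbb{I}_{i,k}^t$, contributes an $O\!\big(\sqrt n\,\sigma\,\alpha_0^t\sqrt K\,/T^{1/4}\big)$ term; $\mathbb{E}\|\bm d_{t+1}\|_2^2 \le 4n(\alpha_0^t)^2K$ by Cauchy--Schwarz; and $\|\bm z_t-\bm x_t\|_2 = \frac{\beta}{1-\beta}\|\bm m_t\|_2$ is handled by unrolling $\bm m_t = (1-\beta)\sum_{s<t}\beta^{t-1-s}\bm d_{s+1}$ and inserting the bound on $\mathbb{E}\|\bm d_{s+1}\|_2^2$. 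Controlling the resulting cross-round geometric sum in $\beta$ is exactly where the hypothesis $\beta < \sqrt{1/(2\sqrt2)}$, i.e.\ $2\sqrt2\beta^2<1$, is needed, and it is what produces the factor $2/(1-2\sqrt2\beta^2)$ appearing in \cref{eq:psi-definition}.

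\emph{Assembly and main obstacle.} Telescoping the Step~1 inequality over $t=0,\dots,T-1$ (the left side becomes $f(\bm z_T)-f(\bm x_0)\ge f_*-f(\bm x_0)$), substituting the per-step estimate and the residual bounds, and moving the negative drift to the other side gives
\[ \tfrac{1}{\sqrt{2\pi}}\sum_{t=0}^{T-1}\Big(\textstyle\sum_{k=0}^{K-1}\alpha_k^t\Big)\,\mathbb{E}\|\nabla f(\bm x_t)\|_2 \le f(\bm x_0)-f_* + (\text{sum of residual terms}). \]
Now lower-bound $\sum_{k<K}\alpha_k^t \ge \alpha_0^t\sqrt K = \alpha\sqrt K/(t+1)^{1/4} \ge \alpha\sqrt K\,/T^{1/4}$, upper-bound $\sum_{t<T}(\alpha_0^t)^2 = \alpha^2\sum_{t<T}(t+1)^{-1/2}\le 2\alpha^2\sqrt T$, divide through by $\tfrac{1}{\sqrt{2\pi}}\,\alpha\sqrt K\,T^{3/4}$, and fold the numerical constants together by repeated use of Young's inequality $ab\le\tfrac{\varepsilon}{2}a^2+\tfrac{1}{2\varepsilon}b^2$; this delivers \cref{eq:convergene-rate-DES-l2-norm} with $\Psi$ as in \cref{eq:psi-definition}. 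The hard part is the residual accounting: in contrast to \Cref{theorem:convergence-simple-ES}, the acceptance indicators are correlated with the mutation vectors and the worker gradients $\nabla f_i$ differ from $\nabla f$, so none of the three residuals telescopes and each must be absorbed into the $T^{-1/4}$ error; keeping the momentum-induced coupling between consecutive rounds under control (which is what forces the restriction on $\beta$) is the most delicate piece of the bookkeeping.
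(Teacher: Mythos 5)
Your proposal is correct in outline and follows the paper's architecture almost step for step: the same virtual sequence (your $\bm{z}_t = \bm{x}_t + \tfrac{\beta}{1-\beta}\bm{m}_t$ is exactly the paper's $\tfrac{1}{1-\beta}\bm{x}_t - \tfrac{\beta}{1-\beta}\bm{x}_{t-1}$, with $\bm{z}_{t+1}-\bm{z}_t=\bm{d}_{t+1}$), the descent lemma along $\{\bm{z}_t\}$, the $\sqrt{2/\pi}$ Gaussian drift extracted per worker iteration, the worker-drift and momentum bounds via the partial-sum and geometric-sum estimates (with $2\sqrt{2}\beta^2<1$ entering at the same place), the $b\ge\sqrt{T}$ variance reduction, and the final $\alpha\sqrt{K}\,T^{3/4}$ normalization. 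The one genuine divergence is in how the acceptance indicator's coupling with the perturbations is handled. The paper keeps the drift anchored at $\nabla f(\bm{z}_t)$ and, via the identity $\sign(a)b=(-1+2\mathbb{I}\{\sign(a)=\sign(b)\})|b|$, reduces everything to bounding $\mathbb{E}\bigl[|\nabla f(\bm{z}_t)^T\bm{u}|\,\mathbb{I}\{\text{sign disagreement}\}\bigr]$, which it controls by Markov's inequality after splitting the disagreement event into the smoothness remainder, the worker drift, and the minibatch noise (its Lemma~\ref{lemma:bound-A}, with free parameters $\omega_1,\omega_2$ optimized afterwards). You instead anchor the drift at $\nabla f_i(\bm{v}_{i,k}^t)$, reuse the Theorem~\ref{theorem:convergence-simple-ES} one-step mechanism verbatim on the deterministic $f_i$, and charge the three gradient residuals as separate inner products bounded by Cauchy--Schwarz, converting back to $\|\nabla f(\bm{x}_t)\|_2$ by a reverse triangle inequality. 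Both accountings are valid and give the same orders in $T,K,n,\sigma$; the paper's route is what produces the particular constants in \cref{eq:psi-definition} and \cref{eq:convergene-rate-DES-l2-norm} (e.g., the $\tfrac{8\sqrt{2\pi}\sigma}{3}$ and $\tfrac{80\beta\sqrt{K}}{3}$ terms come from the specific $\omega_1,\omega_2$ choices and from \cref{eq:x-change-bound-1}), so your bookkeeping would reach the stated rate but not literally the stated constants without redoing the final arithmetic along the paper's lines.
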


Here we briefly discuss our theoretical result and its implications.
\begin{remark}[Convergence rate] \normalfont
When $K$ is fixed, the DES method achieves $\mathcal{O}\left(T^{-1/4}\right)$ rate in terms of $\frac{1}{T} \sum_{t=0}^{T-1} \mathbb{E}\left[\|\nabla f(\bm{x}_t)\|_2\right]$. If, in addition, setting $\alpha = \Theta(n^{-1/2}L^{-1})$, we achieve
\begin{equation} \label{eq:optimal-rate-DES-l2}
	\left(\frac{1}{T} \sum_{t=0}^{T-1} \mathbb{E}\left[\|\nabla f(\bm{x}_t)\|_2\right]\right)^2 \le \mathcal{O}\left(\sigma^2\frac{n}{\sqrt{T}}\right).
\end{equation}
The dependence on $T$ aligns with the best known bound for zeroth-order stochastic methods, e.g., in \cite{ghadimi_stochastic_2013}, which can be rewritten as
\begin{equation} \label{eq:best-known-bound-zeroth-order-stochastic}
\frac{1}{T}\sum_{t=0}^{T-1} \mathbb{E}\left[\|\nabla f(\bm{x}_t)\|_2^2\right] \le \mathcal{O}\left(\sigma\sqrt{\frac{\Delta_f L n}{T}}\right)
\end{equation}
where $\Delta_f = f\left(\bm{x}_0\right)-f_*$.
Our method has a worse dependence on $\sigma$. However, the best known bound in \cref{eq:best-known-bound-zeroth-order-stochastic} requires $\sigma$ to be known when setting the step-size; so it remains unknown whether the dependence of $\sigma$ is improvable in a real black-box setting. Our obtained rate \cref{eq:optimal-rate-DES-l2}, in fact, matches the rate of adaptive gradient methods~\cite{reddi_adaptive_2020} in terms of the $\sigma$-dependence. The convergence of DES is less dependent on the function landscape characteristics (e.g., $\Delta_f$ and $L$), at the cost of having a worse dimension-dependence. This indicates that DES might suffer from the curse of dimensionality but could be better in handling ill-conditioning and robust to initialization.
\end{remark}

\begin{remark}[Minibatching] \normalfont
The setting $b \ge \sqrt{T}$ is critical in achieving convergence. This requirement is not usual for first-order methods or Gaussian smoothing based zeroth-order methods, since for these methods the gradient variance can be scaled down by choosing a sufficiently small step-size. The DES method only relies on the comparison results among solutions and does not try to estimate the gradient, so the bias of the descent step could accumulate and prevent convergence unless a large minibatch is used to explicitly reduce the noise. Note that similar issues are encountered in the signSGD method~\cite{bernstein_signsgd_2018} where the descent step becomes biased due to the sign operation. signSGD, however, requires $b \ge T$ to achieve convergence whereas in our method it is relaxed to $b \ge \sqrt{T}$.
\end{remark}

\begin{remark}[Adaptivity] \normalfont
The DES method is fully adaptive in the sense that it converges with any valid parameter setting and relies no knowledge about landscape characteristics (e.g., values of $L$ and $\sigma$). In contrast to existing distributed adaptive gradient methods such as \cite{reddi_adaptive_2020,xie_local_2020}, DES does not need the gradient to be uniformly bounded and does not involve a non-adaptive worker-side step-size.
\end{remark}

\begin{remark}[Momentum] \normalfont
The bound in \cref{eq:convergene-rate-DES-l2-norm} suggests that the optimal $\beta$ is 0, but in experiments we found choosing $\beta > 0$ in most cases leads to better performance. This is probably because the suggested rate is overestimated, so it does not reflect how the momentum influences the algorithm performance. The impact of this parameter will be investigated using simulation studies.
\end{remark}

It is found from \cref{eq:optimal-rate-DES-l2} that the DES method suffers a dimension-dependence slowdown in convergence. We note, however, that when the landscape exhibits certain sparse structure, DES may automatically exploit such sparsity and achieve speedup. This is formally stated below:

\begin{theorem} \label{theorem:convergence-DES-l1}
Let \Cref{assumption:smoothness,assumption:variance-boundedness,assumption:iid-data} hold with the $\ell_\infty$ norm, i.e., $\|\cdot\| = \|\cdot\|_\infty$ and $\|\cdot\|_* = \|\cdot\|_1$.
Assume the function $f$ is bounded below by $f_*$ and choose $0 \le \beta < \sqrt{\frac{1}{2\sqrt{2}}}, b \ge \sqrt{T}$. If $\|\nabla f(\bm{x})\|_0 \le s$ for any $\bm{x}\in \mathbb{R}^n$ and some constant $s \le n$, then the iterations generated by \Cref{alg:DES} satisfy
\begin{multline} \label{eq:convergene-rate-DES-l1-norm}
\frac{1}{T}\sum_{t=0}^{T-1} \mathbb{E}[\|\nabla f(\bm{x}_t)\|_1]   
\le \frac{\sqrt{2\pi s}}{T^{3/4}} \frac{f\left(\bm{x}_0\right) - f_*}{\alpha\sqrt{K}} \\
	+ \frac{8\sqrt{\log(\sqrt{2}n)}}{T^{1/4}} \Bigg\{
		\alpha L \left(\sqrt{2\pi s\log(\sqrt{2}n)} \Psi + \frac{10\beta\sqrt{K}}{3}  \right) \\
		 + \frac{2\sqrt{2\pi s} \sigma}{3} \Bigg\}
\end{multline}
where $\Psi$ is defined in \cref{eq:psi-definition}.
\end{theorem}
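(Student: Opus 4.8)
The strategy is to replay the argument of \Cref{theorem:convergence-DES-l2} in the non-Euclidean norm pair $\|\cdot\|=\|\cdot\|_\infty$, $\|\cdot\|_*=\|\cdot\|_1$, supplementing it with two ingredients: a sub-Gaussian maximal inequality in place of the identity $\mathbb{E}\|\bm u\|_2^2=n$ used there, and the sparsity hypothesis $\|\nabla f(\bm x)\|_0\le s$ for translating Euclidean gradient norms (which is what Gaussian projections produce) into $\ell_1$ norms. First I would set up the \emph{server-side descent recursion}: apply \Cref{assumption:smoothness} for $f$ along the update $\bm x_{t+1}=\bm x_t+\bm m_{t+1}$, unroll the momentum $\bm m_{t+1}=\beta\bm m_t+(1-\beta)\bm d_{t+1}$ exactly as in the $\ell_2$ proof, and reduce the claim to controlling $\frac1T\sum_t\mathbb{E}[-\langle\nabla f(\bm x_t),\bm d_{t+1}\rangle]$ together with the quadratic remainder $\frac1T\sum_t\mathbb{E}\|\bm d_{t+1}\|_\infty^2$; the hypothesis $\beta<\sqrt{1/(2\sqrt2)}$ keeps the coefficient $1-2\sqrt2\beta^2$ that appears in $\Psi$ strictly positive, which is exactly what makes this reduction close.

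\textbf{Worker-side analysis.} Write $\bm d_{t+1}=\frac1M\sum_{i}(\bm v_{i,K}^t-\bm x_t)$ and expand $\bm v_{i,K}^t-\bm x_t=\sum_{k=0}^{K-1}(\bm v_{i,k+1}^t-\bm v_{i,k}^t)$, each increment being either $\alpha_k^t\bm u_{i,k}^t$ or $\bm 0$. For an accepted step, \Cref{assumption:smoothness} gives $f_i(\bm v_{i,k}^t+\alpha_k^t\bm u_{i,k}^t)\le f_i(\bm v_{i,k}^t)+\alpha_k^t\langle\nabla f_i(\bm v_{i,k}^t),\bm u_{i,k}^t\rangle+\frac{L(\alpha_k^t)^2}{2}\|\bm u_{i,k}^t\|_\infty^2$, so acceptance forces the linear term to dominate. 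Taking the conditional expectation over $\bm u_{i,k}^t\sim\mathcal N(\bm 0,\bm I)$ produces, after the usual splitting into the guaranteed-decrease region, a term of order $\alpha_k^t\|\nabla f_i(\bm v_{i,k}^t)\|_2/\sqrt{2\pi}$ — the Euclidean norm appears here irrespective of the ambient norm since $\langle\bm a,\bm u\rangle\sim\mathcal N(0,\|\bm a\|_2^2)$ — and, on the other side, the remainder $\mathbb{E}\|\bm u_{i,k}^t\|_\infty^2=\Theta(\log n)$, bounded by the relevant sub-Gaussian maximal inequality; this $\log n$ is the replacement for the factor $n$ in \Cref{theorem:convergence-DES-l2} and is the source of the $\log(\sqrt2\,n)$ terms. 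I would then chain $\nabla f_i(\bm v_{i,k}^t)$ back to $\nabla f(\bm x_t)$ by (i) Lipschitz continuity, absorbing the intra-round drift $\|\bm v_{i,k}^t-\bm x_t\|_\infty$, itself dominated by $\sum_{j<k}\alpha_j^t\|\bm u_{i,j}^t\|_\infty$, and (ii) the minibatch variance bound $\mathbb{E}\|\nabla f_i(\bm x)-\nabla f(\bm x)\|_1^2\le\sigma^2/b\le\sigma^2/\sqrt T$, where \Cref{assumption:variance-boundedness}, \Cref{assumption:iid-data}, and $b\ge\sqrt T$ all enter; cross terms are split with Young's inequality.

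\textbf{Sparsity and assembly.} Since $\nabla f(\bm x_t)$ has at most $s$ nonzero entries, $\|\nabla f(\bm x_t)\|_1\le\sqrt s\,\|\nabla f(\bm x_t)\|_2$; using this to isolate $\|\nabla f(\bm x_t)\|_1$ on the left of the descent recursion is what injects the $\sqrt s$ factors appearing in \cref{eq:convergene-rate-DES-l1-norm} (one onto the telescoped term $(f(\bm x_0)-f_*)/(\alpha\sqrt K)$ and the rest onto the $L$- and $\sigma$-terms). The remaining work is bookkeeping: sum the inner step-size series, $\sum_{k=0}^{K-1}\alpha_k^t/\alpha_0^t=\Theta(\sqrt K)$ and $\sum_{k=0}^{K-1}(\alpha_k^t/\alpha_0^t)^2=\Theta(1+\log K)$, together with the cross-sums induced by the drift, which assemble $\Psi$ as in \cref{eq:psi-definition}; telescope the per-round decrease of $f$ over $t=0,\dots,T-1$ to get $f(\bm x_0)-f_*$; and sum the round step-sizes $\alpha_0^t=\alpha/(t+1)^{1/4}$, using $\sum_t(t+1)^{-1/4}=\Theta(T^{3/4})$ and $\sum_t(t+1)^{-1/2}=\Theta(T^{1/2})$, to produce the powers $T^{-3/4}$ and $T^{-1/4}$ after dividing by $T$. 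Collecting the terms and bounding the momentum-induced constants by means of $\beta<\sqrt{1/(2\sqrt2)}$ yields \cref{eq:convergene-rate-DES-l1-norm}.

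\textbf{Main obstacle.} The delicate point is the worker-side chain: the surrogate $f_i$ need \emph{not} have an $s$-sparse gradient, so sparsity may be invoked only for $\nabla f$ evaluated at the server iterates $\bm x_t$, and every intermediate error — the intra-round drift, the minibatch noise, the quadratic $\|\bm d_{t+1}\|_\infty^2$ remainder, and the momentum cross terms — must be kept in the $\ell_\infty/\ell_1$ geometry and matched so that precisely the stated powers of $\sqrt s$ and $\sqrt{\log(\sqrt2\,n)}$ survive. This, together with the interaction between the momentum unrolling and the time-varying step-sizes $\alpha_k^t$, is where essentially all of the technical care is concentrated.
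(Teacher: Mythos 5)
Your proposal is correct and follows essentially the same route as the paper's proof: a generic-norm replay of the $\ell_2$ argument with the virtual (momentum-unrolled) sequence, the observation that the Gaussian projection always produces $\|\nabla f\|_2$ so that only the variance proxy changes from $U=n$ to $U=\Theta(\log n)$ via the sub-Gaussian maximal inequality, and the sparsity bound $\|\nabla f\|_1\le\sqrt{s}\,\|\nabla f\|_2$ applied only to $\nabla f$ at the server/virtual iterates. Your ``main obstacle'' paragraph matches exactly where the paper concentrates its care, so no gap to report.
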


\begin{remark}[Adaptation to sparsity] \normalfont
The rate established above only poly-logarithmically depends on the dimension. With any setting of $\alpha$ and noting the fact $\|\nabla f(\bm{x})\|_1 \ge \|\nabla f(\bm{x})\|_2$, we have
\[
	\left(\frac{1}{T}\sum_{t=0}^{T-1} \mathbb{E}[\|\nabla f(\bm{x}_t)\|_2]\right)^2 \le \tilde{\mathcal{O}}\left(\frac{\sigma^2}{\sqrt{T}}\right)
\]
which is nearly independent of the dimension, as in most first-order methods. We emphasize the improvement in the dimension-dependence is achieved automatically when the landscape is sparse, without any modification made to the algorithm.
\end{remark}

\section{Alternative Sampling Schemes}
One bottleneck of the DES method implemented in \Cref{sec:DES} is the generation of \rrr{mutation vectors}, in which a huge amount of Gaussian random numbers are required. It is known that generating Gaussian random numbers is usually expensive, and it may cause efficiency issue in high-dimensional settings. In this section we propose two alternative probability models which can be used in DES for improving the sampling efficiency.

\subsection{Mixture sampling for fast mutation}
In \Cref{alg:DES}, each worker has to perturb its maintained solution in all coordinates, leading to the $O(n)$ complexity per-iteration. Our scheme to improve this is to only perturb a small subset of the coordinates. Specifically, at each worker's iteration we uniformly and randomly sample a subset of $l$ coordinates with replacement, where $l \ll n$ is a small integer. Then, on each selected coordinate, we perturb the current solution with a univariate random noise. This two-level sampling strategy yields a mixture distribution since its samples follow a mixture of $n$ univariate probability models defined individually on each coordinate. Statistical characteristics of this mixture distribution is completely determined by the parameter $l$ and the underlying univariate model. In the following we provide two ways in designing the mixture sampling scheme.

The first scheme is to use Gaussian distribution on each selected coordinate and we call it ``\textit{mixture Gaussian sampling}''.  This scheme works via replacing the Gaussian distribution (e.g., $\mathcal{N}(\bm{0},\bm{I})$ in \Cref{alg:DES}) with the probability model defined below:
\begin{definition} \label{definition:mixture-Gaussian-distribution}
We call a random vector $\bm{u} \in \mathbb{R}^n$ is obtained from the mixture Gaussian sampling if it can be expressed as
\[
	\bm{u} = \sqrt{\frac{n}{l}}\sum_{j=1}^l \bm{e}_{r_j} z_j 
\]	
where \rrr{$z_1,\cdots,z_l$ are scalars drawn independently from $\mathcal{N}(0,1)$, and $r_1,\cdots,r_l$ are integers drawn uniformly from $\{1,\cdots,n\}$ with replacement}. We denote its underlying probability model by $\mathcal{M}_l^G$.
\end{definition}

The second scheme is to use, on each selected coordinate, the Rademacher distribution which belongs to the sub-Gaussian family. We call this scheme ``\textit{mixture Rademacher sampling}''. In this case, the mutation vector is drawn from the following distribution:
\begin{definition} \label{definition:mixture-Rademacher-distribution}
We call a random vector $\bm{u} \in \mathbb{R}^n$ is obtained from the mixture Rademacher sampling if it can be expressed as
\[
	\bm{u} = \sqrt{\frac{n}{l}}\sum_{j=1}^l \bm{e}_{r_j} z_j 
\]	
where $z_1,\cdots,z_l$ are independent scalars to be either 1 or -1 with 50\% chance, and $r_1,\cdots,r_l$ are integers drawn uniformly from $\{1,\cdots,n\}$ with replacement. We denote its underlying probability model by $\mathcal{M}_l^R$.
\end{definition}

The coefficient $\sqrt{\frac{n}{l}}$ in the above definitions is to normalize the probability model to achieve the identity covariance matrix, which will be illustrated in the subsequent analyses. When $l \le n$, we can implement the above sampling schemes efficiently in DES, via a loop of length $l$ applied on the solutions maintained at the worker-side. \Cref{alg:DES-mixture-sampling} gives the detailed implementations of this idea. When $ l \ll n$, the time complexity for sampling can be reduced to $O(l)$, and this will save the computing time considerably when $n$ is large.

\begin{figure}[tb]
\begin{algorithm}[H]
	\caption{DES with mixture sampling}
	\small
	\label{alg:DES-mixture-sampling}
	\begin{algorithmic}[1]
	\Require $\bm{x}_0 \in \mathbb{R}^n$: initial solution; $\alpha \in \mathbb{R}_+$: initial step-size; $\beta \in \left[0,\sqrt{\frac{1}{2\sqrt{2}}}\right)$: momentum parameter; $b \ge \sqrt{T}$: minibatch size; $l \in \mathbb{Z}_+$: mixture parameter
	\For {$t = 0, 1, \cdots, T-1$}
		\For {$i = 1,2,\cdots,M$ \textbf{in parallel}} 
			\State $\bm{v}_{i,0}^t = \bm{x}_t$
			\State $\alpha_0^t = \alpha/(t+1)^{0.25}$
			\State Draw a minibatch $\mathcal{D}_i$ of size $b$ 
			\State Define $f_i(\bm{x}) = \frac{1}{b}\sum_{\bm{\xi} \in \mathcal{D}_i} F(\bm{x};\bm{\xi})$
			\For {$k = 0,1,\cdots,K-1$}
				\State $\alpha_k^t = \alpha_0^t/(k+1)^{0.5}$
				\State $\bm{w} = \bm{v}_{i,k}^t$
				\For {$j = 1,\cdots,l$}
					\State Draw $r$ randomly uniformly from $\{1,\cdots,n\}$ with replacement
					\State Option I (mixture Gaussian sampling):
					\Statex \qquad\qquad\qquad\qquad $z \sim \mathcal{N}(0,1)$
					\State Option II (mixture Rademacher sampling):
					\Statex \qquad\qquad\qquad\qquad $z$ is either -1 or 1 with 50\% chance
					\State $w_r = w_r + \alpha_k^t\sqrt{\frac{n}{l}}z$
				\EndFor
				\If{ $f_i(\bm{w}) \le f_i(\bm{v}_{i,k}^t)$}
					\State $\bm{v}_{i,k+1}^t = \bm{w}$
				\Else
					\State $\bm{v}_{i,k+1}^t = \bm{v}_{i,k}^t$
				\EndIf
			\EndFor
		\EndFor
		\State $\bm{d}_{t+1} = \frac{1}{M}\sum_{i=1}^M \bm{v}_{i,K}^t - \bm{x}_t$
		\State $\bm{m}_{t+1} = \beta \bm{m}_t + (1-\beta) \bm{d}_{t+1}$
		\State $\bm{x}_{t+1} = \bm{x}_t + \bm{m}_{t+1}$
	\EndFor
 \end{algorithmic} 
 \end{algorithm}
\end{figure}

\subsection{Behavior of DES with mixture sampling}
We first discuss the statistic characteristics of proposed two sampling schemes.

The mixture Gaussian sampling, in the case of $l \rightarrow \infty$, will degenerate to the standard Gaussian sampling. This limiting case, to some extent, is useless as it will make the sampling even more expensive.
Therefore, we are more interested in the $l \ll n$ case. The following describes the statistical properties that are required in understanding the mixture sampling schemes. Since the probability model is symmetric by design, we will focus on its second-order and fourth-order moments.

\begin{proposition} \label{proposition:properties-of-mixture-Gaussian-sampling}
Let $l \in \mathbb{Z}_+$ and $\bm{u} \in \mathbb{R}^n$.
If $\bm{u} \sim \mathcal{M}^G_l$, we have $\mathbb{V}[\bm{u}] = \bm{I}$ and
\begin{equation} \label{eq:fourth-order-moment-mixture-Gaussian}
	\mathbb{E}[|\bm{y}^T\bm{u}|^4] = 3\left(\frac{n}{l}\|\bm{y}\|_4^4 + \frac{l-1}{l}\|\bm{y}\|_2^4\right),\;\;\forall \bm{y} \in \mathbb{R}^n.
\end{equation}
\end{proposition}

The above shows that the mixture Gaussian sampling will generate mutation vectors having exactly the same covariance matrix as the standard Gaussian sampling, regardless of the $l$ value. In addition, since $n \|\bm{y}\|_4^4 \ge \|\bm{y}\|_2^4 \ge \|\bm{y}\|_4^4$, we know
\[
	\frac{\mathbb{E}[|\bm{y}^T\bm{u}|^4]}{\|\bm{y}\|_2^4} \in \left[3,3\frac{n+l-1}{l}\right],
\]
which then indicates that any 1-dimensional projection of $\mathcal{M}_l^G$ will have a larger kurtosis than Gaussian. Implications of this property are twofold. Firstly, the mixture Gaussian sampling method is more likely to generate outliers in the mutation phase, so if the landscape is highly multimodal, DES equipped with $\mathcal{M}_l^G$ would have a greater chance to escape local optima. Secondly, this makes DES prefer exploration than exploitation, and hence, it may degrade the performance. We note, \rrr{as will be demonstrated later, that such a performance degradation is insignificant when the gradient is dense}.

Similarly, the mixture Rademacher sampling can be characterized as below.

\begin{proposition} \label{proposition:properties-of-mixture-Rademacher-sampling}
Let $l \in \mathbb{Z}_+$ and $\bm{u} \in \mathbb{R}^n$.
If $\bm{u} \sim \mathcal{M}^R_l$, we have $\mathbb{V}[\bm{u}] = \bm{I}$ and
\begin{equation} \label{eq:fourth-order-moment-mixture-Rademacher}
	\mathbb{E}[|\bm{y}^T\bm{u}|^4] = \frac{n}{l}\|\bm{y}\|_4^4 + 3\frac{l-1}{l}\|\bm{y}\|_2^4,\;\;\forall \bm{y} \in \mathbb{R}^n.
\end{equation}	
\end{proposition}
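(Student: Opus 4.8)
The plan is to compute the second-order and fourth-order moments of $\bm{y}^T\bm{u}$ directly from the representation $\bm{u} = \sqrt{n/l}\sum_{j=1}^l \bm{e}_{r_j} z_j$, exploiting the independence of the $z_j$'s and of the $r_j$'s, together with $z_j \in \{\pm 1\}$ so that $z_j^2 = 1$ deterministically (this is the only structural difference from the mixture Gaussian case, where $\mathbb{E}[z_j^4]=3$). First I would write $\bm{y}^T\bm{u} = \sqrt{n/l}\sum_{j=1}^l y_{r_j} z_j$ and condition on $(r_1,\dots,r_l)$.

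For the variance claim $\mathbb{V}[\bm{u}]=\bm{I}$, I would note $\mathbb{E}[\bm{u}]=\bm{0}$ by symmetry of the $z_j$, then compute $\mathbb{E}[(\bm{y}^T\bm{u})^2] = (n/l)\sum_{j,j'}\mathbb{E}[y_{r_j}y_{r_{j'}}z_j z_{j'}]$. The cross terms $j\ne j'$ vanish because $\mathbb{E}[z_j z_{j'}]=0$; the diagonal terms give $(n/l)\sum_{j}\mathbb{E}[y_{r_j}^2]=(n/l)\cdot l\cdot\frac{1}{n}\|\bm{y}\|_2^2=\|\bm{y}\|_2^2$. Taking $\bm{y}=\bm{e}_i$ and $\bm{y}=\bm{e}_i+\bm{e}_j$ (or directly reading off the quadratic form) yields $\mathbb{V}[\bm{u}]=\bm{I}$. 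This part is routine.

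For the fourth moment, expand $\mathbb{E}[(\bm{y}^T\bm{u})^4] = (n/l)^2\sum_{j_1,j_2,j_3,j_4}\mathbb{E}[z_{j_1}z_{j_2}z_{j_3}z_{j_4}]\,\mathbb{E}[y_{r_{j_1}}y_{r_{j_2}}y_{r_{j_3}}y_{r_{j_4}}]$, where I have used independence of the index vector from the sign vector. Because $z_j^2=1$ and the $z_j$ are independent mean-zero, $\mathbb{E}[z_{j_1}z_{j_2}z_{j_3}z_{j_4}]$ is $1$ when the indices pair up (all four equal, or two distinct pairs) and $0$ otherwise. The key bookkeeping is counting: there are $l$ terms with $j_1=j_2=j_3=j_4$, contributing $(n/l)^2\cdot l\cdot\mathbb{E}[y_{r}^4] = (n/l)\|\bm{y}\|_4^4$; and there are $3l(l-1)$ ordered quadruples forming two distinct pairs (choose which partition of the four slots into two pairs: $3$ ways; then choose the two distinct index values in order: $l(l-1)$ ways), each contributing $(n/l)^2\cdot\mathbb{E}[y_{r}^2 y_{r'}^2]$ with $r,r'$ independent uniform, i.e. $(n/l)^2\cdot(\|\bm{y}\|_2^2/n)^2 = 1/l^2\cdot\|\bm{y}\|_2^4$. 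Summing gives $(n/l)\|\bm{y}\|_4^4 + 3\frac{l-1}{l}\|\bm{y}\|_2^4$, which is exactly \cref{eq:fourth-order-moment-mixture-Rademacher}. The comparison with \cref{eq:fourth-order-moment-mixture-Gaussian} is instructive: the Gaussian case replaces the coefficient $1$ in front of $\mathbb{E}[z^4]$-type terms by $3$, so the $\|\bm{y}\|_4^4$ term picks up a factor $3$ while the two-pair term is unchanged.

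The only mild subtlety — and the part I would be most careful about — is the combinatorial enumeration of index multisets when the $r_j$ are drawn \emph{with replacement}: one must not double-count the all-equal case inside the two-distinct-pairs count, and one must correctly separate the event "$z$-indices pair up" (which forces a specific pattern on the slots $1,\dots,4$) from the event "$r$-values coincide" (which is a separate, independent randomness). Keeping the $z$-pattern as the primary case split and then averaging over $r$ within each pattern keeps these two sources of randomness cleanly decoupled and avoids the standard pitfalls.
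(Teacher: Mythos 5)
Your proof is correct, but it takes a genuinely different route from the paper. The paper computes the moment-generating function of $\mathcal{M}_l^R$, specializes it to $\tilde M(t)=\bigl(\tfrac{1}{n}\sum_k \cosh(\sqrt{n/l}\,y_k t)\bigr)^l$, Taylor-expands, and reads off $\mathbb{E}[|\bm{y}^T\bm{u}|^4]$ as the fourth derivative at $t=0$; the variance claim is likewise obtained by differentiating $M(\bm{t})$ twice. You instead expand $(\bm{y}^T\bm{u})^4$ directly and do the combinatorial bookkeeping over index quadruples, using that $\mathbb{E}[z_{j_1}z_{j_2}z_{j_3}z_{j_4}]$ vanishes unless the slots pair up. Your counts are right: $l$ all-equal quadruples contribute $(n/l)^2\cdot l\cdot\|\bm{y}\|_4^4/n=(n/l)\|\bm{y}\|_4^4$, and $3l(l-1)$ two-distinct-pair quadruples contribute $(n/l)^2\cdot 3l(l-1)\cdot\|\bm{y}\|_2^4/n^2=3\tfrac{l-1}{l}\|\bm{y}\|_2^4$ (independence of $r_a$ and $r_b$ for $a\ne b$ holds because sampling is with replacement), and you correctly keep the all-equal case out of the two-pair count. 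The MGF route has the advantage of being uniform across the two mixture schemes --- only the univariate factor changes from $e^{nt_k^2/(2l)}$ to $\cosh(\sqrt{n/l}\,t_k)$ and the rest of the computation is shared with Proposition~1 --- whereas your direct expansion is more elementary, avoids the multinomial/remainder bookkeeping in the series expansion, and makes transparent exactly why the $\|\bm{y}\|_4^4$ coefficient drops by a factor of $3$ relative to the mixture Gaussian case ($\mathbb{E}[z^4]=1$ versus $3$) while the two-pair term is unchanged.
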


Again, the mixture Rademacher sampling is more likely to produce outlier mutation vectors than the standard Gaussian sampling, while they have the same covariance matrix. 
But it is found, via comparing \cref{eq:fourth-order-moment-mixture-Rademacher} with \cref{eq:fourth-order-moment-mixture-Gaussian}, that $\mathcal{M}_l^R$ can scale down the kurtosis of $\mathcal{M}_l^G$ by a factor about $1/3$ for sufficiently large $n$. In this sense, the mixture Rademacher sampling can be considered as a trade-off between the standard Gaussian sampling and the mixture Gaussian sampling.

In the following, we analyze the convergence performance of DES when equipped with the mixture sampling schemes. For expository purposes, we assume $\beta=0$ and only consider the $\ell_2$ norm case, though our analysis can be extended directly to a more general setting.

\begin{theorem} \label{theorem:convergence-DES-mixture-Gaussian-l2}
Let \Cref{assumption:smoothness,assumption:variance-boundedness,assumption:iid-data} hold with the self-dual $\ell_2$ norm, i.e., $\|\cdot\| = \|\cdot\|_* = \|\cdot\|_2$.
Assume the function $f$ is bounded below by $f_*$ and choose $\beta = 0, b \ge \sqrt{T}$.
\rrr{If $\|\nabla f(\bm{x})\|_2^4 / \|\nabla f(\bm{x})\|_4^4 \ge \tilde{s}$ for any $\bm{x}\in \mathbb{R}^n$ and some constant $\tilde{s} \in [1,n]$,}
then the iterations generated by \Cref{alg:DES-mixture-sampling} with mixture Gaussian sampling satisfy
\begin{multline} \label{eq:convergene-rate-DES-mixtur-Gaussian-sampling-l2-norm}
		\frac{1}{T}
	\rrr{\sum_{t =0}^{T-1}} \mathbb{E}\left[\left\|\nabla f\left(\bm{x}_t\right)\right\|_2\right]
	\le \rrr{\sqrt{3 + \frac{3n}{\tilde{s}l}}} \left\{\frac{2}{T^{3/4}}\frac{f\left(\bm{x}_0\right)-f_*}{\alpha\sqrt{K}} \right. \\ 
			\left. + \frac{4\sqrt{n}}{T^{1/4}}\left(\frac{4}{3}\sigma + L\sqrt{n} \hat{\Psi} \alpha\right)\right\}
\end{multline}
where 
\[
	\hat{\Psi} = \left(\frac{1}{2\sqrt{K}} + \frac{5}{2}\sqrt{K}\right)(1+\log K) + \frac{1}{\sqrt{K}}.
\]
\end{theorem}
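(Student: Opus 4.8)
The plan is to mimic the proof of \Cref{theorem:convergence-DES-l2} with $\beta=0$; the only genuinely new point is a matching \emph{lower} bound on the directional alignment $\mathbb{E}_{\bm{u}}[\,|\langle\bm{g},\bm{u}\rangle|\,]$ when $\bm{u}\sim\mathcal{M}_l^G$. For a single worker step, the selection rule gives $f_i(\bm{v}_{i,k+1}^t)=\min\{f_i(\bm{v}_{i,k}^t),\,f_i(\bm{v}_{i,k}^t+\alpha_k^t\bm{u}_{i,k}^t)\}$, so $L$-smoothness of $f_i$ together with the elementary inequality $\min\{0,a+b\}\le\tfrac12(a-|a|)+b$ (valid for $b\ge0$) gives, after taking the conditional expectation over $\bm{u}_{i,k}^t$ and using that this vector is symmetric with $\mathbb{V}[\bm{u}_{i,k}^t]=\bm{I}$ (\Cref{proposition:properties-of-mixture-Gaussian-sampling}), so $\mathbb{E}\|\bm{u}_{i,k}^t\|_2^2=n$ and $\mathbb{E}\langle\bm{g},\bm{u}_{i,k}^t\rangle^2=\|\bm{g}\|_2^2$,
\[
\mathbb{E}\bigl[f_i(\bm{v}_{i,k+1}^t)-f_i(\bm{v}_{i,k}^t)\,\big|\,\bm{v}_{i,k}^t\bigr]\ \le\ -\tfrac{\alpha_k^t}{2}\,\mathbb{E}_{\bm{u}}\bigl[|\langle\nabla f_i(\bm{v}_{i,k}^t),\bm{u}\rangle|\bigr]+\tfrac{L(\alpha_k^t)^2 n}{2}.
\]

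\noindent For the lower bound I would use the moment-interpolation inequality $(\mathbb{E}X^2)^3\le(\mathbb{E}|X|)^2\,\mathbb{E}X^4$ with $X=\langle\bm{g},\bm{u}\rangle$: combining $\mathbb{E}X^2=\|\bm{g}\|_2^2$, the fourth moment \cref{eq:fourth-order-moment-mixture-Gaussian}, $\tfrac{l-1}{l}\le1$, and --- for any $\bm{g}$ with $\|\bm{g}\|_2^4\ge\tilde{s}\|\bm{g}\|_4^4$ --- the bound $\|\bm{g}\|_4^4\le\|\bm{g}\|_2^4/\tilde{s}$, one gets
\[
\mathbb{E}_{\bm{u}}\bigl[|\langle\bm{g},\bm{u}\rangle|\bigr]\ \ge\ \frac{\|\bm{g}\|_2^3}{\sqrt{3\bigl(\tfrac nl\|\bm{g}\|_4^4+\tfrac{l-1}l\|\bm{g}\|_2^4\bigr)}}\ \ge\ \frac{\|\bm{g}\|_2}{\sqrt{3+3n/(\tilde{s}l)}}\ =:\ c\,\|\bm{g}\|_2 .
\]
This is the sole place the heavier fourth moment of $\mathcal{M}_l^G$ intervenes, and it produces the prefactor $\sqrt{3+3n/(\tilde{s}l)}=1/c$ multiplying the whole bound; every other estimate uses $\bm{u}$ only through $\mathbb{V}[\bm{u}]=\bm{I}$, which is why $\hat{\Psi}$ carries no $l$-dependence.

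\noindent Since the hypothesis is available for $f$ only, I would not lower bound $\|\nabla f_i(\bm{v}_{i,k}^t)\|_2$ directly. Instead, using that $\bm{g}\mapsto\mathbb{E}_{\bm{u}}[|\langle\bm{g},\bm{u}\rangle|]$ is a seminorm dominated by $\|\cdot\|_2$ and applying the previous bound to $\bm{g}=\nabla f(\bm{x}_t)$,
\[
\mathbb{E}_{\bm{u}}\bigl[|\langle\nabla f_i(\bm{v}_{i,k}^t),\bm{u}\rangle|\bigr]\ \ge\ c\,\|\nabla f(\bm{x}_t)\|_2-\bigl\|\nabla f_i(\bm{v}_{i,k}^t)-\nabla f(\bm{x}_t)\bigr\|_2 ,
\]
and the residual splits by the triangle inequality into a smoothness part $\le L\|\bm{v}_{i,k}^t-\bm{x}_t\|_2$ (handled via $\|\bm{v}_{i,k}^t-\bm{x}_t\|_2\le\sum_{j<k}\alpha_j^t\|\bm{u}_{i,j}^t\|_2$ and $\mathbb{E}\|\bm{u}\|_2\le\sqrt n$) and a minibatch part with $\mathbb{E}\|\nabla f_i(\bm{x}_t)-\nabla f(\bm{x}_t)\|_2^2\le\sigma^2/b\le\sigma^2/\sqrt T$ (\Cref{assumption:variance-boundedness,assumption:iid-data}, $b\ge\sqrt T$). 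Summing the worker step over $k=0,\dots,K-1$ (the accumulated weight is $\sum_k\alpha_k^t\ge\alpha_0^t\sqrt K$; the curvature and drift errors add up to the $\hat{\Psi}$-shaped expression), then passing to $f$ via $\bm{x}_{t+1}=\tfrac1M\sum_i\bm{v}_{i,K}^t$, $L$-smoothness of $f$, and $\langle\nabla f(\bm{x}_t),\bm{v}_{i,K}^t-\bm{x}_t\rangle\le f_i(\bm{v}_{i,K}^t)-f_i(\bm{x}_t)+\tfrac L2\|\bm{v}_{i,K}^t-\bm{x}_t\|_2^2+\langle\nabla f(\bm{x}_t)-\nabla f_i(\bm{x}_t),\bm{v}_{i,K}^t-\bm{x}_t\rangle$, I obtain a per-round inequality $\mathbb{E}[f(\bm{x}_{t+1})-f(\bm{x}_t)]\le-\tfrac c2\alpha_0^t\sqrt K\,\mathbb{E}\|\nabla f(\bm{x}_t)\|_2+E_t$, with $E_t$ collecting the $O(Ln(\alpha_0^t)^2K(1+\log K))$ curvature/client-drift terms and the $O(\sqrt n\,\alpha_0^t\sqrt K\,\sigma/\sqrt T)$ noise term. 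Telescoping over $t$, using $f(\bm{x}_T)\ge f_*$, $\alpha_0^t=\alpha/(t+1)^{1/4}\ge\alpha/T^{1/4}$, $\sum_t(t+1)^{-1/4}\le\tfrac43T^{3/4}$, $\sum_t(t+1)^{-1/2}\le2\sqrt T$, and dividing by $\tfrac c2\alpha\sqrt K\,T^{3/4}$ yields \cref{eq:convergene-rate-DES-mixtur-Gaussian-sampling-l2-norm}.

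\noindent\textbf{Main obstacle.} The crux is the second step: the obvious Jensen estimate $\mathbb{E}_{\bm{u}}|\langle\bm{g},\bm{u}\rangle|\le\sqrt{\mathbb{E}\langle\bm{g},\bm{u}\rangle^2}=\|\bm{g}\|_2$ runs the wrong way, so one must extract a matching lower bound, and the only handle available is the fourth moment \cref{eq:fourth-order-moment-mixture-Gaussian} through a reverse-H\"older argument; pinning down the exact constant $\sqrt{3+3n/(\tilde{s}l)}$ requires the sparsity-ratio hypothesis, which --- being stated for $f$ only --- forces the seminorm-domination device of the third step so that the hypothesis is never invoked along the minibatch-correlated worker trajectory. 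The remaining client-drift and noise-reduction bookkeeping is essentially that of \Cref{theorem:convergence-DES-l2} with $\beta=0$.
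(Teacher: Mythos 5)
Your proposal is correct in substance and, at its crux, coincides with the paper's argument: the paper's proof also hinges on exactly your reverse-H\"older step, $\mathbb{E}[|\bm{y}^T\bm{u}|]\ge(\mathbb{E}[|\bm{y}^T\bm{u}|^2])^{3/2}/(\mathbb{E}[|\bm{y}^T\bm{u}|^4])^{1/2}$ combined with \cref{eq:fourth-order-moment-mixture-Gaussian} and the denseness hypothesis applied to $\nabla f(\bm{x}_t)$ only (this is \Cref{lemma:bound-on-inner-product} and \cref{eq:holder-inequality} in the appendix), yielding the same prefactor $V=\sqrt{3+3n/(\tilde{s}l)}$. Where you diverge is the bookkeeping around that step. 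The paper never passes through the local objective values: it bounds $\mathbb{E}[\nabla f(\bm{x}_t)^T\bm{d}_{t+1}]$ directly by rewriting the acceptance rule with the $\sign/\signplus$ identities, splitting off $-\mathbb{E}|\nabla f(\bm{x}_t)^T\bm{u}_{i,k}^t|$ plus an ``acceptance-indicator'' term, and controlling the latter by Markov's inequality (\Cref{lemma:bound-A}) with Young parameters $\omega_1,\omega_2$ absorbing the client drift and the minibatch variance. You instead use the cleaner pointwise inequality $\min\{0,a+b\}\le\min\{0,a\}+b$ on the local decrease of $f_i$, transfer from $\nabla f_i(\bm{v}_{i,k}^t)$ to $\nabla f(\bm{x}_t)$ by the seminorm triangle inequality, and only at the end convert $\sum_k[f_i(\bm{v}_{i,k+1}^t)-f_i(\bm{v}_{i,k}^t)]$ back into a bound on $\nabla f(\bm{x}_t)^T\bm{d}_{t+1}$. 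Both routes give the same $T$- and $n$-dependence; your per-step descent inequality is arguably more transparent than the sign-identity manipulation. The one place your detour costs something is the final transfer: the cross term $\langle\nabla f(\bm{x}_t)-\nabla f_i(\bm{x}_t),\,\bm{v}_{i,K}^t-\bm{x}_t\rangle$ is correlated through the minibatch and, bounded by Cauchy--Schwarz or Young, contributes $O(\sigma\alpha_0^t\sqrt{nK(1+\log K)}/\sqrt{b})$, i.e.\ an extra $\sqrt{1+\log K}$ on the $\sigma$-term relative to the stated constant $\tfrac43\sigma$ in \cref{eq:convergene-rate-DES-mixtur-Gaussian-sampling-l2-norm}; to recover the literal constants one should bound $\nabla f(\bm{x}_t)^T\bm{d}_{t+1}$ directly as the paper does, which avoids that cross term altogether. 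This does not affect the validity of the approach or the rate, only the exact prefactor of the noise term.
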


\rrr{
\begin{remark}[Impact of the denseness] \normalfont
The bound in \cref{eq:convergene-rate-DES-mixtur-Gaussian-sampling-l2-norm} is generally looser than that for DES with standard Gaussian sampling.
For example, consider setting $\alpha = \Theta (n^{-1/2} L^{-1})$, then we obtain the convergence rate
\[
	\left(\frac{1}{T} \sum_{t=0}^T \mathbb{E}[\|\nabla f(\bm{x}_t)\|_2]\right)^2 
	\le \mathcal{O}\left(\frac{\sigma^2 n^2}{\tilde{s}l\sqrt{T}}\right),
\]
which could be $\mathcal{O}\left(\frac{n}{\tilde{s}l}\right)$ times slower than the rate given in \cref{eq:optimal-rate-DES-l2}. 
The involved constant $\tilde{s}$, by the definition of vector norms, always exists in the range $[1,n]$.
In fact, as has been pointed out in \cite{hurley_comparing_2009}, the quantity $\|\bm{y}\|_4^4/\|\bm{y}\|_2^4$ measures the sparseness of a vector $\bm{y} \in \mathbb{R}^n$, so the constant $\tilde{s}$ here can be viewed as a lower bound of the denseness of the gradient $\nabla f(\bm{x})$. If the gradient is relatively dense, e.g., all coordinates in the gradient are of a similar magnitude, then $\tilde{s}$ will be close to $n$. In this case, the convergence rate with mixture Gaussian sampling will coincide with that with standard Gaussian sampling. 
We may therefore conclude, by comparing \Cref{theorem:convergence-DES-mixture-Gaussian-l2,theorem:convergence-DES-l1}, that the mixture sampling is more suitable for dense problems whereas the standard Gaussian sampling is preferred for sparse problems.
\end{remark}}

\begin{theorem} \label{theorem:convergence-DES-mixture-Rademacher-l2}
Let \Cref{assumption:smoothness,assumption:variance-boundedness,assumption:iid-data} hold with the self-dual $\ell_2$ norm, i.e., $\|\cdot\| = \|\cdot\|_* = \|\cdot\|_2$.
Assume the function $f$ is bounded below by $f_*$ and choose $\beta = 0, b \ge \sqrt{T}$.
\rrr{If $\|\nabla f(\bm{x})\|_2^4 / \|\nabla f(\bm{x})\|_4^4 \ge \tilde{s}$ for any $\bm{x}\in \mathbb{R}^n$ and some constant $\tilde{s} \in [1,n]$,}
then the iterations generated by \Cref{alg:DES-mixture-sampling} with mixture Rademacher sampling satisfy
\begin{multline} \label{eq:convergene-rate-DES-mixtur-Rademacher-sampling-l2-norm}
	\frac{1}{T}
	\rrr{\sum_{t =0}^{T-1}} \mathbb{E}\left[\left\|\nabla f\left(\bm{x}_t\right)\right\|_2\right]
	\le \rrr{\sqrt{3 + \frac{n}{\tilde{s}l}}} \left\{\frac{2}{T^{3/4}}\frac{f\left(\bm{x}_0\right)-f_*}{\alpha\sqrt{K}} \right. \\ 
			\left. + \frac{4\sqrt{n}}{T^{1/4}}\left(\frac{4}{3}\sigma + L\sqrt{n} \hat{\Psi} \alpha\right)\right\}
\end{multline}
where $\hat{\Psi}$ is defined as in \Cref{theorem:convergence-DES-mixture-Gaussian-l2}.
\end{theorem}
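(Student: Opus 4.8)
The plan is to re-run the proof of \Cref{theorem:convergence-DES-mixture-Gaussian-l2} almost verbatim, since the mixture Rademacher law $\mathcal{M}_l^R$ and the mixture Gaussian law $\mathcal{M}_l^G$ share the same covariance $\mathbb{V}[\bm{u}] = \bm{I}$ and differ only in their fourth moments (compare \Cref{proposition:properties-of-mixture-Rademacher-sampling} with \Cref{proposition:properties-of-mixture-Gaussian-sampling}). First I would isolate the one place where the mutation law enters. Fix a worker $i$ and a round $t$; combining \Cref{assumption:smoothness} applied to $f_i$ with the acceptance rule of the worker-side $(1{+}1)$-ES step and the symmetry of $\bm{u}_{i,k}^t$, one obtains an expected one-step decrease of the form
\[
\mathbb{E}\!\left[f_i(\bm{v}_{i,k}^t) - f_i(\bm{v}_{i,k+1}^t)\right] \;\ge\; \tfrac{\alpha_k^t}{2}\,\mathbb{E}\!\left[\big|\nabla f_i(\bm{v}_{i,k}^t)^{\!\top}\bm{u}_{i,k}^t\big|\right] \;-\; \tfrac{L}{2}(\alpha_k^t)^2\,\mathbb{E}\!\left[\|\bm{u}_{i,k}^t\|_2^2\right].
\]
Because $\mathbb{E}[\|\bm{u}_{i,k}^t\|_2^2] = \mathrm{tr}(\mathbb{V}[\bm{u}_{i,k}^t]) = n$ for both mixture models, the smoothness-correction term, and indeed every later quantity that depends only on $\mathbb{V}[\bm{u}]$ (such as the size $\mathbb{E}[\|\bm{v}_{i,K}^t-\bm{x}_t\|_2^2]$ of the aggregated descent step and the minibatch-variance contribution from \Cref{assumption:variance-boundedness,assumption:iid-data}), is identical to the Gaussian analysis.

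Second, I would convert the first absolute moment into a gradient norm, which is the \emph{only} step sensitive to the choice of $\mathcal{M}_l^R$ versus $\mathcal{M}_l^G$. For fixed $\bm{g}$, writing $X = \bm{g}^{\!\top}\bm{u}$ and applying Cauchy--Schwarz twice gives $\mathbb{E}[X^2]^3 \le \mathbb{E}[|X|]^2\,\mathbb{E}[X^4]$; with $\mathbb{E}[X^2] = \bm{g}^{\!\top}\mathbb{V}[\bm{u}]\bm{g} = \|\bm{g}\|_2^2$ and the identity \cref{eq:fourth-order-moment-mixture-Rademacher} this yields
\[
\mathbb{E}\!\left[|\bm{g}^{\!\top}\bm{u}|\right] \;\ge\; \frac{\|\bm{g}\|_2^3}{\sqrt{\mathbb{E}[|\bm{g}^{\!\top}\bm{u}|^4]}} \;=\; \frac{\|\bm{g}\|_2^3}{\sqrt{\tfrac{n}{l}\|\bm{g}\|_4^4 + 3\tfrac{l-1}{l}\|\bm{g}\|_2^4}}.
\]
Taking $\bm{g} = \nabla f_i(\bm{v}_{i,k}^t)$ and using the denseness hypothesis $\|\nabla f(\bm{x})\|_4^4 \le \|\nabla f(\bm{x})\|_2^4/\tilde{s}$ (which passes to $f_i$ under \Cref{assumption:iid-data}) gives $\mathbb{E}[|\nabla f_i^{\!\top}\bm{u}|] \ge \|\nabla f_i\|_2/\sqrt{3 + n/(\tilde{s}\,l)}$. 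This is exactly the Gaussian estimate except that the leading $\|\bm{g}\|_4^4$-coefficient in \cref{eq:fourth-order-moment-mixture-Rademacher} is $n/l$ rather than the $3n/l$ of \cref{eq:fourth-order-moment-mixture-Gaussian}, so the constant $\sqrt{3 + n/(\tilde{s}\,l)}$ replaces $\sqrt{3 + 3n/(\tilde{s}\,l)}$ wherever it appears.

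Third, I would finish with the same bookkeeping as in \Cref{theorem:convergence-DES-mixture-Gaussian-l2} (and, at the round/server level, as in \Cref{theorem:convergence-DES-l2}, noting that with $\beta = 0$ the server step is simply $\bm{x}_{t+1} = \bm{x}_t + \bm{d}_{t+1}$): rearrange the one-step bound to upper-bound $\mathbb{E}[\|\nabla f_i(\bm{v}_{i,k}^t)\|_2]$, telescope over $k = 0,\dots,K-1$ against $f_i(\bm{x}_t) - \mathbb{E}[f_i(\bm{v}_{i,K}^t)]$ (using the ES monotonicity $f_i(\bm{v}_{i,k+1}^t) \le f_i(\bm{v}_{i,k}^t)$ to tame the varying $1/\alpha_k^t$ weights), pass from the per-worker $f_i$ back to $f$ via \Cref{assumption:variance-boundedness,assumption:iid-data} with $b \ge \sqrt{T}$ to absorb the $\mathcal{O}(\sigma^2/b) \le \mathcal{O}(\sigma^2/\sqrt{T})$ minibatch gradient error, average over the $M$ workers, and sum over $t = 0,\dots,T-1$ under the schedule $\alpha_0^t = \alpha/(t+1)^{0.25}$, $\alpha_k^t = \alpha_0^t/(k+1)^{0.5}$; the harmonic-type sums $\sum_k (k+1)^{-1}$ and $\sum_t (t+1)^{-1/2}$ produce the $1+\log K$ factors collected in $\hat{\Psi}$ and the $T^{-1/4},\,T^{-3/4}$ powers. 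Collecting terms delivers exactly \cref{eq:convergene-rate-DES-mixtur-Rademacher-sampling-l2-norm}.

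I do not expect a genuinely new obstacle here: the statement is \Cref{theorem:convergence-DES-mixture-Gaussian-l2} with a sharper constant. The one point demanding care is the verification that $\mathbb{E}[|\bm{g}^{\!\top}\bm{u}|^4]$ enters the argument \emph{only} through the progress estimate above, so that no variance-of-descent-step or aggregation term secretly pays an extra price; once one confirms that all remaining terms are controlled by $\mathbb{V}[\bm{u}] = \bm{I}$, common to $\mathcal{M}_l^R$ and $\mathcal{M}_l^G$, substituting \Cref{proposition:properties-of-mixture-Rademacher-sampling} for \Cref{proposition:properties-of-mixture-Gaussian-sampling} is mechanical, and the factor-of-$3$ improvement in the $\|\bm{y}\|_4^4$ term of \cref{eq:fourth-order-moment-mixture-Rademacher} over \cref{eq:fourth-order-moment-mixture-Gaussian} is the sole source of the better constant.
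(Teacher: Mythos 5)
Your overall strategy is exactly the paper's: the proof of \Cref{theorem:convergence-DES-mixture-Rademacher-l2} is obtained from that of \Cref{theorem:convergence-DES-mixture-Gaussian-l2} by observing that the mutation law enters only through (i) the second moment $\mathbb{E}[\|\bm{u}\|_2^2]=n$, common to both mixture models, and (ii) the lower bound $\mathbb{E}[|\bm{y}^T\bm{u}|] \ge \|\bm{y}\|_2^3/\sqrt{\mathbb{E}[|\bm{y}^T\bm{u}|^4]}$, where substituting \cref{eq:fourth-order-moment-mixture-Rademacher} for \cref{eq:fourth-order-moment-mixture-Gaussian} converts the constant $\sqrt{3+3n/(\tilde{s}l)}$ into $\sqrt{3+n/(\tilde{s}l)}$. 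This is precisely the content of \Cref{lemma:bound-on-inner-product} and of the paper's (one-line) proof, so the key idea is correct and identical.

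One detail of your bookkeeping would not survive contact with the hypotheses, however: you propose to apply the absolute-moment lower bound with $\bm{g}=\nabla f_i(\bm{v}_{i,k}^t)$ and then invoke the denseness condition for $f_i$, remarking that it ``passes to $f_i$ under \Cref{assumption:iid-data}''. It does not: the hypothesis $\|\nabla f(\bm{x})\|_2^4/\|\nabla f(\bm{x})\|_4^4 \ge \tilde{s}$ is imposed only on the population objective $f$, and there is no reason a random minibatch function $f_i$ inherits it (a single sparse sample can make $\nabla f_i$ highly non-dense even when $\nabla f$ is dense). The paper's argument is structured precisely to avoid this: the moment lower bound is only ever invoked with $\bm{y}=\nabla f(\bm{x}_t)$ (equal to $\nabla f(\bm{z}_t)$ since $\beta=0$), while the discrepancy between the acceptance decision, which is made under $f_i$ at the drifted iterate $\bm{v}_{i,k}^t$, and the inner product $\nabla f(\bm{x}_t)^T\bm{u}_{i,k}^t$ is absorbed by the indicator/Markov estimate of \Cref{lemma:bound-A}, which is where the worker drift $\|\bm{v}_{i,k}^t-\bm{z}_t\|^2$ and the $\sigma^2/b$ minibatch-variance terms originate. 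Relatedly, the telescoping is done on $f(\bm{x}_t)$ over rounds via the smoothness of $f$ applied to the aggregated step $\bm{d}_{t+1}$, not on $f_i$ over inner iterations. If you reroute your second step so that the fourth-moment bound is applied to $\nabla f(\bm{x}_t)$ and the $f_i$-versus-$f$ mismatch is handled as in \Cref{lemma:bound-A}, the rest of your plan is the paper's proof.
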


The bound corresponding to the mixture Rademacher sampling \rrr{is slightly tighter than that for the mixture Gaussian sampling}. This could make a considerable difference in practice when $n$ is large. Our empirical studies show that in certain cases the mixture Rademacher sampling could be better than the mixture Gaussian sampling, while their performance is \rrr{in general} similar.

\section{Simulation Study}
In this section we perform simulations to investigate the empirical performance of the proposed methods. 

\subsection{Experimental settings}
We consider three binary classification problems arising in machine learning and statistics. They include \rrr{logistic} regression (LR), nonconvex support vector machine (NSVM), and linear support vector machine (LSVM) with a hinge loss. For these problems, the random sample $\bm{\xi}$ corresponds to a pair of \rrr{input vector $\bm{z}$ and target label $y$}, and the objective function takes the finite-sum form:
\begin{equation*} 
	f(\bm{x}) = \frac{1}{N} \sum_{i = 1}^N F(\bm{x};\bm{\xi}_i) 
	= \rrr{\frac{1}{N} \sum_{i = 1}^N} loss(\bm{x};\bm{z}_i,y_i) +\frac{\lambda_p}{2} \|\bm{x}\|^2_2,
\end{equation*}
where $loss$ is the loss function and  $\lambda_p$ is the regularization parameter. We fix $\lambda_p = 10^{-6}$ throughout this study. The loss function is defined as
\begin{itemize}
	\item Logistic Regression (LR)
	\begin{equation*}
		loss(\bm{x};\bm{z},y) = \log(1 + \exp(-y (\bm{x}^T\bm{z}))) 
	\end{equation*}
	\item Nonconvex Support Vector Machine (NSVM)
	\begin{equation*}
		loss(\bm{x};\bm{z},y) = 1 - \tanh(y (\bm{x}^T\bm{z})) 
	\end{equation*}
	\item Linear Support Vector Machine (LSVM)
	\begin{equation*}
		loss(\bm{x};\bm{z},y) = \max\left\{0, 1 - y (\bm{x}^T \bm{z})\right\}.
	\end{equation*}
\end{itemize}
LR is the simplest, being strongly convex and smooth. NSVM is nonconvex but smooth. LSVM is not smooth so it does not meet our assumption; we choose it to verify the robustness of our proposals.

Six datasets\footnotetext{All datasets are available at \url{https://www.csie.ntu.edu.tw/~cjlin/libsvmtools/datasets}. The mnist dataset is transformed into binary class based on whether the label (digital) is grater than 4.} widely used for benchmarking stochastic optimization methods are selected and their properties are briefly summarized in \Cref{tab:statistics-datasets}. For each dataset, 80\% data are chosen for training and the remaining 20\% are for testing. We partition the training samples uniformly into $M$ pieces with no overlap, and each piece is stored at a counterpart worker. 

\begin{table}[thb]
  \caption{Statistics of the used datasets.}
  \label{tab:statistics-datasets}
  \centering
  \setlength{\tabcolsep}{20pt}
\begin{tabular}{cccc}
\toprule
dataset & $n$     & $N$     \\
\midrule
ijcnn1 			& 22	& 49990		\\
SUSY 				& 18	& 5000000	\\
covtype 			& 54	& 581012	\\
mnist 			& 780 	& 60000		\\
real-sim 			& 20958	& 72309		\\
rcv1 				& 47236 & 677399 	\\
\bottomrule
\end{tabular}%
\end{table}

We implement four algorithms for comparison, including federated zeroth-order gradient method (Fed-ZO-GD), federated zeroth-order SGD (Fed-ZO-SGD), zeroth-order signSGD method (ZO-signSGD), and the standard ES with cumulative step-size adaptation (ES-CSA). 
Fed-ZO-GD, Fed-ZO-SGD, and ZO-signSGD are distributed algorithms based on gradient estimation. ES-CSA is non-distributed but we have made some modifications to enable distributed optimization.
Their configurations are described below:
\begin{itemize}
	\item Fed-ZO-GD. It is implemented by replacing the worker-side solver of DES with the Gaussian smoothing based gradient descent method, so it can be considered as a plain combination of FedAvg and the zeroth-order gradient descent method. Each worker individually chooses a random minibatch of size $b$ in each round and runs zeroth-order gradient descent for $K' = K/2$ iterations with the step-size $\alpha_k^t = \frac{\alpha_0^t}{k+1} = \frac{\alpha}{(k+1)\sqrt{t+1}}$. We choose the central-difference in Gaussian smoothing, so each worker takes about $Kb$ function evaluations per round.

	\item Fed-ZO-SGD. It is a zeroth-order extension of the standard federated SGD algorithm, where each worker's iteration uses an individually random minibatch of size $b$. Each worker's SGD runs for $K' = K/2$ iterations with the step-size $\alpha_k^t = \frac{\alpha_0^t}{\sqrt{k+1}} = \frac{\alpha}{\sqrt{(k+1)(t+1)}}$. It uses the same setting for Gaussian smoothing as in Fed-ZO-GD.

	\item ZO-signSGD. This method is originally proposed in \cite{liu_signsgd_2019} and we adopt its variant with majority vote for distributed optimization. In each round, each worker computes $K' = K/2$ gradient estimators, takes the sign of their average, and then uploads the result to the server. Each gradient estimator is obtained from a central-difference Gaussian smoothing over a minibatch batch of size $b$. The server performs global updates using the sign vector with step-size $\alpha^t = \frac{\alpha}{\sqrt{t+1}}$. 

	\item ES-CSA. We use the standard $(\mu;\lambda)$-ES described in \cite{hansen_evolution_2015} with slight modifications for date decentralization. In each round, the server generates a population of $\lambda$ solutions with a standard multivariate Gaussian distribution and broadcasts the whole population to each worker. The workers then evaluate the population with their local data. The server sums up, for each solution, the results collected from the workers and obtain the corresponding objective value. The best $\lambda$ ones in the population are chosen and their recombination becomes the new population mean. In this setting, each worker takes $\lambda \frac{N}{M}$ function evaluations per round. The standard cumulative step-size adaptation is used and the initial step-size \rrr{is set to $\alpha$}.
\end{itemize}

For the three gradient-based methods, we use the central-difference Gaussian smoothing which takes two function evaluations on each data sample; so the setting $K'=K/2$ ensures that the total number of function evaluations per round and per worker is $Kb$, being consistent with DES. For CSA-ES, the population size is set to $\lambda = MKb/N$; under this setting, all algorithms have exactly the same number of function evaluations per round.

For all algorithms, we choose $b = 1000$, $M=10$. We choose $K = 100$ if $n \le 100$ and 500 if $n > 100$. Each algorithm is assigned with a budget of $EN$ function evaluations, where $E=1000$ if $n \le 100$ and 5000 if $n > 100$. For algorithms relying on Gaussian smoothing, the finite-difference radius is $\mu = 10^{-6}$. The momentum parameter in DES is set to $\beta = 0.5$. All algorithms are run for 8 times individually for each pair of dataset and problem and the median results are reported. DES with the mixture Gaussian sampling and the mixture Rademacher sampling schemes are denoted by DES-mG and DES-mR, respectively; their mixture parameter is set to $l=8$.

\subsection{Overall performance} \label{ss:overall-performance}
We first test DES as well as the competitors on all three problems and over all six datasets.
The initial step-size $\alpha$ for each algorithm is chosen from $\{0.1, 1, 10\}$ using a grid-search.
\rrr{\Cref{fig:training-curve-part-one,fig:training-curve-part-two} report the convergence behavior of the algorithms, measured with the median training error versus the number of rounds.}
It is found that the DES methods with either standard Gaussian sampling or mixture sampling are the best performers in all cases. Belonging to the same ES family, our implementation of DES is significantly better than the non-distributed implementation of ES-CSA, where the latter performs the worst in most cases. This is caused by that the standard ES is for deterministic optimization and does not explore the stochastic characteristics of the objective function. 
Fed-ZO-SGD is the best one among the competitors and is competitive with DES in certain cases. Fed-ZO-GD, in most cases, is not competitive with DES, ZO-signSGD, or Fed-ZO-SGD.

\begin{figure*}[tb] 
\centering
\subfloat{\includegraphics[width=0.6\textwidth]{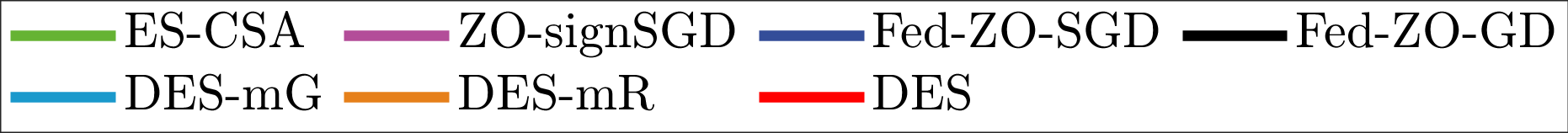}} \\[-2ex]
\addtocounter{subfigure}{-1}
\subfloat[LR, rcv1]{\includegraphics[width=0.33\textwidth]{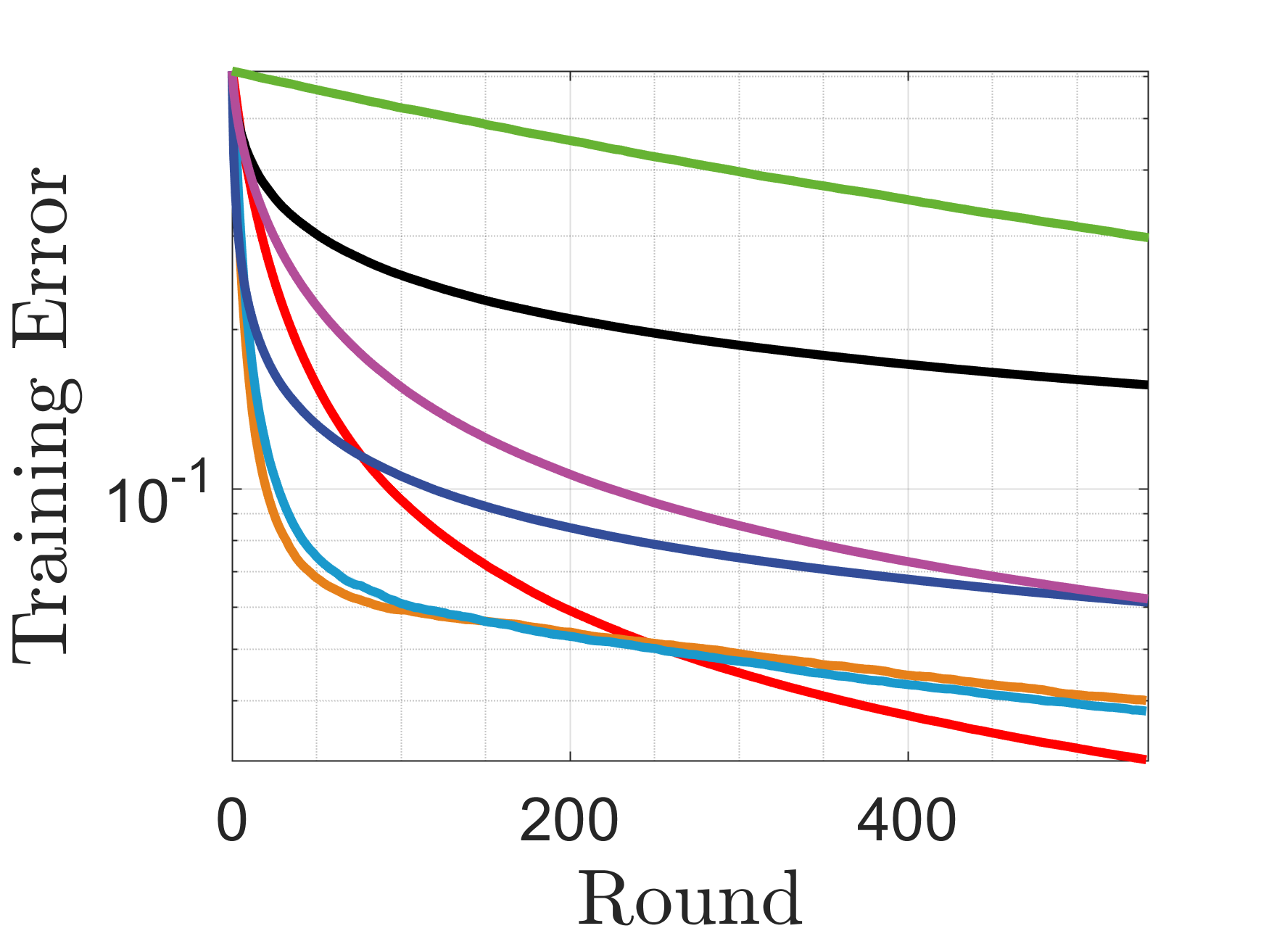}}
\subfloat[NSVM, rcv1]{\includegraphics[width=0.33\textwidth]{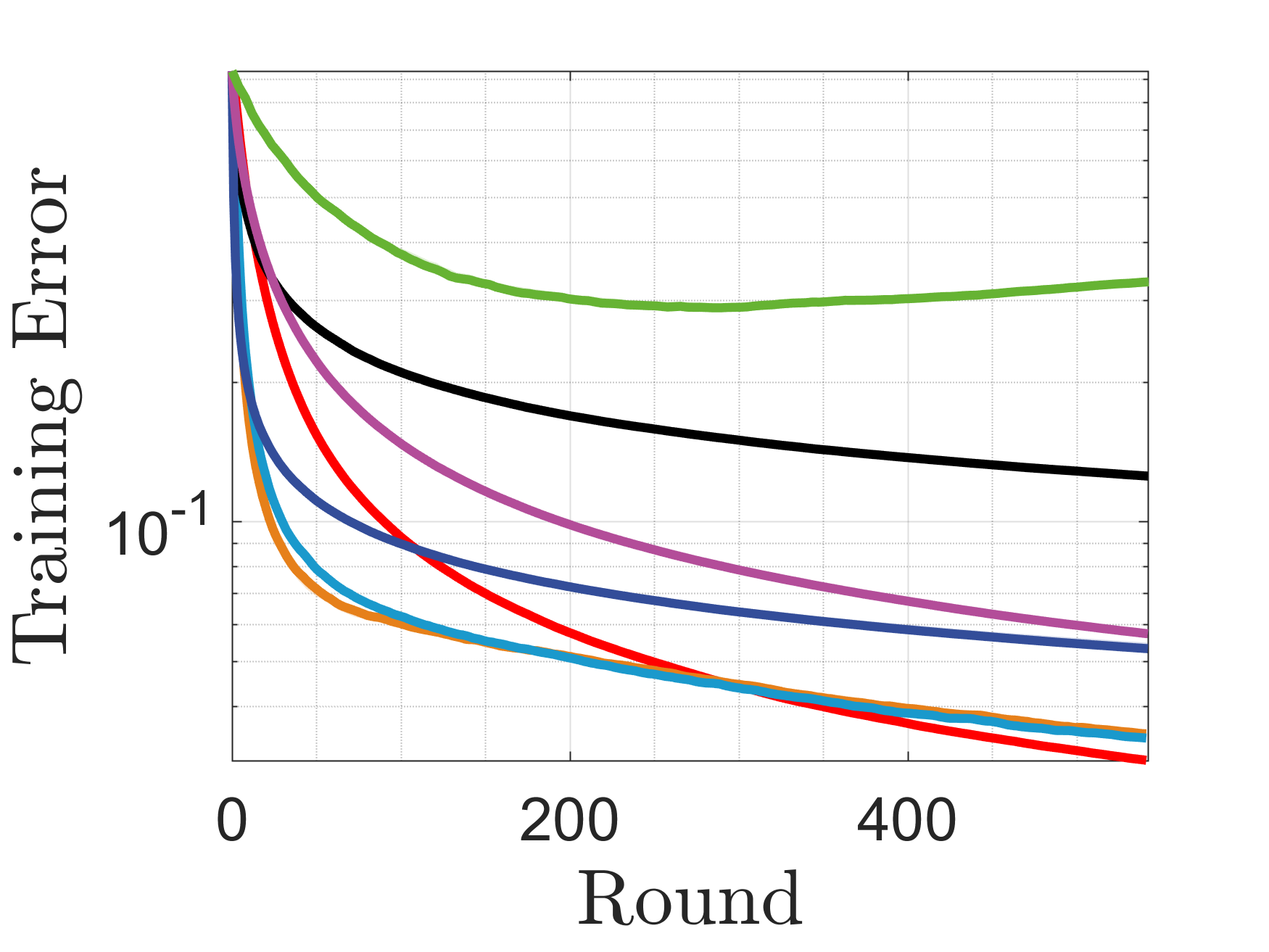}}
\subfloat[LSVM, rcv1]{\includegraphics[width=0.33\textwidth]{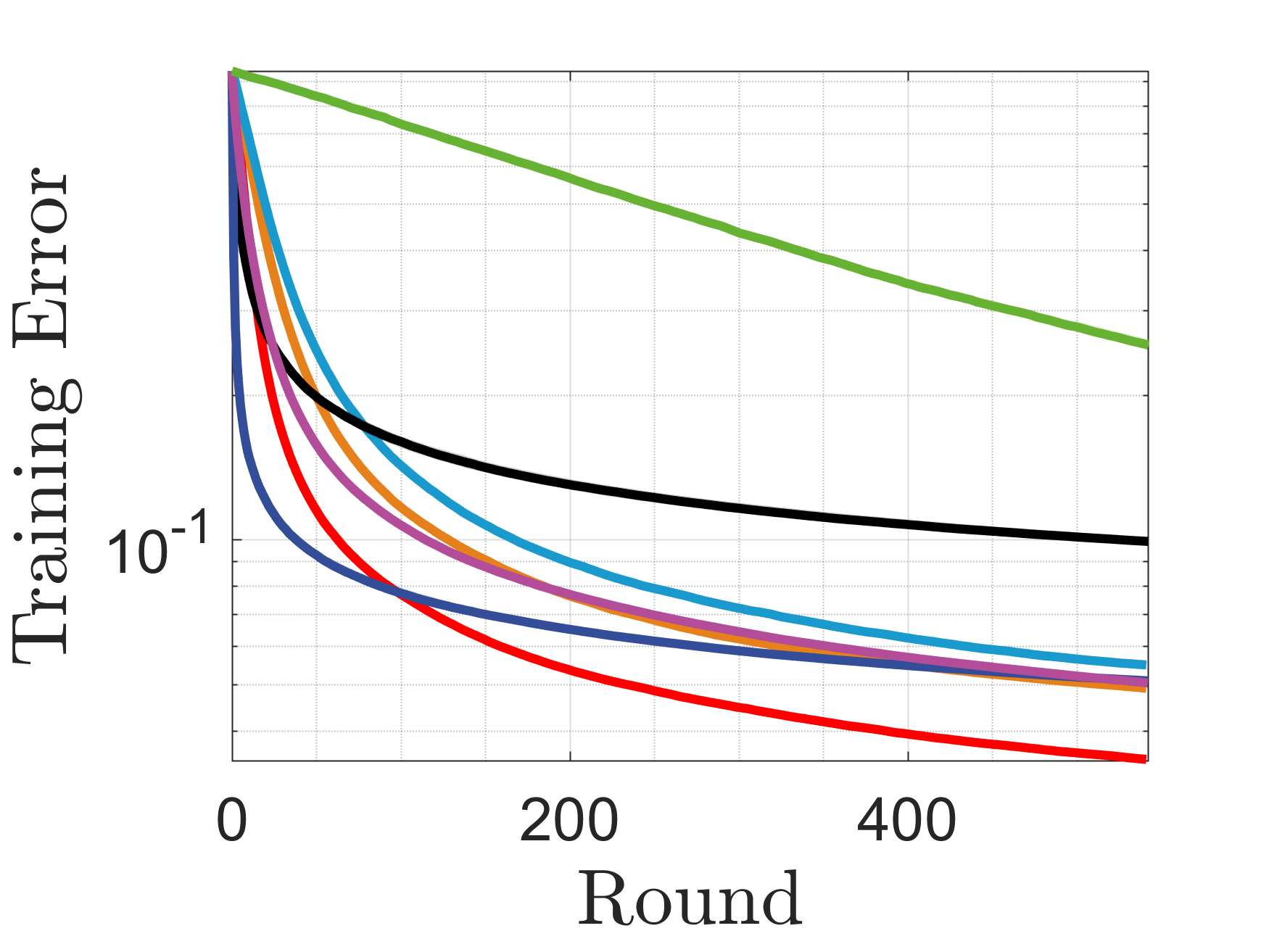}}
\hfil
\subfloat[LR, SUSY]{\includegraphics[width=0.33\textwidth]{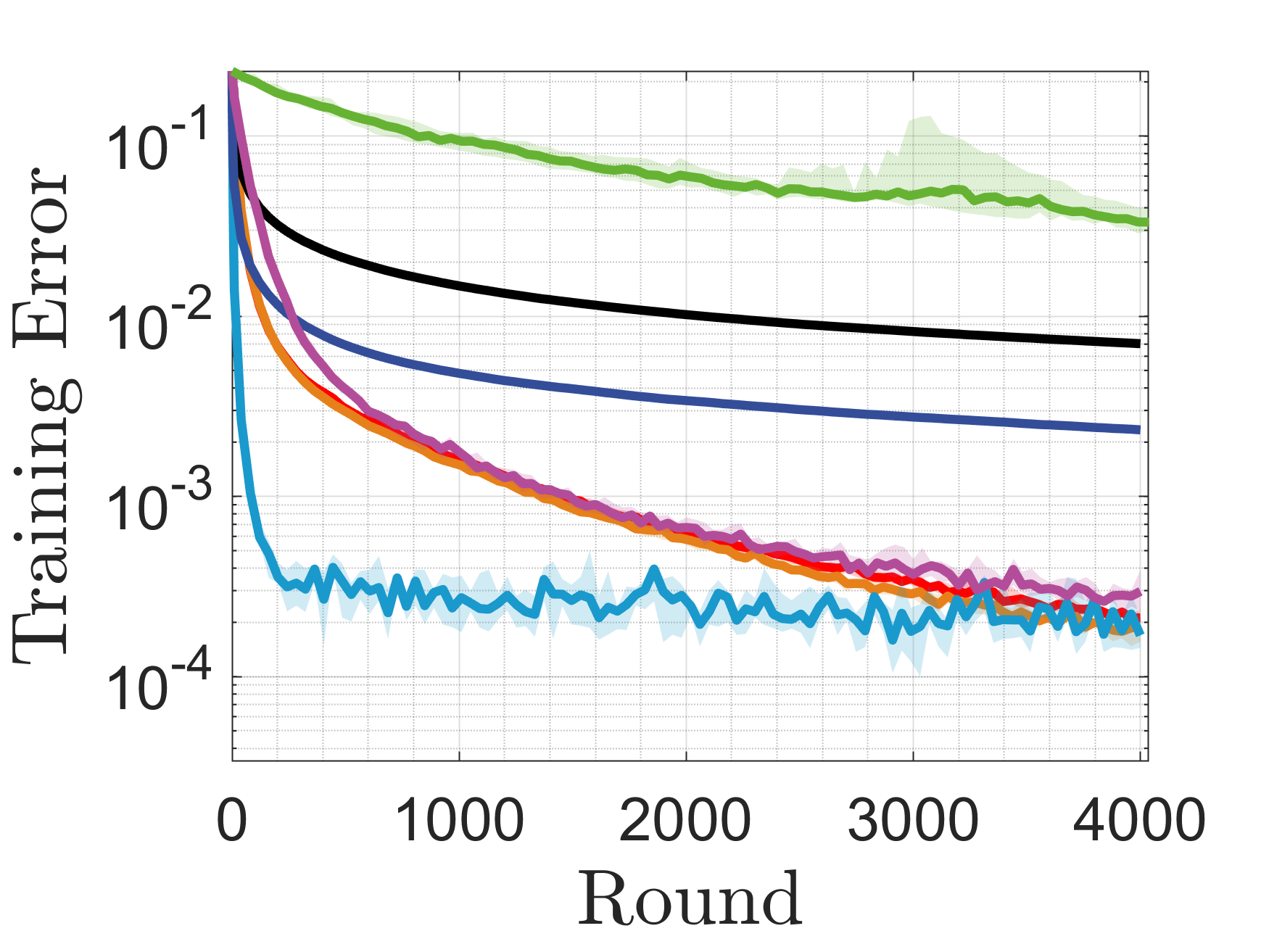}}
\subfloat[NSVM, SUSY]{\includegraphics[width=0.33\textwidth]{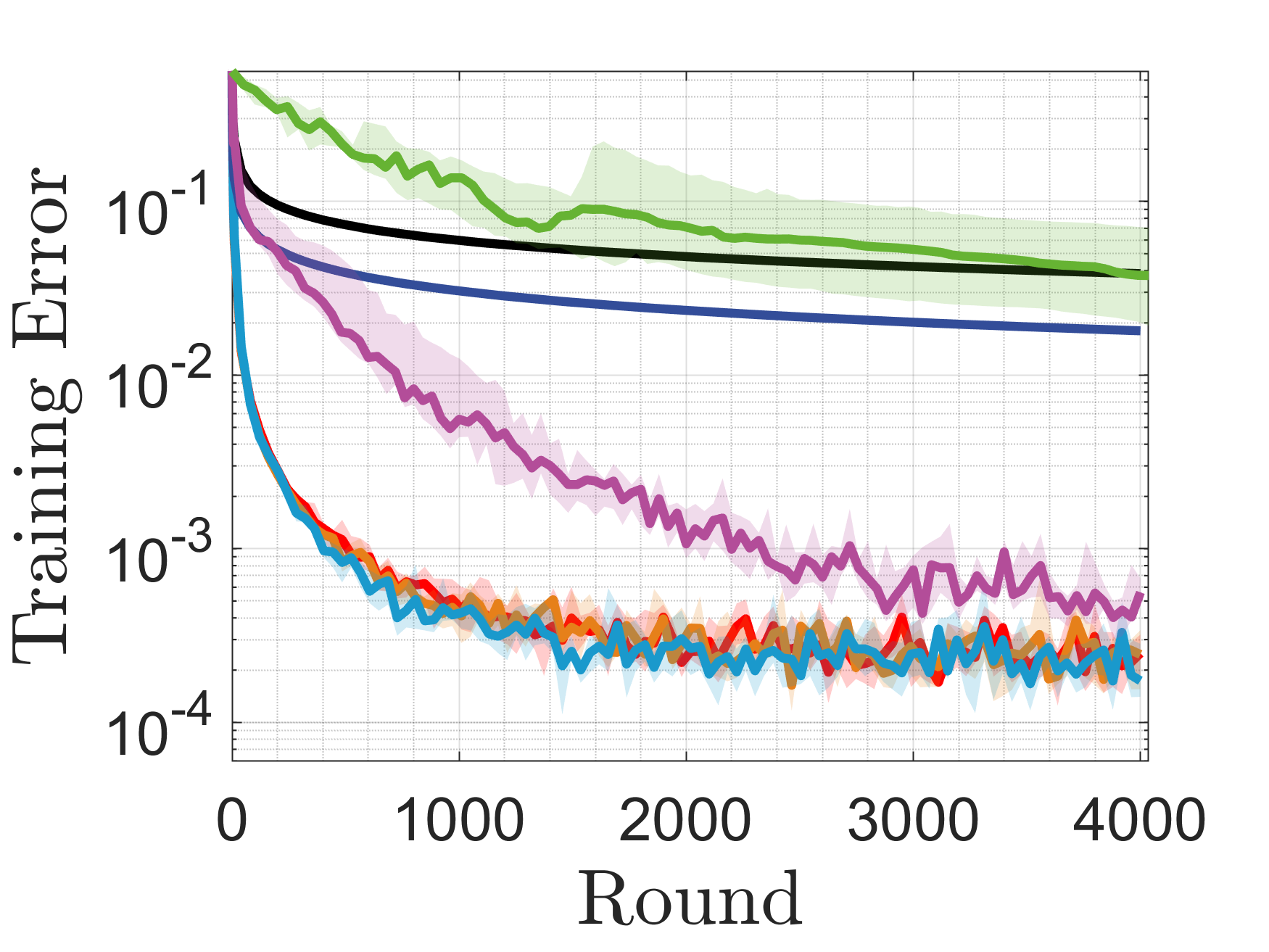}}
\subfloat[LSVM, SUSY]{\includegraphics[width=0.33\textwidth]{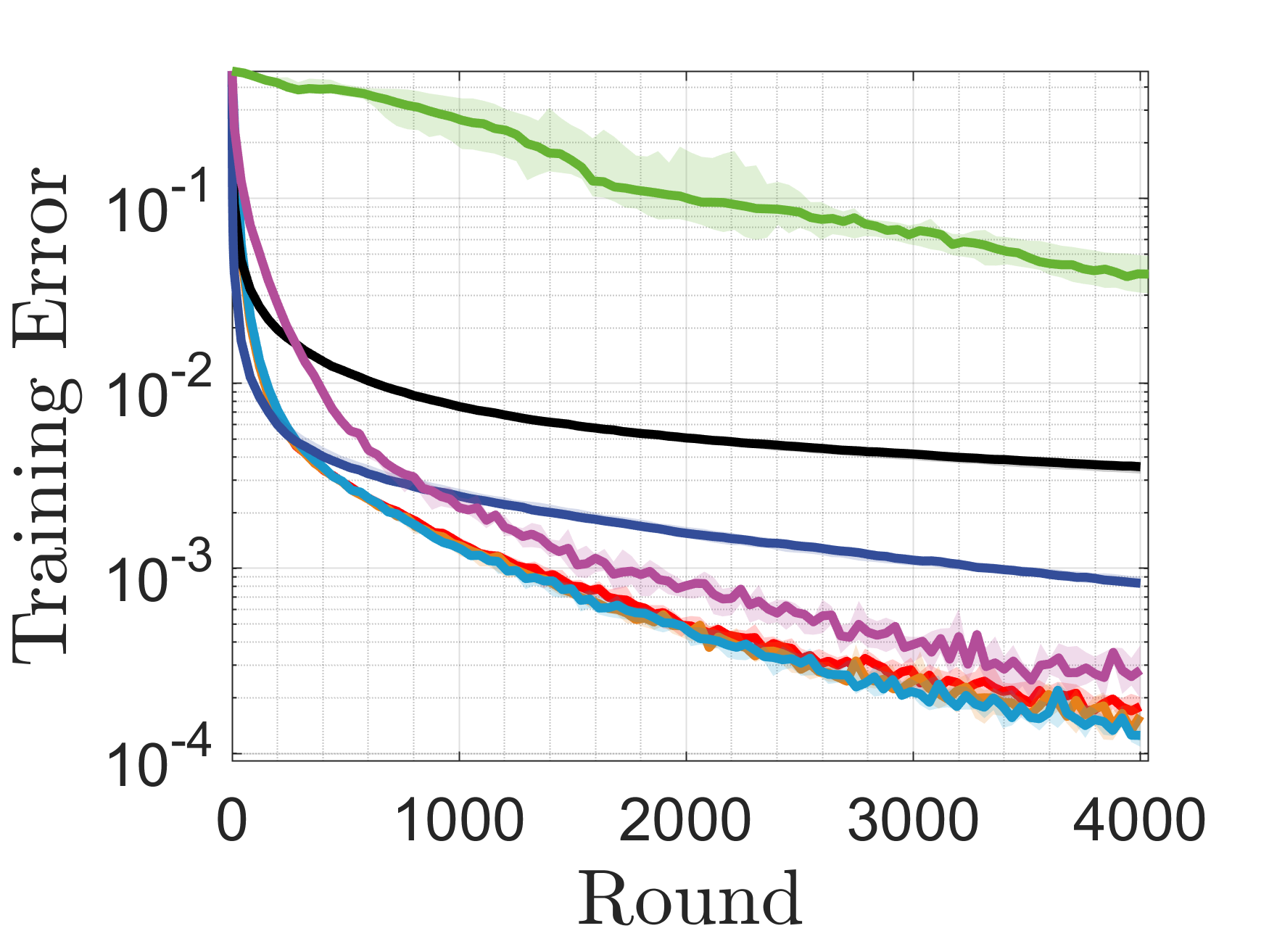}}
\hfil
\subfloat[LR, mnist]{\includegraphics[width=0.33\textwidth]{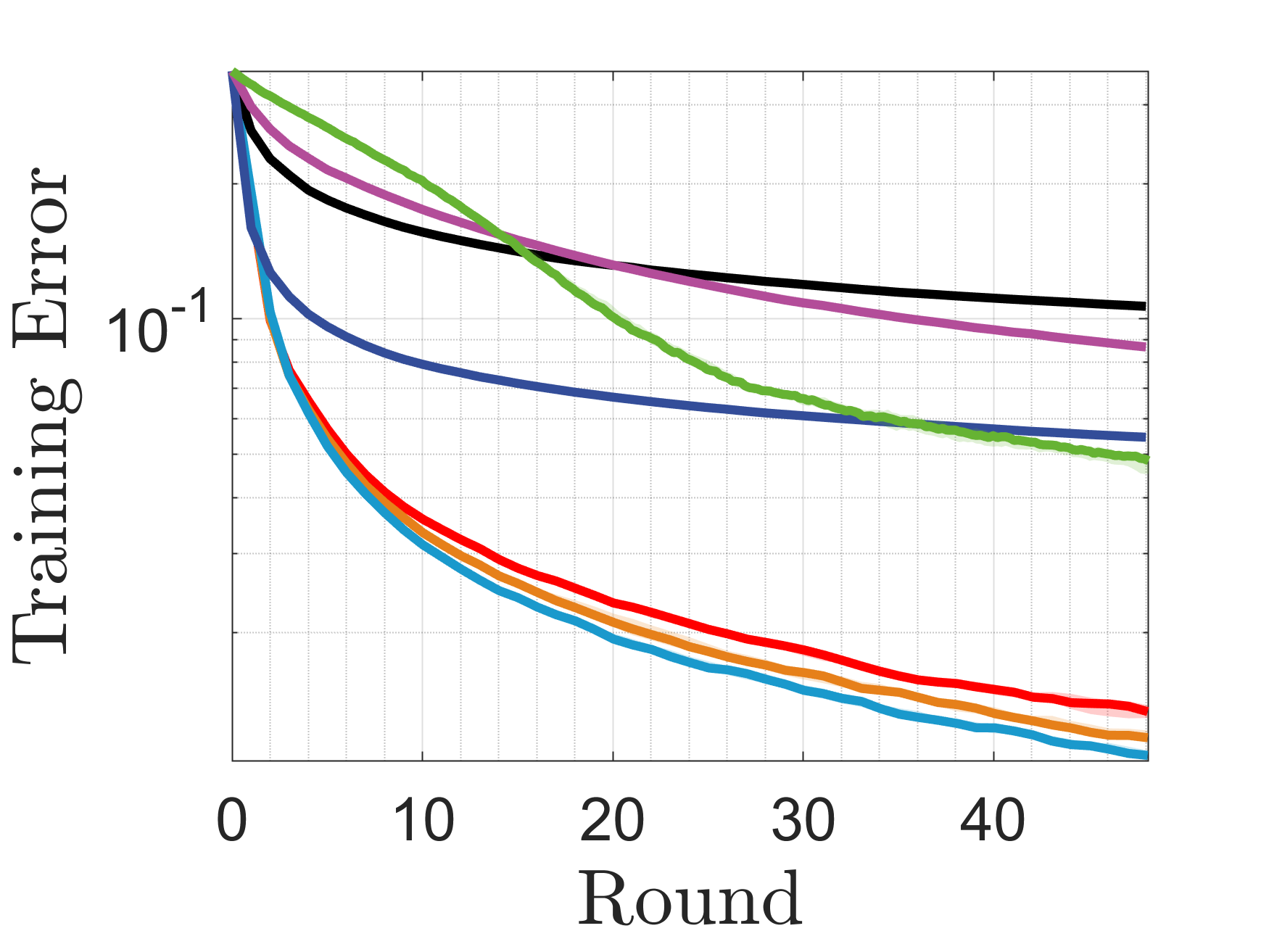}}
\subfloat[NSVM, mnist]{\includegraphics[width=0.33\textwidth]{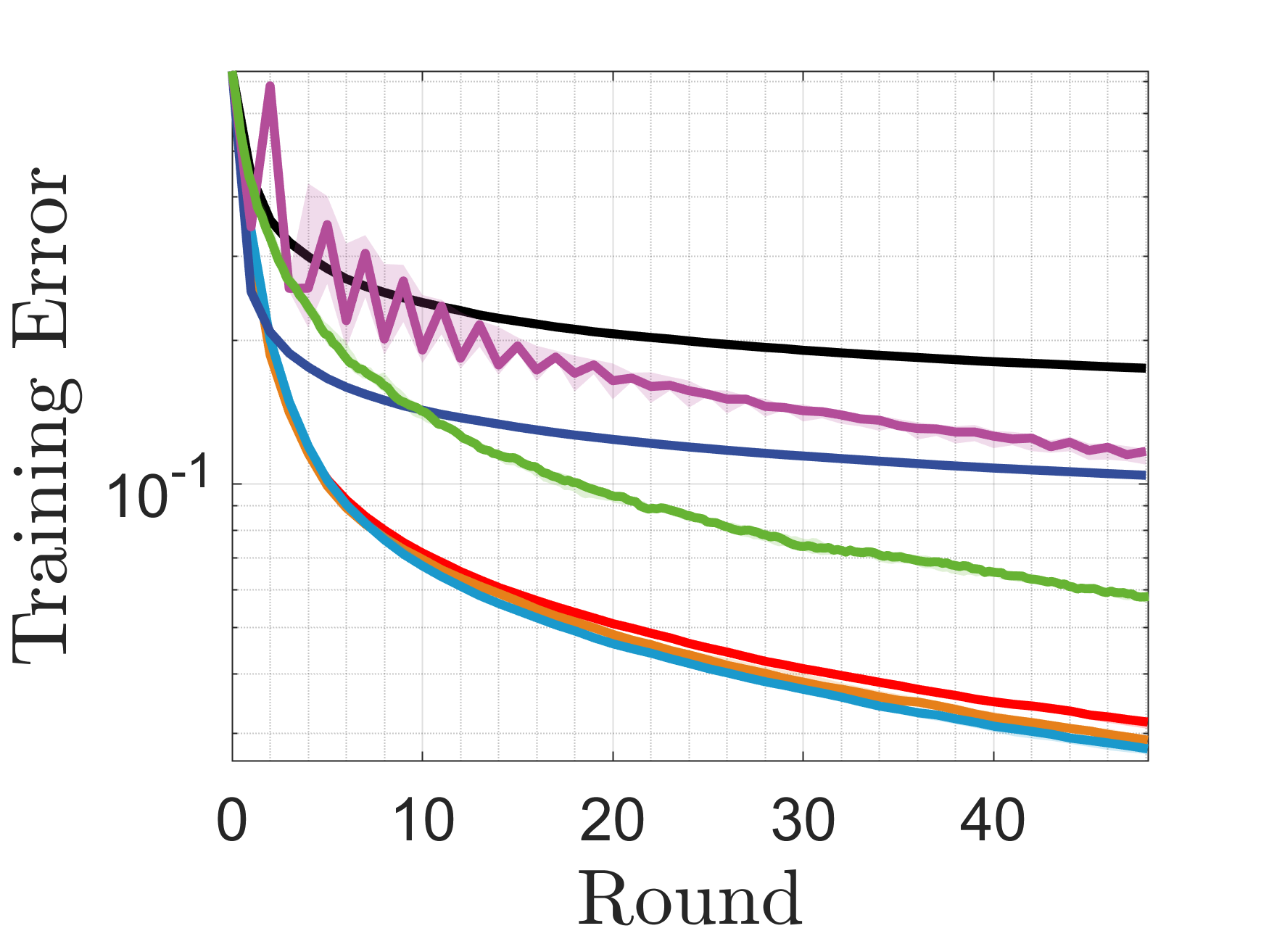}}
\subfloat[LSVM, mnist]{\includegraphics[width=0.33\textwidth]{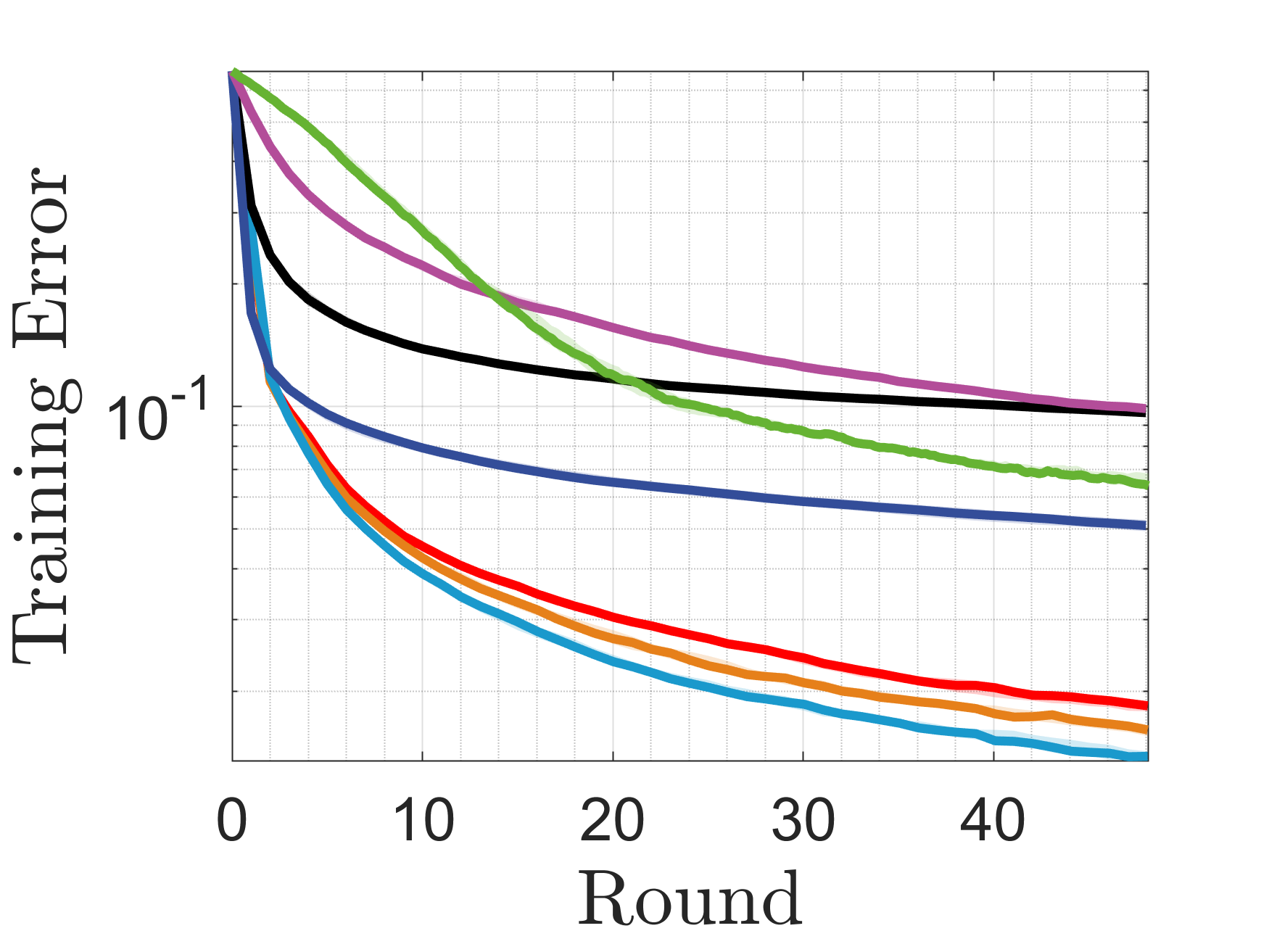}}
\caption{Comparison on \rrr{rcv1, SUSY, and mnist datasets}. The curve displays the training error versus the number of rounds and the corresponding shaded area extends from the 25th to 75th percentiles over the results obtained from all independent runs.}
\label{fig:training-curve-part-one}
\end{figure*}

\begin{figure*}[tb]
\centering
\subfloat{\includegraphics[width=0.6\textwidth]{figs/legend_curve}} \\[-2ex]
\addtocounter{subfigure}{-1}
\subfloat[LR, real-sim]{\includegraphics[width=0.33\textwidth]{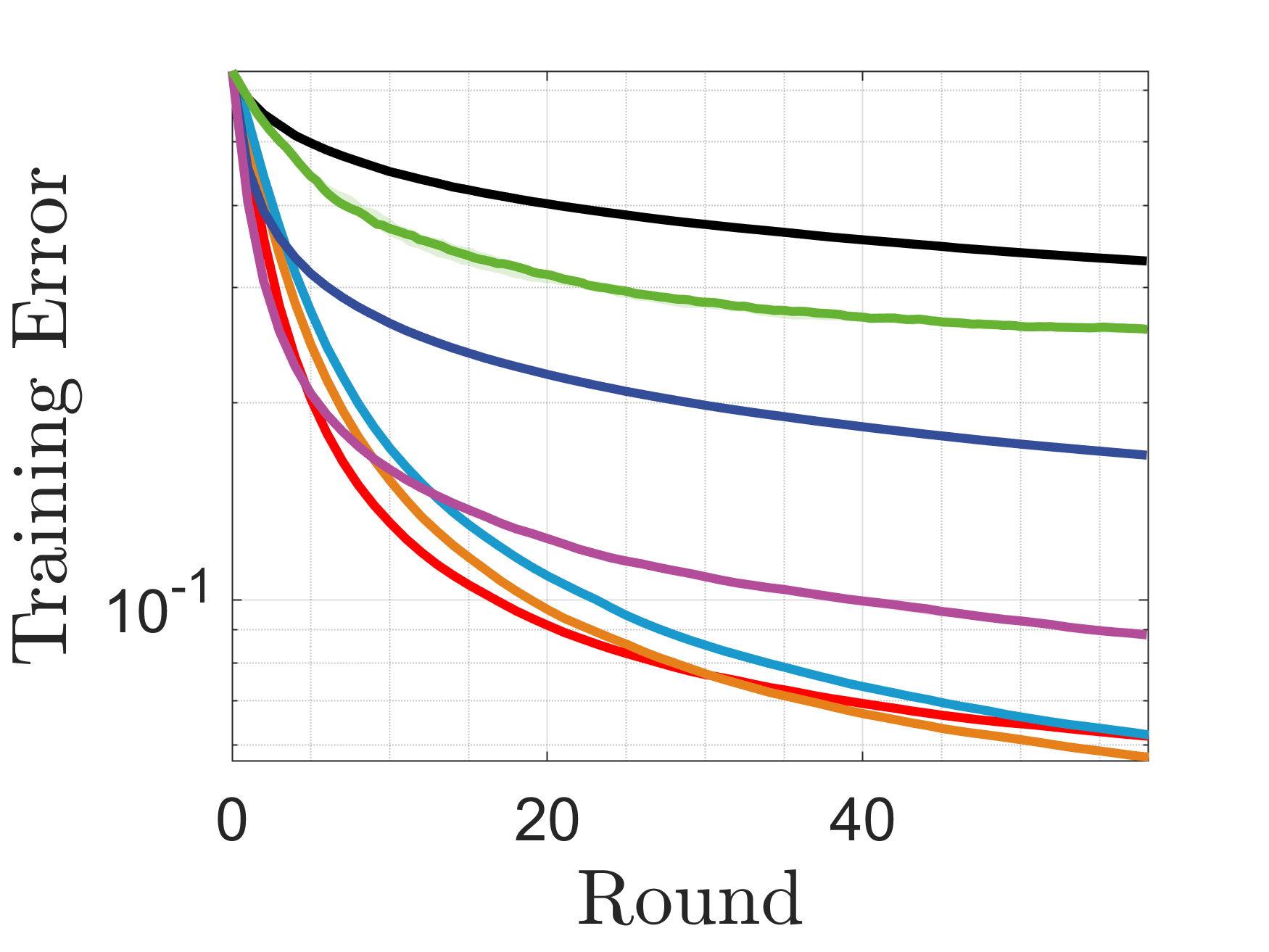}}
\subfloat[NSVM, real-sim]{\includegraphics[width=0.33\textwidth]{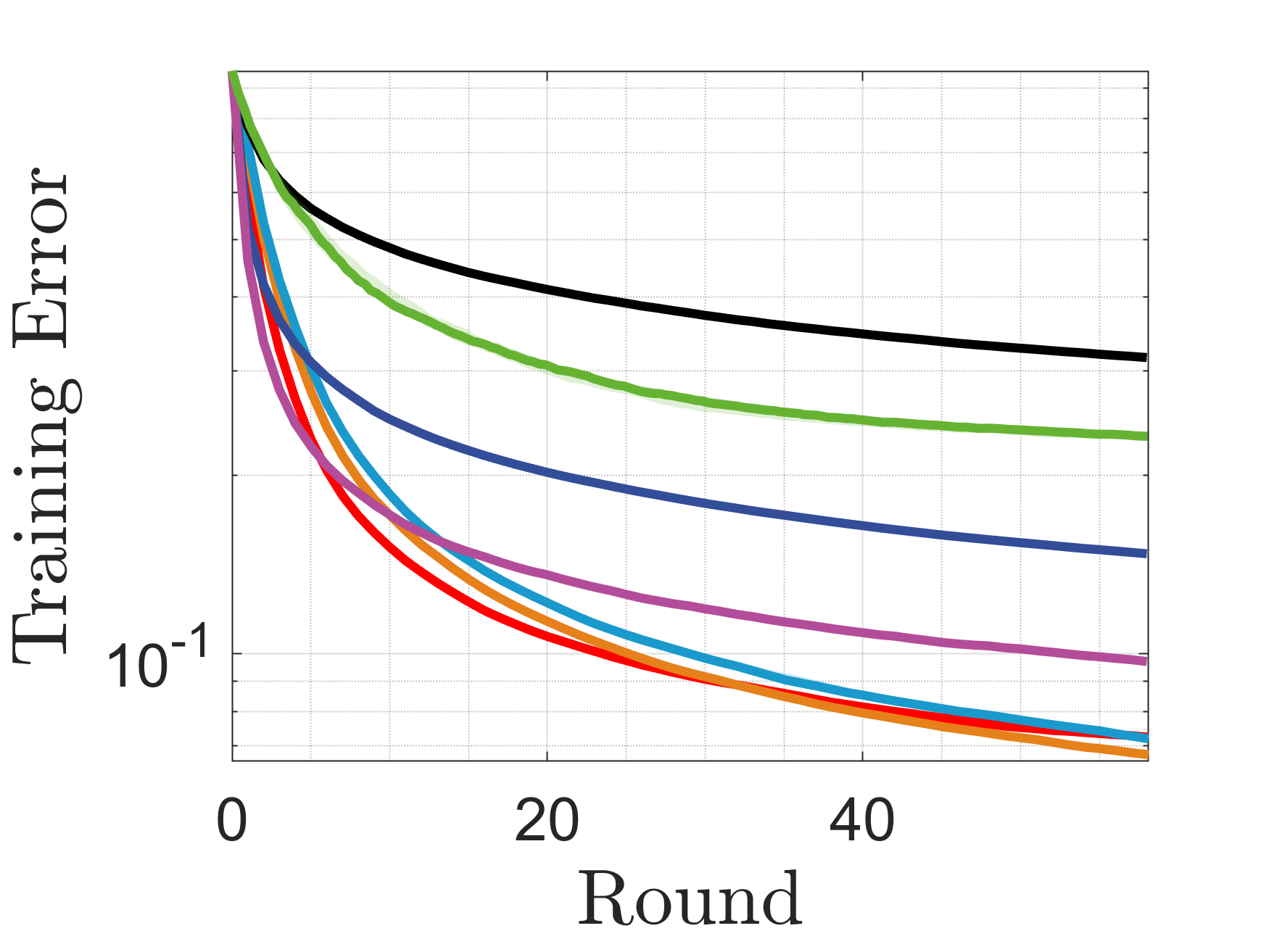}}
\subfloat[LSVM, real-sim]{\includegraphics[width=0.33\textwidth]{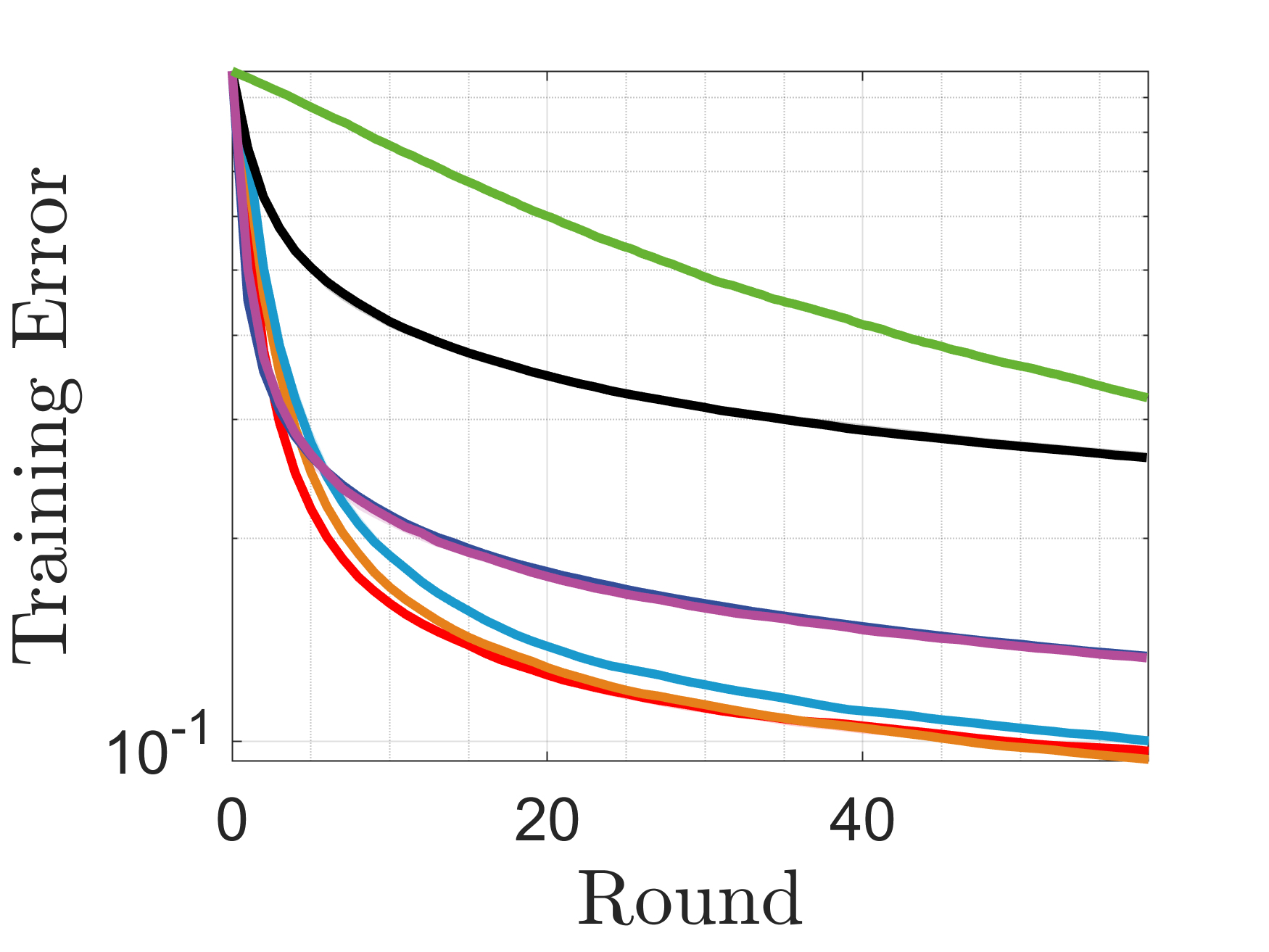}}
\hfil
\subfloat[LR, ijcnn1]{\includegraphics[width=0.33\textwidth]{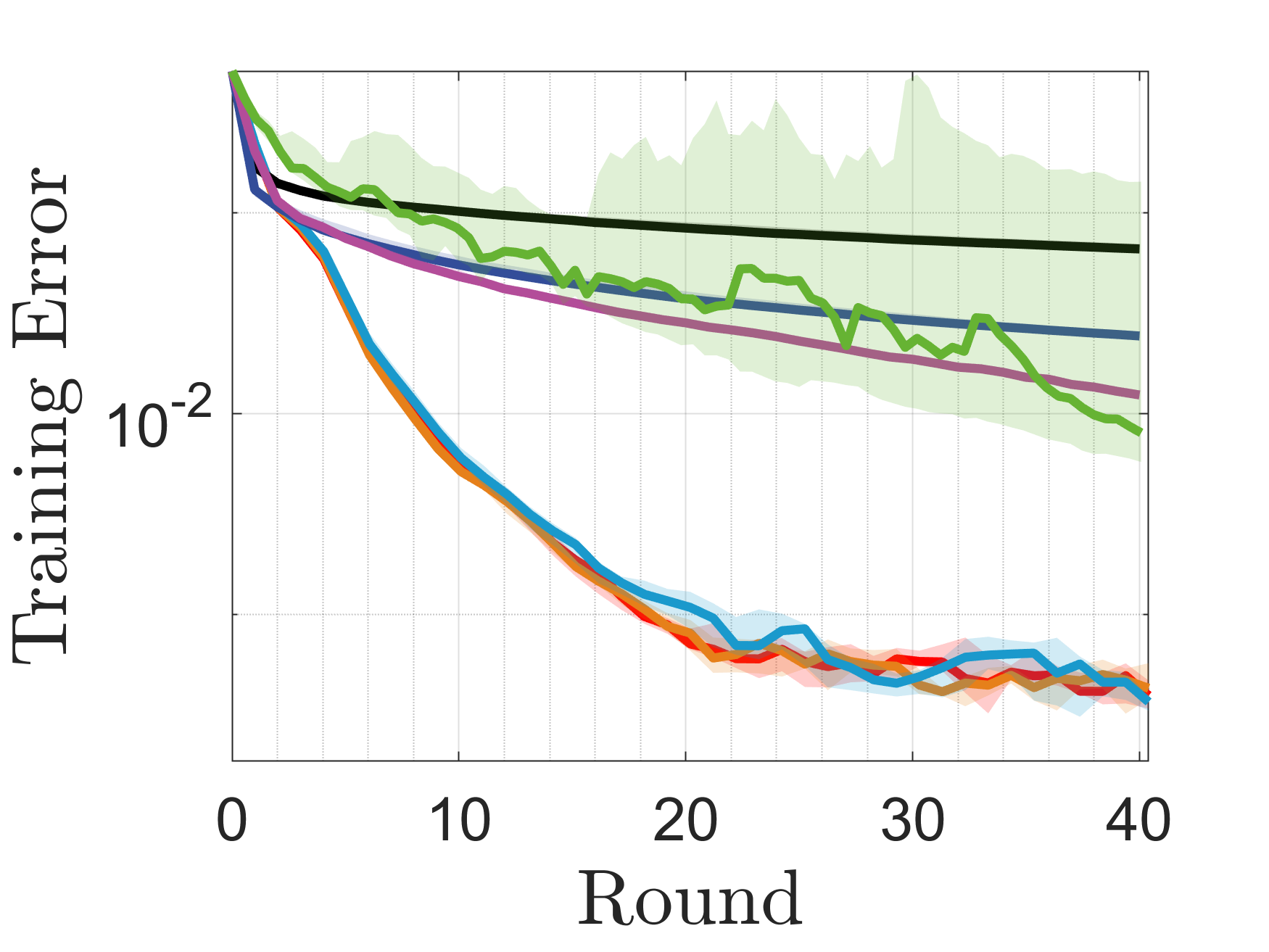}}
\subfloat[NSVM, ijcnn1]{\includegraphics[width=0.33\textwidth]{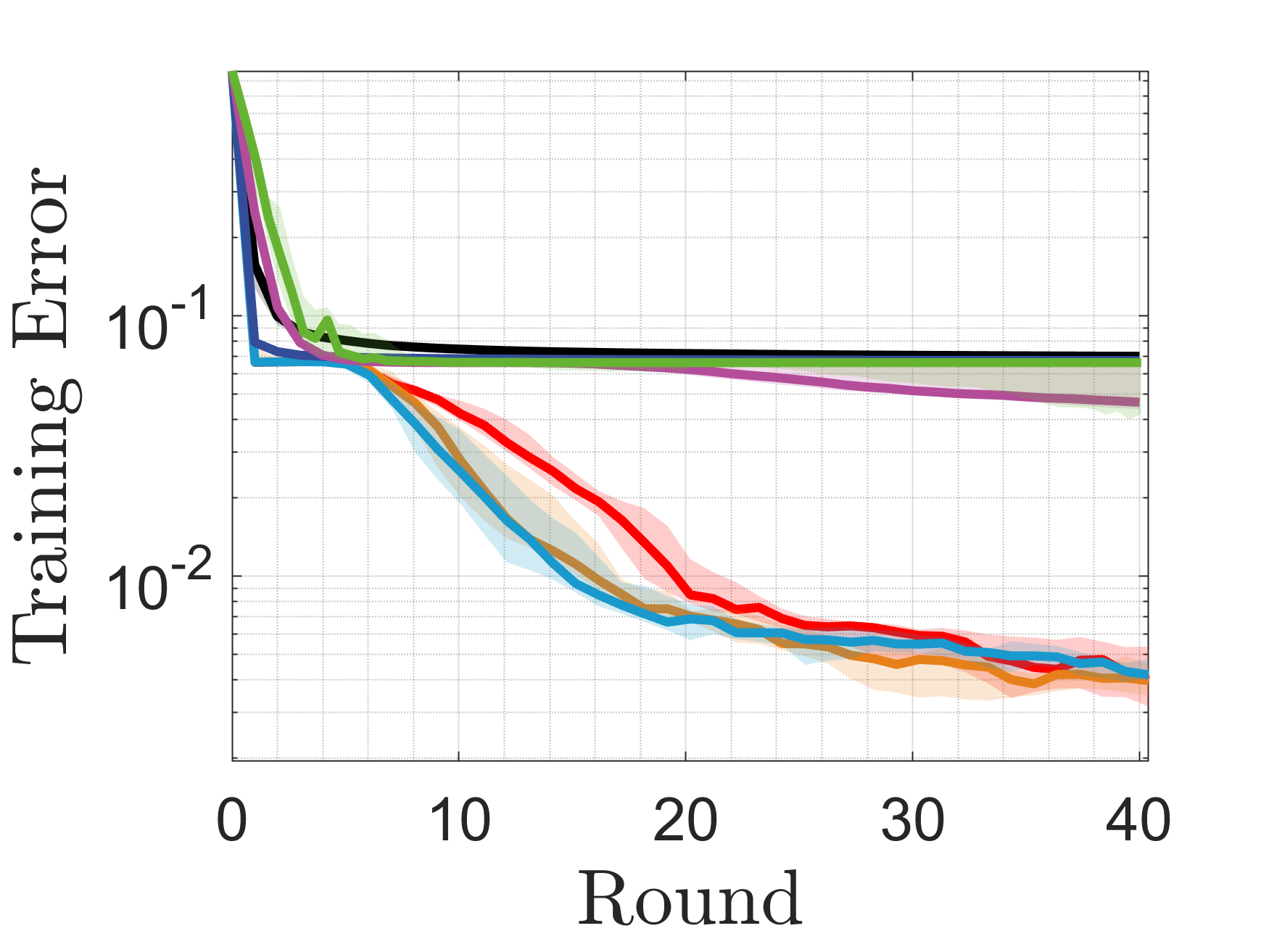}}
\subfloat[LSVM, ijcnn1]{\includegraphics[width=0.33\textwidth]{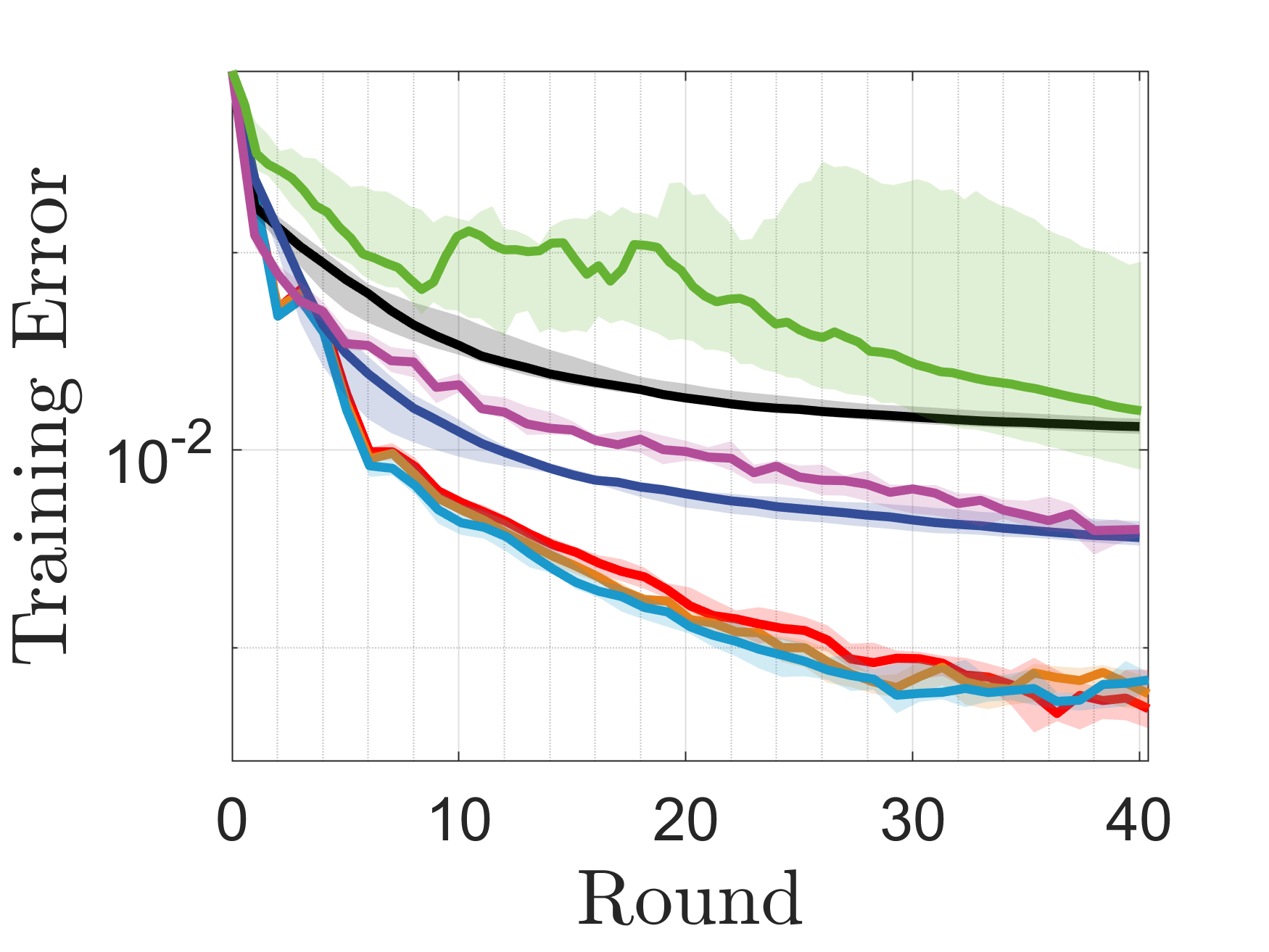}}
\hfil
\subfloat[LR, covtype]{\includegraphics[width=0.33\textwidth]{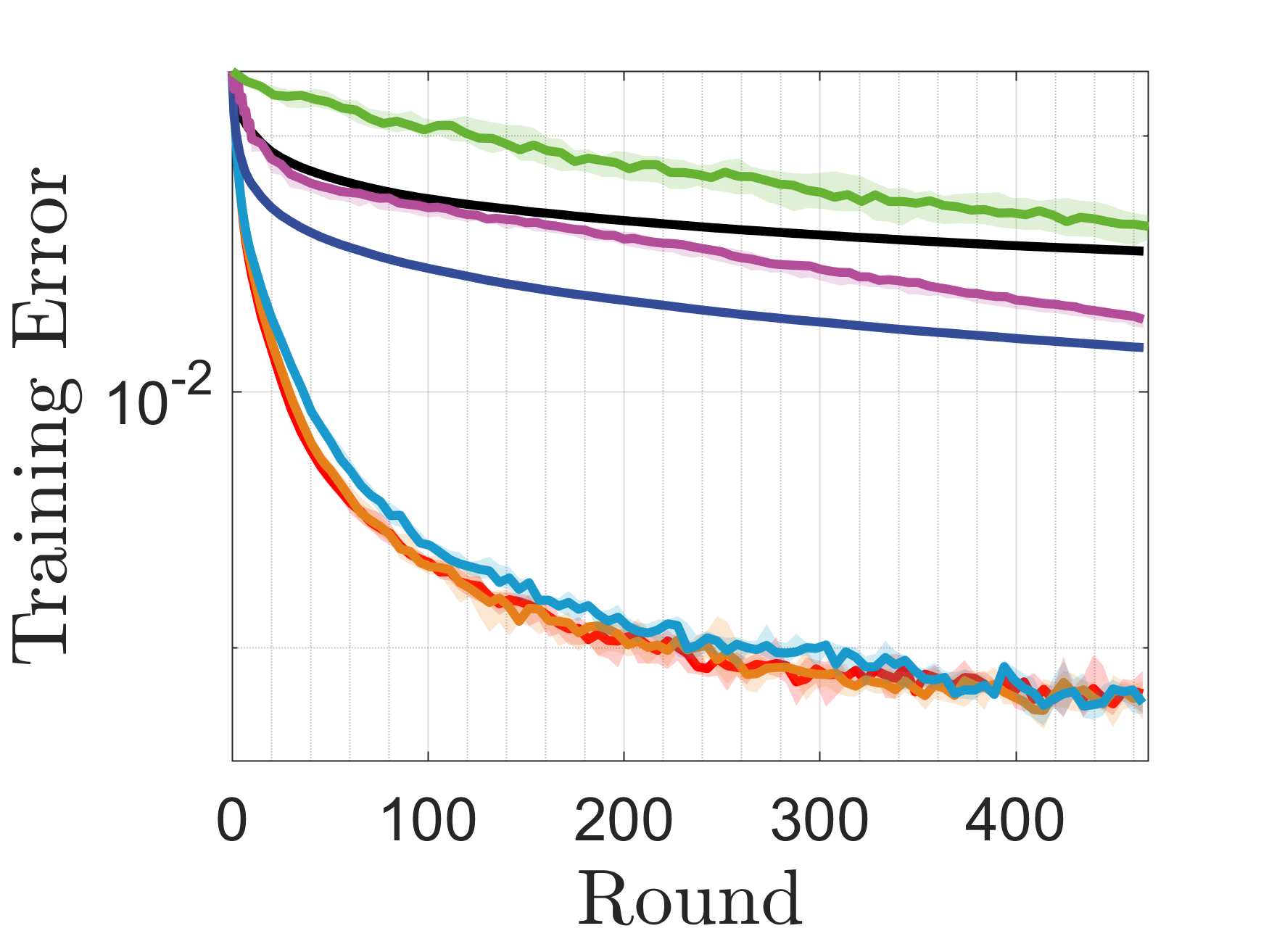}}
\subfloat[NSVM, covtype]{\includegraphics[width=0.33\textwidth]{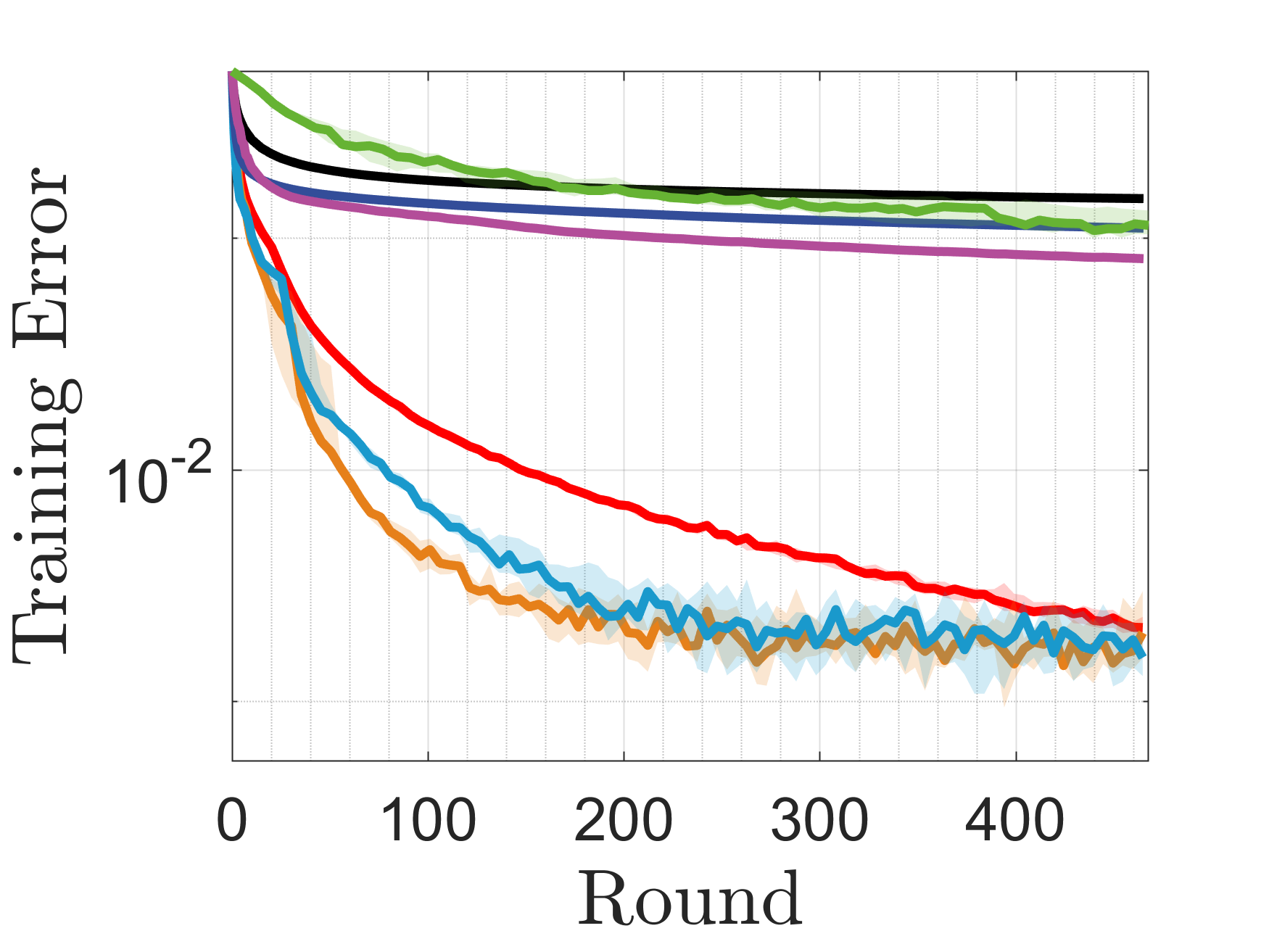}}
\subfloat[LSVM, covtype]{\includegraphics[width=0.33\textwidth]{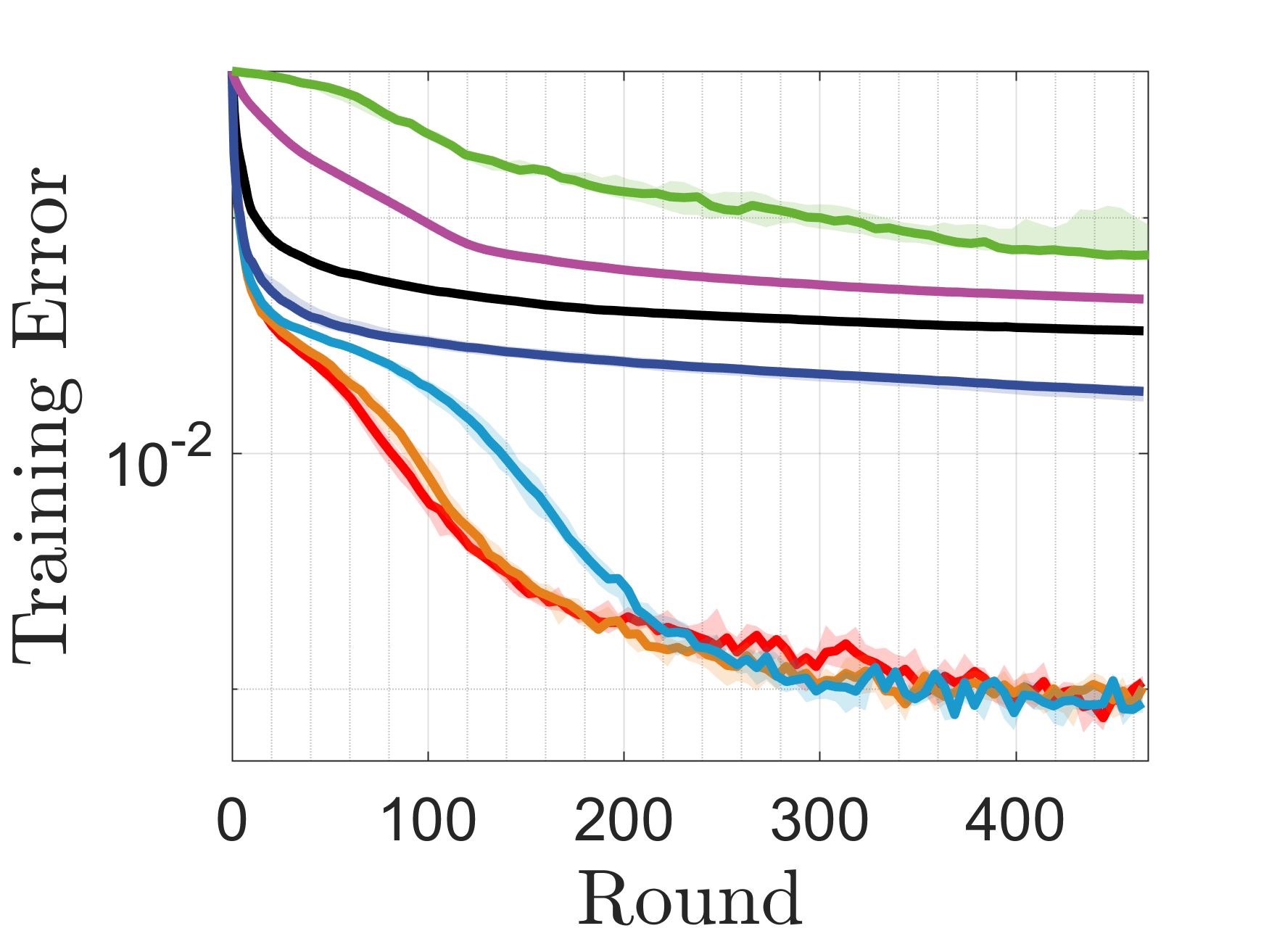}}
\caption{\rrr{Comparison on ijcnn, covtype, and real-sim datasets. The curve displays the training error versus the number of rounds and the corresponding shaded area extends from the 25th to 75th percentiles over the results obtained from all independent runs.}}
\label{fig:training-curve-part-two}
\end{figure*}

We also observe that, when implemented in DES, the standard Gaussian sampling, the mixture Gaussian sampling, and the mixture Rademacher sampling do not show significant difference in performance. Although our analyses in \Cref{theorem:convergence-DES-mixture-Gaussian-l2,theorem:convergence-DES-mixture-Rademacher-l2} suggest the possibility that mixture sampling might degrade the convergence, the results here show that the degradation, if exists, is in general negligible. 
In many cases, in fact, mixture sampling can even improve the performance. This suggests that mixture sampling could be used as the default scheme for DES, given its availability in improving the sampling efficiency.

\rrr{The experimental results obtained on testing sets are reported in the supplement.} In general, the generalization performance of the DES methods are consistent with their training performance. 

\subsection{Adaptation of step-size}
The theoretical analyses have demonstrated that DES converges with any initial step-size; and in this subsection we provide more empirical evidence. We first verify the performance of DES and the other competitors under different initial settings. In order to evaluate their performance over all problems and all datasets, we adopt the performance profile~\cite{dolan_benchmarking_2002}, a classic tool for visual comparison. The profile of an algorithm is the curve of the fraction of its solved test instances\footnote{Test instance denotes the pair of problem and dataset.} (denoted by $\rho(\tau)$) versus the amount of allocated computational budget (denoted by $\tau$). The computational budget is measured by the ratio of the required number of rounds to that required by the best performer. We say an algorithm can solve a test instance if its obtained objective function value $f'$ satisfies $f(\bm{x}_0) - f' > \delta (f(\bm{x}_0) - f_*')$ where $\delta\in (0,1)$ controls the accuracy and $f_*'$ is the best objective value obtained among all algorithms. An algorithm with high values of $\rho(\tau)$ or one that is located at the \rrr{top left} of the figure is preferable. In this section, the objective function value is measured by the training loss.

\Cref{fig:profile_stepsize} plots the performance profiles of DES (with the standard Gaussian sampling) as well as the three competitors, with initial step-sizes chosen from $\{0.1, 1, 10\}$. We choose $\delta=0.1$ in plotting the profiles.
The curves of DES are mostly lie to the left of the others, demonstrating that the relative performance of DES is in general robust to the step-size setting. For small $\tau$, the profile of DES with $\alpha=0.1$ lies to the right of Fed-ZO-SGD with $\alpha=10$, and overlaps with that of the other methods; this indicates that $\alpha=0.1$ is too small for DES to achieve fast decrease in early stage. But when a sufficient amount of computation budget \rrr{(e.g., $\tau \ge 10$) is allowed}, then such a step-size setting can nevertheless lead to the performance comparable to Fed-ZO-SGD with the best tuned step-size. Fed-ZO-GD is not robust to the step-size setting. Its profile for $\alpha=0.1$ is not shown in the plot, implying that with this setting Fed-ZO-GD cannot solve any test instance.

\begin{figure}[tb] 
\centering
\subfloat{\includegraphics[width=0.475\textwidth]{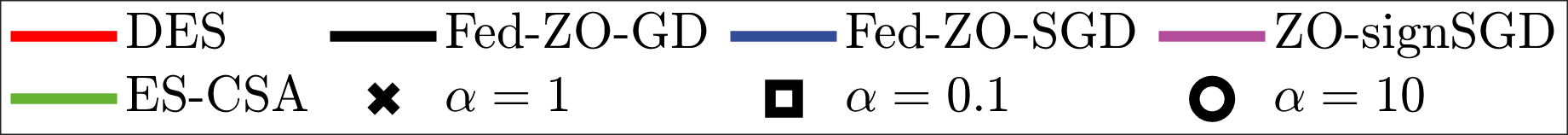}} \\[0.5ex]
\addtocounter{subfigure}{-1}
\includegraphics[width=0.5\textwidth]{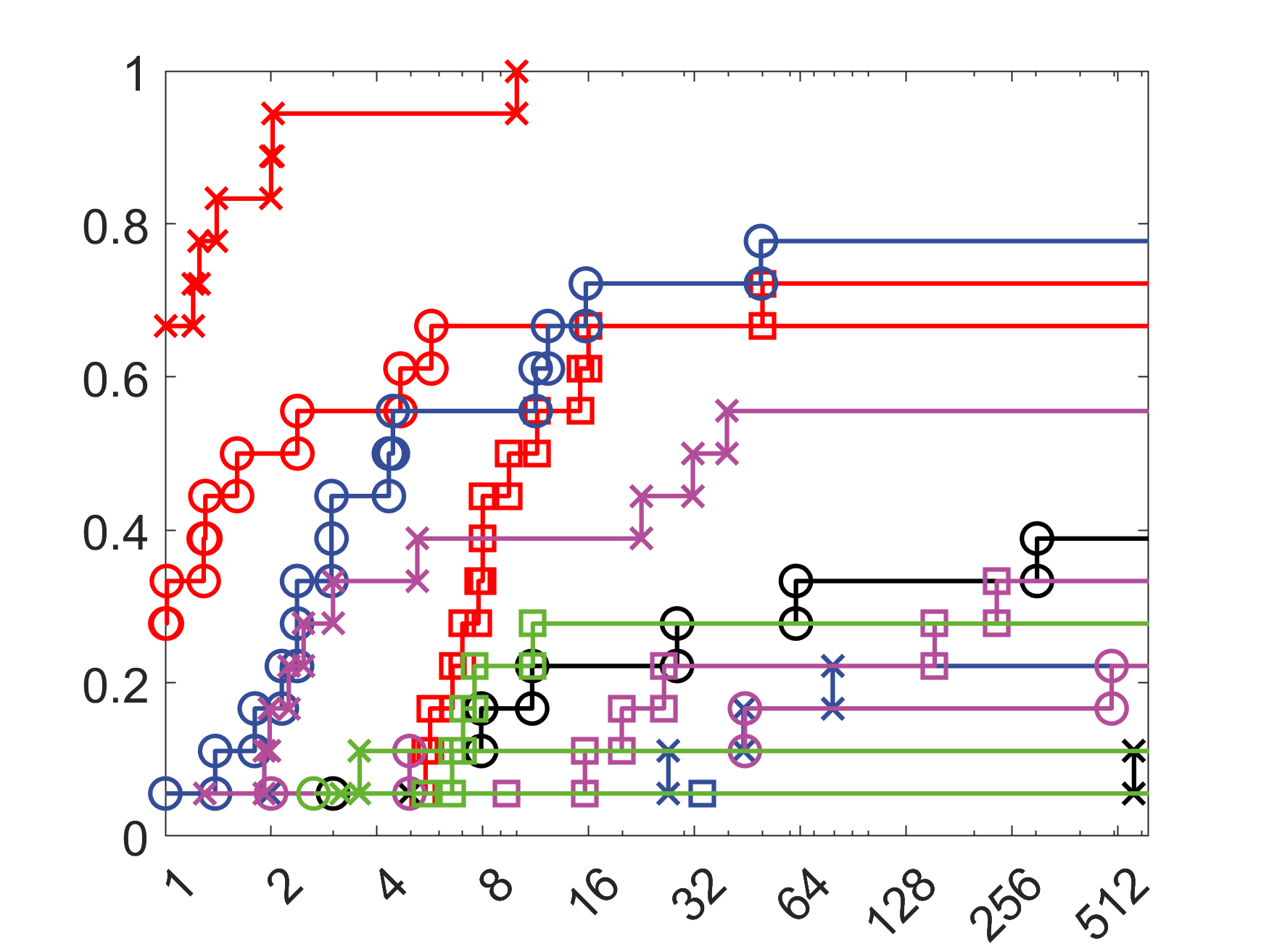}
\caption{Performance profiles ($\delta=0.1$) of different algorithms with different initial step-sizes. Results are obtained on all test instances.}
\label{fig:profile_stepsize}
\end{figure}

\Cref{fig:stepsize_comparison_SUSY} provides, as an representative, the convergence trajectories of the algorithms with different initial step-sizes. In general, on the two convex problems (i.e., LR and LSVM), the performance of DES is quite insensitive to the initial step-size; all three settings admit approaching similar results in the long run. ES-CSA exhibits similar adaptation ability, albeit with relatively poor performance. The other gradient based methods are sensitive to step-size settings, leading to quite different solutions even in the convex problems. On the nonconvex problem NSVM, the initial value of the step-size seems to have a considerable influence on all methods, possibly because of that the step-size setting is critical in escaping local optima. In this case, large initial step-sizes seem to yield faster convergence, but may also lead to early stagnation.

\begin{figure*}[tb] 
\centering
\subfloat{\includegraphics[width=0.90\textwidth]{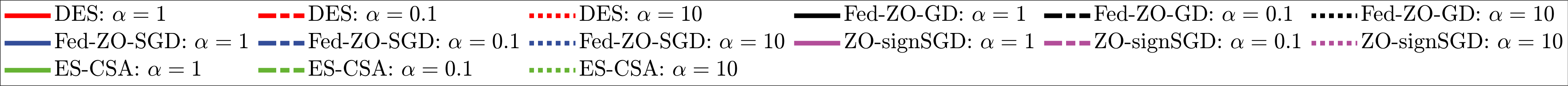}} \\[-2ex]
\addtocounter{subfigure}{-1}
\subfloat[LR]{\includegraphics[width=0.33\textwidth]{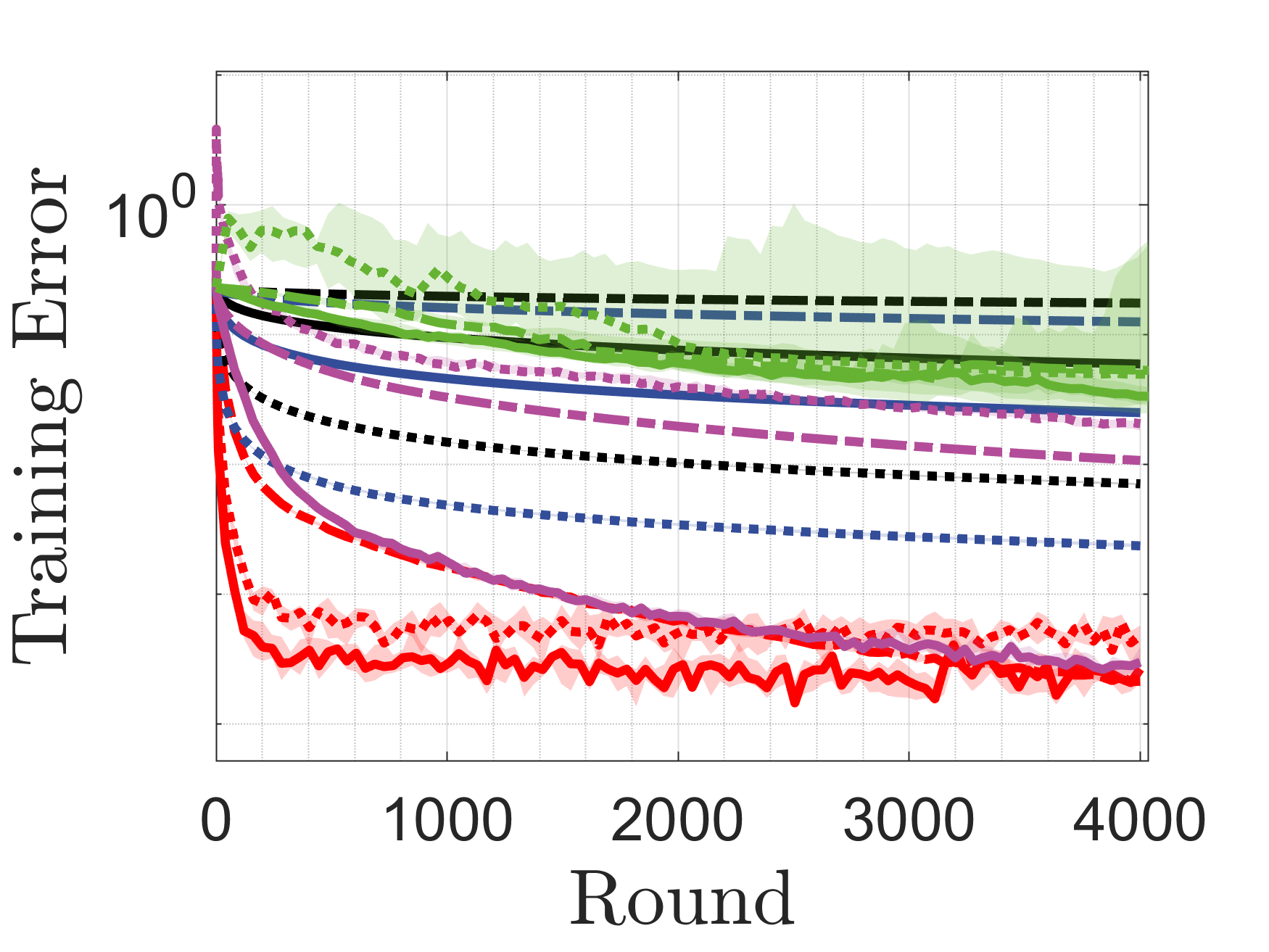}}
\subfloat[NSVM]{\includegraphics[width=0.33\textwidth]{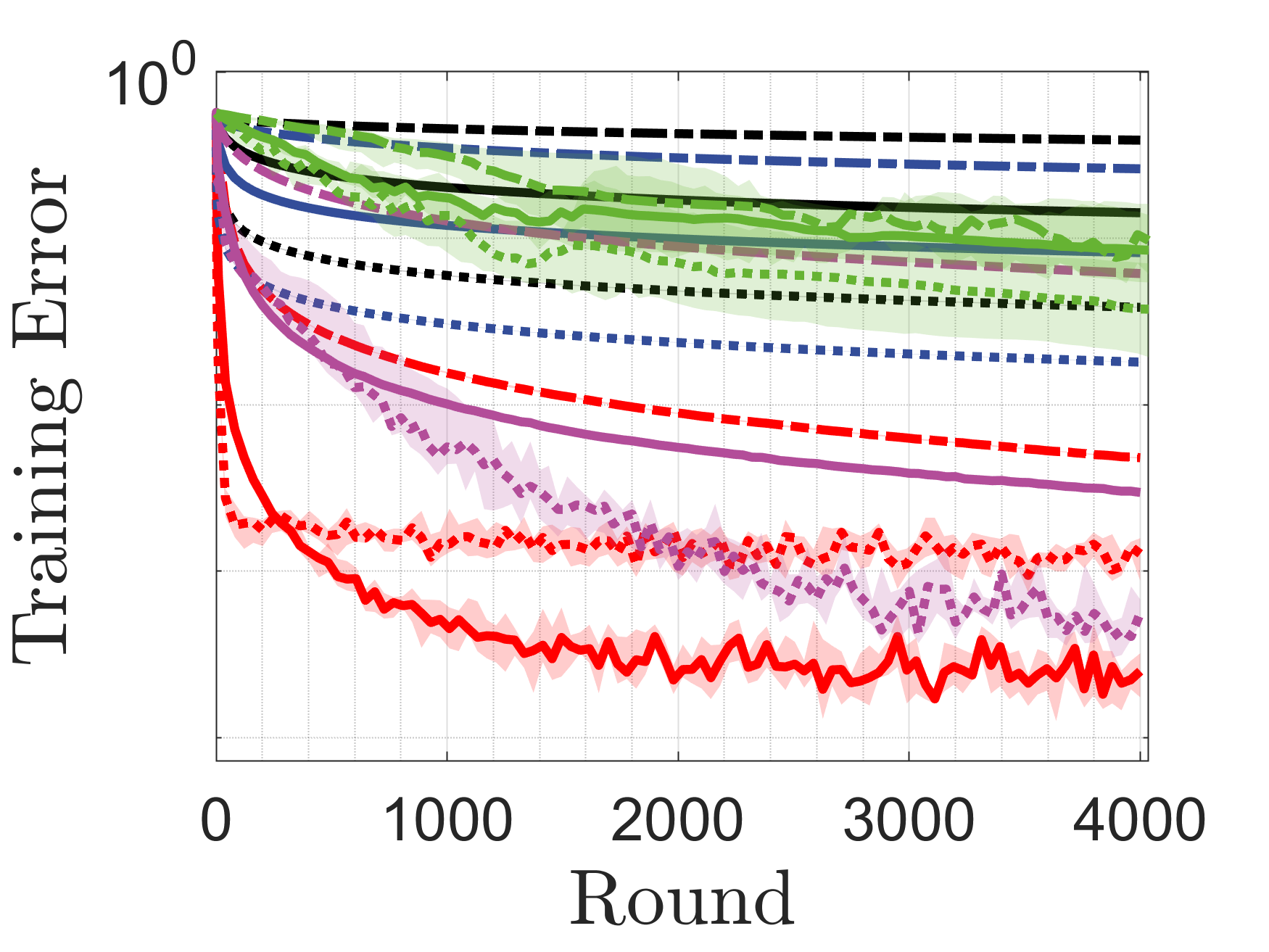}}
\subfloat[LSVM]{\includegraphics[width=0.33\textwidth]{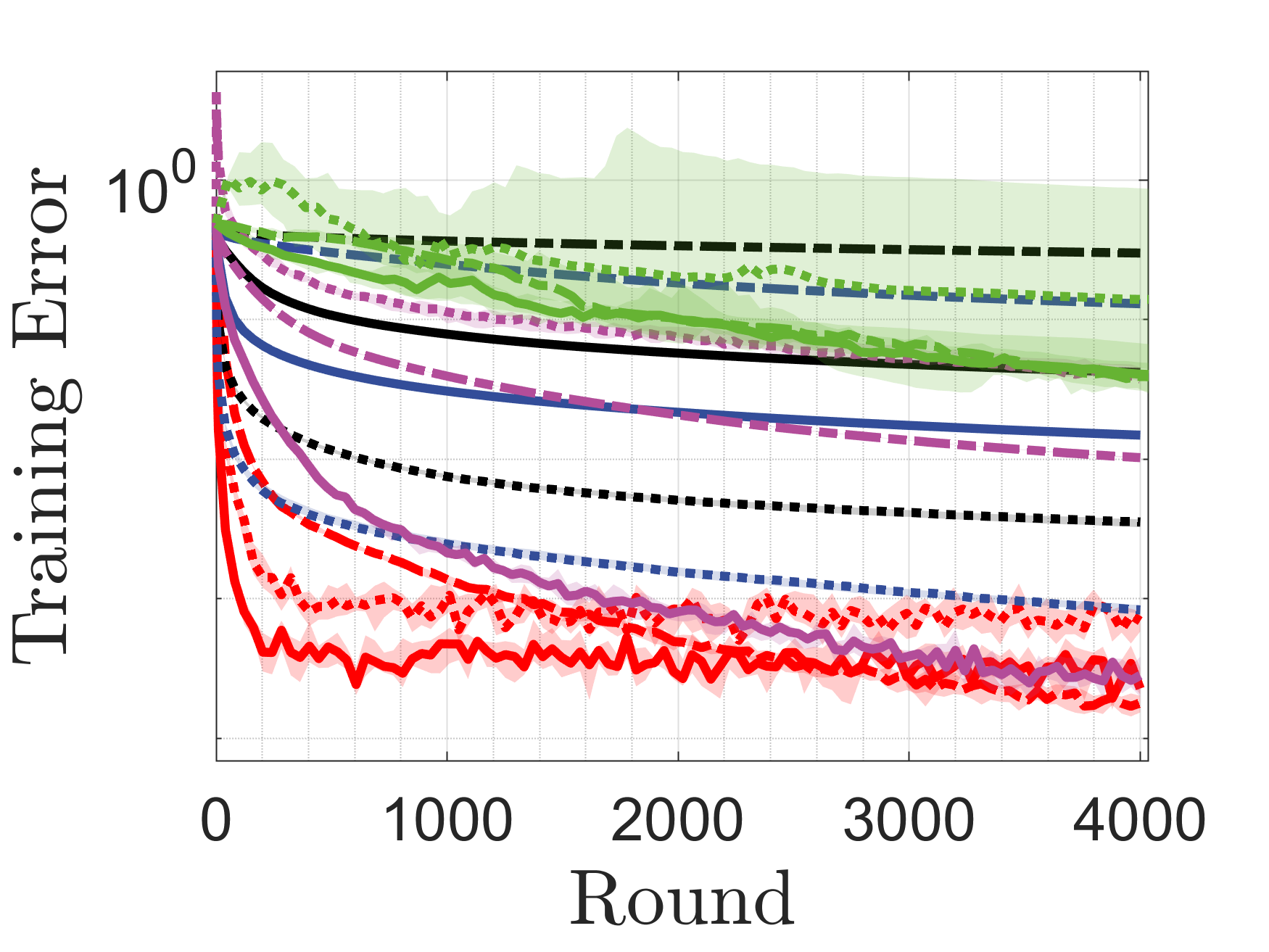}}
\caption{Convergence on SUSY with different initial step-size settings. The curve displays the training error versus the number of rounds and the corresponding shaded area extends from the 25th to 75th percentiles over the results obtained from all independent runs.}
\label{fig:stepsize_comparison_SUSY}
\end{figure*}

\subsection{Impact of momentum}
The convergence rate established previously does not reflects its dependence on the momentum parameter, so here we investigate this empirically. Consider the mixture Rademacher sampling based DES method, with $\beta$ chosen from $\{0, 0.2, 0.4, 0.6, 0.8\}$ and $\alpha$ fixed to 1. All other settings are the same as those in \Cref{ss:overall-performance}. Note that in the theoretical analyses we have required 
\begin{equation} \label{eq:assumption-on-beta}
	\beta \le \sqrt{\frac{1}{2\sqrt{2}}} \lessapprox 0.6
\end{equation}
for technical reasons. So the choice $\beta=0.8$ is to verify whether the above requirement is necessary in practice.

\Cref{fig:profile_momentum} gives the profile plot obtained on all test instances, measured with two different $\delta$ settings. Note that the smaller $\delta$ is, the higher solution-accuracy the curve reflects. It is found that the momentum mechanism becomes useless in the low accuracy domain; as setting $\beta$ to 0 is enough to solve nearly 80\% test instances within a very limited amount of computational budget. In this case, setting $\beta$ to 0.8 is indeed harmful to the performance.
To approach good performance in high accuracy, on the contrary, an appropriate setting of this parameter is generally helpful and could influence the final results. 
Again, we observe that the setting $\beta=0.8$ leads to poor performance, indicating that the assumption \cref{eq:assumption-on-beta} seems to be mandatory.  
But it is worthy nothing that the choice of $\beta$ is not critical to the relative performance of DES compared with the other competitors; we suggest to fix its value in the range $[0.2, 0.6]$ in all situations.

\begin{figure}[tb] 
\centering
\subfloat[Low solution-accuracy case: $\delta=0.05$]{\includegraphics[width=0.33\textwidth]{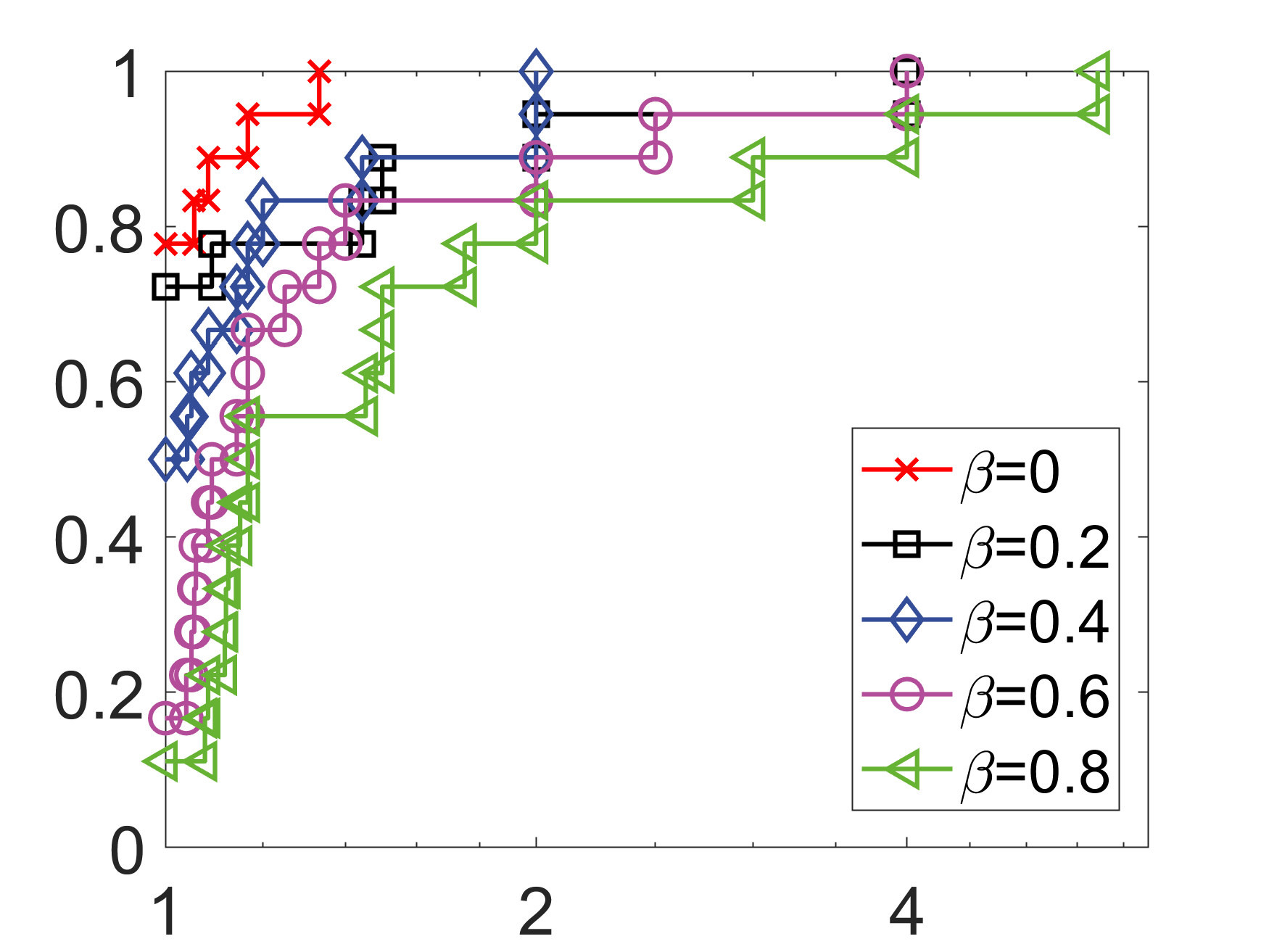}} \\
\subfloat[High solution-accuracy case: $\delta=0.001$]{\includegraphics[width=0.33\textwidth]{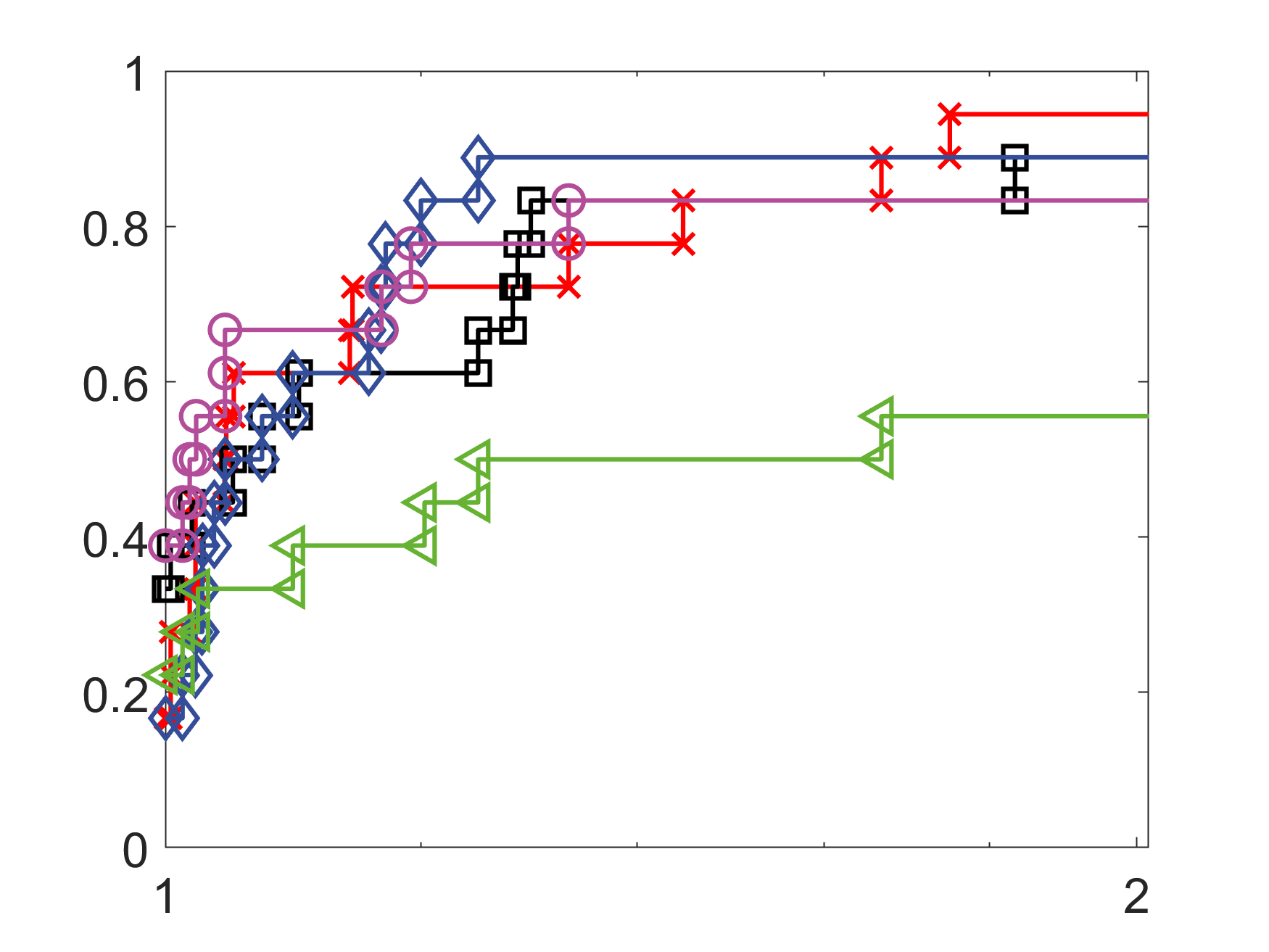}}
\caption{Performance profiles of DES with different momentum parameters. Results are obtained on all test instances. The mixture Rademacher sampling scheme is used in implementing DES.}
\label{fig:profile_momentum}
\end{figure}

\subsection{Impact of minibatch size}
Here we verify the impact of minibatch size on the algorithm performance. We consider the mixture Rademacher sampling based DES method and choose $\beta$ from $\{100, 500, 1000, 1500, 2000\}$. All other settings are the same as in \Cref{ss:overall-performance}. 

\Cref{fig:profile_batchsize} reports the results obtained on all test instances via performance profile. It is clearly that whether minibatch size matters depends on which accuracy we would like to achieve. In the low accuracy case ($\delta=0.05$), choosing a small minibatch $b=100$ can solve at least 50\% test instances very quickly, although suffering early termination later. In this case, using a large minibatch does not lead to significant improvement in performance. Oppositely, the impact of minibatch size becomes quite clear in high accuracy case ($\delta=0.001$) where increasing $b$ consistently improves the number of test instances that can be solved. This observation matches our theoretical analyses and suggests that a large minibatch is generally better if the computational cost is \rrr{affordable at the worker-side}.

\begin{figure}[tb] 
\centering
\subfloat[Low solution-accuracy case: $\delta=0.05$]{\includegraphics[width=0.33\textwidth]{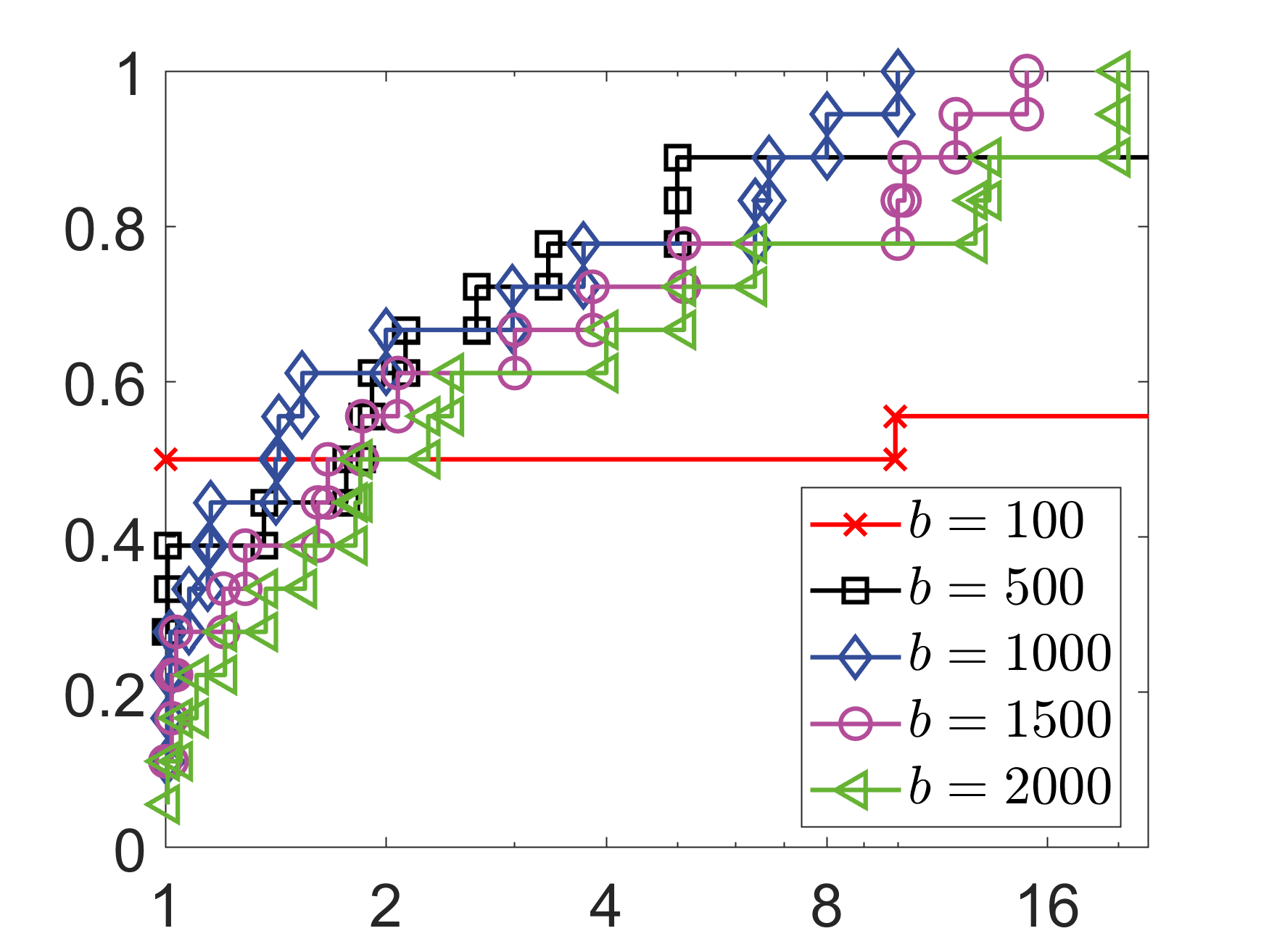}} \\
\subfloat[High solution-accuracy case: $\delta=0.001$]{\includegraphics[width=0.33\textwidth]{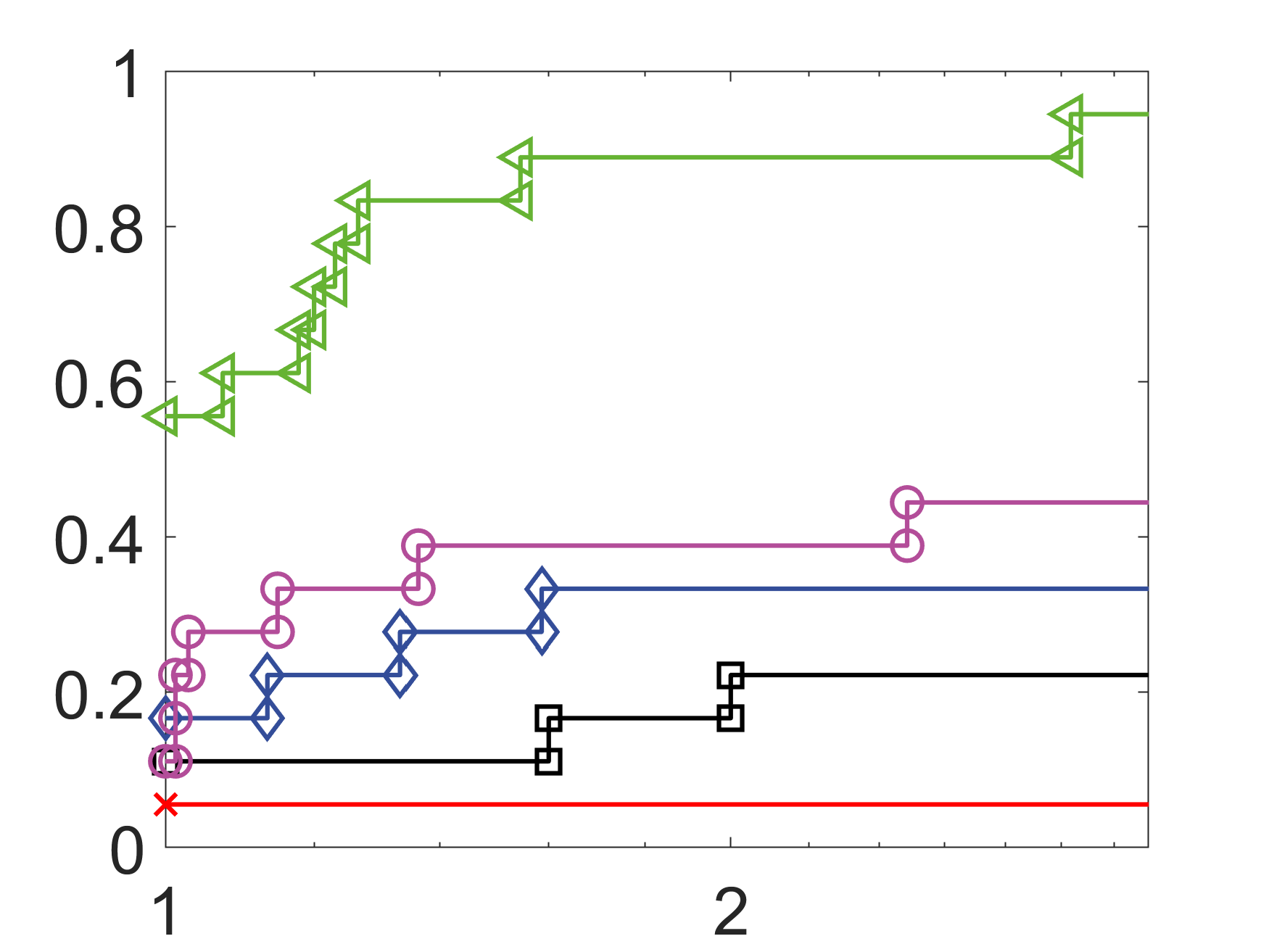}}
\caption{Performance profiles of DES with different minibatch sizes. Results are obtained on all test instances. The mixture Rademacher sampling scheme is used in implementing DES.}
\label{fig:profile_batchsize}
\end{figure}

\section{Conclusion}
In this work we propose the DES method via modifying the classic evolution strategy method and adapting it to the distributed setting. Our method uses a Gaussian probability model to guide the worker's local update, so it avoids finite-difference based smoothing techniques which might cause numerical issues. We have analyzed its convergence properties compared to existing zeroth-order and first-order methods, demonstrating its adaptivity to objective landscapes and the exploitation ability towards sparsity. Two alternative sampling schemes have been suggested and we find they lead to an improvement in sampling efficiency with no obvious degradation in performance.  The current implementation of DES, however, does not support heterogeneous data distribution, which seems to be a common issue for those based on biased descent step; see \cite{liu_signsgd_2019,bernstein_signsgd_2018} for an example. The idea of bias correction suggested in \cite{karimireddy_error_2019} seems to address this issue, and is worth a try in further development of DES. This idea, nevertheless, would be incompatible with the comparison-based nature of the ES family. We would like to continue resolving this in the future. We hope our work on DES will serve as a starting point for generalizing the rich studies in evolutionary computation communities to the distributed world.

\bibliographystyle{IEEEtran}
\bibliography{hxy}

\onecolumn
\appendices
\begin{titlepage}
\begin{center}
{\LARGE\bfseries Supplementary Appendices}
\end{center}
\global\let\newpage\relax
\end{titlepage}

\section{Proof of \Cref{theorem:convergence-simple-ES}}
\begin{proof}
For convenience define the following scalar operations
\begin{equation} \label{eq:definition-sign-signplus}
	\sign(a) = 
	\begin{cases}
		1 & \text{\; if $a \ge0$}	 \\
		-1 & \text{\; if $a < 0$}
	\end{cases}	\;\;\; \text{and}\;\;\;
	\signplus(a) = \frac{\sign(a)+1}{2}=
	\begin{cases}
		1 & \text{\; if $a \ge0$}	 \\
		0 & \text{\; if $a < 0$}
	\end{cases}.
\end{equation}
Note that the $\sign(\cdot)$ is different from the usual operation of taking sign, as in our definition it returns 1 when performed on 0. 
In addition, we have the following useful identities:
\begin{equation} \label{eq:sign_identity}
\sign(a) b = \left(-1 + 2\mathbb{I}\left\{\sign(a) = \sign(b)\right\}\right) |b|
\end{equation}
and
\begin{equation} \label{eq:indicator-neq} 
\mathbb{I}\left\{\sign(a) = \sign(b)\right\} = \mathbb{I}\left\{|a+b|\ge|b|\right\}
\end{equation}
which can be verified easily. 

With the $\sign$ operation defined in (\ref{eq:definition-sign-signplus}), the iterations generated by \Cref{alg:simple-ES} can be rewritten as
\begin{equation} \label{eq:update-rule-simple-ES}
	\bm{x}_{k+1} = \bm{x}_{k} + \alpha_{k} \signplus\left(f\left(\bm{x}_k\right) - f\left(\bm{x}_k + \alpha_k \bm{u}_k\right)\right)\bm{u}_k.
\end{equation}

With \Cref{assumption:smoothness}, we can bound the per-iteration progress as
\begin{equation*} 
\begin{split}
f\left(\bm{x}_{k+1}\right) - f\left(\bm{x}_k\right) 
& \le \nabla f\left(\bm{x}_k\right)^T (\bm{x}_{k+1} - \bm{x}_k) + \frac{L}{2}\left\|\bm{x}_{k+1}-\bm{x}_k\right\|^2 \\
& \overset{(\ref{eq:update-rule-simple-ES})}\le \alpha_k \signplus\left(f\left(\bm{x}_k\right) - f\left(\bm{x}_k + \alpha_k \bm{u}_k\right)\right)\nabla f\left(\bm{x}_k\right)^T\bm{u}_k + \frac{L\alpha_k^2}{2} \left\|\bm{u}_k\right\|^2 \\
& \overset{(\ref{eq:definition-sign-signplus})}= \frac{1}{2}\alpha_k \bm{u}_k + \frac{1}{2}\alpha_k \underbrace{\left(\sign\left(f\left(\bm{x}_k\right) - f\left(\bm{x}_k + \alpha_k \bm{u}_k\right)\right)\right)\nabla f\left(\bm{x}_k\right)^T\bm{u}_k}_{\defeq \mathfrak{A}} + \frac{L\alpha_k^2}{2} \left\|\bm{u}_k\right\|^2.
\end{split}
\end{equation*}
Taking expectation with respect to $\bm{u}_k$ at both sides, and according to \Cref{lemma:bound-on-variance}, we have
\begin{equation} \label{eq:expected-progress}
\mathbb{E}_k \left[f\left(\bm{x}_{k+1}\right)\right] - f\left(\bm{x}_k\right) 
\le \frac{1}{2}\alpha_k \mathbb{E}_k \left[\mathfrak{A}\right] + \frac{L\alpha_k^2}{2} U
\end{equation}
where $\mathbb{E}_k$ denotes the expectation conditioned on the randomness at the $k$-th iteration. 

We now bound the term $\mathfrak{A}$ using identities (\ref{eq:sign_identity}) and (\ref{eq:indicator-neq}):
\begin{equation} \label{eq:bound-on-decrease}
\begin{split}
\mathfrak{A} 
& \overset{(\ref{eq:sign_identity})}= \left(-1 + 2\mathbb{I}\left\{\sign\left(f\left(\bm{x}_k\right) - f\left(\bm{x}_k + \alpha_k \bm{u}_k\right)\right) = \sign\left(\nabla f\left(\bm{x}_k\right)^T\bm{u}_k\right)\right\}\right) \left|\nabla f\left(\bm{x}_k\right)^T\bm{u}_k\right|\\
& = \left(-1 + 2\mathbb{I}\left\{\sign\left(f\left(\bm{x}_k\right)-f\left(\bm{x}_k + \alpha_k \bm{u}_k\right)\right) = \sign\left(\alpha_k \nabla f\left(\bm{x}_k\right)^T\bm{u}_k\right)\right\}\right) \left|\nabla f\left(\bm{x}_k\right)^T\bm{u}_k\right|\\
& \overset{(\ref{eq:indicator-neq})}= \left(-1 + 2\mathbb{I}\left\{\left| f\left(\bm{x}_k + \alpha_k \bm{u}_k\right) -f\left(\bm{x}_k\right)- \alpha_k \nabla f\left(\bm{x}_k\right)^T\bm{u}_k \right| \ge \alpha_k \left| \nabla f\left(\bm{x}_k\right)^T\bm{u}_k\right|\right\}\right) \left|\nabla f\left(\bm{x}_k\right)^T\bm{u}_k\right| \\
& \le \left(-1 + 2\mathbb{I}\left\{\frac{L}{2} \left\|\alpha_k \bm{u}_k\right\|^2 \ge \alpha_k \left| \nabla f\left(\bm{x}_k\right)^T\bm{u}_k\right|\right\}\right) \left|\nabla f\left(\bm{x}_k\right)^T\bm{u}_k\right| 
\end{split}
\end{equation}
where the last inequality is due to \Cref{assumption:smoothness}.

Substituting (\ref{eq:bound-on-decrease}) into (\ref{eq:expected-progress}) gives
\begin{equation} \label{eq:expected-progress-2}
\begin{split}
\mathbb{E}_k \left[f\left(\bm{x}_{k+1}\right)\right] & - f\left(\bm{x}_k\right) \\
& \le \frac{\alpha_k}{2} \mathbb{E}_k \left[\left(-1 + 2\mathbb{I}\left\{\frac{\alpha_k L}{2} \left\| \bm{u}_k\right\|^2 \ge \left| \nabla f\left(\bm{x}_k\right)^T\bm{u}_k\right|\right\}\right) \left|\nabla f\left(\bm{x}_k\right)^T\bm{u}_k\right|\right] + \frac{L\alpha_k^2}{2} U	\\
& = -\frac{\alpha_k}{2} \mathbb{E}_k \left[\left|\nabla f\left(\bm{x}_k\right)^T\bm{u}_k\right|\right]
	+\alpha_k \underbrace{\mathbb{E}_k \left[\mathbb{I}\left\{\frac{\alpha_k L}{2} \left\| \bm{u}_k\right\|^2 \ge \left| \nabla f\left(\bm{x}_k\right)^T\bm{u}_k\right|\right\} \left|\nabla f\left(\bm{x}_k\right)^T\bm{u}_k\right|\right]}_{\defeq \mathfrak{B}} + \frac{L\alpha_k^2}{2} U \\	
& = -\frac{\alpha_k}{\sqrt{2\pi}} \left\|\nabla f\left(\bm{x}_k\right)\right\|_2
	+\alpha_k \mathfrak{B} + \frac{L\alpha_k^2}{2} U \\	
\end{split}
\end{equation}
where the last equality uses the fact
\begin{equation} \label{eq:expectation-half-gaussian}
\mathbb{E}[|\bm{y}^T\bm{u}|] = \sqrt{\frac{2}{\pi}} \|\bm{y}\|_2  \text{ for }\bm{u} \sim \mathcal{N}(\bm{0},\bm{I}).
\end{equation}

Since the distribution of $\bm{u}_k$ is isotropic, we can assume $\nabla f(\bm{x}_k) = \left\|\nabla f(\bm{x}_k)\right\|_2 \bm{e}_1$ where $\bm{e}_1 = (1,0,\cdots,0)^T$. Denoting $u_{k,i}$ as the $i$-th element of $\bm{u}_k$ and noting the assumption $\|\cdot\| = \|\cdot\|_2$, we have
\begin{equation}
\mathfrak{B} 
= \mathbb{E}_k \left[\mathbb{I}\left\{\frac{\alpha_k L}{2} \sum_{i=1}^n u_{k,i}^2 \ge \left\|\nabla f\left(\bm{x}_k\right)\right\|_2 \left|u_{k,1}\right|\right\} \left\|\nabla f\left(\bm{x}_k\right)\right\|_2 \left|u_{k,1}\right|\right].
\end{equation}

Now we decompose the expectation operation $\mathbb{E}_k$ into two steps: firstly taking the expectation over $u_{k,2},\cdots,u_{k,n}$ and secondly over $u_{k,1}$. That is,
\begin{equation*}
\begin{split}
\mathfrak{B} 
& = \mathbb{E}_{u_{k,1}}\mathbb{E}_{u_{k,2},\cdots,u_{k,n}} \left[\mathbb{I}\left\{\frac{\alpha_k L}{2} \sum_{i=1}^n u_{k,i}^2 \ge \left\|\nabla f\left(\bm{x}_k\right)\right\|_2 \left|u_{k,1}\right|\right\} \left\|\nabla f\left(\bm{x}_k\right)\right\|_2 \left|u_{k,1}\right|\right] \\
& = \mathbb{E}_{u_{k,1}} \left[\mathbb{P}_{u_{k,2},\cdots,u_{k,n}}\left\{\frac{\alpha_k L}{2} \sum_{i=1}^n u_{k,i}^2 \ge \left\|\nabla f\left(\bm{x}_k\right)\right\|_2 \left|u_{k,1}\right|\right\} \left\|\nabla f\left(\bm{x}_k\right)\right\|_2 \left|u_{k,1}\right|\right] \\
& \le \mathbb{E}_{u_{k,1}} \left[\frac{\alpha_k L}{2} \frac{u_{k,1}^2 +\sum_{i=2}^n \mathbb{E}_{u_{k,i}} \left[u_{k,i}^2\right]}{\left\|\nabla f\left(\bm{x}_k\right)\right\|_2 \left|u_{k,1}\right|} \left\|\nabla f\left(\bm{x}_k\right)\right\|_2 \left|u_{k,1}\right|\right] \\
& = \frac{\alpha_k L}{2}\mathbb{E}_{u_{k,1}} \left[ {u_{k,1}^2 +\sum_{i=2}^n \mathbb{E}_{u_{k,i}} \left[u_{k,i}^2\right]} \right]
= \frac{\alpha_k L}{2}\mathbb{E}_k \left[\left\|\bm{u}_k\right\|^2\right]. 
\end{split}
\end{equation*}
Here we use the Markov inequality applied on the components $u_{k,2},\cdots,u_{k,n}$.

Substituting the above bound into (\ref{eq:expected-progress-2}) and using \Cref{lemma:bound-on-variance}, we get
\begin{equation*}
\mathbb{E}_k \left[f\left(\bm{x}_{k+1}\right)\right] - f\left(\bm{x}_k\right)
\le -\frac{\alpha_k}{\sqrt{2\pi}} \left\|\nabla f\left(\bm{x}_k\right)\right\|_2 + L\alpha_k^2 U.
\end{equation*}
Taking the total expectation and summing over $k=0,1,\cdots,K-1$ give
\begin{equation} \label{eq:sum-of-gradients}
\sum_{k=0}^{K-1} \alpha_k \mathbb{E}\left[\left\|\nabla f\left(\bm{x}_k\right)\right\|_2\right]
\le \sqrt{2\pi} \left(f\left(\bm{x}_0\right) -f_* + L U \sum_{k=0}^{K-1}\alpha_k^2\right)
\overset{\cref{eq:bound-sum-1-series}}\le \sqrt{2\pi} \left( f\left(\bm{x}_0\right)-f_* + L U \alpha_0^2 (1+\log K)\right).
\end{equation}
On the other hand, we can lower bound the left-hand side as
\[
\sum_{k=0}^{K-1} \alpha_k \mathbb{E}\left[\left\|\nabla f\left(\bm{x}_k\right)\right\|_2\right]	
\overset{\cref{eq:bound-sum-0.5-series-2}}\ge 
\rrr{\sqrt{K}\alpha_0\left(\frac{1}{K} \sum_{k=0}^K 
\mathbb{E}\left[\left\|\nabla f\left(\bm{x}_k\right)\right\|_2\right]\right)}.
\]
Combing this with \cref{eq:sum-of-gradients} yields
\[
\frac{1}{K}\sum_{k=0}^{K-1} \mathbb{E}\left[\left\|\nabla f\left(\bm{x}_k\right)\right\|_2\right]	
\le \sqrt{\frac{2\pi}{K}} \left( \frac{f\left(\bm{x}_0\right)-f_*}{\alpha_0} + L U \alpha_0 (1+\log K)\right).
\]


The bound (\ref{eq:convergene-rate-simple-ES-l2-norm}) can be obtained via specifying $U=n$ according to \Cref{lemma:bound-on-variance}.
\end{proof}

\section{A Unified Implementation of DES and Fundamental Lemmas}
\rrr{
Before proving the main results \Cref{theorem:convergence-DES-l2,theorem:convergence-DES-l1,theorem:convergence-DES-mixture-Gaussian-l2,theorem:convergence-DES-mixture-Rademacher-l2}, we provide in this section some lemmas which will be used several times in the subsequent proofs.
Since we have two DES implementations (i.e., \Cref{alg:DES,alg:DES-mixture-sampling}) and they only differ in the way of generating mutation vectors, we suggest to analyze them in a unified manner. To this end, we provide in \Cref{alg:UDES} a unified implementation of DES which can recover both \Cref{alg:DES} and \Cref{alg:DES-mixture-sampling}. For example, it recovers \Cref{alg:DES} if the mutation vector $\bm{u}_{i,k}^t$ in Line 9 is drawn from the Gaussian distribution $\mathcal{N}(\bm{0},\bm{I})$. It is also logically equivalent to \Cref{alg:DES-mixture-sampling} when $\bm{u}_{i,k}^t$ is drawn from the mixture Gaussian distribution $\mathcal{M}_l^G$ or mixture Rademacher distribution $\mathcal{M}_l^R$.
Note that the lemmas derived in this section do not rely on the detailed distribution for the mutation vectors. We will also not specify the vector norm when using the assumptions. The only requirement is that the variance of the mutation vector $\bm{u}_{i,k}^t$ needs to be bounded by some constant $U$ (see Line 9 in \Cref{alg:UDES}). We will show in the next sections that this requirement indeed holds.} 


\begin{figure}[thb]
\begin{algorithm}[H]
	\caption{\rrr{Unified implementation of DES for convergence analyses}}
	\small
	\label{alg:UDES}
	\begin{algorithmic}[1]
	\Require $\bm{x}_0 \in \mathbb{R}^n$: initial solution; $\alpha \in \mathbb{R}_+$: initial step-size; $\beta \in \left[0,\sqrt{\frac{1}{2\sqrt{2}}}\right)$: momentum parameter; $b \ge \sqrt{T}$: minibatch size; $l \in \mathbb{Z}_+$: mixture parameter
	\For {$t = 0, 1, \cdots, T-1$}
		\For {$i = 1,2,\cdots,M$ \textbf{in parallel}} 
			\State $\bm{v}_{i,0}^t = \bm{x}_t$
			\State $\alpha_0^t = \alpha/(t+1)^{0.25}$
			\State Draw a minibatch $\mathcal{D}_i$ of size $b$ 
			\State Define $f_i(\bm{x}) = \frac{1}{b}\sum_{\bm{\xi} \in \mathcal{D}_i} F(\bm{x};\bm{\xi})$
			\For {$k = 0,1,\cdots,K-1$}
				\State $\alpha_k^t = \alpha_0^t/(k+1)^{0.5}$
				\State Generate a random vector $\bm{u}_{i,k}^t$ satisfying $\mathbb{E}\left[\|\bm{u}_{i,k}^t\|^2\right] \le U$ for some positive constant $U$ and some generic norm $\|\cdot\|$ 
				\State $\bm{v}_{i,k+1}^t = \bm{v}_{i,k}^t + \alpha_k^t \sign_+ \left(f_i(\bm{v}_{i,k}^t) - f_i\left(\bm{v}_{i,k}^t + \alpha_k^t \bm{u}_{i,k}^t\right)\right)$
				where $\sign_+$ is defined in \Cref{eq:definition-sign-signplus}
			\EndFor
		\EndFor
		\State $\bm{d}_{t+1} = \frac{1}{M}\sum_{i=1}^M \bm{v}_{i,K}^t - \bm{x}_t$
		\State $\bm{m}_{t+1} = \beta \bm{m}_t + (1-\beta) \bm{d}_{t+1}$
		\State $\bm{x}_{t+1} = \bm{x}_t + \bm{m}_{t+1}$
	\EndFor
 \end{algorithmic} 
 \end{algorithm}
\end{figure}

\rrr{In the following we give some lemmas regarding the iterations generated from \Cref{alg:UDES}.}
Due to the momentum mechanism, it is difficult to directly work with the solutions $\left\{\bm{x}_t\right\}$. 
Instead, we introduce a virtual sequence $\left\{\bm{z}_t\right\}$ which can be regarded as a counterpart of $\left\{\bm{x}_t\right\}$ without momentum:
\[
	\bm{z}_{t+1} = \frac{1}{1-\beta} \bm{x}_{t+1} - \frac{\beta}{1-\beta} \bm{x}_{t}. 
\]
To make it well-defined, we specify $\bm{x}_{-1} = \bm{x}_0$ such that $\bm{z}_0 = \bm{x}_0$. We will characterize the algorithm behavior with $\left\{\bm{z}_t\right\}$ and relate it to $\left\{\bm{x}_t\right\}$ in the last step.
Note that by this definition and according to the momentum rule (Lines 14-15 in \Cref{alg:UDES}) we have
\begin{equation} \label{eq:properties-virtual-sequence}
	\bm{z}_{t+1} - \bm{z}_t = \bm{d}_{t+1} 
	\text{\;\;\; and \;\;\; } 
	\left\|\bm{x}_t - \bm{z}_t\right\| = \frac{\beta}{1-\beta}\left\|\bm{x}_t - \bm{x}_{t-1}\right\|.
\end{equation}

\begin{lemma} \label{lemma:bound-on-descent-step}
\rrr{The descent step $\bm{d}_{t+1}$ in \Cref{alg:UDES} can be bounded as}
\begin{align}
\label{eq:descent-step-bound-square}\mathbb{E}\left[\left\|\bm{d}_{t+1}\right\|^2\right] & \le \left(\alpha_0^t\right)^2 UK \left(1+\log K\right), \\
\label{eq:descent-step-bound}\mathbb{E}\left[\left\|\bm{d}_{t+1}\right\|\right] & \le 2 \alpha_0^t \sqrt{KU}.
\end{align}
\end{lemma}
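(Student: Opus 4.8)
The plan is to bound $\bm{d}_{t+1} = \frac{1}{M}\sum_{i=1}^M (\bm{v}_{i,K}^t - \bm{x}_t)$ by first controlling each worker's trajectory $\bm{v}_{i,K}^t - \bm{v}_{i,0}^t = \sum_{k=0}^{K-1} (\bm{v}_{i,k+1}^t - \bm{v}_{i,k}^t)$. By the update rule in Line 10 of \Cref{alg:UDES}, each per-iteration increment is either $\bm{0}$ or $\alpha_k^t \bm{u}_{i,k}^t$, so in all cases $\|\bm{v}_{i,k+1}^t - \bm{v}_{i,k}^t\| \le \alpha_k^t \|\bm{u}_{i,k}^t\|$. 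I would use the triangle inequality over $k$ and over $i$ (together with convexity of $\|\cdot\|$ for the average over workers) to write $\|\bm{d}_{t+1}\| \le \frac{1}{M}\sum_i \sum_k \alpha_k^t \|\bm{u}_{i,k}^t\|$, and similarly $\|\bm{d}_{t+1}\|^2 \le \frac{1}{M}\sum_i \big(\sum_k \alpha_k^t\|\bm{u}_{i,k}^t\|\big)^2$.

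For \cref{eq:descent-step-bound-square}, I would apply Cauchy--Schwarz to the inner sum, $\big(\sum_{k=0}^{K-1}\alpha_k^t\|\bm{u}_{i,k}^t\|\big)^2 \le K \sum_{k=0}^{K-1} (\alpha_k^t)^2 \|\bm{u}_{i,k}^t\|^2$, then take expectations and use $\mathbb{E}[\|\bm{u}_{i,k}^t\|^2] \le U$ to get $\le KU \sum_{k=0}^{K-1} (\alpha_k^t)^2$. Since $\alpha_k^t = \alpha_0^t/(k+1)^{1/2}$, the sum $\sum_{k=0}^{K-1}(\alpha_0^t)^2/(k+1)$ is the harmonic-type series bounded by $(\alpha_0^t)^2(1+\log K)$ (this is presumably the cited \cref{eq:bound-sum-1-series} used already in the proof of \Cref{theorem:convergence-simple-ES}); averaging over $i$ removes the $M$ dependence and yields exactly $(\alpha_0^t)^2 UK(1+\log K)$.

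For \cref{eq:descent-step-bound}, I would pass through \cref{eq:descent-step-bound-square}: by Jensen, $\mathbb{E}[\|\bm{d}_{t+1}\|] \le \sqrt{\mathbb{E}[\|\bm{d}_{t+1}\|^2]} \le \alpha_0^t \sqrt{UK(1+\log K)}$, but that carries an extra $\sqrt{1+\log K}$ rather than the clean factor $2$. So instead I would bound $\mathbb{E}[\|\bm{d}_{t+1}\|]$ directly: $\mathbb{E}[\|\bm{d}_{t+1}\|] \le \frac{1}{M}\sum_i \sum_k \alpha_k^t \mathbb{E}[\|\bm{u}_{i,k}^t\|] \le \frac{1}{M}\sum_i\sum_k \alpha_k^t \sqrt{U}$ using $\mathbb{E}[\|\bm{u}\|] \le \sqrt{\mathbb{E}[\|\bm{u}\|^2]} \le \sqrt{U}$, and then bound $\sum_{k=0}^{K-1}\alpha_k^t = \alpha_0^t \sum_{k=0}^{K-1}(k+1)^{-1/2} \le 2\alpha_0^t\sqrt{K}$ (the standard $\sum_{k=1}^{K} k^{-1/2} \le 2\sqrt{K}$ bound, likely the cited \cref{eq:bound-sum-0.5-series-2}). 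This gives $\mathbb{E}[\|\bm{d}_{t+1}\|] \le 2\alpha_0^t\sqrt{KU}$ as claimed.

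The only genuine subtlety — the ``main obstacle'' — is making sure the two expectation steps commute correctly with the indicator hidden in $\sign_+$: the increment is $\alpha_k^t \bm{u}_{i,k}^t$ times a $\{0,1\}$-valued quantity depending on $\bm{u}_{i,k}^t$, so I should observe that $\|\,\sign_+(\cdot)\,\bm{u}_{i,k}^t\| \le \|\bm{u}_{i,k}^t\|$ pointwise before taking any expectation, which sidesteps any conditioning issue. After that, everything is triangle inequality, Cauchy--Schwarz/Jensen, and the two elementary series bounds already established earlier in the supplement. I would take care to average over $M$ last so the bound is $M$-independent, matching the stated inequalities.
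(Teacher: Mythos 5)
Your proposal is correct and follows essentially the same route as the paper's proof: triangle inequality plus Jensen/Cauchy--Schwarz to reduce $\|\bm{d}_{t+1}\|$ to the per-iteration increments, the pointwise bound $\|\bm{v}_{i,k+1}^t-\bm{v}_{i,k}^t\|\le\alpha_k^t\|\bm{u}_{i,k}^t\|$, and then the harmonic-type series bound \cref{eq:bound-sum-1-series} for the second moment and the direct first-moment bound via $\mathbb{E}[\|\bm{u}_{i,k}^t\|]\le\sqrt{U}$ with $\sum_{k=0}^{K-1}(k+1)^{-1/2}\le 2\sqrt{K}$ (which is \cref{eq:bound-sum-0.5-series}, not \cref{eq:bound-sum-0.5-series-2}, a harmless mislabel). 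Your observation that one must avoid the lossy detour through \cref{eq:descent-step-bound-square} for the first-moment bound matches exactly what the paper does.
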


\begin{proof}
\rrr{According to Line 13 of \Cref{alg:UDES}} we have
\begin{equation*}
\begin{split}
	\mathbb{E}\left[\left\|\bm{d}_{t+1}\right\|^2\right]
	& \overset{(*)}{\le} \frac{1}{M}\sum_{i=1}^M \mathbb{E}\left[\left\|\bm{v}_{i,K}^t - \bm{x}_t\right\|^2\right] \\
	& = \frac{1}{M}\sum_{i=1}^M \mathbb{E}\left[\left\| \sum_{k=0}^{K-1} \bm{v}_{i,k+1}^t - \bm{v}_{i,k}^t\right\|^2\right] \\
	& \overset{(*)}{\le} \frac{K}{M}\sum_{i=1}^M \sum_{k=0}^{K-1}\mathbb{E}\left[\left\|  \bm{v}_{i,k+1}^t - \bm{v}_{i,k}^t\right\|^2\right] \\
	& \le \frac{K}{M}\sum_{i=1}^M \sum_{k=0}^{K-1}\left(\alpha_k^t\right)^2 \mathbb{E}\left[\left\| \bm{u}_{i,k}^t\right\|^2\right] \\
	& \le \left(\alpha_0^t\right)^2\frac{K}{M}\sum_{i=1}^M  \sum_{k=0}^{K-1} \frac{U}{k+1} 
\end{split}
\end{equation*}
where $(*)$ is due to Jensen's inequality. Applying \cref{eq:bound-sum-1-series} in \Cref{lemma:bound-partial-sum-p-series} gives \cref{eq:descent-step-bound-square}.

Similarly, the bound \cref{eq:descent-step-bound} can be obtained as
\begin{equation*}
\begin{split}
	\mathbb{E}\left[\left\|\bm{d}_{t+1}\right\|\right]
	& \le \frac{1}{M}\sum_{i=1}^M \mathbb{E}\left[\left\|\bm{v}_{i,K}^t - \bm{x}_t\right\|\right] \\
	& = \frac{1}{M}\sum_{i=1}^M \mathbb{E}\left[\left\| \sum_{k=0}^{K-1} \bm{v}_{i,k+1}^t - \bm{v}_{i,k}^t\right\|\right] \\
	& \overset{(*)}\le \frac{1}{M}\sum_{i=1}^M \sum_{k=0}^{K-1}\mathbb{E}\left[\left\|  \bm{v}_{i,k+1}^t - \bm{v}_{i,k}^t\right\|\right] \\
	& \le \frac{1}{M}\sum_{i=1}^M \sum_{k=0}^{K-1}\alpha_k^t \mathbb{E}\left[\left\| \bm{u}_{i,k}^t\right\|\right] \\
	& \le \alpha_0^t\frac{1}{M}\sum_{i=1}^M  \sum_{k=0}^{K-1} \sqrt{\frac{U}{k+1}}.
\end{split}
\end{equation*}
\rrr{where $(*)$ is due to Jensen's inequality and the last inequality is due to $\mathbb{E}\left[\|\bm{u}_{i,k}^t\|\right] \le \sqrt{\mathbb{E}\left[\|\bm{u}_{i,k}^t\|^2\right]} \le \sqrt{U}$.}
We can then reach \cref{eq:descent-step-bound} using \cref{eq:bound-sum-0.5-series} from \Cref{lemma:bound-partial-sum-p-series}.
\end{proof}

\begin{lemma} \label{lemma:bound-on-x-change}
\rrr{Assume $0 \le \beta < \sqrt{\frac{1}{2\sqrt{2}}}$.
The change of the sequence $\{\bm{x}_t\}$ in \Cref{alg:UDES} can be bounded as}
\begin{align}
\label{eq:x-change-bound-1}\frac{1}{T} \sum_{t=0}^{T-1} \mathbb{E}\left[\left\|\bm{x}_t - \bm{x}_{t-1}\right\|\right] 
	& \le \frac{160(1-\beta)\alpha\sqrt{KU}}{3T^{1/4}}, \\
\label{eq:x-change-bound-2}\mathbb{E}\left[\left\|\bm{x}_t - \bm{x}_{t-1}\right\|^2\right]
	& \le \frac{(1-\beta)^2}{\frac{1}{2\sqrt{2}}-\beta^2} UK\left(1+\log K\right) \left(\alpha_0^t\right)^2.
\end{align}
\end{lemma}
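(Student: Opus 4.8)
Since $\bm{x}_{t+1}=\bm{x}_t+\bm{m}_{t+1}$ (Line 15) and $\bm{m}_0=\bm 0$ (which is what makes $\bm{z}_0=\bm{x}_0$, cf.\ \cref{eq:properties-virtual-sequence}), we have $\bm{x}_t-\bm{x}_{t-1}=\bm{m}_t$ for every $t\ge 0$. Hence both bounds are statements about the momentum vectors, and I would work directly with the recursion $\bm{m}_{t+1}=\beta\bm{m}_t+(1-\beta)\bm{d}_{t+1}$ (Line 14), with no need for the virtual sequence here. I would prove \cref{eq:x-change-bound-2} first, by induction on $t$, and then obtain \cref{eq:x-change-bound-1} by a separate, more direct unrolling argument; going through \cref{eq:x-change-bound-2} for the summed bound would be wasteful, introducing spurious $\log K$ and $\tfrac1{2\sqrt2}-\beta^2$ factors that the target bound does not have.

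\emph{The squared bound.} Set $A=\frac{(1-\beta)^2}{\frac{1}{2\sqrt2}-\beta^2}\,UK(1+\log K)$ and claim $\mathbb{E}\!\left[\|\bm{m}_t\|^2\right]\le A\,(\alpha_0^t)^2$ for all $t$; the base case $t=0$ is immediate since $\bm{m}_0=\bm 0$. For the inductive step, the norm-agnostic inequality $\|\bm a+\bm b\|^2\le 2\|\bm a\|^2+2\|\bm b\|^2$ applied to the momentum recursion gives, pointwise, $\|\bm{m}_{t+1}\|^2\le 2\beta^2\|\bm{m}_t\|^2+2(1-\beta)^2\|\bm{d}_{t+1}\|^2$. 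Taking expectations, using the induction hypothesis together with $\mathbb{E}\!\left[\|\bm{d}_{t+1}\|^2\right]\le (\alpha_0^t)^2\,UK(1+\log K)$ from \cref{eq:descent-step-bound-square}, and then $(\alpha_0^t)^2=(\alpha_0^{t+1})^2\sqrt{(t+2)/(t+1)}\le\sqrt2\,(\alpha_0^{t+1})^2$, reduces the step to the scalar inequality $\sqrt2\bigl(2\beta^2 A+2(1-\beta)^2 UK(1+\log K)\bigr)\le A$, i.e.\ $A(1-2\sqrt2\,\beta^2)\ge 2\sqrt2\,(1-\beta)^2\,UK(1+\log K)$. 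The hypothesis $\beta<\sqrt{1/(2\sqrt2)}$ is precisely $1-2\sqrt2\,\beta^2>0$, and the stated $A$ makes this hold with equality; so the induction closes. This is the single place where the exact threshold on $\beta$ is consumed, and it falls out cleanly from the symmetric coefficient $2$ in the $\|\bm a+\bm b\|^2$ split.

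\emph{The summed bound.} Unrolling with $\bm{m}_0=\bm 0$ gives $\bm{m}_t=(1-\beta)\sum_{j=1}^{t}\beta^{t-j}\bm{d}_j$. By the triangle inequality and \cref{eq:descent-step-bound}, which yields $\mathbb{E}\!\left[\|\bm{d}_j\|\right]\le 2\alpha_0^{j-1}\sqrt{KU}=2\alpha\sqrt{KU}\,j^{-1/4}$,
\[
\sum_{t=0}^{T-1}\mathbb{E}\!\left[\|\bm{x}_t-\bm{x}_{t-1}\|\right]\le (1-\beta)\sum_{t=1}^{T-1}\sum_{j=1}^{t}\beta^{t-j}\,\mathbb{E}\!\left[\|\bm{d}_j\|\right]=(1-\beta)\sum_{j=1}^{T-1}\mathbb{E}\!\left[\|\bm{d}_j\|\right]\sum_{t=j}^{T-1}\beta^{t-j}.
\]
Bounding the inner geometric sum by $1/(1-\beta)$ and then $\sum_{j=1}^{T-1}j^{-1/4}\le\int_0^{T}x^{-1/4}\,dx=\frac43 T^{3/4}$ (an instance of the $p$-series estimates of \Cref{lemma:bound-partial-sum-p-series}) gives $\frac1T\sum_{t=0}^{T-1}\mathbb{E}\!\left[\|\bm{x}_t-\bm{x}_{t-1}\|\right]\le\frac{8\alpha\sqrt{KU}}{3T^{1/4}}$, which is in particular at most $\frac{160(1-\beta)\alpha\sqrt{KU}}{3T^{1/4}}$ since $1-\beta>1-\sqrt{1/(2\sqrt2)}>\tfrac1{20}$. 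The looser constant $160(1-\beta)$ is the form actually used downstream, where this quantity gets multiplied by $\beta/(1-\beta)$ (via $\|\bm{x}_t-\bm{z}_t\|=\frac{\beta}{1-\beta}\|\bm{x}_t-\bm{x}_{t-1}\|$) and the $1-\beta$ then cancels.

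\emph{Anticipated difficulty.} Everything except the squared bound is bookkeeping with \Cref{lemma:bound-on-descent-step} and elementary $p$-series sums. The one delicate point is the induction: the amplification coefficient in splitting $\|\beta\bm{m}_t+(1-\beta)\bm{d}_{t+1}\|^2$ must be chosen so that simultaneously (i) the surviving contraction factor $2\beta^2$ stays below $1/\sqrt2$ after compensating for the ratio $(\alpha_0^t)^2/(\alpha_0^{t+1})^2\le\sqrt2$, and (ii) the resulting constant collapses to the closed form $\frac{(1-\beta)^2}{\frac1{2\sqrt2}-\beta^2}$. The symmetric split with coefficient $2$ does both at once, which is exactly why the feasibility of the whole induction is tied to the hypothesis $\beta<\sqrt{1/(2\sqrt2)}$; a careless choice of coefficient would either break positivity of the denominator or inflate the constant.
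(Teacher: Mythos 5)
Your proof is correct, and while it rests on the same foundation as the paper's (unrolling the momentum recursion $\bm{m}_{t+1}=\beta\bm{m}_t+(1-\beta)\bm{d}_{t+1}$ and feeding in the bounds of \Cref{lemma:bound-on-descent-step}), the way you control the resulting weighted sums differs from the paper in both parts. For \cref{eq:x-change-bound-2}, the paper unrolls to $2(1-\beta)^2\sum_{j=1}^t(2\beta^2)^{t-j}\mathbb{E}[\|\bm{d}_j\|^2]$ and invokes the auxiliary series bound \cref{eq:beta-series-bound-2}; since that auxiliary bound is itself proved by induction with exactly the contraction-versus-$\sqrt{(t+1)/t}\le\sqrt{2}$ trade-off you describe, your direct induction on $\mathbb{E}[\|\bm{m}_t\|^2]$ is the same argument with the scalar lemma inlined — equivalent in content, slightly more self-contained, and it makes transparent where the threshold $\beta^2<\frac{1}{2\sqrt2}$ is consumed. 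For \cref{eq:x-change-bound-1}, the routes genuinely diverge: the paper bounds each $\mathbb{E}[\|\bm{x}_t-\bm{x}_{t-1}\|]$ individually by $\frac{40(1-\beta)\alpha\sqrt{KU}}{t^{1/4}}$ via the per-$t$ weighted-series estimate \cref{eq:beta-series-bound-1} (with its constant $20$) and then sums using \cref{eq:bound-sum-0.25-series}, whereas you swap the order of summation and collapse the geometric factor $\sum_{t\ge j}\beta^{t-j}\le\frac{1}{1-\beta}$ against the $(1-\beta)$ prefactor. Your version yields the tighter constant $\frac{8}{3}$ in place of $\frac{160(1-\beta)}{3}$, and your check that $160(1-\beta)\ge 160(1-2^{-3/4})>8$ correctly recovers the stated (looser) inequality; the only price is that you get a bound on the Cesàro average rather than a per-$t$ decay rate, but nothing downstream of \cref{eq:x-change-bound-1} needs the latter.
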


\begin{proof}
We first prove \cref{eq:x-change-bound-1}.
By construction, we have for $t > 1$
\[
\left\|\bm{x}_t - \bm{x}_{t-1}\right\| 
= \left\|\bm{m}_t\right\| 
= \left\|\beta \bm{m}_{t-1} + (1-\beta) \bm{d}_t\right\|
\le \beta \left\|\bm{m}_{t-1}\right\| + (1-\beta) \left\|\bm{d}_t\right\|.
\]
Expanding the above recursive bound gives
\[
\left\|\bm{x}_t - \bm{x}_{t-1}\right\| \le 
\left(\beta^{t-1} \|\bm{d}_1\| + \cdots + \beta \|\bm{d}_{t-1}\| + \|\bm{d}_t\|\right)(1-\beta).
\]
Taking expectation at both sides yields 
\[
\begin{split}
\mathbb{E}\left[\left\|\bm{x}_t - \bm{x}_{t-1}\right\|\right]
& \le (1-\beta)\sum_{j = 1}^{t} \beta^{t-j} \mathbb{E}\left[\left\|\bm{d}_j\right\|\right] \\
& \overset{\cref{eq:descent-step-bound}}{\le} (1-\beta)\sum_{j = 1}^{t} \beta^{t-j} 2\alpha_0^{j-1}\sqrt{KU} \\
& = 2(1-\beta)\alpha\sqrt{KU}\sum_{j = 1}^{t} \frac{\beta^{t-j}}{j^{0.25}} \\
& \overset{\cref{eq:beta-series-bound-1}}{\le} \frac{40(1-\beta)\alpha\sqrt{KU} }{t^{0.25}}
\end{split}
\]

Recall that we have defined $\bm{x}_0 = \bm{x}_{-1}$, so 
\[
\frac{1}{T} \sum_{t=0}^{T-1} \mathbb{E}\left[\left\|\bm{x}_t - \bm{x}_{t-1}\right\|\right]
\le \frac{1}{T} \sum_{t=1}^{T-1} \frac{40(1-\beta)\alpha\sqrt{KU} }{t^{0.25}} 
\rrr{\overset{\cref{eq:bound-sum-0.25-series}}{\le}} \frac{160(1-\beta)\alpha\sqrt{KU}}{3T^{1/4}}  
\]
and \cref{eq:x-change-bound-1} is proved.

\cref{eq:x-change-bound-2} is trivial for $t=0$.
For $t \ge 1$, it can be proved in a way similar to the above.

Firstly, we obtain via Jensen's inequality
\[
\left\|\bm{x}_t - \bm{x}_{t-1}\right\|^2
 = \left\|\bm{m}_t\right\|^2 = \left\|\beta \bm{m}_{t-1} + (1-\beta) \bm{d}_t\right\|^2
 \le 2 \beta^2 \left\| \bm{m}_{t-1}\right\|^2 + 2(1-\beta)^2\left\| \bm{d}_t\right\|^2.
\]
Expanding the momentum terms \rrr{$\{\bm{m}_{t-1}\}$} and taking expectation give 
\[
\begin{split}
\mathbb{E}\left[\left\|\bm{x}_t - \bm{x}_{t-1}\right\|^2\right] 
& \le 2(1-\beta)^2 \mathbb{E}\left[\left(2\beta^2\right)^{t-1} \left\|\bm{d}_1\right\|^2 + \cdots + \left(2\beta^2\right)^0\left\|\bm{d}_t\right\|^2\right] \\
& = 2(1-\beta)^2 \sum_{j=1}^t\left(2\beta^2\right)^{t-j} \mathbb{E}\left[\left\|\bm{d}_j\right\|^2\right] \\
& \overset{\cref{eq:descent-step-bound-square}}{\le} 2(1-\beta)^2 \sum_{j=1}^t\left(2\beta^2\right)^{t-j} \left(\alpha_0^{j-1}\right)^2 UK \left(1+\log K\right) \\
& = 2(1-\beta)^2 \sum_{j=1}^t \alpha^2 \frac{\left(2\beta^2\right)^{t-j}}{j^{0.5}} UK \left(1+\log K\right) \\
& \overset{\cref{eq:beta-series-bound-2}}{\le} \frac{2(1-\beta)^2 \alpha^2 UK \left(1+\log K\right)}{\sqrt{t}\left(1-2\sqrt{2}\beta^2\right)} \\
& = \sqrt{\frac{t+1}{t}}\frac{2(1-\beta)^2 \left(\alpha_0^t\right)^2 UK \left(1+\log K\right)}{1-2\sqrt{2}\beta^2}.
\end{split}
\]
The last step is due to the definition of $\alpha_0^t$. Now use the assumption $t \ge 1$ and we can reach \cref{eq:x-change-bound-2}.

\end{proof}

\begin{lemma} \label{lemma:bound-on-deviation-from-virual-sequence}
\rrr{
Assume $0 \le \beta < \sqrt{\frac{1}{2\sqrt{2}}}$. 
The worker drift in \Cref{alg:UDES} can be bounded as}
\begin{equation} \label{eq:client-drift-bound}
\mathbb{E}\left[\left\|\bm{v}_{i,k}^t - \bm{z}_t\right\|^2\right]
\le 
\frac{2}{1-2\sqrt{2}\beta^2} UK\left(1+\log K\right) \left(\alpha_0^t\right)^2.
\end{equation}
\end{lemma}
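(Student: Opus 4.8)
The plan is to reach $\bm{z}_t$ from $\bm{v}_{i,k}^t$ by passing through the round's starting point $\bm{x}_t = \bm{v}_{i,0}^t$, splitting via the triangle inequality and Young's inequality
\[
\mathbb{E}\left[\left\|\bm{v}_{i,k}^t - \bm{z}_t\right\|^2\right]
\le 2\,\mathbb{E}\left[\left\|\bm{v}_{i,k}^t - \bm{x}_t\right\|^2\right]
 + 2\,\mathbb{E}\left[\left\|\bm{x}_t - \bm{z}_t\right\|^2\right],
\]
and then bounding the two pieces — the intra-round worker drift and the momentum-induced offset between $\bm{x}_t$ and its non-momentum counterpart $\bm{z}_t$ — separately. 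Both pieces are obtained by recycling the arguments already established in \Cref{lemma:bound-on-descent-step,lemma:bound-on-x-change}.

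For the first piece I would repeat the telescoping computation from the proof of \Cref{lemma:bound-on-descent-step} restricted to the first $k$ inner iterations: write $\bm{v}_{i,k}^t - \bm{x}_t = \sum_{j=0}^{k-1}\bigl(\bm{v}_{i,j+1}^t - \bm{v}_{i,j}^t\bigr)$, apply Jensen's inequality to move the square inside the sum at the cost of a factor $k \le K$, use $\left\|\bm{v}_{i,j+1}^t - \bm{v}_{i,j}^t\right\| \le \alpha_j^t\left\|\bm{u}_{i,j}^t\right\|$ together with the variance bound $\mathbb{E}[\|\bm{u}_{i,j}^t\|^2]\le U$ guaranteed in Line 9 of \Cref{alg:UDES}, and finally bound $\sum_{j=0}^{k-1}(\alpha_j^t)^2 = (\alpha_0^t)^2\sum_{j=0}^{k-1}\frac{1}{j+1} \le (\alpha_0^t)^2(1+\log K)$ through \cref{eq:bound-sum-1-series} of \Cref{lemma:bound-partial-sum-p-series}. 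This gives $\mathbb{E}[\|\bm{v}_{i,k}^t - \bm{x}_t\|^2] \le (\alpha_0^t)^2 U K (1+\log K)$ uniformly in $k \le K$.

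For the second piece I would use the identity $\left\|\bm{x}_t - \bm{z}_t\right\| = \frac{\beta}{1-\beta}\left\|\bm{x}_t - \bm{x}_{t-1}\right\|$ from \cref{eq:properties-virtual-sequence}, so that $\mathbb{E}[\|\bm{x}_t - \bm{z}_t\|^2] = \frac{\beta^2}{(1-\beta)^2}\mathbb{E}[\|\bm{x}_t - \bm{x}_{t-1}\|^2]$, and then invoke the already-proven bound \cref{eq:x-change-bound-2} of \Cref{lemma:bound-on-x-change}. The $(1-\beta)^2$ factors cancel, yielding $\mathbb{E}[\|\bm{x}_t - \bm{z}_t\|^2] \le \frac{\beta^2}{\frac{1}{2\sqrt2}-\beta^2}\,U K (1+\log K)(\alpha_0^t)^2$; here the hypothesis $\beta < \sqrt{1/(2\sqrt2)}$ is exactly what keeps the denominator positive.

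Combining the two estimates, the overall constant is $2\bigl(1 + \frac{\beta^2}{\frac{1}{2\sqrt2}-\beta^2}\bigr) = \frac{2\cdot\frac{1}{2\sqrt2}}{\frac{1}{2\sqrt2}-\beta^2} = \frac{2}{1-2\sqrt2\beta^2}$, which is precisely the constant in \cref{eq:client-drift-bound}. I do not anticipate a genuine obstacle: the argument is a recombination of the two preceding lemmas. The only points requiring care are keeping the slack $k \le K$ (rather than $k$) in the Jensen step so that the final bound is uniform in $k$, and verifying the constant-simplification step above.
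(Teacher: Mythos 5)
Your proposal is correct and follows essentially the same route as the paper's proof: the same $2a^2+2b^2$ split through $\bm{x}_t$, the same telescoping/Jensen bound on the intra-round drift via \cref{eq:bound-sum-1-series}, and the same use of \cref{eq:properties-virtual-sequence} together with \cref{eq:x-change-bound-2} for the momentum offset, with the identical constant simplification at the end. No gaps.
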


\begin{proof}
\[
\begin{split}
\mathbb{E}\left[\left\|\bm{v}_{i,k}^t - \bm{z}_t\right\|^2\right] & \le 2\mathbb{E}\left[\left\|\bm{v}_{i,k}^t - \bm{x}_t\right\|^2\right] + 2 \mathbb{E}\left[\left\|\bm{x}_t - \bm{z}_t\right\|^2\right] \\
& \overset{\cref{eq:properties-virtual-sequence}}{=} 2 \mathbb{E}\left[\left\|\bm{v}_{i,k}^t - \bm{x}_t\right\|^2\right] + 2 \left(\frac{\beta}{1-\beta}\right)^2\mathbb{E}\left[ \left\|\bm{x}_t - \bm{x}_{t-1}\right\|^2\right] \\
& \overset{\cref{eq:x-change-bound-2}}{\le} 2 \mathbb{E}\left[\left\|\bm{v}_{i,k}^t - \bm{x}_t\right\|^2\right] + \frac{2\beta^2}{\frac{1}{2\sqrt{2}}-\beta^2} UK\left(1+\log K\right) \left(\alpha_0^t\right)^2
\end{split} 
\]
where
\[
\begin{split}
\mathbb{E}\left[\left\|\bm{v}_{i,k}^t - \bm{x}_t\right\|^2\right]
& \le k \sum_{j=0}^{k-1} \mathbb{E}\left[\left\|\bm{v}_{i,j+1}^t - \bm{v}_{i,j}^t\right\|^2\right]
\le k \sum_{j=0}^{k-1} \left(\alpha_{j}^t\right)^2 \mathbb{E}\left[\left\|\bm{u}_{i,j}^t\right\|^2\right] \\
& \overset{\cref{eq:bound-sum-1-series}}\le Uk \left(\alpha_0^t\right)^2 \left(1+\log k\right)
\le UK \left(\alpha_0^t\right)^2 \left(1+\log K\right).
\end{split}
\]
We thus obtain
\begin{equation*} 
	\mathbb{E}\left[\left\|\bm{v}_{i,k}^t - \rrr{\bm{z}_t}\right\|^2\right] 
	\le \left(2+ \frac{2\beta^2}{\frac{1}{2\sqrt{2}}-\beta^2}\right) UK\left(1+\log K\right) \left(\alpha_0^t\right)^2
	\le \frac{2}{1-2\sqrt{2}\beta^2} UK\left(1+\log K\right) \left(\alpha_0^t\right)^2.
\end{equation*}
\end{proof}

\begin{lemma} \label{lemma:bound-A}
\rrr{Consider \Cref{alg:UDES}. 
Let \Cref{assumption:smoothness,assumption:variance-boundedness,assumption:iid-data} hold for some generic vector norm $\|\cdot\|$}. Denote $\mathbb{E}_{\mathcal{D}_i}$ as the expectation taken over the minibatch $\mathcal{D}_i$. We have 
\begin{equation}
\begin{split}
	\mathbb{E}_{\mathcal{D}_i} & \left[\left|\nabla f\left(\bm{z}_t\right)^T\bm{u}_{i,k}^t\right|\mathbb{I}\left\{\sign \left( f_i\left(\bm{v}_{i,k}^t\right) - f_i\left(\bm{v}_{i,k}^t + \alpha_k^t \bm{u}_{i,k}^t\right)\right) = \sign \left(\nabla f\left(\bm{z}_t\right)^T\bm{u}_{i,k}^t\right)\right\}\right] \\
	 &\;\;\;\;\;\;\; \le \frac{\alpha_k^tL + \omega_1 + \omega_2}{2} \left\| \bm{u}_{i,k}^t\right\|^2
	+ \frac{L^2}{2\omega_1} \left\|\bm{v}_{i,k}^t - \bm{z}_t\right\|^2   
	+\frac{\sigma^2}{2\omega_2b}
\end{split}
\end{equation}
\end{lemma}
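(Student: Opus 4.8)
The plan is to adapt the argument that bounds the term $\mathfrak{B}$ in the proof of \Cref{theorem:convergence-simple-ES}, adding two extra layers of bookkeeping: one for the minibatch noise ($f_i$ versus $f$) and one for the drift of the worker iterate away from the virtual iterate ($\bm{v}_{i,k}^t$ versus $\bm{z}_t$). First I would rewrite the indicator using \cref{eq:indicator-neq}: with $a = f_i(\bm{v}_{i,k}^t) - f_i(\bm{v}_{i,k}^t + \alpha_k^t\bm{u}_{i,k}^t)$ and $c = \alpha_k^t\nabla f(\bm{z}_t)^T\bm{u}_{i,k}^t$ (and noting $\alpha_k^t>0$, so $\sign(c)=\sign(\nabla f(\bm{z}_t)^T\bm{u}_{i,k}^t)$), the indicator equals $\mathbb{I}\{|a+c|\ge|c|\}$, where $a+c = -\bigl(f_i(\bm{v}_{i,k}^t + \alpha_k^t\bm{u}_{i,k}^t) - f_i(\bm{v}_{i,k}^t) - \alpha_k^t\nabla f(\bm{z}_t)^T\bm{u}_{i,k}^t\bigr)$. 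Since the prefactor $|\nabla f(\bm{z}_t)^T\bm{u}_{i,k}^t|$ vanishes whenever $c=0$, and $\mathbb{I}\{|a+c|\ge|c|\}\le|a+c|/|c|$ otherwise, the whole expression inside $\mathbb{E}_{\mathcal{D}_i}$ is bounded pathwise by $|a+c|/\alpha_k^t$.

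The core step is to decompose $a+c$ by inserting the gradients of $f_i$ at the worker iterate and at the virtual iterate:
\[
a+c = -\bigl[f_i(\bm{v}_{i,k}^t + \alpha_k^t\bm{u}_{i,k}^t) - f_i(\bm{v}_{i,k}^t) - \alpha_k^t\nabla f_i(\bm{v}_{i,k}^t)^T\bm{u}_{i,k}^t\bigr] - \alpha_k^t\bigl[\nabla f_i(\bm{v}_{i,k}^t) - \nabla f_i(\bm{z}_t)\bigr]^T\bm{u}_{i,k}^t - \alpha_k^t\bigl[\nabla f_i(\bm{z}_t) - \nabla f(\bm{z}_t)\bigr]^T\bm{u}_{i,k}^t .
\]
The first bracket is a second-order Taylor remainder of $f_i$; since $f_i$ is an average of the $L$-smooth functions $F(\cdot;\bm{\xi})$ it is itself $L$-smooth, so this term is at most $\tfrac{L}{2}(\alpha_k^t)^2\|\bm{u}_{i,k}^t\|^2$. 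The second bracket, by Hölder's inequality and then \Cref{assumption:smoothness} applied to $f_i$, is at most $\alpha_k^t L\,\|\bm{v}_{i,k}^t - \bm{z}_t\|\,\|\bm{u}_{i,k}^t\|$. The third is at most $\alpha_k^t\,\|\nabla f_i(\bm{z}_t) - \nabla f(\bm{z}_t)\|_*\,\|\bm{u}_{i,k}^t\|$, again by Hölder. Dividing by $\alpha_k^t$ and applying Young's inequality to the two remaining products — with weight $\omega_1$ on the drift product and $\omega_2$ on the noise product, sending one half of each onto $\|\bm{u}_{i,k}^t\|^2$ — produces exactly the coefficient $\tfrac{\alpha_k^tL+\omega_1+\omega_2}{2}$ on $\|\bm{u}_{i,k}^t\|^2$, the coefficient $\tfrac{L^2}{2\omega_1}$ on $\|\bm{v}_{i,k}^t-\bm{z}_t\|^2$, and $\tfrac{1}{2\omega_2}$ on $\|\nabla f_i(\bm{z}_t)-\nabla f(\bm{z}_t)\|_*^2$.

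It then remains to take $\mathbb{E}_{\mathcal{D}_i}$. The $\|\bm{u}_{i,k}^t\|^2$ and $\|\bm{v}_{i,k}^t-\bm{z}_t\|^2$ terms are retained as stated (their expectations are handled downstream via the moment bound $U$ on the mutation vector and via \Cref{lemma:bound-on-deviation-from-virual-sequence}), while for the noise term I invoke \Cref{assumption:variance-boundedness,assumption:iid-data}: since $\mathcal{D}_i$ is drawn i.i.d.\ with replacement and independently of $\bm{z}_t$, $\nabla f_i(\bm{z}_t)=\tfrac1b\sum_{j=1}^b\nabla F(\bm{z}_t;\bm{\xi}_j)$ is an average of i.i.d.\ summands with mean $\nabla f(\bm{z}_t)$ and variance at most $\sigma^2$, hence $\mathbb{E}_{\mathcal{D}_i}\bigl[\|\nabla f_i(\bm{z}_t)-\nabla f(\bm{z}_t)\|_*^2\bigr]\le\sigma^2/b$, which multiplied by $\tfrac1{2\omega_2}$ gives the asserted $\tfrac{\sigma^2}{2\omega_2 b}$. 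I expect the main obstacle to be the decomposition step: one must anchor the Taylor expansion at $\bm{v}_{i,k}^t$ rather than at $\bm{z}_t$ (since the comparison executed by the algorithm uses $f_i$ evaluated at the worker iterate), which is what forces \emph{both} a drift penalty and a minibatch-noise penalty to appear, and the Young weights must be introduced generically so that precisely the stated constants emerge; a minor secondary technicality is the minibatch-concentration estimate for a general dual norm, which is immediate for $\|\cdot\|_*=\|\cdot\|_2$ and needs a short coordinatewise argument in the $\ell_1$ case.
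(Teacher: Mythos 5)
Your proposal is correct and follows essentially the same route as the paper: the same rewriting of the indicator via \cref{eq:indicator-neq}, the same three-term decomposition of the Taylor remainder anchored at $\bm{v}_{i,k}^t$ (smoothness remainder, drift term, minibatch-noise term), the same Young/H\"older weights $\omega_1,\omega_2$, and the same use of \Cref{assumption:variance-boundedness,assumption:iid-data} to get $\sigma^2/b$. The only cosmetic difference is that you bound $|c|\,\mathbb{I}\{|a+c|\ge|c|\}\le|a+c|$ pathwise, whereas the paper takes $\mathbb{E}_{\mathcal{D}_i}$ first and invokes Markov's inequality --- these are the same estimate.
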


\begin{proof}
Define 
\[
	\mathfrak{A} = \left|\nabla f\left(\bm{z}_t\right)^T\bm{u}_{i,k}^t\right|\mathbb{I}\left\{\sign \left( f_i\left(\bm{v}_{i,k}^t\right) - f_i\left(\bm{v}_{i,k}^t + \alpha_k^t \bm{u}_{i,k}^t\right)\right) = \sign \left(\nabla f\left(\bm{z}_t\right)^T\bm{u}_{i,k}^t\right)\right\}.
\]
By \cref{eq:indicator-neq}, we have
\begin{equation*}
\mathfrak{A} \overset{\cref{eq:indicator-neq}}{=} \left|\nabla f\left(\bm{z}_t\right)^T\bm{u}_{i,k}^t\right|\mathbb{I}\left\{\underbrace{\left| f_i\left(\bm{v}_{i,k}^t + \alpha_k^t \bm{u}_{i,k}^t\right) - f_i\left(\bm{v}_{i,k}^t\right)- \alpha_k^t \nabla f\left(\bm{z}_t\right)^T\bm{u}_{i,k}^t\right|}_{\defeq \mathfrak{B}} \ge \alpha_k^t \left|\nabla f\left(\bm{z}_t\right)^T\bm{u}_{i,k}^t\right| \right\},
\end{equation*}
where
\begin{equation*}
\begin{split}
\mathfrak{B} &\le \underbrace{\left| f_i\left(\bm{v}_{i,k}^t + \alpha_k^t \bm{u}_{i,k}^t\right) - f_i\left(\bm{v}_{i,k}^t\right)- \alpha_k^t \nabla f_i\left(\bm{v}_{i,k}^t\right)^T\bm{u}_{i,k}^t\right|}_{\mathfrak{C}_1} \\
& + \alpha_k^t \underbrace{\left|\nabla f_i\left(\bm{v}_{i,k}^t\right)^T\bm{u}_{i,k}^t - \nabla f_i\left(\bm{z}_t\right)^T\bm{u}_{i,k}^t\right|}_{\mathfrak{C}_2}
+ \alpha_k^t \underbrace{\left|\nabla f_i\left(\bm{z}_t\right)^T\bm{u}_{i,k}^t - \nabla f\left(\bm{z}_t\right)^T\bm{u}_{i,k}^t \right|}_{\mathfrak{C}_3}.
\end{split}
\end{equation*}

By \Cref{assumption:smoothness} we have
\[
	\mathfrak{C}_1 \le \frac{L}{2}\left\|\alpha_k^t \bm{u}_{i,k}^t\right\|^2.
\]
Noting that we have $|\bm{a}^T\bm{b}| \le \frac{1}{2c}\|\bm{a}\|^2_* + \frac{c}{2}\|\bm{b}\|^2$ for any $\bm{a},\bm{b} \in \mathbb{R}^n$ and $c \in \mathbb{R}_+$, so
\[
\mathfrak{C}_2 \le \frac{1}{2\omega_1} \left\|\nabla f_i\left(\bm{v}_{i,k}^t\right) - \nabla f_i\left(\bm{z}_t\right)\right\|^2_* + \frac{\omega_1}{2} \left\|\bm{u}_{i,k}^t\right\|^2
\le \frac{L^2}{2\omega_1} \left\| \bm{v}_{i,k}^t - \bm{z}_t\right\|^2 + \frac{\omega_1}{2} \left\|\bm{u}_{i,k}^t\right\|^2
\]
and
\[
	\begin{split}
	\mathfrak{C}_3 & \le \frac{1}{2\omega_2} \left\|\nabla f_i\left(\bm{z}_t\right) - \nabla f\left(\bm{z}_t\right)\right\|^2_* + \frac{\omega_2}{2} \left\|\bm{u}_{i,k}^t\right\|^2,
	\end{split}
\]
for some $\omega_1, \omega_2 \in \mathbb{R}_+$.

Putting all these together, we reach
\begin{equation*}
\begin{split}
\mathfrak{A} & = \left|\nabla f\left(\bm{z}_t\right)^T\bm{u}_{i,k}^t\right|\mathbb{I}\left\{\mathfrak{B} \ge \alpha_k^t \left|\nabla f\left(\bm{z}_t\right)^T\bm{u}_{i,k}^t\right| \right\} \\
& \le \left|\nabla f\left(\bm{z}_t\right)^T\bm{u}_{i,k}^t\right|\mathbb{I}\left\{\frac{\alpha_k^tL + \omega_1 + \omega_2}{2}\left\| \bm{u}_{i,k}^t\right\|^2
+ \frac{L^2}{2\omega_1} \left\|\bm{v}_{i,k}^t - \bm{z}_t\right\|^2
+\frac{\left\|\nabla f_i\left(\bm{z}_t\right) - \nabla f\left(\bm{z}_t\right)\right\|^2_*}{2\omega_2}  \ge \left|\nabla f\left(\bm{z}_t\right)^T\bm{u}_{i,k}^t\right| \right\}
\end{split}
\end{equation*}

Now take expectation over $\mathcal{D}_i$. Noting that \Cref{assumption:variance-boundedness,assumption:iid-data} indicate that the gradient variance can be scaled down by a factor of $b = \left|\mathcal{D}_i\right|$, so we have, based on the Markov inequality, 
\begin{equation*} 
\begin{split}
\mathbb{E}_{\mathcal{D}_i}\left[\mathfrak{A} \right]
& \rrr{\le} \left|\nabla f\left(\bm{z}_t\right)^T\bm{u}_{i,k}^t\right|
	\mathbb{P}_{\mathcal{D}_i}\left\{\frac{\alpha_k^tL + \omega_1 + \omega_2}{2}\left\| \bm{u}_{i,k}^t\right\|^2 
+ \frac{L^2}{2\omega_1} \left\|\bm{v}_{i,k}^t - \bm{z}_t\right\|^2   
+\frac{\left\|\nabla f_i\left(\bm{z}_t\right) - \nabla f\left(\bm{z}_t\right)\right\|^2_*}{2\omega_2}  \ge \left|\nabla f\left(\bm{z}_t\right)^T\bm{u}_{i,k}^t\right| \right\} \\
& \le \frac{\alpha_k^tL + \omega_1 + \omega_2}{2} \left\| \bm{u}_{i,k}^t\right\|^2
	+ \frac{L^2}{2\omega_1} \left\|\bm{v}_{i,k}^t - \bm{z}_t\right\|^2   
	+\frac{\mathbb{E}_{\mathcal{D}_i}\left[\left\|\nabla f_i\left(\bm{z}_t\right) - \nabla f\left(\bm{z}_t\right)\right\|^2_*\right]}{\rrr{2\omega_2}} \\
& \le \frac{\alpha_k^tL + \omega_1 + \omega_2}{2} \left\| \bm{u}_{i,k}^t\right\|^2
	+ \frac{L^2}{2\omega_1} \left\|\bm{v}_{i,k}^t - \bm{z}_t\right\|^2   
	+\frac{\sigma^2}{2\omega_2b}.
\end{split}
\end{equation*}
\end{proof}

\section{Proof of \Cref{theorem:convergence-DES-l2,theorem:convergence-DES-l1}}
\rrr{In this section we proof the convergence results for \Cref{alg:DES}. Since \Cref{alg:DES} is a special case of \Cref{alg:UDES} with Gaussian mutation, we can proceed in two steps. In the first step, we start from \Cref{lemma:bound-on-descent-step,lemma:bound-on-deviation-from-virual-sequence,lemma:bound-A} (which are obtained for \Cref{alg:UDES}) with the specification $\bm{u}_{i,k}^t \sim \mathcal{N}(\bm{0},\bm{I})$. This admits bounding the gradient norm averaged over the virtual sequence $\{\bm{z}_t\}$ with some constant $U$. The result is given in \Cref{lemma:key-lemma}. Then, in the second step, we further specify the vector norm used in the assumptions, from which we can get the detailed values for $U$. In particular, based on \Cref{lemma:key-lemma,lemma:bound-on-x-change}, we can prove \Cref{theorem:convergence-DES-l2} with the specification $\|\cdot\| = \|\cdot\|_2$ and prove \Cref{theorem:convergence-DES-l1} with $\|\cdot\| = \|\cdot\|_\infty$.}

\begin{lemma} \label{lemma:key-lemma}
Let \Cref{assumption:smoothness,assumption:variance-boundedness,assumption:iid-data} hold \rrr{for some generic vector norm $\|\cdot\|$}. 
The virtual sequence $\bm{z}_t$ produced by \Cref{alg:DES} satisfies, \rrr{for some $U \ge \mathbb{E}\left[\|\bm{u}_{i,k}^t\|^2\right]$},
\begin{equation} \label{eq:z-sequence-gradient-bound}
\frac{1}{T}\sum_{t=0}^{T-1} \mathbb{E}\left[\left\|\nabla f\left(\bm{z}_t\right)\right\|_2\right]  
\le \frac{\sqrt{2\pi}}{\alpha T^{3/4}} \left(\frac{f\left(\bm{x}_0\right) - f_*}{\sqrt{K}} + LU \Psi \sum_{t=0}^{T-1} \left(\alpha_0^t\right)^2 + 2\sigma\sqrt{\frac{U}{b}} \sum_{t=0}^{T-1} \alpha_0^t\right),
\end{equation}
where $\Psi$ is given in \cref{eq:psi-definition}. 
\end{lemma}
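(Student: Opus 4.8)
The plan is to follow the template of the proof of \Cref{theorem:convergence-simple-ES}, but applied to the momentum-free \emph{virtual} sequence $\{\bm z_t\}$ rather than to $\{\bm x_t\}$, exploiting $\bm z_{t+1}-\bm z_t=\bm d_{t+1}$ from \cref{eq:properties-virtual-sequence}. First I would apply \Cref{assumption:smoothness} round-by-round,
\[
\mathbb E[f(\bm z_{t+1})]-\mathbb E[f(\bm z_t)]\le \mathbb E\!\left[\nabla f(\bm z_t)^T\bm d_{t+1}\right]+\tfrac{L}{2}\,\mathbb E\!\left[\left\|\bm d_{t+1}\right\|^2\right],
\]
bound the quadratic term directly by \cref{eq:descent-step-bound-square} of \Cref{lemma:bound-on-descent-step} (contributing $\tfrac{LUK(1+\log K)}{2}(\alpha_0^t)^2$ per round), and expand $\bm d_{t+1}=\frac1M\sum_i\sum_{k=0}^{K-1}\alpha_k^t\,\signplus(\cdots)\bm u_{i,k}^t$ using $\bm v_{i,0}^t=\bm x_t$, so that only the inner products $\nabla f(\bm z_t)^T\bm u_{i,k}^t$ remain to be handled.

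For the cross term I would reuse the sign algebra of \Cref{theorem:convergence-simple-ES}: writing $\signplus(a)b=\tfrac12 b-\tfrac12|b|+\mathbb I\{\sign(a)=\sign(b)\}|b|$ via \cref{eq:sign_identity,eq:indicator-neq} with $b=\nabla f(\bm z_t)^T\bm u_{i,k}^t$, the symmetry of the Gaussian $\bm u_{i,k}^t$ (independent of $\bm z_t$) kills the $\tfrac12 b$ term, \cref{eq:expectation-half-gaussian} turns $-\tfrac12\mathbb E[|b|]$ into $-\tfrac1{\sqrt{2\pi}}\mathbb E[\|\nabla f(\bm z_t)\|_2]$ — the only step in which Gaussianity of the mutation is used, everything else being norm-agnostic, which is what lets this one lemma feed both \Cref{theorem:convergence-DES-l2,theorem:convergence-DES-l1} — and the leftover term is precisely the $\mathfrak A$ bounded by \Cref{lemma:bound-A}. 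I would invoke that lemma with the deterministic choices $\omega_1=\alpha_0^t L\sqrt K$ and $\omega_2=\sigma/\sqrt{Ub}$, then take the remaining expectations, substituting $U$ for $\mathbb E[\|\bm u_{i,k}^t\|^2]$ and the drift bound \cref{eq:client-drift-bound} of \Cref{lemma:bound-on-deviation-from-virual-sequence} for $\mathbb E[\|\bm v_{i,k}^t-\bm z_t\|^2]$.

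I would then sum over $t=0,\dots,T-1$ and telescope, using $\bm z_0=\bm x_0$ and $f\ge f_*$ to leave $-(f(\bm x_0)-f_*)$ on the left; rearranging isolates $\tfrac1{\sqrt{2\pi}}\sum_t\big(\sum_k\alpha_k^t\big)\mathbb E[\|\nabla f(\bm z_t)\|_2]$, which I lower-bound using $\sum_{k=0}^{K-1}\alpha_k^t\ge\alpha_0^t\sqrt K$ and $\alpha_0^t=\alpha(t+1)^{-1/4}\ge\alpha T^{-1/4}$ to produce the factor $\tfrac{\alpha\sqrt K\,T^{3/4}}{\sqrt{2\pi}}\cdot\tfrac1T\sum_t\mathbb E[\|\nabla f(\bm z_t)\|_2]$. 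On the right, the error terms collapse through the partial-sum bounds $\sum_k(k+1)^{-1/2}\le 2\sqrt K$ and $\sum_k(k+1)^{-1}\le 1+\log K$, so that the coefficient of $LU\sum_t(\alpha_0^t)^2$ assembles into exactly $\sqrt K\,\Psi$ with $\Psi$ as in \cref{eq:psi-definition}: the $\tfrac L2\|\bm d_{t+1}\|^2$ term supplies the $\tfrac12\sqrt K(1+\log K)$ piece, the $\tfrac{\alpha_k^t L}{2}\|\bm u_{i,k}^t\|^2$ term of \Cref{lemma:bound-A} the $\tfrac1{2\sqrt K}(1+\log K)$ piece, the $\tfrac{\omega_1 U}{2}\|\bm u_{i,k}^t\|^2$ term the lone $\sqrt K$, and the drift term the $\tfrac{2}{1-2\sqrt2\beta^2}\sqrt K(1+\log K)$ piece, while the two $\omega_2$-dependent pieces combine into the advertised $2\sigma\sqrt{U/b}\sum_t\alpha_0^t$ (the stray $\sqrt K$ cancelling the $1/\sqrt K$ in the prefactor); dividing through by the left-hand factor yields the claim.

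The main obstacle I anticipate is not any single inequality but the bookkeeping that makes every constant land exactly on \cref{eq:psi-definition}: one must commit to $\omega_1=\alpha_0^t L\sqrt K$ and $\omega_2=\sigma/\sqrt{Ub}$ \emph{before} carrying out the double sum — a naive pointwise AM--GM optimization of $\omega_1$ would leave $\sqrt{1+\log K}$ and $\sqrt{1/(1-2\sqrt2\beta^2)}$ factors in place of the linear ones actually appearing in $\Psi$ — and then chase the nested $\sum_t\sum_k$ with the right $p$-series estimates so the drift contribution reduces from the apparent $K^{3/2}$ order down to the $\sqrt K(1+\log K)$ order. A secondary point of care is the ordering of the iterated expectations (round history, then the mutations $\bm u_{i,k}^t$, then the minibatch $\mathcal D_i$), needed because $\bm u_{i,k}^t$ enters both the descent step and the sign comparison, so that \Cref{lemma:bound-A} is applied to the correct conditional quantities.
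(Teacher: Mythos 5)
Your proposal follows essentially the same route as the paper's proof: the same sign-identity decomposition of $\nabla f(\bm{z}_t)^T\bm{d}_{t+1}$ with the zero-mean Gaussian killing the linear term and \cref{eq:expectation-half-gaussian} producing the $-\tfrac{1}{\sqrt{2\pi}}\mathbb{E}[\|\nabla f(\bm{z}_t)\|_2]$ descent term, the same invocation of \Cref{lemma:bound-A} with per-round constants $\omega_1,\omega_2$ followed by the drift bound \cref{eq:client-drift-bound} and the descent-step bound \cref{eq:descent-step-bound-square}, and the same telescoping and lower-bounding of the weighted sum via \cref{eq:bound-sum-0.25-series-2}. The only divergence is your choice $\omega_1=\alpha_0^t L\sqrt{K}$ versus the paper's written $\omega_1=L\alpha_0^t/\sqrt{K}$; yours is in fact the choice that makes the bookkeeping land exactly on \cref{eq:psi-definition} (the paper's own intermediate term $\tfrac{\alpha_0^t L}{\omega_1}\tfrac{2}{1-2\sqrt{2}\beta^2}K$ reduces to the $\tfrac{2}{1-2\sqrt{2}\beta^2}\sqrt{K}$ piece of $\Psi$, and the $\omega_1 U$ term to the lone $\sqrt{K}$, only under your choice, so the paper's displayed value of $\omega_1$ appears to be a typo).
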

\begin{proof}

First rewrite $\mathbb{E}\left[\nabla f\left(\bm{z}_t\right)^T \bm{d}_{t+1}\right]$ as
\begin{equation*}
\begin{split}
\mathbb{E} & \left[\nabla f\left(\bm{z}_t\right)^T \bm{d}_{t+1}\right] \\
& = \mathbb{E}\left[\nabla f\left(\bm{z}_t\right)^T \left(\frac{1}{M}\sum_{i=1}^M \bm{v}_{i,K}^t - \bm{x}_t\right)\right] \\
& = \frac{1}{M}\sum_{i=1}^M \sum_{k=0}^{K-1} \alpha_k^t \mathbb{E}\left[\signplus\left( f_i\left(\bm{v}_{i,k}^t\right) - f_i\left(\bm{v}_{i,k}^t + \alpha_k^t \bm{u}_{i,k}^t\right)\right)\nabla f\left(\bm{z}_t\right)^T\bm{u}_{i,k}^t\right] \\
& \overset{\cref{eq:definition-sign-signplus}}{=} \frac{1}{2M}\sum_{i=1}^M \sum_{k=0}^{K-1} \alpha_k^t \mathbb{E}\left[\left(1+\sign\left( f_i\left(\bm{v}_{i,k}^t\right) - f_i\left(\bm{v}_{i,k}^t + \alpha_k^t \bm{u}_{i,k}^t\right)\right)\right)\nabla f\left(\bm{z}_t\right)^T\bm{u}_{i,k}^t\right] \\
& \overset{(*)}{=} \frac{1}{2M}\sum_{i=1}^M \sum_{k=0}^{K-1} \alpha_k^t \mathbb{E}\left[\sign\left( f_i\left(\bm{v}_{i,k}^t\right) - f_i\left(\bm{v}_{i,k}^t + \alpha_k^t \bm{u}_{i,k}^t\right)\right)\nabla f\left(\bm{z}_t\right)^T\bm{u}_{i,k}^t\right] \\
& \overset{\cref{eq:sign_identity}}{=} \frac{1}{2M}\sum_{i=1}^M \sum_{k=0}^{K-1} \alpha_k^t \mathbb{E}\left[\left|\nabla f\left(\bm{z}_t\right)^T\bm{u}_{i,k}^t\right|\left(-1 + 2 \mathbb{I}\left\{\sign\left( f_i\left(\bm{v}_{i,k}^t\right) - f_i\left(\bm{v}_{i,k}^t + \alpha_k^t \bm{u}_{i,k}^t\right)\right) = \sign\left(\nabla f\left(\bm{z}_t\right)^T\bm{u}_{i,k}^t\right)\right\}\right)\right],
\end{split}
\end{equation*}
where $(*)$ is due to $\mathbb{E}\left[\bm{u}_{i,k}^t\right] = \bm{0}$.

\rrr{Now specify $\bm{u}_{i,k}^t \sim \mathcal{N}(\bm{0},\bm{I})$. Using the identity in \cref{eq:expectation-half-gaussian}, we have}
\begin{equation*}
\begin{split}
\mathbb{E} &\left[\nabla f\left(\bm{z}_t\right)^T \bm{d}_{t+1}\right] \\
& \overset{\cref{eq:expectation-half-gaussian}} \le -\frac{\mathbb{E}\left[\left\|\nabla f\left(\bm{z}_t\right)\right\|_2\right]}{\sqrt{2\pi}} \sum_{k=0}^{K-1} \alpha_k^t \\
& \;\;\;\;\;\;\;\;\;+ \frac{1}{M}\sum_{i=1}^M \sum_{k=0}^{K-1} \alpha_k^t \mathbb{E}\left[\underbrace{\left|\nabla f\left(\bm{z}_t\right)^T\bm{u}_{i,k}^t\right|\mathbb{I}\left\{\sign\left( f_i\left(\bm{v}_{i,k}^t\right) - f_i\left(\bm{v}_{i,k}^t + \alpha_k^t \bm{u}_{i,k}^t\right)\right) = \sign\left(\nabla f\left(\bm{z}_t\right)^T\bm{u}_{i,k}^t\right)\right\}}_{\defeq \mathfrak{A}}\right] \\
& \overset{\cref{eq:bound-sum-0.5-series}}\le -\frac{\mathbb{E}\left[\left\|\nabla f\left(\bm{z}_t\right)\right\|_2\right]}{\sqrt{2\pi}} \alpha_0^t \sqrt{K} + \frac{1}{M}\sum_{i=1}^M \sum_{k=0}^{K-1} \alpha_k^t \mathbb{E}\left[\mathfrak{A}\right],
\end{split}
\end{equation*}

\rrr{Now use \Cref{lemma:bound-on-variance,lemma:bound-A} to bound $\mathbb{E}\left[\mathfrak{A}\right]$:}
\begin{equation*} 
\begin{split}
	\mathbb{E} & \left[\nabla f\left(\bm{z}_t\right)^T \bm{d}_{t+1}\right] + \frac{\mathbb{E}\left[\left\|\nabla f\left(\bm{z}_t\right)\right\|_2\right]}{\sqrt{2\pi}} \alpha_0^t \sqrt{K}  \\
	& \le \frac{1}{2M}\sum_{i=1}^M \sum_{k=0}^{K-1} \alpha_k^t \left\{\left(\alpha_k^tL + \omega_1 + \omega_2\right) U
	+ \frac{L^2}{\omega_1} \mathbb{E}\left[\left\|\bm{v}_{i,k}^t - \bm{z}_t\right\|^2\right]  
	+\frac{\sigma^2}{\omega_2b}\right\} \\
	& \le 
	\frac{L^2}{2M\omega_1} \sum_{i=1}^M \sum_{k=0}^{K-1} \alpha_k^t  \mathbb{E}\left[\left\|\bm{v}_{i,k}^t - \bm{z}_t\right\|^2\right]
	+ \frac{LU}{2}\sum_{k=0}^{K-1}\left(\alpha_k^t\right)^2
	+ \left(\frac{\omega_1 + \omega_2}{2}U + \frac{\sigma^2}{2\omega_2b}\right) \sum_{k=0}^{K-1}\alpha_k^t\\
	& \overset{(\ref{eq:bound-sum-1-series},\ref{eq:bound-sum-0.5-series})}{\le} 
	\frac{L^2}{2M\omega_1} \sum_{i=1}^M \sum_{k=0}^{K-1} \alpha_k^t  \mathbb{E}\left[\left\|\bm{v}_{i,k}^t - \bm{z}_t\right\|^2\right]
	+ \frac{LU}{2} (1+\log K)\left(\alpha_0^t\right)^2
	+ \left((\omega_1 + \omega_2)U + \frac{\sigma^2}{\omega_2b}\right) \sqrt{K}\alpha_0^t
\end{split}
\end{equation*}

Using \Cref{lemma:bound-on-deviation-from-virual-sequence,eq:bound-sum-0.5-series} yields
\[
\begin{split}
	\mathbb{E} &\left[\nabla f\left(\bm{z}_t\right)^T \bm{d}_{t+1}\right] + \frac{\mathbb{E}\left[\left\|\nabla f\left(\bm{z}_t\right)\right\|_2\right]}{\sqrt{2\pi}} \alpha_0^t \sqrt{K}  \\
	& \le LU\sqrt{K}\left(\frac{\alpha_0^tL}{\omega_1}   \frac{2}{1-2\sqrt{2}\beta^2} K + \frac{1}{2\sqrt{K}} \right)(1+\log K)\left(\alpha_0^t\right)^2
	+ \left((\omega_1 + \omega_2)U + \frac{\sigma^2}{\omega_2b}\right) \sqrt{K}\alpha_0^t \\
\end{split}
\]
Consider now the setting $\omega_1 = \frac{L\alpha_0^t}{\sqrt{K}}, \omega_2 = \frac{\sigma}{\sqrt{Ub}}$, and we can reach 
\begin{equation} \label{eq:expected-z-decent}
	\begin{split}
	\mathbb{E} & \left[\nabla f\left(\bm{z}_t\right)^T \bm{d}_{t+1}\right] \\
	& \le -\frac{\mathbb{E}\left[\left\|\nabla f\left(\bm{z}_t\right)\right\|_2\right]}{\sqrt{2\pi}} \alpha_0^t \sqrt{K}  
	+ LU\sqrt{K}\left(\left(  \frac{2}{1-2\sqrt{2}\beta^2} \sqrt{K}
					+ \frac{1}{2\sqrt{K}}\right) (1+\log K)+\sqrt{K}\right) \left(\alpha_0^t\right)^2 + 2\sigma\sqrt{K}\sqrt{\frac{U}{b}} \alpha_0^t.
\end{split}
\end{equation}

Using \Cref{assumption:smoothness}, we have
\begin{equation} \label{eq:per-iteration-progress-tmp}
\begin{split}
f\left(\bm{z}_{t+1}\right) & \le f\left(\bm{z}_t\right) + \nabla f\left(\bm{z}_t\right)^T (\bm{z}_{t+1} - \bm{z}_t) + \frac{L}{2}\left\|\bm{z}_{t+1}-\bm{z}_t\right\|^2 \\
& \overset{\cref{eq:properties-virtual-sequence}}{=} f\left(\bm{z}_t\right) + \nabla f\left(\bm{z}_t\right)^T \bm{d}_{t+1} + \frac{L}{2}\left\|\bm{d}_{t+1}\right\|^2
\end{split}
\end{equation}

Taking total expectation, using \cref{eq:descent-step-bound-square,eq:expected-z-decent}, and rearranging yield 
\begin{equation*}
\begin{split}
\frac{\mathbb{E}\left[\left\|\nabla f\left(\bm{z}_t\right)\right\|_2\right]}{\sqrt{2\pi}} \alpha_0^t  
& \le \frac{\mathbb{E}\left[f\left(\bm{z}_t\right)-f\left(\bm{z}_{t+1}\right)\right]}{\sqrt{K}} \\
& + LU\left(\underbrace{\left(\left(\frac{2}{1-2\sqrt{2}\beta^2} + \frac{1}{2}\right)\sqrt{K} + \frac{1}{2\sqrt{K}} \right) (1+\log K)+\sqrt{K}}_{\defeq \Psi} \right) \left(\alpha_0^t\right)^2
+ 2\sigma\sqrt{\frac{U}{b}} \alpha_0^t.
\end{split}
\end{equation*}
Summing over $t=0,\cdots,T-1$ gives
\[
\sum_{t=0}^{T-1} \frac{\mathbb{E}\left[\left\|\nabla f\left(\bm{z}_t\right)\right\|_2\right]}{\sqrt{2\pi}} \alpha_0^t  
\le \frac{f\left(\bm{z}_0\right) - f_*}{\sqrt{K}} + LU \Psi \sum_{t=0}^{T-1} \left(\alpha_0^t\right)^2 + 2\sigma\sqrt{\frac{U}{b}} \sum_{t=0}^{T-1} \alpha_0^t.
\]
By \cref{eq:bound-sum-0.25-series-2}, the left-hand side is \rrr{no smaller than} $\frac{\alpha T^{3/4}}{\sqrt{2\pi}} \frac{1}{T} \sum_{t=0}^{T-1} \mathbb{E}\left[\left\|\nabla f\left(\bm{z}_t\right)\right\|_2\right]$. And noting that, by definition, $\bm{z}_0 = \bm{x}_0$, we then obtain \cref{eq:z-sequence-gradient-bound}.
\end{proof}

\begin{proof}[\textbf{Proof of \Cref{theorem:convergence-DES-l2}}]
\rrr{Under \Cref{assumption:smoothness} and using the specification $\|\cdot\| = \|\cdot\|_* = \|\cdot\|_2$, we have }
\[
\|\nabla f(\bm{x}_t)\|_2	\le \|\nabla f(\bm{x}_t) - \nabla f(\bm{z}_t)\|_2 + \|\nabla f(\bm{z}_t)\|_2
\le L\|\bm{x}_t - \bm{z}_t\|_2 + \|\nabla f(\bm{z}_t)\|_2
\overset{\cref{eq:properties-virtual-sequence}}{\rrr{=}} \frac{L\beta}{1-\beta} \|\bm{x}_t - \bm{x}_{t-1}\|_2 + \|\nabla f(\bm{z}_t)\|_2
\]
which gives, via taking expectation, 
\[
	\mathbb{E}\left[\|\nabla f(\bm{z}_t)\|_2\right] \ge \mathbb{E}\left[\|\nabla f(\bm{x}_t)\|_2\right] - \frac{L\beta}{1-\beta} \mathbb{E}\left[\|\bm{x}_t - \bm{x}_{t-1}\|_2\right].
\]
Substituting this into \cref{eq:z-sequence-gradient-bound} and using $b \ge \sqrt{T}$ yield
\begin{equation*} 
\begin{split}
\frac{1}{T}\sum_{t=0}^{T-1} \mathbb{E}\left[\|\nabla f(\bm{x}_t)\|_2\right]  
& \le 
\frac{\sqrt{2\pi}}{\alpha T^{3/4}}\left(\frac{f\left(\bm{x}_0\right) - f_*}{\sqrt{K}} 
+ LU \Psi \sum_{t=0}^{T-1} \left(\alpha_0^t\right)^2 
+ 2\sigma\sqrt{\frac{U}{b}} \sum_{t=0}^{T-1} \alpha_0^t \right)
+\frac{L\beta}{1-\beta} \frac{1}{T}\sum_{t=0}^{T-1}  \mathbb{E}\left[\|\bm{x}_t - \bm{x}_{t-1}\|_2\right] \\
& \overset{(\ref{eq:x-change-bound-1}),(\ref{eq:bound-sum-0.5-series}),(\ref{eq:bound-sum-0.25-series})}{\le} 
\frac{\sqrt{2\pi}}{\alpha T^{3/4}}\left(\frac{f\left(\bm{x}_0\right) - f_*}{\sqrt{K}} 
+ LU \Psi \alpha^2 2\sqrt{T}
+ 2\sigma\sqrt{\frac{U}{b}} \alpha \frac{4}{3} T^{3/4} \right)
+ L\beta\frac{160\alpha\sqrt{KU}}{3T^{1/4}} \\
& \le \frac{\sqrt{2\pi}}{T^{3/4}}\frac{f\left(\bm{x}_0\right) - f_*}{\alpha\sqrt{K}}
+ \frac{\sqrt{U}}{T^{1/4}}\left(
	2\alpha L\left(\sqrt{2\pi U} \Psi  
			+ \frac{80\beta\sqrt{K}}{3}\right)
	+ \frac{8\sqrt{2\pi}\sigma}{3}    
	\right).
\end{split}
\end{equation*}
\rrr{where when using \cref{eq:x-change-bound-1} we have specified $\|\cdot\|=\|\cdot\|_2$. 
Finally, according to \Cref{lemma:bound-on-variance}, we have $\mathbb{E}\left[\|\bm{u}_{i,k}^t\|_2^2\right] = n$ when $\bm{u}_{i,k}^t \sim \mathcal{N}(\bm{0},\bm{I})$. We can therefore choose $U=n$ and then reach the target bound. }
\end{proof}

\begin{proof}[\textbf{Proof of \Cref{theorem:convergence-DES-l1}}]

\rrr{Firstly, the assumption $\|\nabla f(\bm{x})\|_0 \le s$ implies}
\begin{equation*} 
	\|\nabla f(\bm{x})\|_\infty \le \|\nabla f(\bm{x})\|_2 \le \|\nabla f(\bm{x})\|_1 \le \sqrt{s} \|\nabla f(\bm{x})\|_\infty,
\end{equation*}
and hence we have
\[
\begin{split}
\|\nabla f(\bm{x}_t)\|_1 
& \le \|\nabla f(\bm{x}_t) - \nabla f(\bm{z}_t)\|_1 + \|\nabla f(\bm{z}_t)\|_1 \\
& \le \|\nabla f(\bm{x}_t) - \nabla f(\bm{z}_t)\|_1 + \sqrt{s}\|\nabla f(\bm{z}_t)\|_2 \\
& \overset{(*)}{\le} L \|\bm{x}_t - \bm{z}_t\|_\infty + \sqrt{s}\|\nabla f(\bm{z}_t)\|_2 \\
& \rrr{\overset{\cref{eq:properties-virtual-sequence}}{=}}  \frac{L\beta}{1-\beta} \|\bm{x}_t - \bm{x}_{t-1}\|_\infty + \sqrt{s}\|\nabla f(\bm{z}_t)\|_2
\end{split}
\]
where $(*)$ uses \Cref{assumption:smoothness} \rrr{with the specification $\|\cdot\| = \|\cdot\|_{\infty}$ and $\|\cdot\|_* = \|\cdot\|_1$}.
Taking expectation and rearranging give
\[
\mathbb{E}[\|\nabla f(\bm{z}_t)\|_2] 
\ge \frac{1}{\sqrt{s}}\left(\mathbb{E}[\|\nabla f(\bm{x}_t)\|_1] - \frac{L\beta}{1-\beta}\mathbb{E}[\|\bm{x}_t - \bm{x}_{t-1}\|_\infty]\right).
\]
Substituting this into the left-hand side of \cref{eq:z-sequence-gradient-bound} \rrr{in \Cref{lemma:key-lemma}} yields
\begin{equation*} 
\begin{split}
\frac{1}{T}\sum_{t=0}^{T-1} \mathbb{E}[\|\nabla f(\bm{x}_t)\|_1]   
& \le \frac{L\beta}{1-\beta}\frac{1}{T}\sum_{t=0}^{T-1} \mathbb{E}[\|\bm{x}_t - \bm{x}_{t-1}\|_\infty] 
+ \frac{\sqrt{2\pi s}}{\alpha T^{3/4}}\left(\frac{f\left(\bm{x}_0\right) - f_*}{\sqrt{K}} + LU \Psi \sum_{t=0}^{T-1} \left(\alpha_0^t\right)^2 + 2\sigma\sqrt{\frac{U}{b}} \sum_{t=0}^{T-1} \alpha_0^t\right) \\
& \overset{\cref{eq:x-change-bound-1}}\le \frac{160L\beta\alpha\sqrt{KU}}{3T^{1/4}} 
+ \frac{\sqrt{2\pi s}}{\alpha T^{3/4}}\left(\frac{f\left(\bm{x}_0\right) - f_*}{\sqrt{K}} + LU \Psi \sum_{t=0}^{T-1} \left(\alpha_0^t\right)^2 + 2\sigma\sqrt{\frac{U}{b}} \sum_{t=0}^{T-1} \alpha_0^t\right) \\
& \overset{(\ref{eq:bound-sum-0.5-series},\ref{eq:bound-sum-0.25-series})}\le 
	\frac{160L\beta\alpha\sqrt{KU}}{3T^{1/4}} 
	+ \frac{\sqrt{2\pi s}}{\alpha T^{3/4}}\left(\frac{f\left(\bm{x}_0\right) - f_*}{\sqrt{K}} 
	+ LU \Psi \alpha^2 2\sqrt{T}
	+ 2\sigma\sqrt{\frac{U}{b}} \alpha \frac{4}{3}T^{3/4}\right) \\
& \le \frac{\sqrt{2\pi s}}{T^{3/4}} \frac{f\left(\bm{x}_0\right) - f_*}{\alpha\sqrt{K}}
	+ \frac{\sqrt{U}}{T^{1/4}} \left(
		2\alpha L \left(\sqrt{2\pi U s} \Psi + \frac{80\beta\sqrt{K}}{3}  \right)
		+  \frac{8\sqrt{2\pi s} \sigma}{3} \right)
\end{split}
\end{equation*}
where the last step uses the assumption $b \ge \sqrt{T}$. 

\rrr{Since we have used \Cref{lemma:key-lemma}, we need $U \ge \mathbb{E}\left[\|\bm{u}_{i,k}^t\|_\infty^2\right]$. According to \Cref{lemma:bound-on-variance}, we know $U=4 \log(\sqrt{2}n)$ is valid choice. We then obtain the final bound as}
\[
\frac{1}{T}\sum_{t=0}^{T-1} \mathbb{E}[\|\nabla f(\bm{x}_t)\|_1]   
\le \frac{\sqrt{2\pi s}}{T^{3/4}} \frac{f\left(\bm{x}_0\right) - f_*}{\alpha\sqrt{K}}
	+ \frac{8\sqrt{\log(\sqrt{2}n)}}{T^{1/4}} \left(
		\alpha L \left(\sqrt{2\pi s\log(\sqrt{2}n)} \Psi + \frac{10\beta\sqrt{K}}{3}  \right)
		+  \frac{2\sqrt{2\pi s} \sigma}{3} \right)
\]
\end{proof}

\section{Proof of \Cref{proposition:properties-of-mixture-Gaussian-sampling,proposition:properties-of-mixture-Rademacher-sampling}}
\rrr{
In the above proofs for DES with Gaussian mutation, we have repeatedly used the lower bound of $\mathbb{E}[|\bm{u}^T \bm{y}|]$ where $\bm{y} \in \mathbb{R}^n$ and $\bm{u}$ is random. This bound is trivial when $\bm{u} \sim \mathcal{N}(\bm{0},\bm{I})$, as has been given in \Cref{eq:expectation-half-gaussian}. To prove \Cref{theorem:convergence-DES-mixture-Gaussian-l2,theorem:convergence-DES-mixture-Rademacher-l2} we need a similar bound when $\bm{u}$ is sampled from the mixture Gaussian distribution $\mathcal{M}_l^G$ or the mixture Rademacher distribution $\mathcal{M}_l^R$. This can be achieved by analyzing the second-order and the fourth-order momentums of the corresponding probability distribution; this is the reason why \Cref{proposition:properties-of-mixture-Gaussian-sampling,proposition:properties-of-mixture-Rademacher-sampling} are required.
}

\begin{proof}[\textbf{Proof of \Cref{proposition:properties-of-mixture-Gaussian-sampling}}]
We prove this proposition using moment-generating function.

Denote the moment-generating function of $\mathcal{M}_l^G$ by $M(\bm{t})$. By definition, $M(\bm{t})$ can be written as
\[
\begin{split}
M(\bm{t}) & = \mathbb{E}\left[\exp (\bm{t}^T \bm{u})\right]
= \mathbb{E}\left[\exp \sqrt{\frac{n}{l}}\left(\bm{t}^T \sum_{j=1}^l \bm{e}_{r_j} z_j\right)\right]
= \mathbb{E}\left[\exp \sqrt{\frac{n}{l}} \left(\sum_{j=1}^l t_{r_j} z_j\right)\right]
\overset{(*)}= \prod_{j=1}^l \mathbb{E}\left[\exp \left( \sqrt{\frac{n}{l}} t_{r_j} z_j\right)\right] \\
& = \prod_{j=1}^l \mathbb{E}\left[\sum_{k=1}^n \mathbb{I}\{r_j = k\}\exp \left( \sqrt{\frac{n}{l}} t_{r_j} z_j\right)\right]
= \prod_{j=1}^l \sum_{k=1}^n\mathbb{P}\{r_j = k\}\mathbb{E}_k\left[ \exp \left(\sqrt{\frac{n}{l}} t_k z_j\right)\right]
\end{split}
\]
where $t_{r_j}$ denotes the $r_j$-th element of $\bm{t}$ and $\mathbb{E}_k$ denotes the expectation conditioned on the event $r_j = k$. Equation ($*$) is due to the independence of $\{z_j\}$ and $\{r_j\}$.

Since the coordinate index $r_j$ is sampled uniformly with replacement, we have $\mathbb{P}\{r_j = k\} = \frac{1}{n}$. Note that $\mathbb{E}_k [\exp (t_k z_j)]$ is in fact the (conditioned) moment-generating function of the univariate Gaussian variable $\sqrt{\frac{n}{l}} z_j$, which is given by $\exp \left(\frac{n}{2l} t_k^2\right)$. So we reach
\begin{equation*} 
M(\bm{t}) = \prod_{j=1}^l \sum_{k=1}^n \frac{1}{n}\exp \left(\frac{n}{2l} t_k^2\right)
= \left(\frac{1}{n}\sum_{k=1}^n \exp \left(\frac{n}{2l} t_k^2\right)\right)^l.
\end{equation*}

By construction, the covariance matrix must be diagonal, so we focus on its diagonal elements. Firstly, take the partial derivative with respect to $t_j$ and this yields
\[
\frac{\partial M(\bm{t})}{\partial t_j} 
= \frac{l}{n}\left(\frac{1}{n}\sum_{k=1}^n \exp \left(\frac{n}{2l} t_k^2\right)\right)^{l-1} \frac{\partial }{\partial t_j} \exp \rrr{\left(\frac{n}{2l} t_j^2\right)} 
= \left( \frac{1}{n}\sum_{k=1}^n\exp \left(\frac{n}{2l} t_k^2\right)\right)^{l-1} \exp \left(\frac{n}{\rrr{2l}} t_j^2\right) t_j.
\]
The second-order partial derivative is then
\[
\frac{\partial^2 M(\bm{t})}{\partial t_j^2} 
= \underbrace{\left\{\frac{\partial }{\partial t_j} \left(\frac{1}{n}\sum_{k=1}^n \exp \left(\frac{n}{2l} t_k^2\right)\right)^{l-1}\right\} \exp \left(\frac{n}{\rrr{2l}} t_j^2\right) t_j}_{T_1}
+ \underbrace{\left(\frac{1}{n}\sum_{k=1}^n \exp \left(\frac{n}{2l} t_k^2\right)\right)^{l-1}}_{T_2} 
\underbrace{\frac{\partial }{\partial t_j}\left(\exp \left(\frac{n}{2l} t_j^2\right) t_j\right)}_{T_3}.
\]
When setting $\bm{t} = \bm{0}$, $T_1$ vanishes and $T_2$ becomes 1. 
We also have
\[
	T_3 = \exp \left(\frac{n}{\rrr{2l}} t_j^2\right) \left(\frac{\partial }{\partial t_j} \left(\frac{n}{2l} t_j^2\right)\right) t_j + \exp \left(\frac{n}{2l} t_j^2\right) \overset{\bm{t}=\bm{0}} \Rightarrow 1.
\]
So the $j$-th diagonal element is 1. We therefore conclude that $\bm{u}$ has an identity covariance matrix.

In a similar manner, the moment-generating function of $\bm{y}^T \bm{u}$ is
\[
\tilde{M}(t) = \left(\frac{1}{n}\sum_{k=1}^n \exp \left(\frac{n}{2l} y_k^2 t^2\right)\right)^l.
\]
Now expand the exponential term as Taylor series  
\[
\tilde{M}(t) 
= \left(\frac{1}{n}\sum_{k=1}^n \left(1 + \frac{n}{2l} y_k^2 t^2 + \frac{1}{2}\left(\frac{n}{2l} y_k^2 t^2\right)^2 + \mathcal{O}(t^6) \right)\right)^l 
= \left(1 + \frac{1}{2l} \|\bm{y}\|_2^2 t^2 + \frac{n}{8l^2} \|\bm{y}\|_4^4 t^4 + \mathcal{O}(t^6)\right)^l.
\]

Using the multinomial theorem, we get
\[
\begin{split}
\tilde{M}(t) 
& = \sum_{j=0}^l \left(\substack{l \\ j}\right) \left(\frac{1}{2l} \|\bm{y}\|_2^2 t^2 + \frac{n}{8l^2} \|\bm{y}\|_4^4 t^4 + \mathcal{O}(t^6)\right)^j \\
& = \left(\substack{l \\ 1}\right) \left(\frac{n}{8l^2} \|\bm{y}\|_4^4 t^4 \right) 
+ \left(\substack{l \\ 2}\right) \left(\frac{1}{2l} \|\bm{y}\|_2^2 t^2\right)^2 + 1 + A t^2 + \mathcal{O}(t^6)\\
& = \frac{1}{8}\left(\frac{n}{l} \|\bm{y}\|_4^4 + \frac{l-1}{l} \|\bm{y}\|_2^4\right) t^4 
+ 1 + A t^2 + \mathcal{O}(t^6)
\end{split} 
\]
where $A$ is some constant not depending on $t$. We can then reach the desired result by taking the fourth-order derivative and setting $t=0$, i.e.,
\[
	\mathbb{E}\left[|\bm{y}^T \bm{u}|^4\right] 
	= \frac{\partial^4}{\partial t^4}\tilde{M}(t) \Bigg|_{t=0}
	= 3\left(\frac{n}{l} \|\bm{y}\|_4^4 + \frac{l-1}{l} \|\bm{y}\|_2^4\right) + \mathcal{O}(t^2) \Bigg|_{t=0}
	= 3\left(\frac{n}{l} \|\bm{y}\|_4^4 + \frac{l-1}{l} \|\bm{y}\|_2^4\right).
\]
\end{proof}

\begin{proof}[\textbf{Proof of \Cref{proposition:properties-of-mixture-Rademacher-sampling}}]
The proof is very similar to that of \Cref{proposition:properties-of-mixture-Gaussian-sampling}.
First, we obtain the moment-generating function for $\mathcal{M}_l^R$ as
\[
\begin{split}
M(\bm{t}) & = \mathbb{E}\left[\exp (\bm{t}^T \bm{u})\right]
= \mathbb{E}\left[\exp \sqrt{\frac{n}{l}}\left(\bm{t}^T \sum_{j=1}^l \bm{e}_{r_j} z_j\right)\right]
= \mathbb{E}\left[\exp \sqrt{\frac{n}{l}} \left(\sum_{j=1}^l t_{r_j} z_j\right)\right]
\overset{(*)}{=} \prod_{j=1}^l \mathbb{E}\left[\exp \left( \sqrt{\frac{n}{l}} t_{r_j} z_j\right)\right] \\
& = \prod_{j=1}^l \mathbb{E}\left[\sum_{k=1}^n \mathbb{I}\{r_j = k\}\exp \left( \sqrt{\frac{n}{l}} t_{r_j} z_j\right)\right]
= \prod_{j=1}^l \sum_{k=1}^n\mathbb{P}\{r_j = k\}\mathbb{E}_k\left[ \exp \left(\sqrt{\frac{n}{l}} t_k z_j\right)\right] \\
& = \prod_{j=1}^l \frac{1}{2n}\sum_{k=1}^n \left(\exp \left(\sqrt{\frac{n}{l}} t_k\right) + \exp \left(-\sqrt{\frac{n}{l}} t_k\right)\right)
= \left(\frac{1}{2n}\sum_{k=1}^n \left(\exp \left(\sqrt{\frac{n}{l}} t_k\right) + \exp \left(-\sqrt{\frac{n}{l}} t_k\right)\right)\right)^l.
\end{split}
\]
\rrr{Equation ($*$) in the above is due to the independence of $\{z_j\}$ and $\{r_j\}$.}
The partial derivative with respect to $t_j$ is then
\[
\frac{\partial M(\bm{t})}{\partial t_j} 
= \frac{1}{2} \sqrt{\frac{l}{n}} \left(\frac{1}{2n}\sum_{k=1}^n \left(\exp \left(\sqrt{\frac{n}{l}} t_k\right) + \exp \left(-\sqrt{\frac{n}{l}} t_k\right)\right)\right)^{l-1} \left(\exp \left(\sqrt{\frac{n}{l}} t_j\right) - \exp \left(-\sqrt{\frac{n}{l}} t_j\right)\right)
\]
and
\[
\begin{split}
& \frac{\partial^2 M(\bm{t})}{\partial t_j^2}
= \frac{1}{2} \sqrt{\frac{l}{n}} \frac{\partial}{\partial t_j} \left(\frac{1}{2n}\sum_{k=1}^n \left(\exp \left(\sqrt{\frac{n}{l}} t_k\right) + \exp \left(-\sqrt{\frac{n}{l}} t_k\right)\right)\right)^{l-1} 
\underbrace{\left(\exp \left(\sqrt{\frac{n}{l}} t_j\right) - \exp \left(-\sqrt{\frac{n}{l}} t_j\right)\right)}_{= 0 \text{ when } \bm{t}=\bm{0}} \\
& + \frac{1}{2} \sqrt{\frac{l}{n}} 
\underbrace{\left(\frac{1}{2n}\sum_{k=1}^n \left(\exp \left(\sqrt{\frac{n}{l}} t_k\right) + \exp \left(-\sqrt{\frac{n}{l}} t_k\right)\right)\right)^{l-1}}_{= 1 \text{ when } \bm{t}=\bm{0}}
\frac{\partial}{\partial t_j} \left(\exp \left(\sqrt{\frac{n}{l}} t_j\right) - \exp \left(-\sqrt{\frac{n}{l}} t_j\right)\right).
\end{split}
\]
We therefore obtain
\[
\begin{split}
\frac{\partial^2 M(\bm{t})}{\partial t_j^2} \Bigg|_{\bm{t}=\bm{0}} 
= \frac{1}{2} \sqrt{\frac{l}{n}} 
\frac{\partial}{\partial t_j} \left(\exp \left(\sqrt{\frac{n}{l}} t_j\right) - \exp \left(-\sqrt{\frac{n}{l}} t_j\right)\right) \Bigg|_{\bm{t}=\bm{0}} 
= 1
\end{split}
\]
As the covariance matrix is diagonal, we conclude from the above that the covariance matrix is an identity matrix.

The moment-generating function of the random variable $\bm{y}^T \bm{u}$, denoted by $\tilde{M}(t)$, can be obtained by substituting $\bm{t} = \bm{y} t$ into $M(\bm{t})$:
\[
\tilde{M}(t) = \left(\frac{1}{2n}\sum_{k=1}^n \left(\exp \left(\sqrt{\frac{n}{l}} y_k t\right) + \exp \left(-\sqrt{\frac{n}{l}} y_k t\right)\right)\right)^l 
= \left(\frac{1}{n} \sum_{k=1}^n \cosh \left(\sqrt{\frac{n}{l}} y_k t\right)\right)^l.
\] 
Now expanding the $\cosh$ function using Taylor series, we obtain
\[
\begin{split}
\tilde{M}(t) 
& = \left(\frac{1}{n} \sum_{k=1}^n \left(1+\frac{1}{2} \left(\sqrt{\frac{n}{l}} y_k t\right)^2 + \frac{1}{4!} \left(\sqrt{\frac{n}{l}} y_k t\right)^4 + \mathcal{O} (t^6)\right)\right)^l \\
& = \left(1 + \frac{1}{2l} \|\bm{y}\|_2^2 t^2 + \frac{n}{24 l^2} \|\bm{y}\|_4^4 t^4 + \mathcal{O} (t^6)\right)^l \\
& = l \left(\frac{n}{24 l^2} \|\bm{y}\|_4^4 t^4 \right)
+ \frac{l(l-1)}{2}\left(\frac{1}{2l} \|\bm{y}\|_2^2 t^2\right)^2 + 1 + A t^2 + \mathcal{O}(t^6).
\end{split}
\]
The fourth-order moment of $\bm{y}^T \bm{u}$ can be obtained as
\[
\mathbb{E}[|\bm{y}^T \bm{u}|] = \frac{\partial^4}{\partial t^4}\tilde{M}(t)  \Bigg|_{t=0} 
= \frac{n}{l} \|\bm{y}\|_4^4 + 3 \frac{l-1}{l} \|\bm{y}\|_2^4.
\]
\end{proof}

\section{Proof of \Cref{theorem:convergence-DES-mixture-Gaussian-l2,theorem:convergence-DES-mixture-Rademacher-l2}}
\rrr{We will require the following lemma, which can be derived from \Cref{proposition:properties-of-mixture-Gaussian-sampling,proposition:properties-of-mixture-Rademacher-sampling}. }

\rrr{
\begin{lemma} \label{lemma:bound-on-inner-product}
Let $\bm{y} \in \mathbb{R}^n$ be a vector satisfying
$\|\bm{y}\|_2^4 / \|\bm{y}\|_4^4 \ge \tilde{s}$
for some constant $s \in [1,n]$. We have
\[
	\mathbb{E}[|\bm{y}^T \bm{u}|] \ge \frac{\|\bm{y}\|_2}{\sqrt{3n/(\tilde{s}l) + 3}} \;\;\text{ for } \bm{u} \sim \mathcal{M}_l^G
\]
and 
\[
	\mathbb{E}[|\bm{y}^T \bm{u}|] \ge \frac{\|\bm{y}\|_2}{\sqrt{n/(\tilde{s}l) + 3}} \;\;\text{ for } \bm{u} \sim \mathcal{M}_l^R.
\]
\end{lemma}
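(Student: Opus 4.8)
The plan is to obtain the first-moment lower bound from the second and fourth moments by the classical log-convexity (Lyapunov/H\"older interpolation) inequality for $L^p$ norms, and then to control the fourth moment using \Cref{proposition:properties-of-mixture-Gaussian-sampling,proposition:properties-of-mixture-Rademacher-sampling} together with the denseness hypothesis. Write $X = |\bm{y}^T\bm{u}| \ge 0$. Since $2$ lies between $1$ and $4$ with $\tfrac12 = \tfrac{1-\theta}{1}+\tfrac{\theta}{4}$ solved by $\theta = \tfrac23$, interpolation gives $\|X\|_2 \le \|X\|_1^{1/3}\|X\|_4^{2/3}$, equivalently
\[
\mathbb{E}\!\left[|\bm{y}^T\bm{u}|\right] \ge \frac{\big(\mathbb{E}[(\bm{y}^T\bm{u})^2]\big)^{3/2}}{\big(\mathbb{E}[(\bm{y}^T\bm{u})^4]\big)^{1/2}}.
\]

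First I would evaluate the numerator: since $\mathbb{E}[\bm{u}]=\bm{0}$ and $\mathbb{V}[\bm{u}]=\bm{I}$ (established in both propositions), $\mathbb{E}[(\bm{y}^T\bm{u})^2] = \bm{y}^T\bm{y} = \|\bm{y}\|_2^2$, so the numerator equals $\|\bm{y}\|_2^3$. Next I would bound the denominator. In the mixture Gaussian case, \Cref{proposition:properties-of-mixture-Gaussian-sampling} gives $\mathbb{E}[(\bm{y}^T\bm{u})^4] = 3\big(\tfrac{n}{l}\|\bm{y}\|_4^4 + \tfrac{l-1}{l}\|\bm{y}\|_2^4\big)$; using the hypothesis $\|\bm{y}\|_4^4 \le \|\bm{y}\|_2^4/\tilde{s}$ and $\tfrac{l-1}{l}\le 1$ yields $\mathbb{E}[(\bm{y}^T\bm{u})^4] \le 3\|\bm{y}\|_2^4\big(\tfrac{n}{\tilde{s}l}+1\big) = \|\bm{y}\|_2^4\big(\tfrac{3n}{\tilde{s}l}+3\big)$. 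Substituting both pieces into the displayed bound gives $\mathbb{E}[|\bm{y}^T\bm{u}|] \ge \|\bm{y}\|_2/\sqrt{3n/(\tilde{s}l)+3}$. The mixture Rademacher case is handled identically, except \Cref{proposition:properties-of-mixture-Rademacher-sampling} gives $\mathbb{E}[(\bm{y}^T\bm{u})^4] = \tfrac{n}{l}\|\bm{y}\|_4^4 + 3\tfrac{l-1}{l}\|\bm{y}\|_2^4 \le \|\bm{y}\|_2^4\big(\tfrac{n}{\tilde{s}l}+3\big)$, which produces the second claimed bound.

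I do not expect any genuine obstacle here beyond bookkeeping; the one point that requires care is the \emph{direction} of the inequalities. The interpolation inequality must be used as a lower bound on the first moment (a Paley--Zygmund-type reversal, valid because $\|X\|_1\|X\|_4^2 \ge \|X\|_2^3$), and the denseness hypothesis $\|\bm{y}\|_2^4/\|\bm{y}\|_4^4 \ge \tilde{s}$ must be applied so as to \emph{shrink} the $\|\bm{y}\|_4^4$ term inside the fourth moment, not enlarge it. One should also note the statement is trivially true (the right-hand side being $0$) when $\bm{y}=\bm{0}$, so the division above tacitly assumes $\bm{y}\neq\bm{0}$.
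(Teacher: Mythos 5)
Your proposal is correct and follows essentially the same route as the paper: the paper's proof also starts from the Hölder/Lyapunov interpolation bound $\mathbb{E}[|\bm{y}^T\bm{u}|] \ge (\mathbb{E}[|\bm{y}^T\bm{u}|^2])^{3/2}/(\mathbb{E}[|\bm{y}^T\bm{u}|^4])^{1/2}$, evaluates the second moment via $\mathbb{V}[\bm{u}]=\bm{I}$, and bounds the fourth moment using \Cref{proposition:properties-of-mixture-Gaussian-sampling,proposition:properties-of-mixture-Rademacher-sampling} together with $\|\bm{y}\|_4^4 \le \|\bm{y}\|_2^4/\tilde{s}$ and $\frac{l-1}{l}\le 1$. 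Your explicit derivation of the interpolation exponent and the remark on the direction of the inequalities are sound bookkeeping that the paper leaves implicit.
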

}
\begin{proof}
\rrr{First, by H\"older's inequality, we have}
\begin{equation} \label{eq:holder-inequality}
	\mathbb{E}[|\bm{y}^T \bm{u}|] 
	\ge \frac{\left( \mathbb{E}[|\bm{y}^T \bm{u}|^2] \right)^{3/2}}{\left( \mathbb{E}[|\bm{y}^T \bm{u}|^4] \right)^{1/2}}
	= \frac{\|\bm{y}\|_2^3}{\left( \mathbb{E}[|\bm{y}^T \bm{u}|^4] \right)^{1/2}}.
\end{equation}
where the equality uses the fact $\mathbb{V}[\bm{u}] = \bm{I}$, according to \Cref{proposition:properties-of-mixture-Gaussian-sampling,proposition:properties-of-mixture-Rademacher-sampling}. 

Now consider the case of mixture Gaussian sampling. In this case, we have, from \cref{eq:fourth-order-moment-mixture-Gaussian}, that
\begin{equation} \label{eq:inner-product-bound-tmp1}
	\mathbb{E}[|\bm{y}^T \bm{u}|] 
	\ge \frac{\|\bm{y}\|_2^3}{\sqrt{3\left(\frac{n}{l}\|\bm{y}\|_4^4 + \frac{l-1}{l} \|\bm{y}\|_2^4\right)}}.
\end{equation}
Using \rrr{the assumption $\|\bm{y}\|_2^4 / \|\bm{y}\|_4^4 \ge \tilde{s}$} then yields
\[
	\mathbb{E}[|\bm{y}^T \bm{u}|] 
	\ge \frac{\|\bm{y}\|_2^3}{\sqrt{3\left(\frac{n}{\rrr{\tilde{s}l}}\|\bm{y}\|_2^4 + \frac{l-1}{l} \|\bm{y}\|_2^4\right)}}
	= \frac{\|\bm{y}\|_2}{\sqrt{3\left(\frac{n}{\rrr{\tilde{s}l}} + \frac{l-1}{l} \right)}}
	\ge \frac{\|\bm{y}\|_2}{\sqrt{3\left(n/\rrr{(\tilde{s}l)} + 1 \right)}}.
\]


Consider then the case of mixture Rademacher sampling. \rrr{From \cref{eq:holder-inequality}, \cref{eq:fourth-order-moment-mixture-Rademacher}, and the assumption $\|\bm{y}\|_2^4 / \|\bm{y}\|_4^4 \ge \tilde{s}$}, we have
\begin{equation} \label{eq:inner-product-bound-tmp2}
	\mathbb{E}[|\bm{y}^T \bm{u}|] 
	\ge \frac{\|\bm{y}\|_2^3}{\sqrt{\frac{n}{l}\|\bm{y}\|_4^4 + 3\frac{l-1}{l} \|\bm{y}\|_2^4}} 
	\ge \frac{\|\bm{y}\|_2^3}{\sqrt{\frac{n}{\rrr{\tilde{s}l}}\|\bm{y}\|_2^4 + 3\frac{l-1}{l} \|\bm{y}\|_2^4}}
	= \frac{\|\bm{y}\|_2}{\sqrt{\rrr{n/(\tilde{s}l)} + 3\frac{l-1}{l} }}
	\ge \frac{\|\bm{y}\|_2}{\sqrt{\rrr{n/(\tilde{s}l)} + 3}}.
\end{equation}

\end{proof}

\begin{proof}[\textbf{Proof of \Cref{theorem:convergence-DES-mixture-Gaussian-l2}}]

\rrr{Recall that the DES with mixture Gaussian sampling is a special case of \Cref{alg:UDES}, so we can reuse \Cref{lemma:bound-on-descent-step,lemma:bound-on-x-change,lemma:bound-on-deviation-from-virual-sequence,lemma:bound-A} which are derived for \Cref{alg:UDES}.

The first step in this proof is to obtain a similar bound as in \Cref{lemma:key-lemma}.}
We begin with rewriting $\mathbb{E}\left[\nabla f\left(\bm{x}_t\right)^T \bm{d}_{t+1}\right]$. For $\beta=0$, we have $\bm{z}_t = \bm{x}_t$ and
\begin{equation}\label{eq:sufficient-descent-tmp1} 
\begin{split}
\mathbb{E} & \left[\nabla f\left(\bm{x}_t\right)^T \bm{d}_{t+1}\right] \\
& = \mathbb{E}\left[\nabla f\left(\bm{x}_t\right)^T \left(\frac{1}{M}\sum_{i=1}^M \bm{v}_{i,K}^t - \bm{x}_t\right)\right] \\
& = \frac{1}{M}\sum_{i=1}^M \sum_{k=0}^{K-1} \alpha_k^t \mathbb{E}\left[\signplus\left( f_i\left(\bm{v}_{i,k}^t\right) - f_i\left(\bm{v}_{i,k}^t + \alpha_k^t \bm{u}_{i,k}^t\right)\right)\nabla f\left(\bm{x}_t\right)^T\bm{u}_{i,k}^t\right] \\
& \overset{\cref{eq:definition-sign-signplus}}= \frac{1}{2M}\sum_{i=1}^M \sum_{k=0}^{K-1} \alpha_k^t \mathbb{E}\left[\left(1+\sign\left( f_i\left(\bm{v}_{i,k}^t\right) - f_i\left(\bm{v}_{i,k}^t + \alpha_k^t \bm{u}_{i,k}^t\right)\right)\right)\nabla f\left(\bm{x}_t\right)^T\bm{u}_{i,k}^t\right] \\
& = \frac{1}{2M}\sum_{i=1}^M \sum_{k=0}^{K-1} \alpha_k^t \mathbb{E}\left[\sign\left( f_i\left(\bm{v}_{i,k}^t\right) - f_i\left(\bm{v}_{i,k}^t + \alpha_k^t \bm{u}_{i,k}^t\right)\right)\nabla f\left(\bm{x}_t\right)^T\bm{u}_{i,k}^t\right] \\
& \overset{\cref{eq:sign_identity}}= \frac{1}{2M}\sum_{i=1}^M \sum_{k=0}^{K-1} \alpha_k^t \mathbb{E}\left[\left|\nabla f\left(\bm{x}_t\right)^T\bm{u}_{i,k}^t\right|\left(-1 + 2 \mathbb{I}\left\{\sign\left( f_i\left(\bm{v}_{i,k}^t\right) - f_i\left(\bm{v}_{i,k}^t + \alpha_k^t \bm{u}_{i,k}^t\right)\right) = \sign\left(\nabla f\left(\bm{x}_t\right)^T\bm{u}_{i,k}^t\right)\right\}\right)\right] \\
& \overset{\cref{eq:bound-sum-0.5-series-2}}\le -\frac{\alpha_0^t}{2M\sqrt{K}}\sum_{i=1}^M \sum_{k=0}^{K-1}\mathbb{E}\left[\left|\nabla f\left(\bm{x}_t\right)^T\bm{u}_{i,k}^t\right|\right]  \\
& \;\;\;\;\;\;\;\;\;+ \frac{1}{M}\sum_{i=1}^M \sum_{k=0}^{K-1} \alpha_k^t \mathbb{E}\left[\underbrace{\left|\nabla f\left(\bm{x}_t\right)^T\bm{u}_{i,k}^t\right|\mathbb{I}\left\{\sign\left( f_i\left(\bm{v}_{i,k}^t\right) - f_i\left(\bm{v}_{i,k}^t + \alpha_k^t \bm{u}_{i,k}^t\right)\right) = \sign\left(\nabla f\left(\bm{x}_t\right)^T\bm{u}_{i,k}^t\right)\right\}}_{\defeq \mathfrak{A}}\right] \\
& = -\frac{\alpha_0^t}{2M\sqrt{K}}\sum_{i=1}^M \sum_{k=0}^{K-1}\mathbb{E}\left[\rrr{\left|\nabla f\left(\bm{x}_t\right)^T\bm{u}_{i,k}^t\right|}\right] + \frac{1}{M}\sum_{i=1}^M \sum_{k=0}^{K-1} \alpha_k^t \mathbb{E}\left[\mathfrak{A}\right]
\end{split}
\end{equation}
\rrr{Using the assumption $\|\nabla f(\bm{x})\|_2^4 / \|\nabla f(\bm{x})\|_4^4 \ge \tilde{s}$ and \Cref{lemma:bound-on-inner-product}}, we have
\begin{equation*}
	\mathbb{E} \left[\nabla f\left(\bm{x}_t\right)^T \bm{d}_{t+1}\right] \\
 \le -\frac{\alpha_0^t\sqrt{K}}{2\rrr{V}} \mathbb{E}\left[\left\|\nabla f\left(\bm{x}_t\right)\right\|_2\right] + \frac{1}{M}\sum_{i=1}^M \sum_{k=0}^{K-1} \alpha_k^t \mathbb{E}\left[\mathfrak{A}\right]
\end{equation*}
\rrr{where $V$ is a constant that can be set to 
\begin{equation} \label{eq:choice-of-V-in-mixture-Gaussian-distribution}
	V = \sqrt{3 + 3 n / (\tilde{s}l)}.
\end{equation}
}

\rrr{Note that \Cref{lemma:bound-A} gives an upper bound for the term $\mathbb{E}[\mathfrak{A}]$. We therefore have}
\[
\begin{split}
\mathbb{E}  \left[\nabla f\left(\bm{x}_t\right)^T \bm{d}_{t+1}\right] & + \frac{\alpha_0^t\sqrt{K}}{2\rrr{V}}\mathbb{E}\left[\left\|\nabla f\left(\bm{x}_t\right)\right\|_2\right]\\
& \le \frac{1}{M}\sum_{i=1}^M \sum_{k=0}^{K-1} \alpha_k^t \left(\frac{\alpha_k^tL + \omega_1 + \omega_2}{2} \mathbb{E}\left[\| \bm{u}_{i,k}^t\|^2\right]
	+ \frac{L^2}{2\omega_1} \mathbb{E}\left[\|\bm{v}_{i,k}^t - \bm{z}_t\|^2\right]
	+\frac{\sigma^2}{2\omega_2b}\right)\\ 
& \le \frac{1}{M}\sum_{i=1}^M \sum_{k=0}^{K-1} \alpha_k^t \left(\frac{\alpha_k^tL + \omega_1 + \omega_2}{2} U
	+ \frac{L^2}{2\omega_1} \mathbb{E}\left[\|\bm{v}_{i,k}^t - \bm{z}_t\|^2\right]
	+\frac{\sigma^2}{2\omega_2b}\right).
\end{split}	
\]

Now use \Cref{lemma:bound-on-deviation-from-virual-sequence} to bound $\mathbb{E}\left[\|\bm{v}_{i,k}^t - \bm{z}_t\|^2\right]$ and use the setting $\beta=0$:
\[
\begin{split}
	\mathbb{E} \left[\nabla f\left(\bm{x}_t\right)^T \bm{d}_{t+1}\right] & + \frac{\alpha_0^t\sqrt{K}}{2\rrr{V}}\mathbb{E}\left[\left\|\nabla f\left(\bm{x}_t\right)\right\|_2\right]\\
	& \overset{\cref{eq:client-drift-bound},\beta=0}\le \sum_{k=0}^{K-1} \alpha_k^t \left(\frac{\alpha_k^tL + \omega_1 + \omega_2}{2} U
	+ \frac{L^2}{\omega_1} UK(1+\log K) (\alpha_0^t)^2
	+\frac{\sigma^2}{2\omega_2b}\right) \\
	& = \frac{LU}{2}\sum_{k=0}^{K-1} (\alpha_k^t)^2 
	+\left(\frac{L^2}{\omega_1} UK(1+\log K) (\alpha_0^t)^2 + \frac{\omega_1+\omega_2}{2}U + \frac{\sigma^2}{2\omega_2b}\right)\sum_{k=0}^{K-1} \alpha_k^t.
\end{split}
\]
Letting $\omega_1 = \frac{L\alpha_0^t}{\sqrt{K}}, \omega_2 = \frac{\sigma}{\sqrt{Ub}}$ yields
\begin{equation} \label{eq:inner-product-bound-mixture-tmp}
\begin{split}
	\mathbb{E} \left[\nabla f\left(\bm{x}_t\right)^T \bm{d}_{t+1}\right] & + \frac{\alpha_0^t\sqrt{K}}{2\rrr{V}}\mathbb{E}\left[\left\|\nabla f\left(\bm{x}_t\right)\right\|_2\right]\\
	& = \frac{LU}{2}\sum_{k=0}^{K-1} (\alpha_k^t)^2 
	+\left(LU \left(\sqrt{K}(1+\log K)  + \frac{1}{2\sqrt{K}}\right)\alpha_0^t + \frac{\sqrt{U}\sigma}{\sqrt{b}}\right)\sum_{k=0}^{K-1} \alpha_k^t \\
	& \overset{(\ref{eq:bound-sum-1-series},\ref{eq:bound-sum-0.5-series})}\le \frac{LU}{2} (1+\log K) (\alpha_0^t)^2
	+\left(LU \left(\sqrt{K}(1+\log K)  + \frac{1}{2\sqrt{K}}\right)\alpha_0^t + \frac{\sqrt{U}\sigma}{\sqrt{b}}\right) 2\sqrt{K} \alpha_0^t \\
	& = \sqrt{K}LU\left(\left(\frac{1}{2\sqrt{K}} + 2\sqrt{K}\right)(1+\log K) + \frac{1}{\sqrt{K}}\right) (\alpha_0^t)^2
	+ \frac{\sqrt{U}\sigma}{\sqrt{b}}2\sqrt{K} \alpha_0^t.
\end{split}
\end{equation}
On the other hand, by the smoothness assumption, we have 
\[
\begin{split}
\mathbb{E} & \left[ f\left(\bm{x}_{t+1}\right) - f\left(\bm{x}_t\right)\right]
\le \mathbb{E}\left[\nabla f(\bm{x}_t)^T \bm{d}_{t+1}\right] + \frac{L}{2}\mathbb{E}\left[\|\bm{d}_{t+1}\|^2\right] \\
& \overset{\cref{eq:inner-product-bound-mixture-tmp}}\le -\frac{\alpha_0^t\sqrt{K}}{2\rrr{V}}\mathbb{E}\left[\left\|\nabla f\left(\bm{x}_t\right)\right\|_2\right] 
	+ \sqrt{K}LU \left(\left(\frac{1}{2\sqrt{K}} + 2\sqrt{K}\right)(1+\log K) + \frac{1}{\sqrt{K}}\right) (\alpha_0^t)^2
	+ \frac{\sqrt{U}\sigma}{\sqrt{b}}2\sqrt{K} \alpha_0^t + \frac{L}{2}\mathbb{E}\left[\|\bm{d}_{t+1}\|^2\right] \\
& \overset{\cref{eq:descent-step-bound-square}}\le -\frac{\alpha_0^t\sqrt{K}}{2\rrr{V}}\mathbb{E}\left[\left\|\nabla f\left(\bm{x}_t\right)\right\|_2\right] 
	+ \sqrt{K}LU \underbrace{\left(\left(\frac{1}{2\sqrt{K}} + \frac{5}{2}\sqrt{K}\right)(1+\log K) + \frac{1}{\sqrt{K}} \right)}_{\defeq \hat{\Psi}} (\alpha_0^t)^2
	+ \frac{\sqrt{U}\sigma}{\sqrt{b}}2\sqrt{K} \alpha_0^t, 
\end{split}
\]
\rrr{where in the last step we have reused the bound in \Cref{lemma:bound-on-descent-step}.
Summing the above up for $t = 0,\cdots,T-1$ gives}
\[
\begin{split}
\sum_{t=0}^{T-1} \alpha_0^t \frac{\mathbb{E}\left[\left\|\nabla f\left(\bm{x}_t\right)\right\|_2\right]}{\rrr{V}} 
& \le 2\frac{f(\bm{x}_0) - f_*}{\sqrt{K}}
	+ 4\frac{\sqrt{U}\sigma}{\sqrt{b}} \sum_{t=0}^{T-1}\alpha_0^t 
	+ 2LU \hat{\Psi} \sum_{t=0}^{T-1}(\alpha_0^t)^2 \\
& \overset{(\ref{eq:bound-sum-0.5-series},\ref{eq:bound-sum-0.25-series})}\le 2\frac{f(\bm{x}_0) - f_*}{\sqrt{K}}
	+ \frac{16}{3}\frac{\sqrt{U}\sigma}{\sqrt{b}}\alpha T^{\frac{3}{4}}
	+ 4LU \hat{\Psi} \alpha^2 \sqrt{T} \\
& \overset{b \ge \sqrt{T}}\le 2\frac{f(\bm{x}_0) - f_*}{\sqrt{K}}
	+ \frac{16}{3}\sqrt{U}\sigma\alpha \sqrt{T}
	+ 4LU \hat{\Psi} \alpha^2 \sqrt{T}.
\end{split}
\]
The left-hand side is bounded from below as
\[
	\sum_{t=0}^{T-1} \alpha_0^t \frac{\mathbb{E}\left[\left\|\nabla f\left(\bm{x}_t\right)\right\|_2\right]}{\rrr{V}} 
	\rrr{\overset{(\ref{eq:bound-sum-0.25-series-2})}\ge} \alpha T^{\frac{3}{4}} \frac{1}{T}\sum_{t=0}^{T-1} \frac{\mathbb{E}\left[\left\|\nabla f\left(\bm{x}_t\right)\right\|_2\right]}{\rrr{V}}.
\]
We therefore have
\[
	\frac{1}{T}\sum_{t=0}^{T-1} \frac{\mathbb{E}\left[\left\|\nabla f\left(\bm{x}_t\right)\right\|_2\right]}{\rrr{V}}
	\le \frac{2}{T^{\frac{3}{4}}}\frac{f(\bm{x}_0) - f_*}{\alpha\sqrt{K}}
	+ \left(\frac{16}{3}\sigma + 4L\sqrt{U} \hat{\Psi} \alpha\right) \frac{\sqrt{U}}{T^{\frac{1}{4}}}.
\]

\rrr{The final step is to specify the value of $U$ which is an upper bound of $\mathbb{E}[\|\bm{u}_{i,k}^t\|^2]$. 
For any $\bm{u}_{i,k}^t$ drawn from $\mathcal{M}_l^G$, we know from \Cref{proposition:properties-of-mixture-Gaussian-sampling} that it has an identical covariance matrix and all its coordinates are independently distributed. It means \Cref{lemma:bound-on-variance} can be used here. In particular, since we are considering the setting $\|\cdot\| = \|\cdot\|_2$, we can choose $U=n$. Substituting the value of $V$ in \cref{eq:choice-of-V-in-mixture-Gaussian-distribution} into the above inequality completes the proof.}

\end{proof}

\begin{proof}[Proof of \Cref{theorem:convergence-DES-mixture-Rademacher-l2}]
The proof is almost identical to that of \Cref{theorem:convergence-DES-mixture-Gaussian-l2}, since by \Cref{proposition:properties-of-mixture-Gaussian-sampling,proposition:properties-of-mixture-Rademacher-sampling} the two mixture sampling schemes only differ in the fourth-order moment, which is used in bounding 
\[
	\rrr{\mathbb{E}\left[\left|\nabla f(\bm{x}_t)^T \bm{u}_{i,k}^t\right|\right]}
\]
in \cref{eq:sufficient-descent-tmp1}. 
Note that by \Cref{proposition:properties-of-mixture-Rademacher-sampling} the above can be lower bounded by
$\frac{\|\nabla f(\bm{x}_t)\|_2}{\sqrt{n/(\tilde{s}l)+3}}$. Therefore, we can simply replace \rrr{the value of $V$ in \cref{eq:choice-of-V-in-mixture-Gaussian-distribution} by
\[ V_t = \sqrt{3+n/(\tilde{s}l)} \]}
and we will get the final bound.

\end{proof}

\section{Auxiliary Lemmas}
\begin{lemma} \label{lemma:bound-on-variance}
Let $\|\cdot\|$ be a vector norm in $\mathbb{R}^n$.
\rrr{Let $\bm{u} \in \mathbb{R}^n$ be any random vector satisfying $\mathbb{E}[\bm{u}] = \bm{0}$ and $\mathbb{V}[\bm{u}] = \bm{I}$. Assume all coordinates of $\bm{u}$ are distributed independently.
Then, there exists a constant $U >0$ such that $\mathbb{E}\left[\|\bm{u}\|^2\right] \le U$.}
In particular, we can choose $U=n$ for $\|\cdot\| = \|\cdot\|_2$ and $U = 4 \log \left(\sqrt{2}n\right)$ for $\|\cdot\| = \|\cdot\|_\infty$.
\end{lemma}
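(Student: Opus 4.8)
The plan is to treat the three assertions separately, in increasing order of difficulty, and to note that independence is not actually needed for either displayed bound.

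\emph{The $\ell_2$ bound.} This is a one-line computation relying only on the covariance: since $\mathbb{E}[\bm u]=\bm 0$ and $\mathbb{V}[\bm u]=\bm I$, each coordinate satisfies $\mathbb{E}[u_i^2]=1$, so $\mathbb{E}[\|\bm u\|_2^2]=\sum_{i=1}^n\mathbb{E}[u_i^2]=\mathrm{tr}(\bm I)=n$, i.e. $U=n$ works. For an arbitrary norm $\|\cdot\|$ I would then invoke finite-dimensional norm equivalence: there is a finite constant $C=C(n,\|\cdot\|)$ with $\|\bm x\|\le C\|\bm x\|_2$ for all $\bm x\in\mathbb{R}^n$, whence $\mathbb{E}[\|\bm u\|^2]\le C^2\,\mathbb{E}[\|\bm u\|_2^2]=C^2 n=:U$, which settles the existence claim.

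\emph{The $\ell_\infty$ bound.} Here second-moment information is not enough and one must control the coordinate tails; I would use the Chernoff argument applied to the squares. In every situation where this constant is invoked (e.g. $\bm u\sim\mathcal N(\bm 0,\bm I)$ in the proof of \Cref{theorem:convergence-DES-l1}, and the Rademacher coordinates of \Cref{definition:mixture-Rademacher-distribution}) each $u_i$ is $1$-sub-Gaussian, so $\mathbb{E}[e^{\lambda u_i^2}]\le(1-2\lambda)^{-1/2}$ for $0\le\lambda<\tfrac12$ (with equality in the Gaussian case, and obtainable in general from $e^{\lambda x^2}=\mathbb{E}_{g\sim\mathcal N(0,1)}[e^{\sqrt{2\lambda}\,gx}]$ together with Fubini). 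Then, by Jensen's inequality and a union bound over the $n$ coordinates,
\[
\exp\!\bigl(\lambda\,\mathbb{E}[\|\bm u\|_\infty^2]\bigr)\le\mathbb{E}\bigl[\max_{1\le i\le n}e^{\lambda u_i^2}\bigr]\le\sum_{i=1}^n\mathbb{E}[e^{\lambda u_i^2}]\le\frac{n}{\sqrt{1-2\lambda}}.
\]
Taking logarithms, dividing by $\lambda$, and choosing $\lambda=\tfrac14$ (so $1-2\lambda=\tfrac12$) gives $\mathbb{E}[\|\bm u\|_\infty^2]\le 4\log(\sqrt2\,n)$, i.e. the claimed $U=4\log(\sqrt2 n)$; note the union-bound step again uses no independence.

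\emph{Main obstacle.} The $\ell_2$ part and the abstract existence part are pure bookkeeping; the real content is the $\ell_\infty$ bound, which cannot hold for an arbitrary distribution with independent unit-variance coordinates — a coordinate equal to $\sqrt n$ with probability $1/n$ has unit variance yet forces $\mathbb{E}[\|\bm u\|_\infty^2]=\Theta(n)$. So the clean constant $4\log(\sqrt2 n)$ tacitly relies on the $1$-sub-Gaussian tail present in every instantiation the paper uses, and the delicate steps are (i) identifying that hypothesis and (ii) tuning the Chernoff parameter to $\lambda=1/4$ so the bound collapses exactly to $4\log(\sqrt2 n)$; everything else is routine.
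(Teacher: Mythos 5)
Your proof follows essentially the same route as the paper's: the trace identity for the $\ell_2$ case, finite-dimensional norm equivalence for the abstract existence claim, and the Chernoff/union-bound argument with the MGF bound $\mathbb{E}[e^{t u_i^2}]\le(1-2t)^{-1/2}$ and the choice $t=1/4$ for the $\ell_\infty$ case. The one substantive difference is that you correctly flag what the paper glosses over: the stated hypotheses (mean zero, identity covariance, independent coordinates) do not suffice for the $\ell_\infty$ bound, and your counterexample with a coordinate taking the value $\pm\sqrt{n}$ with probability $1/n$ is valid. The paper's own proof silently assumes Gaussian coordinates — it evaluates $\mathbb{E}[e^{tu_1^2}]$ by integrating against the standard normal density — whereas you derive the same MGF bound from a $1$-sub-Gaussian hypothesis, which is both more honest about what is being used and slightly more general; since the paper only ever invokes the $\ell_\infty$ constant for $\bm{u}\sim\mathcal{N}(\bm{0},\bm{I})$, no downstream result is affected, but the lemma statement should really include the sub-Gaussian (or Gaussian) assumption. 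You are also right that independence is not needed for the union-bound step, despite the paper attributing the final equality to it (identical distribution is what is actually used there).
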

\begin{proof}
\rrr{First, by the identity covariance matrix assumption, we have $\mathbb{E}[\|\bm{u}\|_2^2] = \text{Tr}[\mathbb{E}[\bm{u}\bm{u}^T]]  = \text{Tr}\left[\mathbb{V}[\bm{u}]\right] = \text{Tr}[\bm{I}]= n$, where $\text{Tr}[\cdot]$ denotes the matrix trace. So this suggests the $\ell_2$ norm of $\bm{u}$ can be bounded by $U\defeq n$.}
Then, due to the equivalence of vector norm, we know such a bound exists for all norms. 
In the next, we study the case of $\ell_\infty$ norm.


Let $t \in (0,1/2)$ be a constant. By the convexity of $\|\cdot\|_\infty$, we have
\[
	\exp\left(t \mathbb{E}\left[\|\bm{u}\|_\infty^2\right]\right) 
	\le \mathbb{E}\left[\exp\left(t\left\|\bm{u}\right\|_\infty^2\right)\right]
	= \mathbb{E}\left[\exp\left(t \rrr{\max_{1 \le i \le n}} u_i^2 \right)\right]
	\le \sum_{i=1}^n \mathbb{E}\left[\exp\left(t u_i^2 \right)\right]
	= n \mathbb{E}\left[\exp\left(t u_1^2 \right)\right],
\]
where the last equation is due to that all elements in $\bm{u}$ are independently distributed.

The rightmost expectation can be calculated explicitly as
\[
	\mathbb{E}\left[\exp\left(t u_1^2\right)\right] 
	= \frac{1}{\sqrt{2\pi}}\int \exp\left(t u_1^2\right) \exp\left(-\frac{1}{2}u_1^2\right) \text{d} u_1
	= \frac{1}{\sqrt{2\pi}}\int \exp\left(-\frac{1-2t}{2}u_1^2\right) \text{d} u_1 = \frac{1}{\sqrt{1-2t}}.
\]
It follows that
\[
	\exp\left(t \mathbb{E}\left[\|\bm{u}\|_\infty^2\right]\right) 
	\le \frac{n}{\sqrt{1-2t}}
\]
and therefore
\[
	\mathbb{E}\left[\|\bm{u}\|_\infty^2\right] \le \frac{1}{t}\log\frac{n}{\sqrt{1-2t}}.
\]
Note that this inequality holds for any $t \in (0,1/2)$. So we can choose $t=1/4$ and then the desired bound $\mathbb{E}\left[\|\bm{u}\|^2\right] \le 4 \log \left(\sqrt{2}n\right)$ follows.
\end{proof}

\begin{lemma} \label{lemma:bound-partial-sum-p-series}
For $J \in \mathbb{Z}_+$ we have the following properties for the partial sum of $p$-series with $p=1,0.5,$ or $0.25$:
\begin{align}
	& \sum_{j=1}^J \frac{1}{j} \le 1 + \log J \label{eq:bound-sum-1-series}\\
	& \rrr{\sqrt{J} \le} \sum_{j=1}^J \frac{1}{j^{0.5}} \le 2\sqrt{J} \label{eq:bound-sum-0.5-series}\\
	& \sum_{j=1}^J \frac{a_j}{ j^{0.5}} \ge  \sqrt{J} \left(\frac{1}{J} \sum_{j=1}^J a_j\right), \forall a_j \ge 0 \label{eq:bound-sum-0.5-series-2} \\
	& \sum_{j=1}^J \frac{1}{j^{0.25}} \le  \frac{4}{3}J^{3/4} \label{eq:bound-sum-0.25-series} \\
	& \sum_{j=1}^J \frac{a_j}{ j^{0.25}} \ge  J^{3/4} \left(\frac{1}{J} \sum_{j=1}^J a_j\right), \forall a_j \ge 0 \label{eq:bound-sum-0.25-series-2}
\end{align}
\end{lemma}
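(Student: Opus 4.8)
The plan is to prove the five inequalities one at a time; all of them follow from elementary monotonicity arguments and rely on nothing from earlier in the paper. The two ingredients are (i) comparison of a monotone sum with an integral, and (ii) the trivial termwise bound $j^{-p}\ge J^{-p}$ valid for $1\le j\le J$.

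First I would handle the upper bounds \eqref{eq:bound-sum-1-series}, the right half of \eqref{eq:bound-sum-0.5-series}, and \eqref{eq:bound-sum-0.25-series}. Since $x\mapsto x^{-p}$ is decreasing on $(0,\infty)$, for each integer $j$ one has $j^{-p}\le\int_{j-1}^{j}x^{-p}\,dx$. Applying this for $j\ge 2$ and adding the $j=1$ term separately gives $\sum_{j=1}^J j^{-1}\le 1+\int_1^J x^{-1}\,dx=1+\log J$ and $\sum_{j=1}^J j^{-1/2}\le 1+\int_1^J x^{-1/2}\,dx=2\sqrt{J}-1\le 2\sqrt{J}$. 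For $p=1/4$ the improper integral converges at $0$, so one may even start the comparison from $j=1$ and obtain $\sum_{j=1}^J j^{-1/4}\le\int_0^J x^{-1/4}\,dx=\tfrac{4}{3}J^{3/4}$.

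Next I would dispatch the lower bounds. For the left half of \eqref{eq:bound-sum-0.5-series}, every term satisfies $j^{-1/2}\ge J^{-1/2}$, so the sum is at least $J\cdot J^{-1/2}=\sqrt{J}$. The two weighted estimates \eqref{eq:bound-sum-0.5-series-2} and \eqref{eq:bound-sum-0.25-series-2} are the same observation in disguise: for nonnegative $a_j$ and any index $j\le J$ one has $a_j j^{-p}\ge a_j J^{-p}$, hence $\sum_{j=1}^J a_j j^{-1/2}\ge J^{-1/2}\sum_{j=1}^J a_j=\sqrt{J}\,\big(\tfrac1J\sum_{j=1}^J a_j\big)$ and, with the exponent $1/4$, $\sum_{j=1}^J a_j j^{-1/4}\ge J^{-1/4}\sum_{j=1}^J a_j=J^{3/4}\,\big(\tfrac1J\sum_{j=1}^J a_j\big)$.

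There is no real obstacle; the only points requiring care are bookkeeping items — whether the integral comparison should start at $0$ or at $1$ (this governs the additive $1$ in \eqref{eq:bound-sum-1-series} and the convergence of the improper integral in \eqref{eq:bound-sum-0.25-series}), and checking that the slack in $2\sqrt{J}-1\le 2\sqrt{J}$ is harmless for the form of \eqref{eq:bound-sum-0.5-series} that is invoked elsewhere.
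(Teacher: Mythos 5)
Your proof is correct and is essentially the argument the paper delegates: the paper simply declares \cref{eq:bound-sum-0.5-series-2,eq:bound-sum-0.25-series-2} trivial (your termwise bound $j^{-p}\ge J^{-p}$) and cites \cite[Section 4.1]{chlebus_approximate_2009} for the integral-comparison bounds \cref{eq:bound-sum-1-series,eq:bound-sum-0.5-series,eq:bound-sum-0.25-series}, which you have written out correctly. No gaps; all five inequalities check out.
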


\begin{proof}
\Cref{eq:bound-sum-0.25-series-2,eq:bound-sum-0.5-series-2} are trivial.
For \cref{eq:bound-sum-1-series,eq:bound-sum-0.5-series,eq:bound-sum-0.25-series} see \cite[Section 4.1]{chlebus_approximate_2009}.
\end{proof}

\begin{lemma}
For $\beta < \sqrt{\frac{1}{2\sqrt{2}}}$  , we have the following bounds
\begin{align}
	\label{eq:beta-series-bound-1}\sum_{j=1}^t \frac{\beta^{t-j}}{j^{0.25}} 
		& = \frac{\beta^{t-1}}{1^{0.25}} + \cdots + \frac{\beta^0}{t^{0.25}} 
		\le \frac{20}{t^{0.25}}, \\
	\label{eq:beta-series-bound-2}\sum_{j=1}^t \frac{\left(2\beta^2\right)^{t-j}}{\sqrt{j}} 
		& = \frac{\left(2\beta^2\right)^{t-1}}{\sqrt{1}} + \cdots + \frac{\left(2\beta^2\right)^0}{\sqrt{t}} 
		\le \frac{1}{\sqrt{t}\left(1-2\sqrt{2}\beta^2\right)}
\end{align}
\end{lemma}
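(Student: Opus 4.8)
The plan is to handle both inequalities at once by pulling out the governing power of $t$ and reducing everything to one elementary pointwise bound. First I would reindex each sum with $m=t-j$, which gives, for any ratio $\rho\in(0,1)$ and exponent $p>0$,
\[
\sum_{j=1}^{t}\frac{\rho^{\,t-j}}{j^{p}}=\frac{1}{t^{p}}\sum_{m=0}^{t-1}\rho^{\,m}\Bigl(\frac{t}{t-m}\Bigr)^{p};
\]
the case $\rho=\beta,\ p=\tfrac14$ is \cref{eq:beta-series-bound-1} and $\rho=2\beta^{2},\ p=\tfrac12$ is \cref{eq:beta-series-bound-2}. So the task becomes bounding $\sum_{m=0}^{t-1}\rho^{m}\bigl(t/(t-m)\bigr)^{p}$ by a constant independent of $t$.

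The central step is the elementary claim that $t/(t-m)\le 2^{m}$ for all integers $t\ge1$ and $0\le m\le t-1$. For $m=0$ this is an equality; for $m\ge1$ (which forces $t\ge m+1$) it is equivalent to $(2^{m}-1)t\ge 2^{m}m$, whose left-hand side is nondecreasing in $t$, so it suffices to check $t=m+1$, where it reduces to $2^{m}\ge m+1$ — immediate from the binomial expansion $2^{m}=(1+1)^{m}\ge 1+m$. Raising to the power $p$ gives $(t/(t-m))^{p}\le 2^{pm}$, and hence
\[
\sum_{m=0}^{t-1}\rho^{m}\Bigl(\frac{t}{t-m}\Bigr)^{p}\le\sum_{m=0}^{\infty}(2^{p}\rho)^{m}=\frac{1}{1-2^{p}\rho},
\]
which is finite precisely when $2^{p}\rho<1$.

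It then remains to verify the two instantiations of $2^{p}\rho<1$ using the hypothesis $\beta<\sqrt{1/(2\sqrt2)}=2^{-3/4}$. For \cref{eq:beta-series-bound-2}, $2^{1/2}\cdot 2\beta^{2}=2\sqrt2\,\beta^{2}<2^{3/2}\cdot2^{-3/2}=1$, and the displayed geometric bound is exactly $1/(1-2\sqrt2\beta^{2})$; multiplying back the factor $1/\sqrt t$ gives the stated inequality verbatim. For \cref{eq:beta-series-bound-1}, $2^{1/4}\beta<2^{1/4}\cdot2^{-3/4}=2^{-1/2}<1$, so $\sum_{m}(2^{1/4}\beta)^{m}\le 1/(1-2^{-1/2})=2+\sqrt2<20$; multiplying back $t^{-1/4}$ yields the claimed bound with a generous constant.

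The only mildly delicate point is the pointwise inequality $t/(t-m)\le 2^{m}$ together with its boundary cases ($m=0$, where it is tight, and $t=1$, where each sum collapses to a single term); everything else is routine geometric-series bookkeeping, and the constant $20$ is far from optimal — the argument in fact delivers $2+\sqrt2$.
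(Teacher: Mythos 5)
Your proof is correct, and it takes a genuinely different route from the paper's. The paper proves both bounds by induction on $t$: writing $R_t=\sum_{j=1}^t \beta^{t-j}j^{-1/4}$ and $P_t=\sum_{j=1}^t (2\beta^2)^{t-j}j^{-1/2}$, it uses the recursions $R_{t+1}=\beta R_t+(t+1)^{-1/4}$ and $P_{t+1}=2\beta^2 P_t+(t+1)^{-1/2}$ and closes each induction step with a numerical estimate of $(1+1/t)^{1/4}$ (checking $20\beta\cdot 1.5^{1/4}+1\le 20$) resp.\ an algebraic manipulation of $\delta=\tfrac{1}{2\sqrt2}-\beta^2$. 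You instead reindex with $m=t-j$, prove the uniform pointwise bound $t/(t-m)\le 2^m$ (correctly reduced to $2^m\ge m+1$ at the extremal case $t=m+1$), and collapse both sums into a single geometric series $\sum_m (2^p\rho)^m$ with $2^p\rho<1$ guaranteed by $\beta<2^{-3/4}$. Your argument is more uniform (one lemma covers any exponent $p$ and ratio $\rho$ with $2^p\rho<1$), avoids the paper's numerical verification and the slightly awkward base-case bookkeeping of the induction, reproduces the second bound with exactly the same constant $1/(1-2\sqrt2\beta^2)$, and sharpens the first constant from $20$ to $2+\sqrt2$. What the paper's induction buys in exchange is only that it stays entirely within the recursive structure already used elsewhere in the momentum analysis; there is no gap in your version.
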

\begin{proof}
Both bounds hold trivially for $t=1$, so we prove them with induction.

First let $R_t = \sum_{j=1}^t \frac{\beta^{t-j}}{j^{0.25}}$.  Then $R_{t+1}$ can be expressed as
\begin{equation*}
	\begin{split}
	R_{t+1} & = \frac{\beta^t}{1^{0.25}} + \cdots + \frac{\beta^1}{t^{0.25}} + \frac{\beta^0}{(t+1)^{0.25}} \\
	& = \beta R_t + \frac{1}{(t+1)^{0.25}} \\
	& \le \frac{20\beta}{t^{0.25}} + \frac{1}{(t+1)^{0.25}} \\
	& = \frac{20\beta \left(1+\frac{1}{t}\right)^{0.25} + 1}{(t+1)^{0.25}}
	\end{split}
\end{equation*}
For $t \ge 2$ and $\beta < \sqrt{\frac{1}{2\sqrt{2}}}$, we get
\[
	R_{t+1} \le \frac{20\beta \times 1.5^{0.25} + 1}{(t+1)^{0.25}} \approx \frac{22.13 \times \beta + 1}{(t+1)^{0.25}}
	\lessapprox \frac{14.16}{(t+1)^{0.25}} \le \frac{20}{(t+1)^{0.25}}.
\]

To prove the second bound, we define $P_t = \sum_{j=1}^t \frac{\left(2\beta^2\right)^{t-j}}{\sqrt{j}}$ and $\delta = \frac{1}{2\sqrt{2}} - \beta^2$. The right-hand side of \cref{eq:beta-series-bound-2} then becomes $\frac{1}{2\sqrt{2t}\delta}$.
\[
\begin{split}
	P_{t+1} & = \frac{\left(2\beta^2\right)^t}{\sqrt{1}} + \cdots + \frac{\left(2\beta^2\right)^0}{\sqrt{t+1}} \\
	& = 2\beta^2 P_t + \frac{1}{\sqrt{t+1}} \\
	& \le \frac{\beta^2}{\sqrt{2t}\delta} + \frac{1}{\sqrt{t+1}} \\
	& = \frac{\frac{1}{2\sqrt{2}}-\delta}{\sqrt{2t}\delta} + \frac{1}{\sqrt{t+1}} \\
	& = \frac{1}{\sqrt{t+1}}\left(\sqrt{1+\frac{1}{t}}\frac{\frac{1}{2\sqrt{2}}-\delta}{\sqrt{2}\delta} +1\right)
\end{split}
\]
For $t \ge 1$, we have 
\[
	P_{t+1} \le \frac{1}{\sqrt{t+1}}\left(\frac{\frac{1}{2\sqrt{2}}-\delta}{\delta} +1\right) 
	= \frac{1}{\sqrt{t+1}} \frac{1}{2\sqrt{2}\delta}.
\]
Substituting $\delta$ then gives the bound \cref{eq:beta-series-bound-2}.
\end{proof}

\section{Additional Experimental Results}
\Cref{fig:testing-curve-part-one,fig:testing-curve-part-two} report correspondingly the generalization performance of all considered algorithms. In most cases, the test error of the DES methods decreases monotonically, suggesting that overfitting is not occurring. The generalization performance of the DES methods are clearly very good, being consistent with their training performance.

\begin{figure*}[htb]
\centering
\subfloat{\includegraphics[width=0.6\textwidth]{figs/legend_curve}} \\[-2ex]
\addtocounter{subfigure}{-1}
\subfloat[LR, rcv1]{\includegraphics[width=0.33\textwidth]{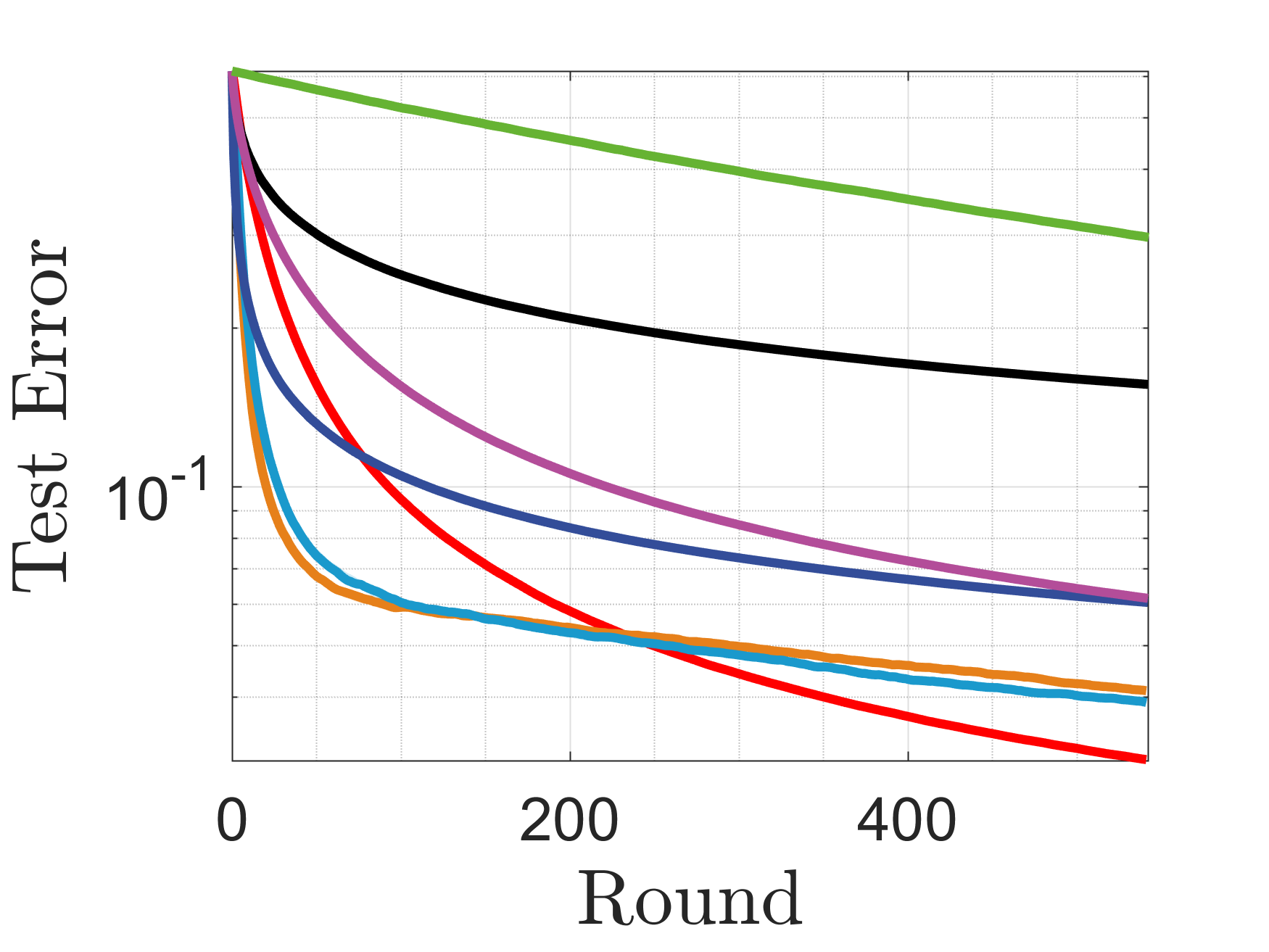}}
\subfloat[NSVM, rcv1]{\includegraphics[width=0.33\textwidth]{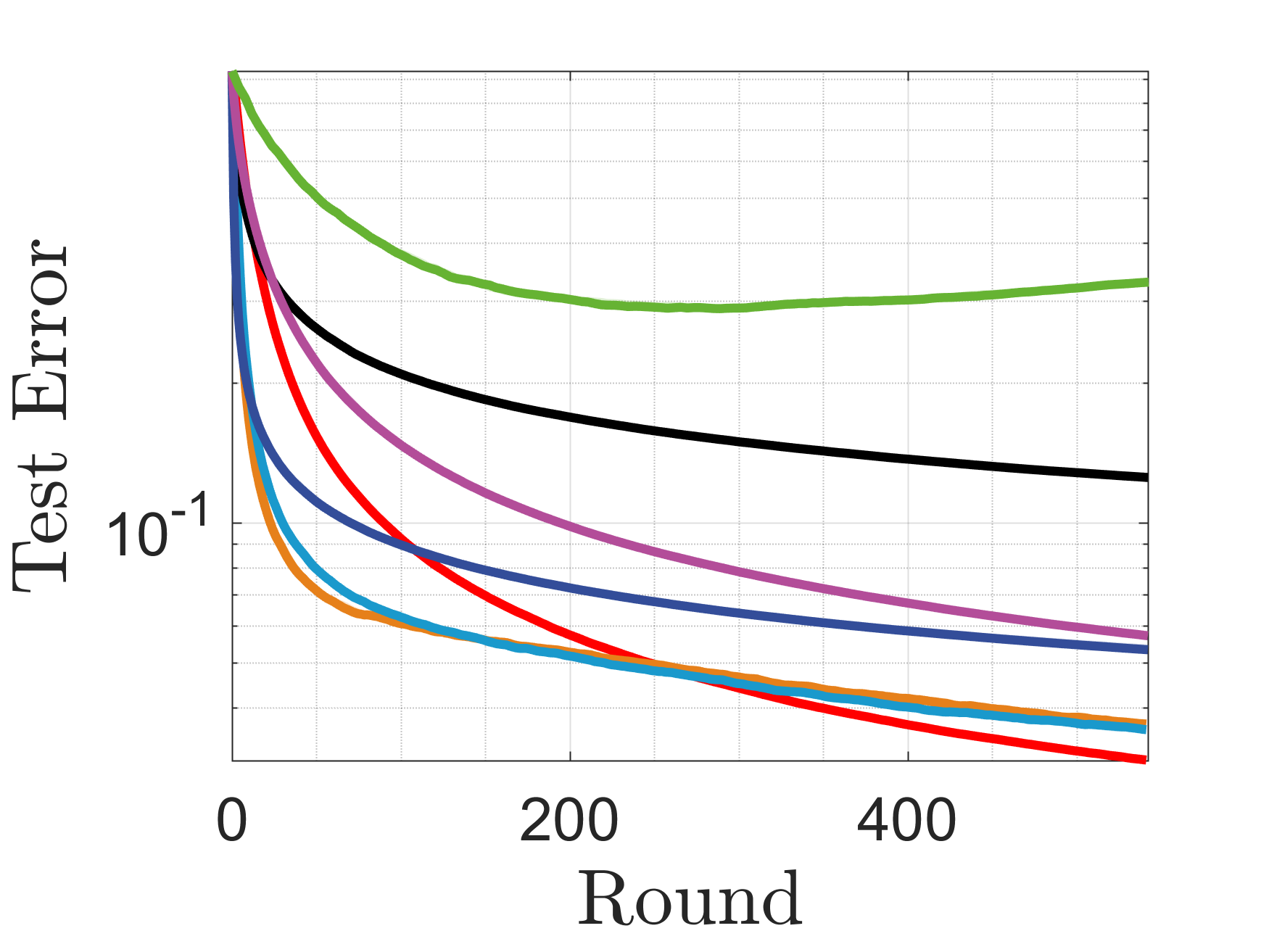}}
\subfloat[LSVM, rcv1]{\includegraphics[width=0.33\textwidth]{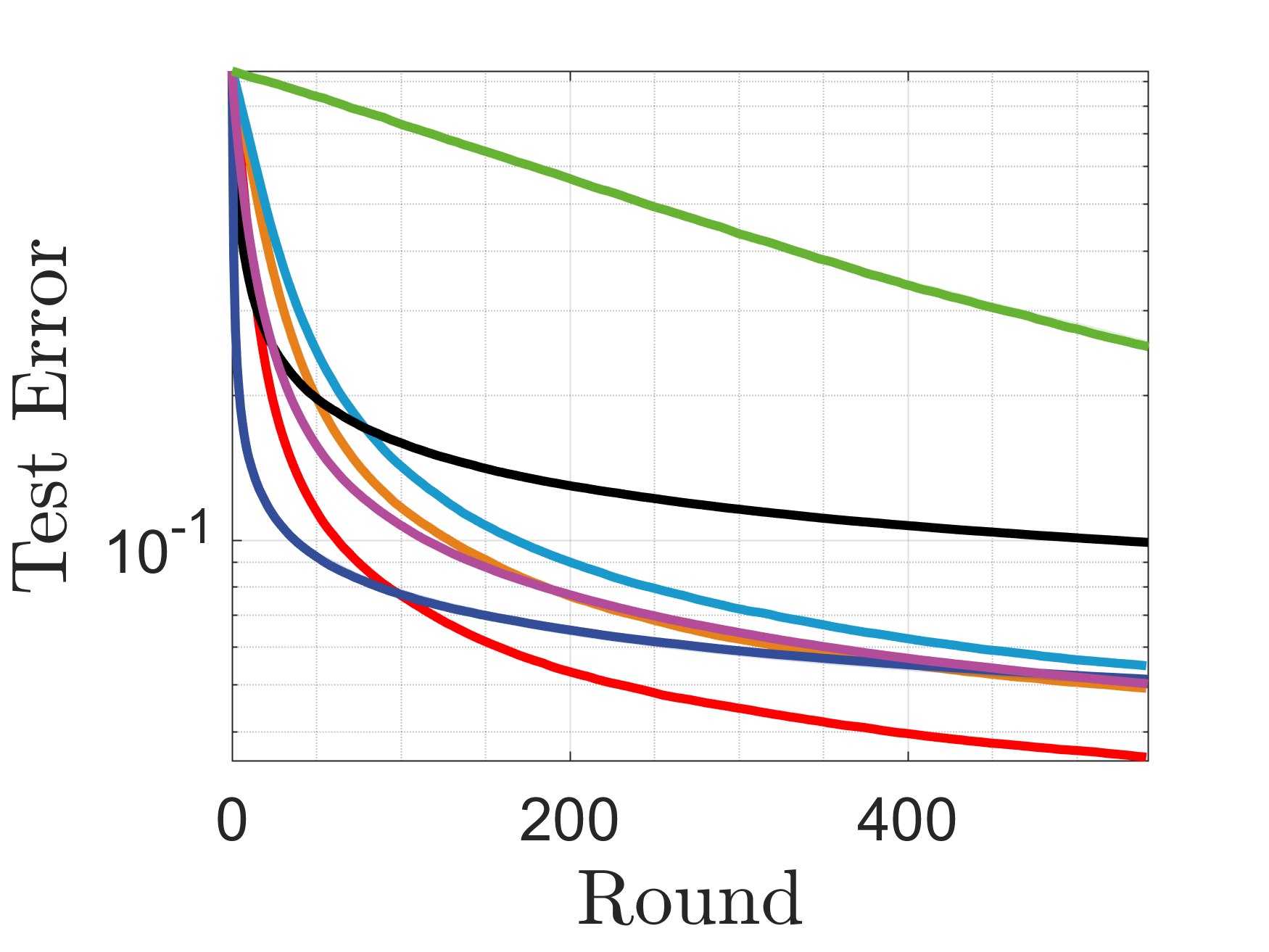}}
\hfil
\subfloat[LR, SUSY]{\includegraphics[width=0.33\textwidth]{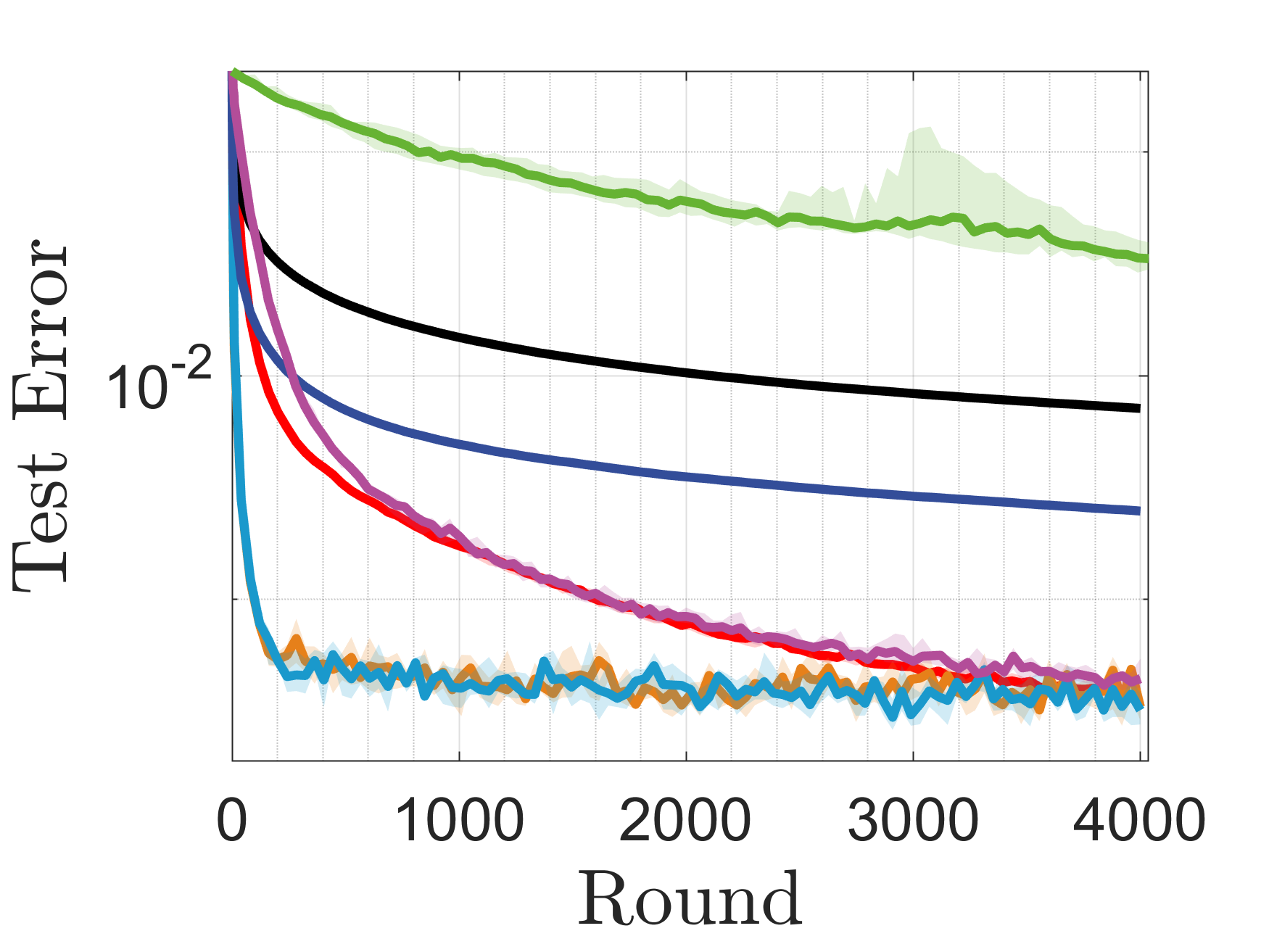}}
\subfloat[NSVM, SUSY]{\includegraphics[width=0.33\textwidth]{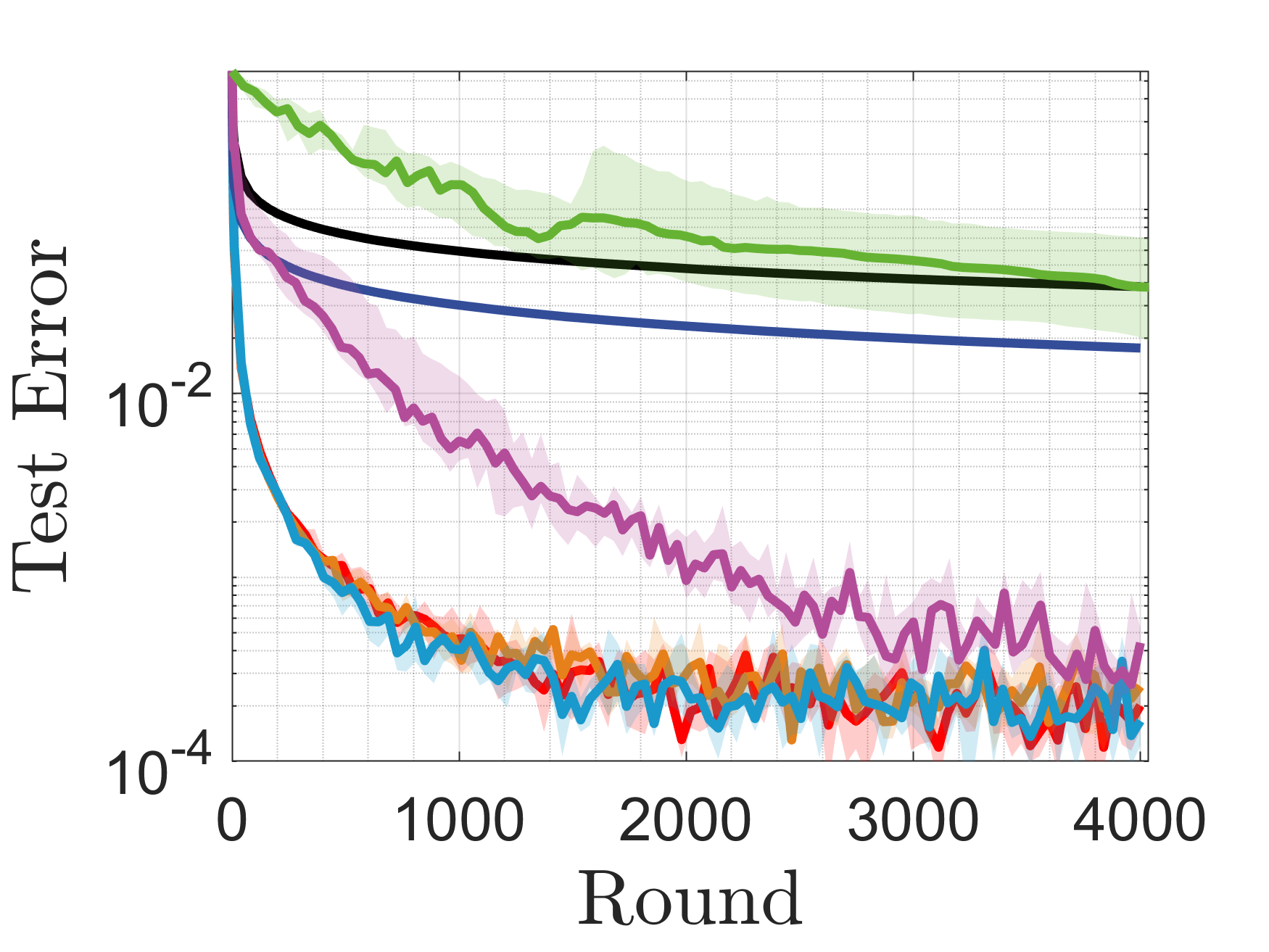}}
\subfloat[LSVM, SUSY]{\includegraphics[width=0.33\textwidth]{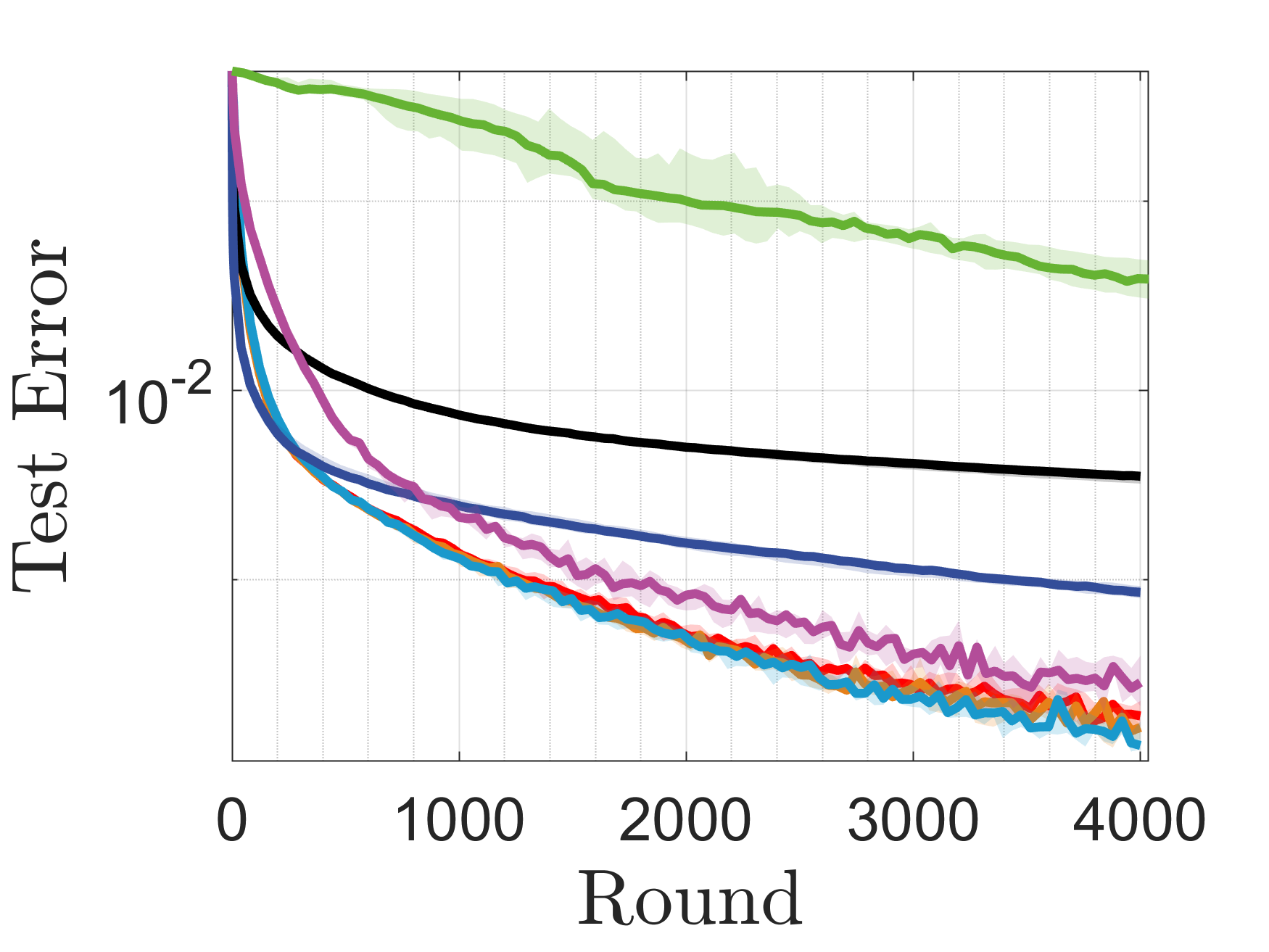}}
\hfil
\subfloat[LR, mnist]{\includegraphics[width=0.33\textwidth]{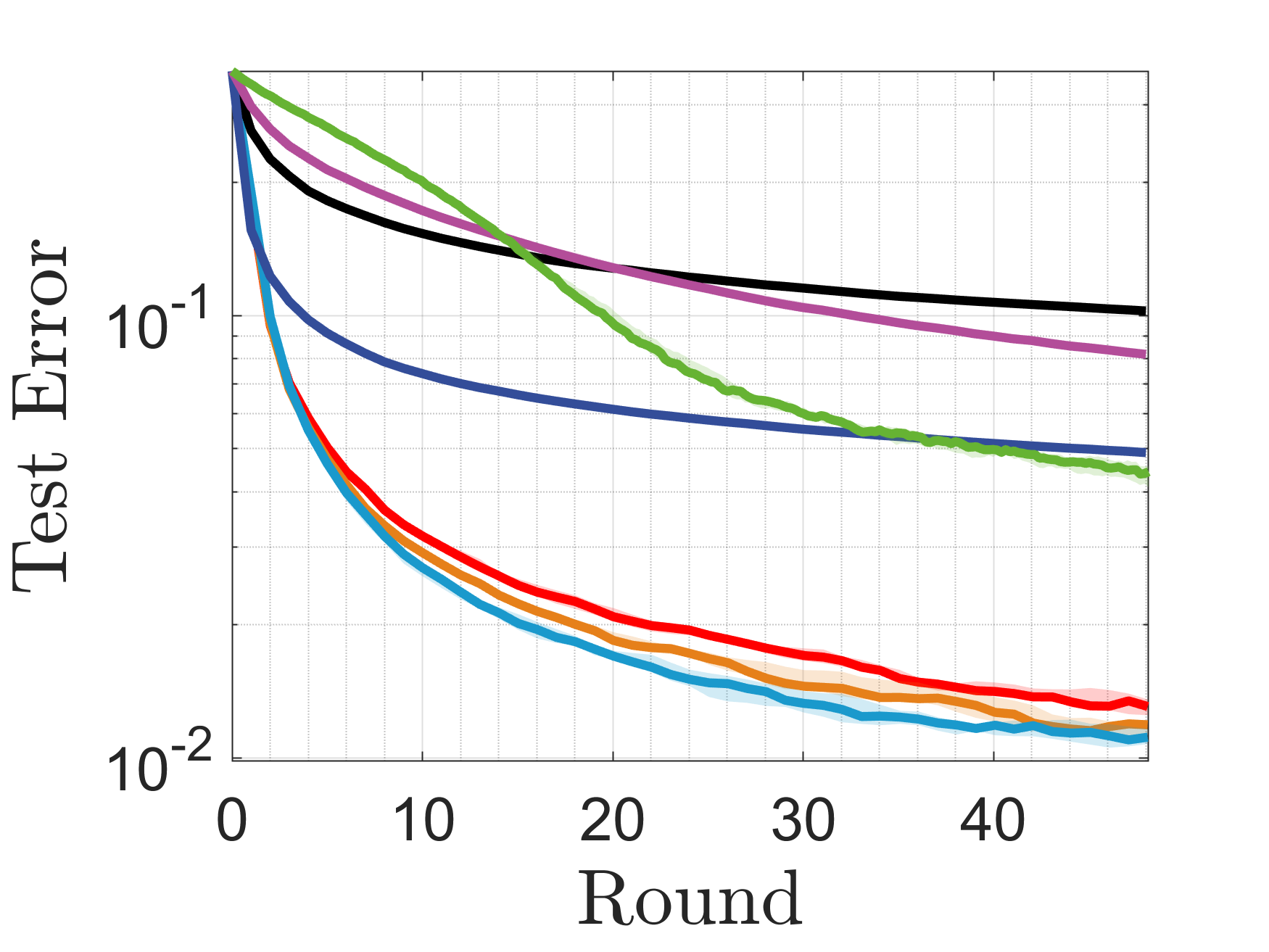}}
\subfloat[NSVM, mnist]{\includegraphics[width=0.33\textwidth]{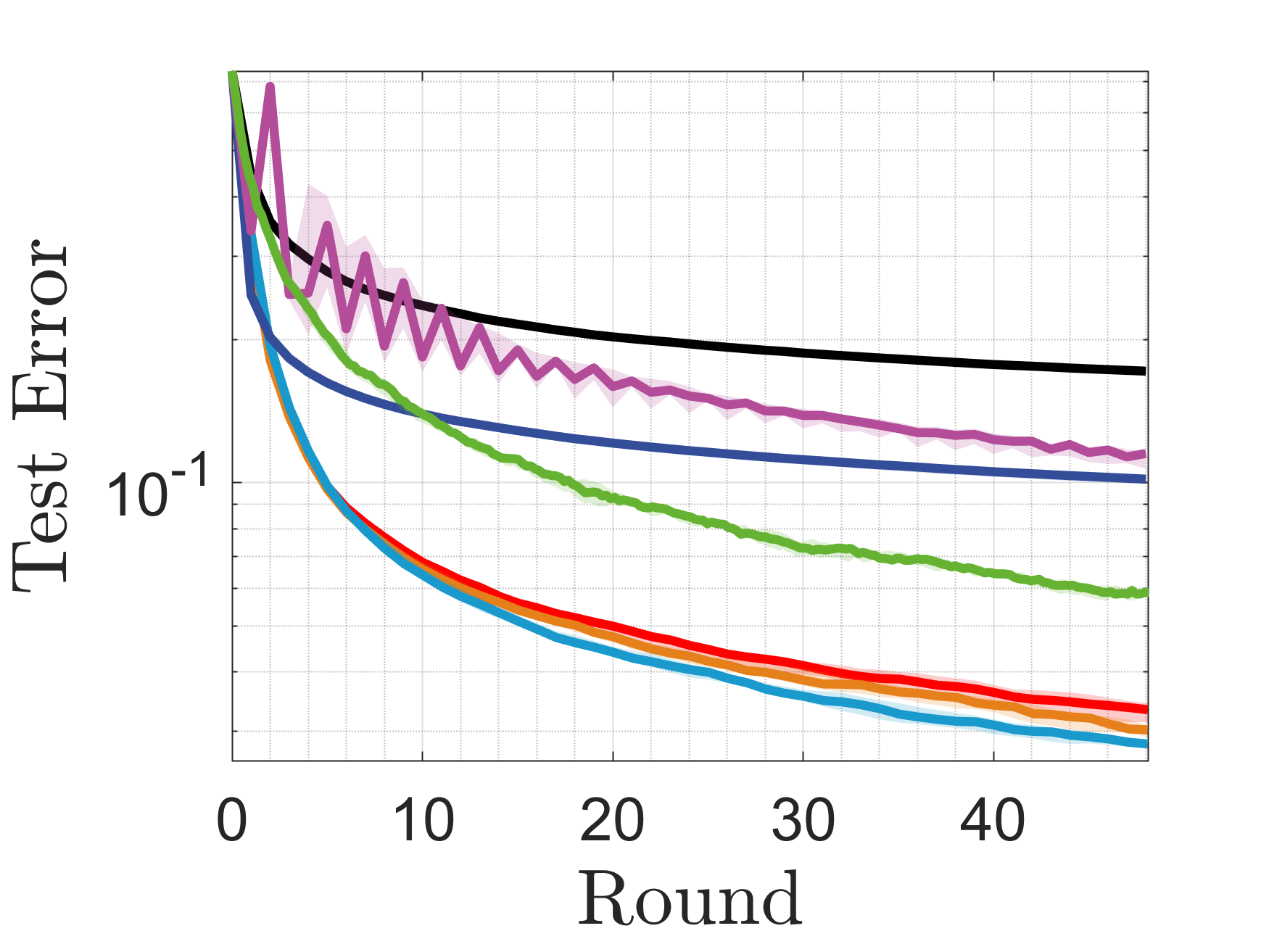}}
\subfloat[LSVM, mnist]{\includegraphics[width=0.33\textwidth]{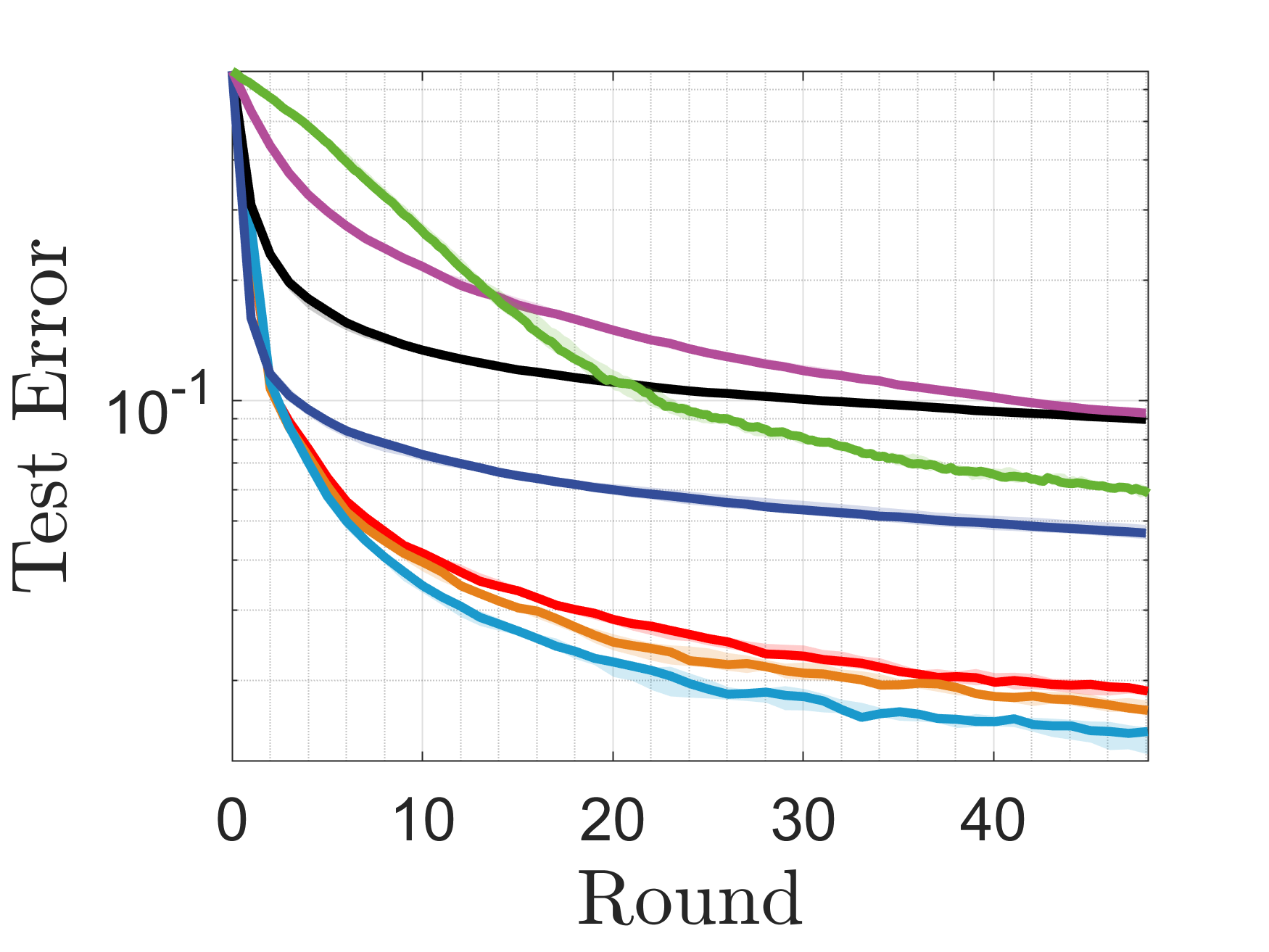}}
\caption{Generalization performance on rcv1, SUSY, and mnist datasets. The curve displays the test error versus the number of rounds and the corresponding shaded area extends from the 25th to 75th percentiles over the results obtained from all independent runs.}
\label{fig:testing-curve-part-one}
\end{figure*}

\begin{figure*}[htb]
\centering
\subfloat{\includegraphics[width=0.6\textwidth]{figs/legend_curve}} \\[-2ex]
\addtocounter{subfigure}{-1}
\subfloat[LR, real-sim]{\includegraphics[width=0.33\textwidth]{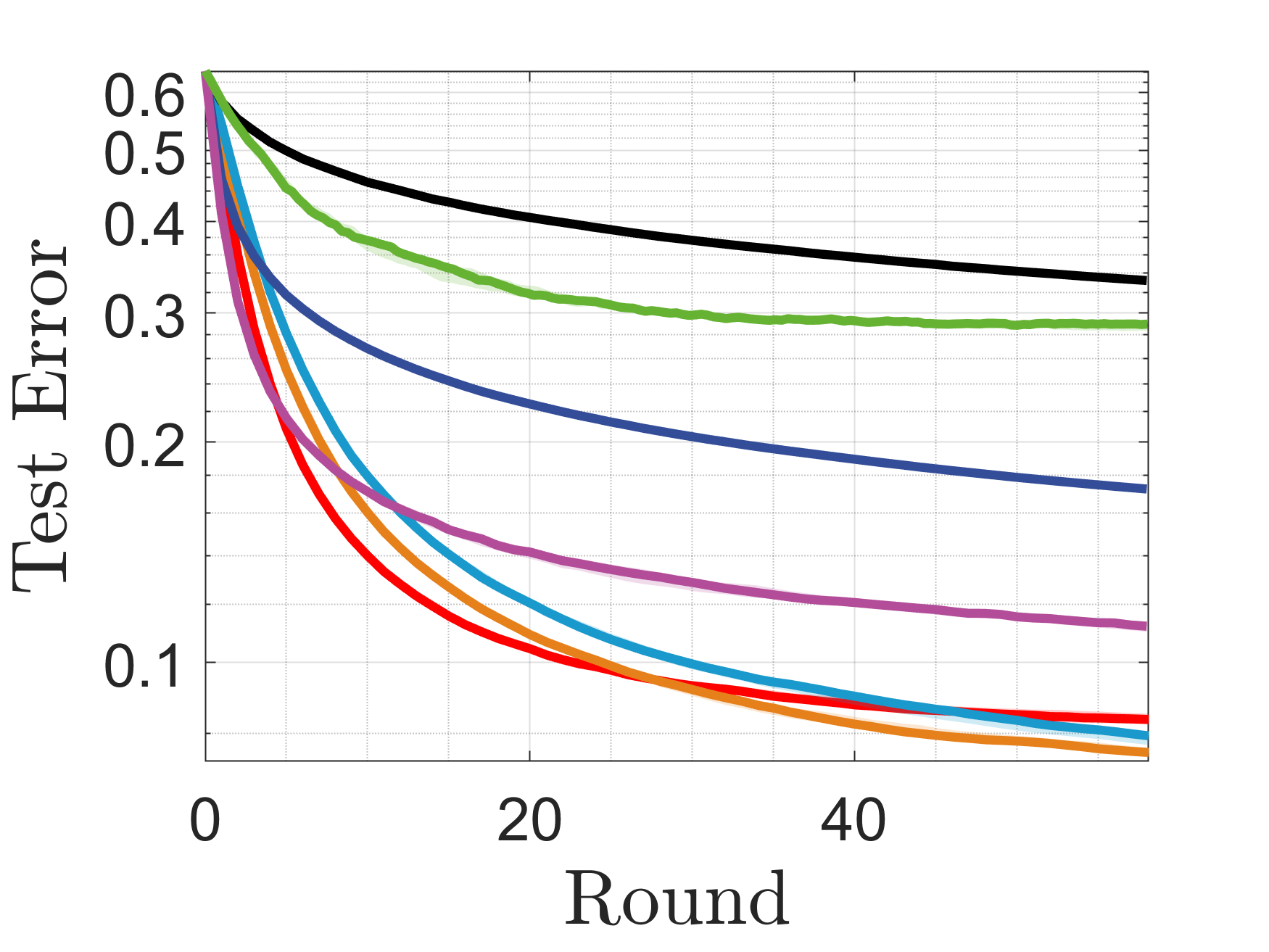}}
\subfloat[NSVM, real-sim]{\includegraphics[width=0.33\textwidth]{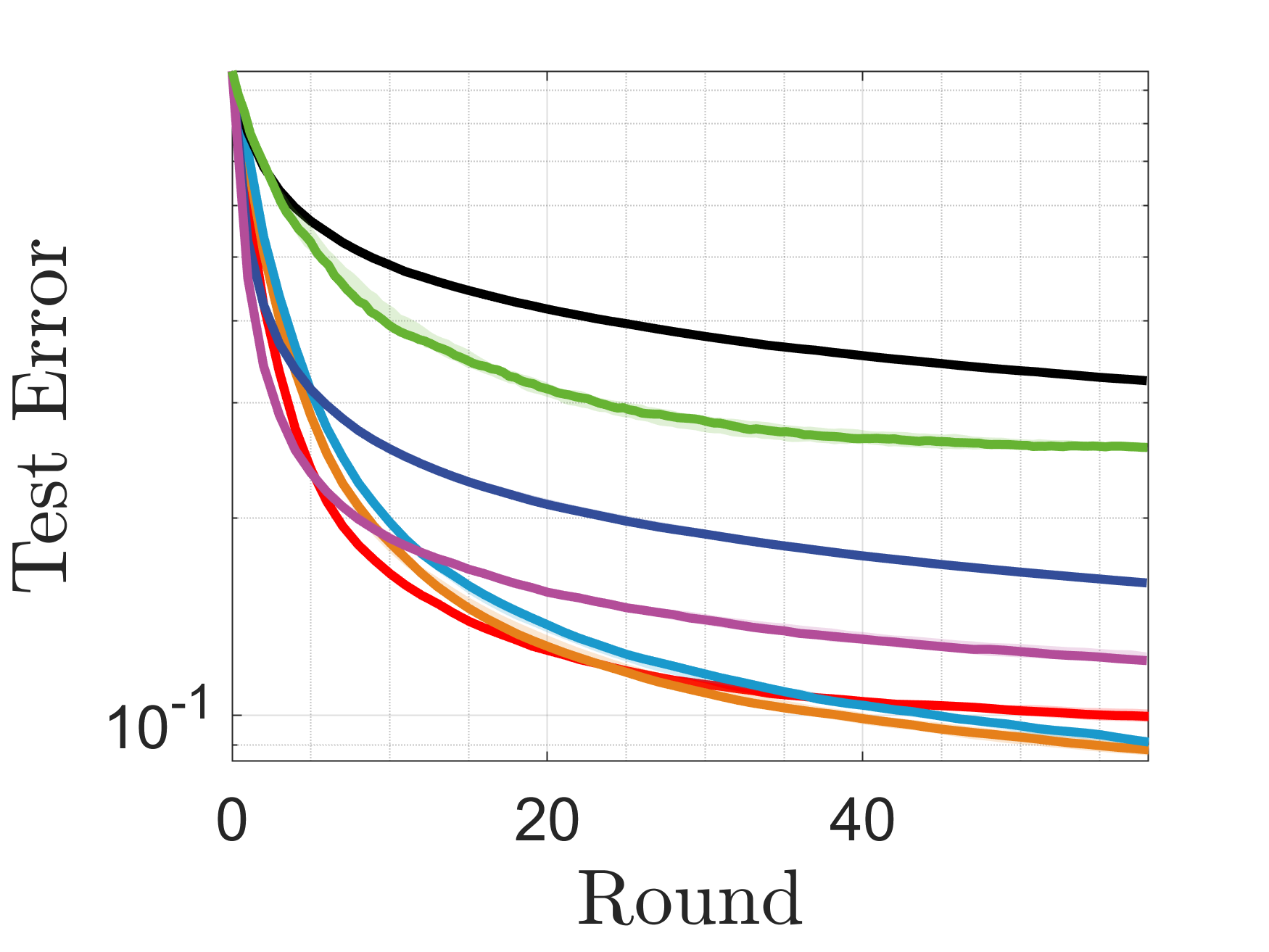}}
\subfloat[LSVM, real-sim]{\includegraphics[width=0.33\textwidth]{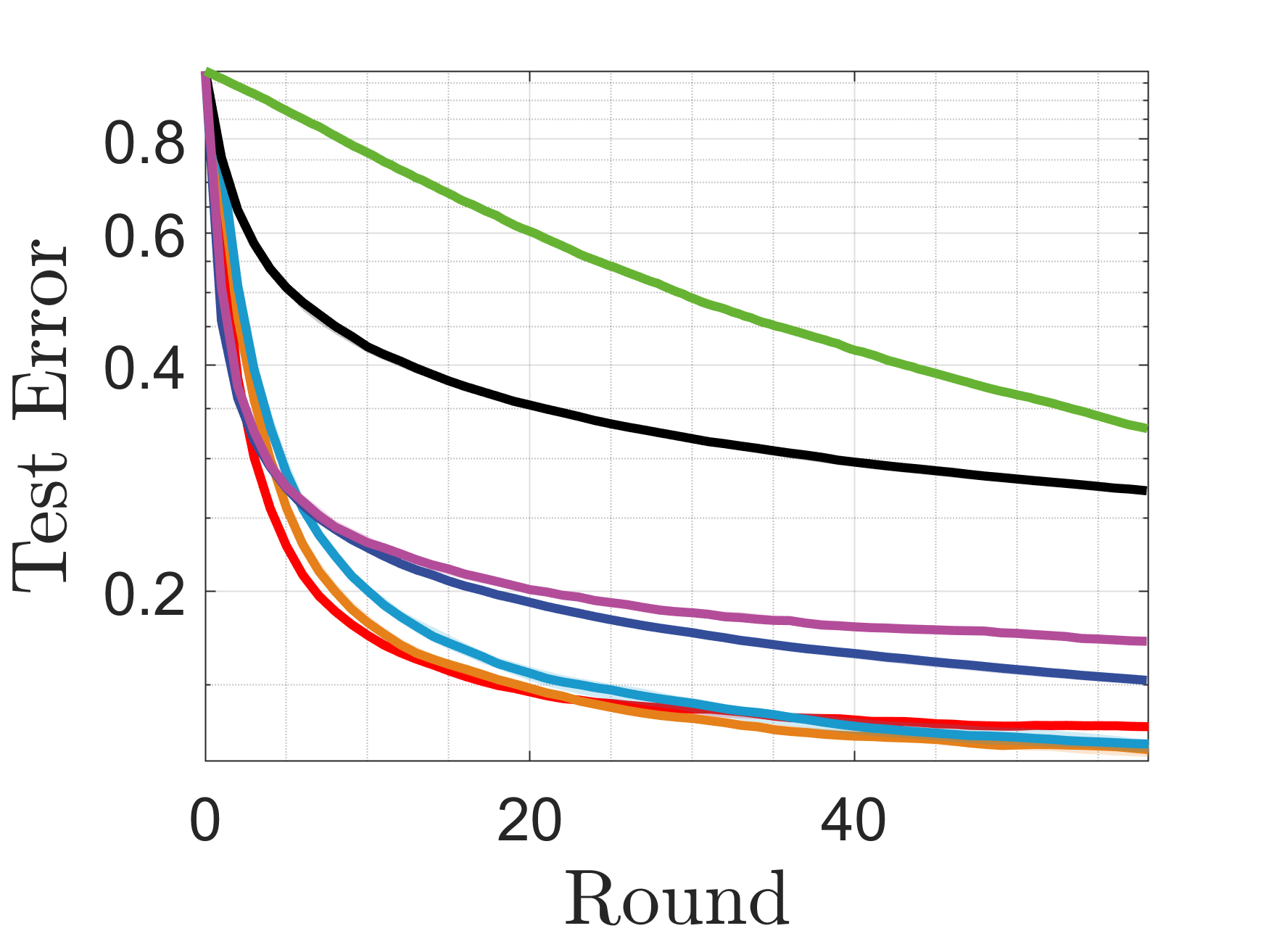}}
\hfil
\subfloat[LR, ijcnn1]{\includegraphics[width=0.33\textwidth]{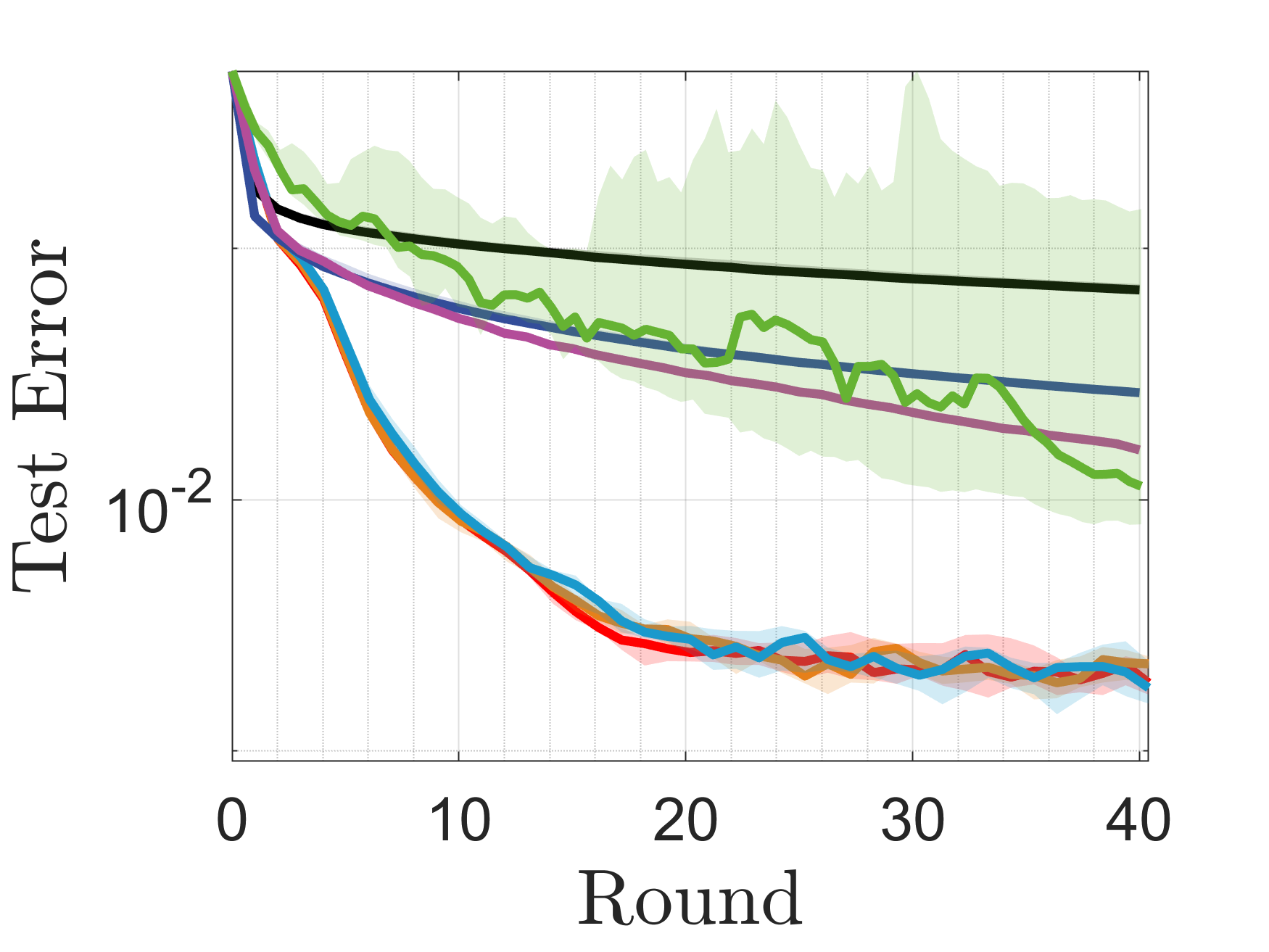}}
\subfloat[NSVM, ijcnn1]{\includegraphics[width=0.33\textwidth]{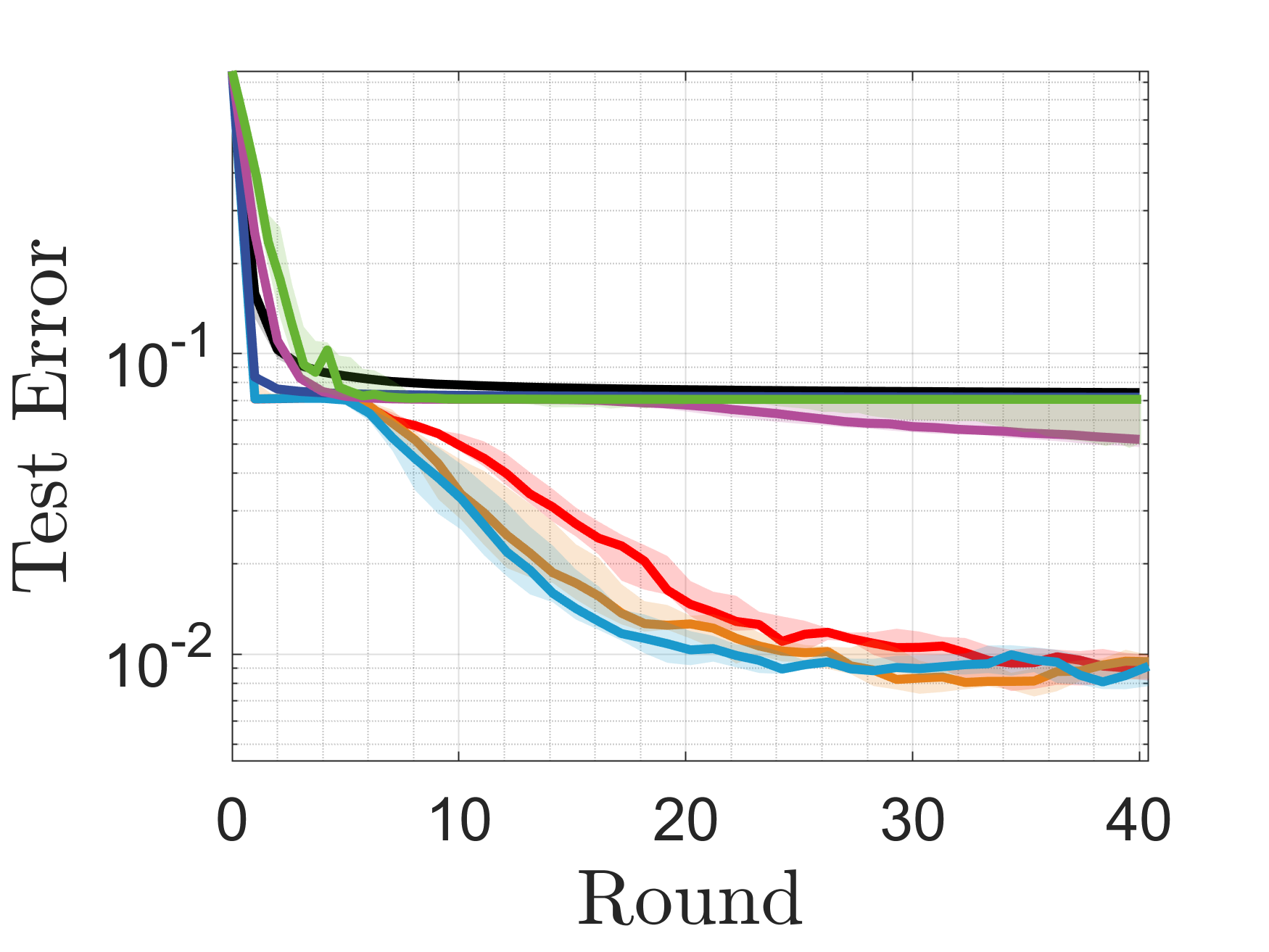}}
\subfloat[LSVM, ijcnn1]{\includegraphics[width=0.33\textwidth]{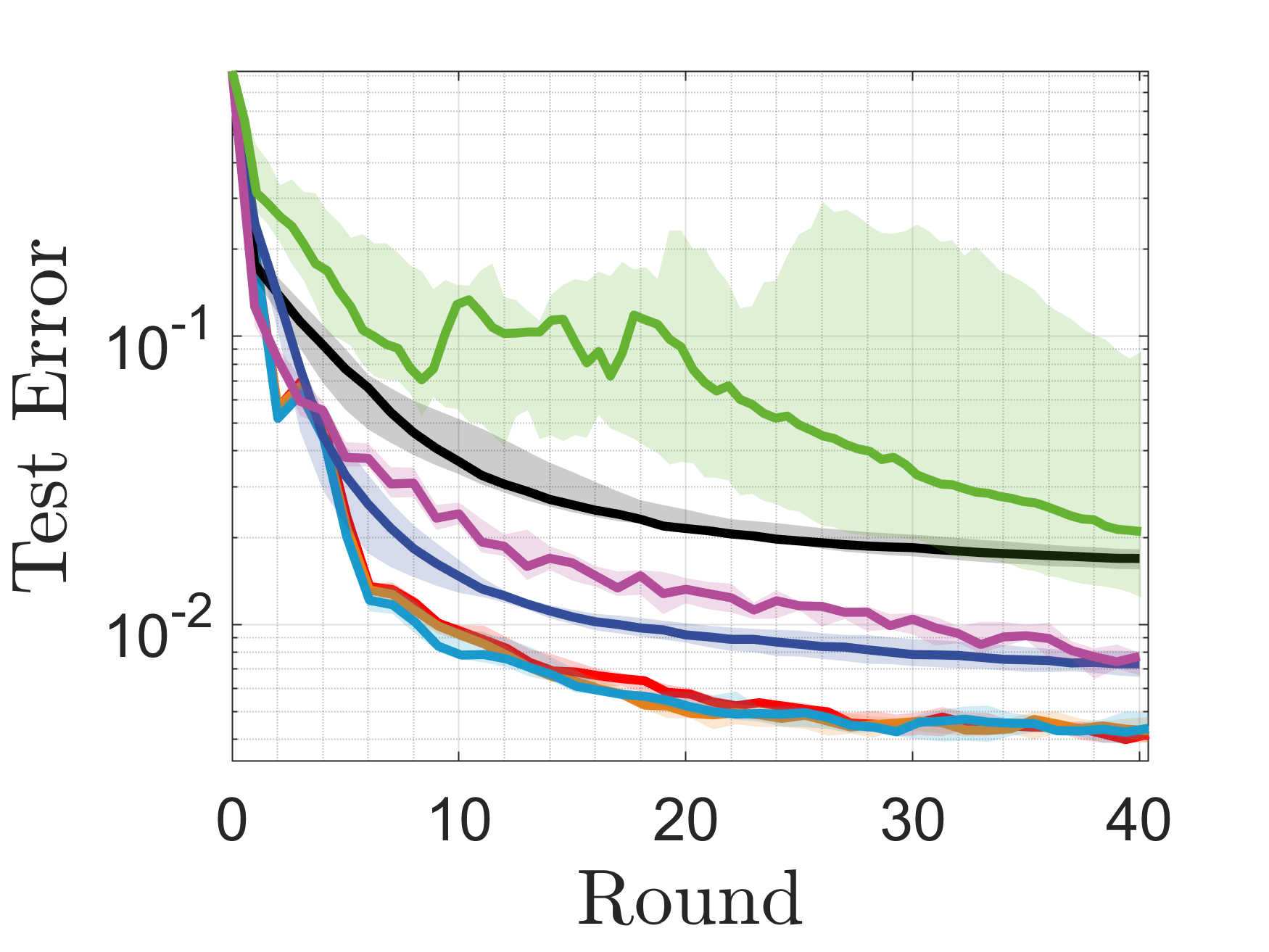}}
\hfil
\subfloat[LR, covtype]{\includegraphics[width=0.33\textwidth]{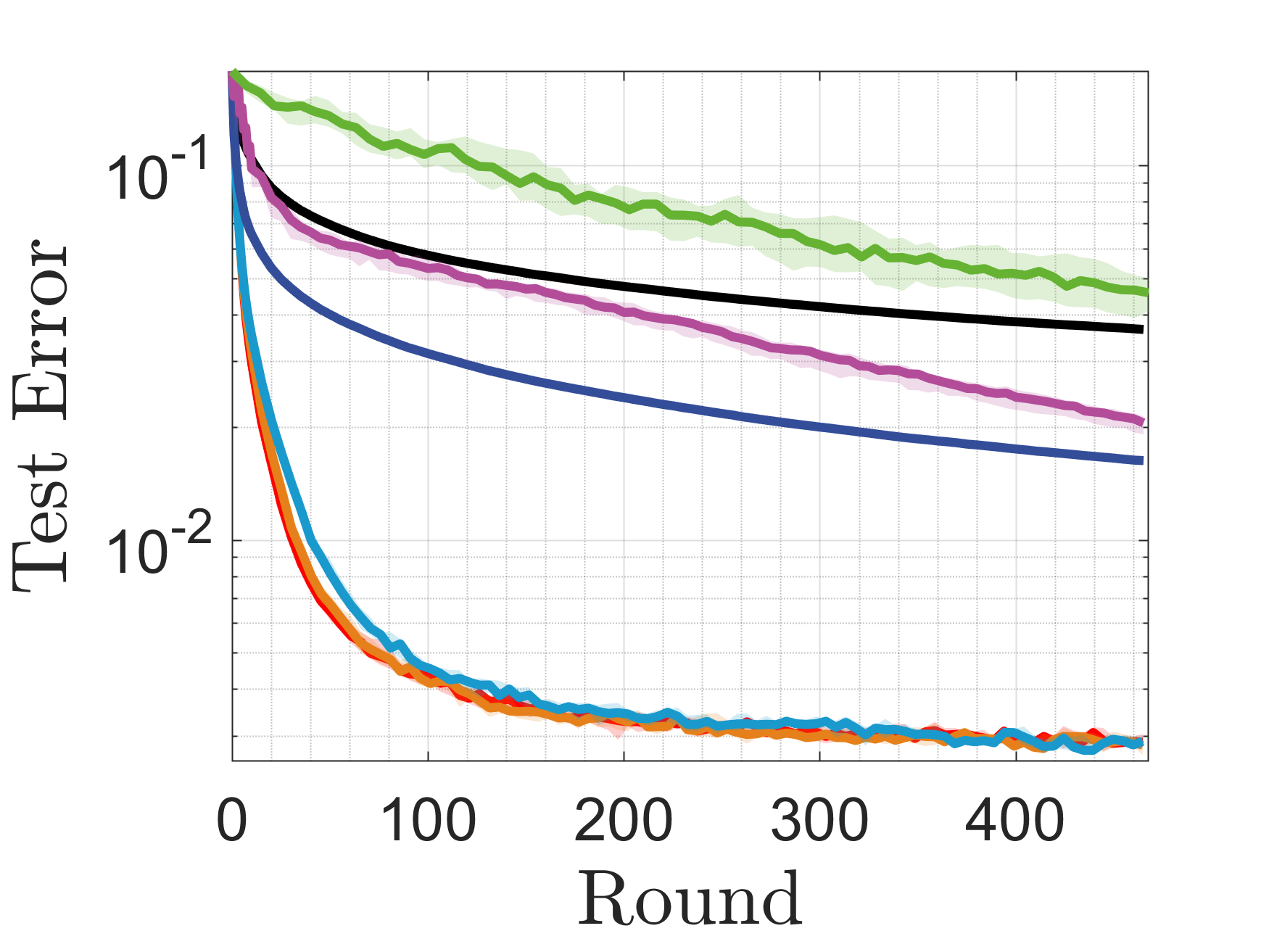}}
\subfloat[NSVM, covtype]{\includegraphics[width=0.33\textwidth]{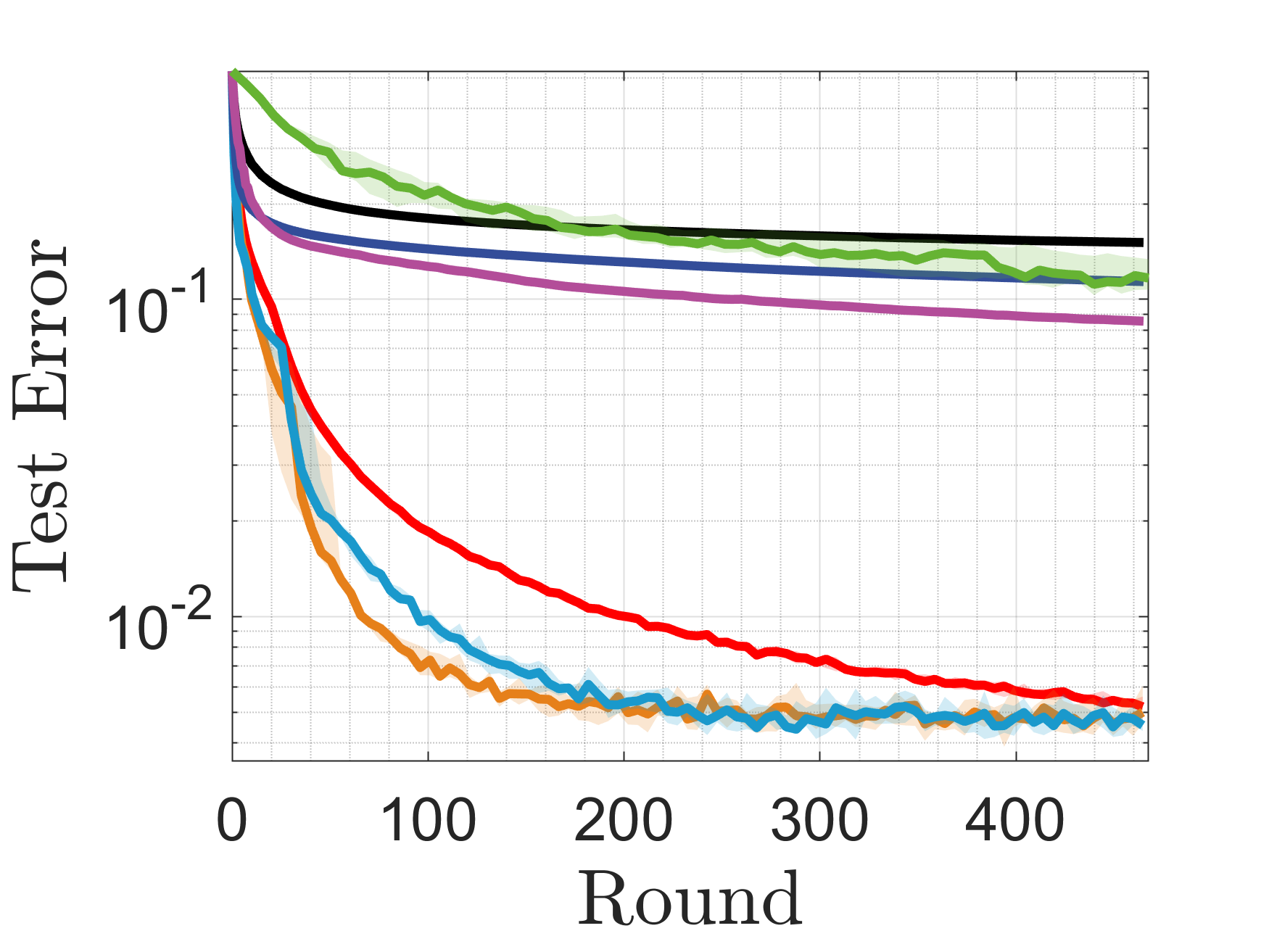}}
\subfloat[LSVM, covtype]{\includegraphics[width=0.33\textwidth]{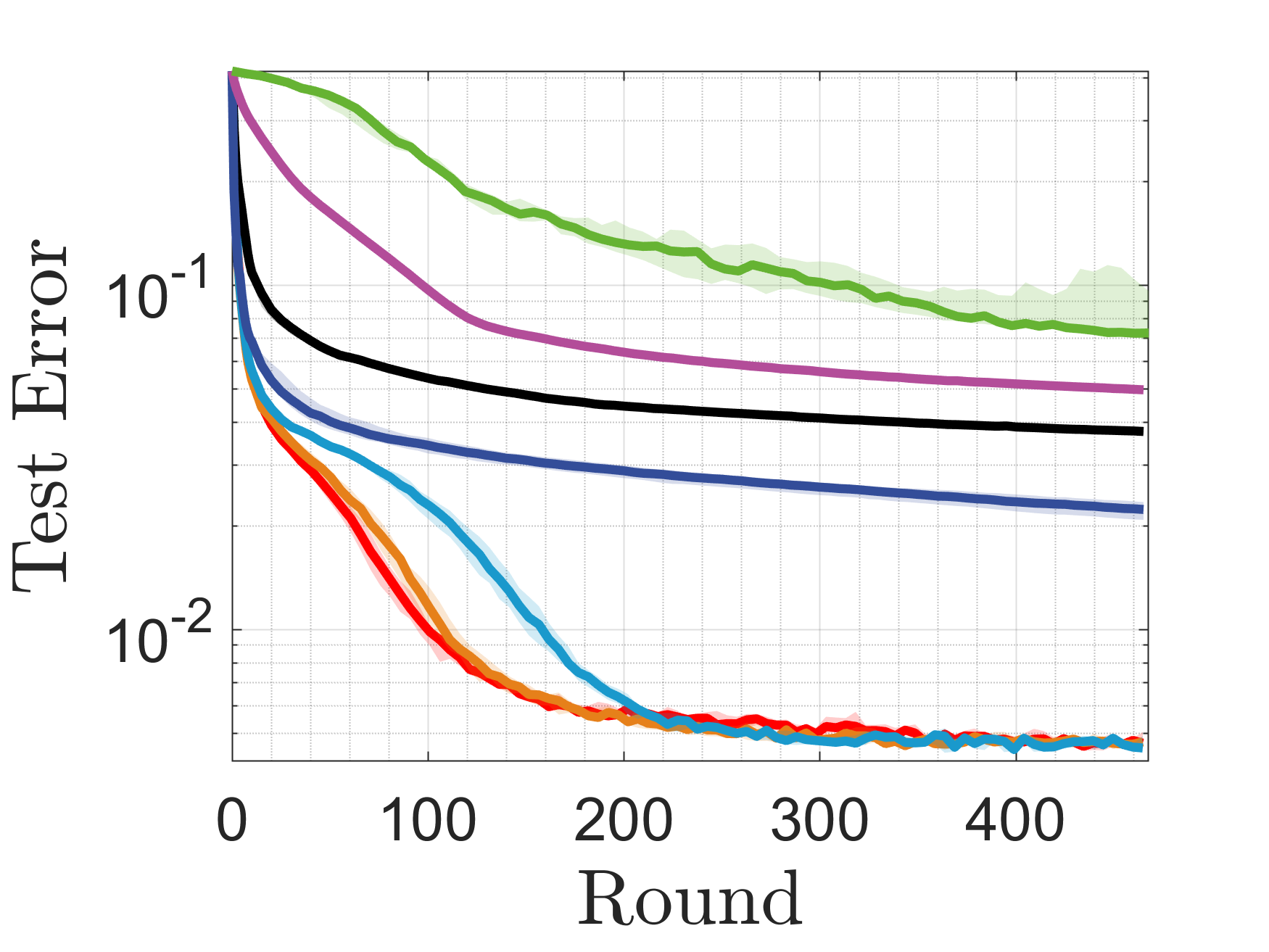}}
\caption{Generalization performance on real-sim, ijcnn1, and covtype datasets. The curve displays the test error versus the number of rounds and the corresponding shaded area extends from the 25th to 75th percentiles over the results obtained from all independent runs.}
\label{fig:testing-curve-part-two}
\end{figure*}

\end{document}